\journal{Artificial Intelligence}
\begin{document}

\begin{frontmatter}



\title{Extracting Lifted Mutual Exclusion Invariants\\ from Temporal Planning Domains}

\author[SB]{Sara Bernardini}
\address[SB]{Department of Computer Science, Royal Holloway University of London, \\
Egham, Surrey, TW20 0EX, UK}
\ead{sara.bernardini@kcl.ac.uk}
\author[FF]{Fabio Fagnani}
\address[FF]{Department of Mathematical Sciences (DISMA), Politecnico di Torino \\
Corso Duca degli Abruzzi, 24, 10129 Torino, Italy}
\ead{fabio.fagnani@polito.it}
\author[DS]{David E. Smith}
\address[DS]{Intelligent Systems Division, NASA Ames Research Center, Moffett Field, CA 94035}
\ead{david.smith@nasa.gov}

\begin{abstract}
We present a technique for automatically extracting mutual exclusion invariants from temporal planning instances. It first identifies a set of invariant templates by inspecting the lifted representation of the domain and then checks these templates against properties that assure invariance. Our technique builds on other approaches to invariant synthesis presented in the literature, but departs from their limited focus on instantaneous actions by addressing temporal domains. To deal with time, we formulate invariance conditions that account for the entire structure of the actions and the possible concurrent interactions between them. As a result, we construct a significantly more comprehensive technique than previous methods, which is able to find not only invariants for temporal domains, but also a broader set of invariants for non-temporal domains. The experimental results reported in this paper provide evidence that identifying a broader set of invariants results in the generation of fewer multi-valued state variables with larger domains. We show that, in turn, this reduction in the number of variables reflects positively on the performance of a number of temporal planners that use a variable/value representation by significantly reducing their running time.
\end{abstract}

\begin{keyword}
Automated Planning \sep Temporal Planning \sep Mutual Exclusion Invariants \sep Automatic Domain Analysis
\end{keyword}

\end{frontmatter}


\newcommand{\centergraphics}[1]{\begin{center} \includegraphics#1 \end{center}}

\newcommand{\st}{[}
\newcommand{\ovv}{()}
\newcommand{\en}{]}
\newcommand{\sto}{[)}
\newcommand{\ove}{(]}
\newcommand{\ste}{\st\en}

\newcommand{\Os}[1]{\mbox{$#1_{\st}$}}
\newcommand{\Oo}[1]{\mbox{$#1_{\ovv}$}}
\newcommand{\Oe}[1]{\mbox{$#1_{\en}$}}

\newcommand{\Oso}[1]{\mbox{$#1_{\sto}$}}
\newcommand{\Ooe}[1]{\mbox{$#1_{\ove}$}}
\newcommand{\Ose}[1]{\mbox{$#1_{\ste}$}}

\newcommand{\Ev}[4]{\mbox{$\mbox{\it #1}_{#2}^{#3}(#4)$}}

\newcommand{\Ct}[3]{\Ev{cond}{#1}{#2}{#3}}
\newcommand{\Et}[3]{\Ev{eff}{#1}{#2}{#3}}
\newcommand{\pre}[1]{\Ev{pre}{}{}{#1}}
\newcommand{\eff}[1]{\Et{}{}{#1}}

%
\newcommand{\Cs}[1]{\Ct{\st}{}{#1}}
\newcommand{\Co}[1]{\Ct{\ov}{}{#1}}
\newcommand{\Ce}[1]{\Ct{\en}{}{#1}}
\newcommand{\Cso}[1]{\Ct{\sto}{}{#1}}
\newcommand{\Coe}[1]{\Ct{\ove}{}{#1}}
\newcommand{\Es}[1]{\Et{\st}{}{#1}}
\newcommand{\Ee}[1]{\Et{\en}{}{#1}}

\newcommand{\Mset}{\text{mutex}_\text{set}}

\newtheorem{theorem}{Theorem}
\newtheorem{definition}[theorem]{Definition}
\newtheorem{proposition}[theorem]{Proposition}
\newtheorem{lemma}[theorem]{Lemma}
\newtheorem{corollary}[theorem]{Corollary}
\newtheorem{example}{Example}
\theoremstyle{remark}
\newtheorem{remark}[theorem]{Remark}

\section{Introduction}
\label{sec:intro}
This paper presents a technique for synthesising mutual exclusion invariants from temporal planning domains expressed in PDDL2.1 \citep{FoxLong-03}. A mutual exclusion invariant over a set of ground atoms means that at most one atom in the set is true at any given moment. A set with this property can intuitively be seen as the domain of a multi-valued state variable.\footnote{To be precise, a set of mutually exclusive atoms is the domain of an implicit state variable only when augmented with a catch-all null value, which can be manufactured on demand. This is because a mutual exclusion invariant encodes the concept of ``at most one'', whereas a state variable encodes the concept of ``at least one''. The null value is then used when no atom is true, if these situations exist.} For instance, consider the $\sfs{Floortile}$ domain from the 8th International Planning Competition (IPC8 - see Appendix A)\footnote{We call IPC3-IPC8 the competitions for temporal planning held in 2002, 2004, 2006, 2008, 2011 and 2014.}. A mutual exclusion invariant for this domain states that two ground atoms that indicate the position of a robot can never be true at the same time. Intuitively, this means that a robot cannot be at two different locations simultaneously. To give a concrete case, consider a planning problem for the $\sfs{Floortile}$ domain with one robot \verb+r1+ and three locations, \verb+t1+, \verb+t2+ and \verb+t3+. We can create a state variable that indicates the position of \verb+r1+ with a domain of three values: \verb+robot-at_r1_t1+, \verb+robot-at_r1_t2+ and \verb+robot-at_r1_t3+.

Although a number of approaches to invariant synthesis have been proposed so far \cite[]{GereviniSchubert-00,Rintanen-00,Rintanen-08,FoxLong-98,Helmert-09}, they are limited in scope because they deal with non-temporal domains only. Recently, \cite{Rintanen-14} has proposed a technique for temporal domains, but this technique does not scale to complex problems because it requires to ground the domain. Our approach solves both these problems at the same time. We find invariants for temporal domains by applying an algorithm that works at the lifted level of the representation and, in consequence, is very efficient and scales to large instances.

Our invariant synthesis builds on \cite{Helmert-09}, which presents a technique to translate the non-temporal subset of PDDL2.2 \citep{Edelkamp-04} into FDR, a multi-valued planning task formalism used by Fast Downward \citep{helmert-jair-2006}. Since finding invariants for temporal tasks is much more complex than for tasks with instantaneous actions, a simple generalisation of Helmert's technique to temporal settings does not work. In the temporal case, simultaneity and interference between concurrent actions can occur, hence our algorithm cannot check actions individually against the invariance conditions, but needs to consider the entire set of actions and their possible intertwinements over time. In capturing the temporal case, we formulate invariance conditions that take into account the entire structure of the action schemas as well as the possible interactions between them. As a result, we construct a significantly more comprehensive technique that is able to find not only invariants for temporal domains, but also a broader set of invariants for non-temporal domains. 

Our technique is based on a two-steps approach. First, we provide a general theory at the ground level and propose results that insure invariance under two types of properties: safety conditions for individual instantaneous and durative actions as well as collective conditions that prevent dangerous intertwinements between durative actions. Then, we lift these results at the level of schemas so that all checks needed for verifying invariance can be performed at this higher level, without the need of grounding the domain. Complexity of such checks are of linear or low polynomial order in terms of the number of schemas and literals appearing in the domain.


\subsection{Motivations}
Automated planning is a well-established field of artificial intelligence and, over more than fifty years since its appearance, several paradigms have emerged. One fundamental difference between these paradigms is whether time is treated implicitly or explicitly. While classical planning focuses on the causal evolution of the world, temporal planning is concerned with the temporal properties of the world. In classical planning, actions are considered to be instantaneous, whereas in temporal planning actions have durations and can be executed concurrently. 
Another important difference between planning paradigms relates to whether the world is modelled by adopting a Boolean propositional representation or a representation based on multi-valued state variables. Traditionally, the majority of the work in planning has been devoted to classical planning with domains expressed through propositional languages, and in particular PDDL \citep{McDermott-00} and its successors \citep{FoxLong-03}, the language of the IPC. However, in parallel with the development of classical propositional planning, a number of temporal planning systems have been proposed for coping with practical problems, especially space mission operations \citep{FrankJonsson-03,Chien-00,GhallabLaruelle-94,Muscettola-94,Cesta-08}. Typically, these systems use variable/value representations. 
Table \ref{table:paradigms} shows a classification of several well-known planners based on these different characteristics. 

\begin{table}
\begin{center}
\footnotesize
\begin{tabular}{c|l|l|}
\multicolumn{1}{r}{}
 &  \multicolumn{1}{c}{Propositional}
 & \multicolumn{1}{c}{Variable/Value} \\
\cline{2-3}
Classical & \begin{tabular}[c]{@{}l@{}} HSP \\ \citep{BonetGeffner01}\\FF \\ \citep{Hoffmann01FF}\\YAHSP \\ \citep{vidal-04}\end{tabular} &
\begin{tabular}[c]{@{}l@{}}FD \\ \citep{helmert-jair-2006}\\LAMA \\ \citep{Richter-10}\end{tabular} \\
\cline{2-3}
Temporal & \begin{tabular}[l]{@{}l@{}}LPG \\ \citep{Gerevini-06}\\POPF \\ \citep{coles-10}\end{tabular} & 
\begin{tabular}[l]{@{}l@{}}TFD \\ \citep{Eyerich-09}\\EUROPA2 \\ \citep{FrankJonsson-03}\end{tabular} \\
\cline{2-3}
\end{tabular}
\caption{Examples of planners and their classification based on whether they treat time explicitly or implicitly and whether they use a Boolean propositional representation or a multi-valued state variable representation.}
\label{table:paradigms}
\end{center}
\end{table}

\normalsize

Recently, a few techniques have been proposed for translating propositional representations into variable/value representations \citep{helmert-jair-2006, Bernardini-08-trans, Rintanen-14}. A central task of all these techniques is the generation of state variables from propositions and actions. The basic procedure to do this (which we use as the baseline in our experiments) relies on generating one state variable with two values, true and false, for each proposition. Naturally, such translation produces no performance advantage. A more sophisticated strategy, which produces compact and optimised encodings, rests on extracting mutual exclusion invariants from propositional domains and using such invariants to generate multi-valued state variables. This is the focus on our work.

These translation techniques are important as they allow fair testing of planners developed for variable/value representations on PDDL benchmarks (which are propositional). The practical issue is that planners that permit variable/value representation need this feature to be well-exploited and perform competitively. Since translation between the two different representations can be cheaply automated, there is no reason to reject providing the richer representations to those planners that accept them (if the translation was expensive, one might reasonably argue about the fairness of this process). In consequence, these translating techniques are extremely useful for comparing alternative planning paradigms and for promoting cross-fertilisation of ideas between different planning communities, which is our primary motivation.

However, the importance of these translation techniques goes beyond the engineering of a bridge between different input languages. In transforming propositional representations into state variable representations, they generate new domain knowledge, where \emph{new} means \emph{accessible} in this context. Effectively, these techniques turn into internal mini theorem provers since, rather than merely translating, they firstly selectively explore the deductive closure of the original theory to find theorems that permit optimising the representation and secondly execute those optimisations.\footnote{One might argue that \emph{optimising} is a more technically precise term than \emph{translating}} We will show that the cost of performing these optimisations is worth it because it is very fast and can be amortised over many problems.

Mutual exclusion invariants are also beneficial in pruning the search space for search methods such as symbolic techniques based on SAT \citep{KautzSelman-99,Huang-10} and backward chaining search \citep{BluFur97}. In addition, as invariants help to reduce the number of variables required to encode a domain, they are used in planning systems based on binary decision diagrams (BDDs) \citep{Edelkamp-2001}, constraint programming \citep{Do-2001}, causal graph heuristics \citep{helmert-jair-2006} and pattern databases \citep{Haslum-07}.

Finally, from a knowledge engineering perspective, the invariant synthesis presented in this paper can be used as a powerful tool for debugging temporal planning domains, although we do not focus on this specifically in this paper. As shown in \cite{Cushing-07}, several temporal planning tasks developed for the various IPC editions are bugged with the consequence that the planners take a long time to solve them, when they actually manage to do so. As invariants capture intuitive properties of the physical systems described in the domains, it is easy for a domain expert to identify modelling mistakes by inspecting them. Discrepancies between the invariants found by the automatic technique and those that the expert expects to see for a given domain indicate that the domain does not encode the physical system correctly. In consequence, the expert can revise the domain and repair it. For example, considering the $\sfs{Rover}$ domain, we expect that a \verb+store+ could be either full or empty at any time point. However, the invariant synthesis does not produce an invariant with the atoms \verb+full+ and \verb+empty+. It can be shown that this is because the action \verb+drop+ is not properly modelled. Our technique not only alerts the expert that the system is not properly modelled, but also refers the expert to the action that is not encoded correctly. This is a useful feature to fix modelling errors quickly and safely.

\subsection{Contributions of the paper}
In brief, the contributions of this paper are the following. 

From a theoretical point of view:
\begin{itemize}
\item We give the first formal account of a mutual exclusion invariant synthesis for temporal domains that works at the lifted level of the representation. Our presentation of this topic is rigorous and comprehensive and our theory is general and not tailored around IPC domains as with related techniques.
\item Our technique is based on inferring general properties of the state space by studying the structure of the action schemas and the lifted relations in the domain, without the need to ground it.
This is generally an hard task. Our theoretical framework is sophisticated, but it results in practical tools that have high efficiency and low computational cost.
\end{itemize}

From a practical point of view:
\begin{itemize}
\item We provide a tool for optimising the generation of state variables from propositions and actions. This results in compact encodings that benefit the performance of planners, as we will show in our experimental results (see Section \ref{sec:experiments}).
\item We offer a technique that can be used as a debugging tool for temporal planning domains. As this type of domains are particularly challenging to encode, especially when large and complex, a rigorous debugging process is crucial in producing correct representations of the systems under consideration.
\end{itemize}

\subsection{Organisation of the paper}
This paper is organised as follows. After presenting PDDL and PDDL2.1, our input languages, in Section \ref{sec:pddl}, we formally introduce the notion of invariance in Section \ref{sec:templates}. 

Sections \ref{sec:safeness inst actions}, \ref{sec:safeness sequences} and \ref{sec: cond inv} are devoted to a detailed analysis of actions at the ground level. In particular, Section \ref{sec:safeness inst actions} focuses on instantaneous actions: the fundamental concept of \emph{strong safety} is introduced and analysed and a first sufficient result for invariance, Corollary \ref{cor:inv}, is established. Section \ref{sec:safeness sequences} analyses \emph{sequences} of actions and, in particular, durative actions (seen as a sequence of three instantaneous actions) for which two new concepts of safety are proposed and investigated: \emph{individual} and \emph{simple safety}. Our main technical results are presented in Section \ref{sec: cond inv} and consist of Theorem \ref{theo: *safety}, Theorem \ref{theo: non intertwining} and Corollary \ref{cor: type a}: these results insure invariance under milder safety requirements on the durative actions than Corollary \ref{cor:inv}. This is obtained by adding requirements that prevent the intertwinement of durative actions that are not strongly safe. 

Sections \ref{sec:action} and \ref{sec: dur schemas} are devoted to \emph{lift} the concepts and results obtained in the previous sections to the level of action \emph{schemas}. In particular, Section \ref{sec:action} deals with the problem of lifting the concept of strong safety for instantaneous schemas, while Section \ref{sec: dur schemas} considers durative action schemas and presents the lifted version of our main results, Corollaries \ref{cor: *safety schema}, \ref{cor: non intertwining schemas} and  \ref{cor: type a schemas}. 

These results are the basic ingredients of our algorithm to find invariants, which is proposed in Section \ref{sec:algorithm}. Section \ref{sec:experiments} reports an extensive experimental evaluation of our approach against the domains of the last three IPCs. Sections \ref{sec:related} and \ref{sec:conclusions} conclude the paper with a description of related works and closing remarks. There are three appendices: Appendix A and B contain the specifications of the planning domains used as the running examples in the paper; Appendix C contains all the technical proofs.

\section{Canonical Form of Planning Tasks}
\label{sec:pddl}
In this work, we consider planning instances that are expressed in PDDL2.1 \citep{FoxLong-03}. However, before applying our algorithm to find invariants, we manipulate the domain to enforce a regular structure in the specification of the action schemas. In what follows, we first give an overview of this \emph{canonical} form that we use and then describe how such a form can be obtained starting from a domain expressed in PDDL2.1.

\subsection{PDDL Canonical Form}
\label{sec:canonical}
A planning \emph{instance} is a tuple $\cI = (\cD, \cP)$, where $\cD$ is a planning \emph{domain} and $\cP$ a planning \emph{problem}. The domain $\cD = (\cR, \cA^{i}, \cA^{d}, arity)$ is a 4-tuple consisting of finite sets of relation symbols, instantaneous actions, durative actions, and a function $arity$ mapping all of these symbols to their respective arities. $\cP = (\cO, Init, \cG)$ is a triple consisting of the objects in the domain, the initial logical state and the goal logical state specifications. 
 
The \emph{ground atoms} of the planning instance, $Atms$, are the (finitely many) expressions formed by applying the relations in $\cR$ to the objects in $\cO$ (respecting arities). A \emph{logical state} is any subset of $Atms$ and  $\cS = 2^{Atms}$ denotes the set of logical states. The initial state $Init\in\cS$ and the goal $\cG\subseteq \cS$ is a subset of logical states (typically defined as a conjunction of literals).
A \emph{state} is a tuple in $\bR \times \cS$, where the first value is the time of the state and the second value (\emph{logical state}) is a subset of $Atms$. The initial state for $\cI$ is of the form $(t_0, Init)$.

The set $\cA^i$ is a collection of \emph{instantaneous} action schemas. An instantaneous action schema $\alpha$ is composed of the following sets: 
\begin{itemize}
\item $V_{\alpha}$, i.e. the free schema's arguments;
\item $Pre_{\alpha} = Pre^+_{\alpha}  \cup Pre^-_{\alpha}$, where $Pre^+_{\alpha}$ are the positive preconditions and $Pre^-_{\alpha}$ the negative preconditions;
\item $Eff_{\alpha} = Eff^+_{\alpha} \cup Eff^-_{\alpha}$, where $Eff^+_{\alpha}$ are the add effects and $Eff^-_{\alpha}$ the delete effects. 
\end{itemize}
Preconditions and effects are sets of literals $l$ of the form: $\forall v_{1}, \ldots, v_{k}: q$ where $q$ is a non-quantified positive literal 
of the form $r(v_1', \ldots, v'_{n})$, where $r \in \cR$, $arity(r) = n$, $\{ v_{1}, \ldots, v_{k}\} \subseteq \{v_1', \ldots, v'_{n}\}$ is the set of quantified arguments, $\{v'_1, \ldots, v'_{n}\}\setminus \{ v_{1}, \ldots, v_{k}\} \subseteq V_{\alpha}$ is the set of free arguments. The universal quantification can be trivial  (i.e. quantification over zero arguments) and, in this case, it is omitted. 
We indicate 
the set of the positions of the free and the quantified arguments, respectively, as $\vars[l]$ and $\varsq[l]$, and the pair $\langle r, a \rangle$, where $r$ is the relation symbol that appears in literal $q$ and $a$ is its arity, as $\rela[l]$. Given a position $i$, we indicate the corresponding argument as $\args[i, l]$.

The set $\cA^d$ is a collection of \emph{durative} action schemas. A durative action schema $D\alpha$ is a triple of instantaneous action schemas $D\alpha=(\alpha^{st}, \alpha^{inv}, \alpha^{end})$ such that $V_{\alpha^{st}} = V_{\alpha^{inv}} = V_{\alpha_{end}}$ (this common set is denoted $V_{D\alpha}$). We indicate as $\{ D\alpha \}$ the set $\{ \alpha^{st}, \alpha^{inv}, \alpha^{end}\}$.

We call $\cA$ the set of all the instantaneous actions schemas in the domain, including those induced by durative actions: $\cA = \cA^i \cup \bigcup\limits_{D\alpha \in \cA^d} \{ D\alpha \}$. Given any two action schemas $\alpha_1$ and $\alpha_2$ in $\cA$ such that it does not exist a durative action $D\alpha$ with both $\alpha_1$ and $\alpha_2$ in $\{ D\alpha \}$, we assume that the free arguments of $\alpha_1$ and $\alpha_2$ are disjoint sets, i.e. $V_{\alpha_{1}} \cap V_{\alpha_{2}} = \emptyset$.

Given an action schema $\alpha \in \cA^i$ with free arguments $V_{\alpha}$,
consider an injective \emph{grounding} function $gr: V_{\alpha} \rightarrow \cO$ that maps the free arguments in $\alpha$ to objects $\cO$ of the problem. The function $gr$ induces a function on the literals in $\alpha$ as follows. Given a literal $l$ that appears in $\alpha$, we call $\tilde{gr}(l)$ the literal that is obtained from $l$ by grounding its free arguments according to $gr$ and $gr(l)$ the set of ground atoms obtained from $\tilde{gr}(l)$ by substituting objects in $\cO$ for each quantified argument in $l$ in all possible ways. Note that, when there are no quantified arguments, $\tilde{gr}(l) = gr(l)$ and they are singletons. Given a set $L$ containing literals $l_1, \ldots l_n$, we call $\tilde{gr}(L) = \tilde{gr}(l_1) \cup \ldots \cup \tilde{gr}(l_n)$ and $gr(L) = gr(l_1) \cup \ldots \cup  gr(l_n)$. We call $\tilde{gr}(\alpha)$ the action schema obtained from $\alpha$ by grounding each literal $l$ that appears in $\alpha$ according to $gr$ and $gr(\alpha)$ the ground action that is obtained from $\tilde{gr}(\alpha)$ by replacing the quantified arguments with the set of ground atoms formed by substituting objects in $\cO$ for the quantified arguments in all possible ways. 

Given a durative action schema $D\alpha \in \cA^d$ and a grounding function $gr$, the ground durative action $gr(D\alpha)$ is obtained by applying $gr$ to the instantaneous fragments of $D\alpha$: $gr(D\alpha)=(
gr(\alpha^{st}), gr(\alpha^{inv}), gr(\alpha^{end}))$. Note that we cannot apply different grounding functions to different parts of a durative action schema. 

Given a ground action $a$, we indicate its positive and negative preconditions as $Pre^{\pm}_{a}$ and its add and delete effects as $Eff^{\pm}_{a}$. We call $\cG\cA^i$, $\cG\cA^d$, respectively, the set of instantaneous, and durative ground actions. Finally, we call $\cG\cA$ the set of all ground actions in $\cI$ (obtained from grounding all schemas in $\cA$).

A ground action $a$ is \emph{applicable} in a logical state $s$ if $Pre^+_a \subseteq s$ and $Pre^-_a \cap s = \emptyset$. The \emph{result} of applying $a$ in $s$ is the state $s'$ such that $s' = (s \setminus Eff^-_a) \cup Eff^+_a$. We call $\xi$ this transition function: $s' = \xi(s,a)$. 

The transition function $\xi$ can be generalised to a set of ground actions $A=\{a_1, \ldots,$ $a_n\}$ to be executed concurrently: $s' = \xi(s, A)$. However, in order to handle concurrent actions, we need to introduce the so-called \emph{no moving targets} rule: no two actions can simultaneously make use of a value if one of the two is accessing the value to update it. The value is a \emph{moving target} for the other action to access. This rule avoids conflicting effects, but also applies to the preconditions of an action: no concurrent actions can affect the parts of the state relevant to the precondition tests of other actions in the set (regardless of whether those effects might be harmful or not). In formula, two ground actions $a$ and $b$ are \emph{non-interfering} if: $$Pre_a \cap (Eff^+_b \cup Eff^-_b) = Pre_b \cap (Eff^+_a \cup Eff^-_a) = \emptyset$$ $$Eff^+_a \cap Eff^-_b = Eff^+_b \cap Eff^-_a = \emptyset$$ If two actions are not non-interfering, they are \emph{mutex}.

In this work, whenever we consider a set of concurrent actions $A=\{a_1, \ldots, a_n\}$, we implicitly assume that the component actions are pairwise non-interfering. In this case, 
given a state $s$ such that each $a_i\in A$ is applicable in $s$, the transition function $s' = \xi(s, A)$ is defined as follows: $$s' = (s \setminus \bigcup_{a \in A} Eff^-_a) \cup \bigcup_{a \in A} Eff^+_a$$
The following useful result shows that the application of a set of actions can always be serialised.

\begin{proposition}[Serialisability] 
\label{prop: ser} Given a set of actions $A=\{a_1, \ldots, a_n\}$ and a state $s$ in which $A$ is applicable, consider the sequence of states recursively defined as  $s_0 = s$ and $s_k = \xi(s_{k-1}, a_k)$ for $k=1, \dots , n$. Then,
\begin{enumerate}[(i)]
\item The sequence $s_k$ is well defined: $a_k$ is applicable in $s_k$ for every $k=1, \dots , n$;
\item $s_n= \xi(s, A)$.
\end{enumerate}
\end{proposition}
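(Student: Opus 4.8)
The plan is to reduce both parts to a single closed-form description of the intermediate states and then read off each claim from it. Concretely, I would first prove by induction on $k$ the purely set-theoretic identity
$$s_k = \left(s \setminus \bigcup_{j=1}^{k} Eff^-_{a_j}\right) \cup \bigcup_{j=1}^{k} Eff^+_{a_j}, \qquad k = 0,1,\dots,n. \quad (\star)$$
The base case $k=0$ is immediate, as $s_0 = s$. For the inductive step I would expand $s_k = (s_{k-1} \setminus Eff^-_{a_k}) \cup Eff^+_{a_k}$, substitute the inductive form of $s_{k-1} = (s \setminus \bigcup_{j<k} Eff^-_{a_j}) \cup \bigcup_{j<k} Eff^+_{a_j}$, and distribute the set difference over the union. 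The only non-routine point is that the accumulated add effects $\bigcup_{j<k} Eff^+_{a_j}$ must survive the removal of $Eff^-_{a_k}$; this is precisely where the effect-disjointness clause of non-interference, $Eff^+_{a_j} \cap Eff^-_{a_k} = \emptyset$ for $j \neq k$, is invoked, giving $\bigcup_{j<k}Eff^+_{a_j} \setminus Eff^-_{a_k} = \bigcup_{j<k}Eff^+_{a_j}$. Note that $(\star)$ holds whether or not $a_k$ is applicable, since $\xi(\cdot, a_k)$ is defined as a set operation; this lets me establish it before addressing applicability.

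Part (ii) is then immediate: instantiating $(\star)$ at $k = n$ yields exactly the definition of $\xi(s, A)$, because the index set $\{1,\dots,n\}$ ranges over all of $A$ and the unions there coincide with $\bigcup_{a\in A} Eff^\pm_a$.

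For part (i) I would use $(\star)$ to verify that $a_k$ is applicable in $s_{k-1}$, i.e. $Pre^+_{a_k} \subseteq s_{k-1}$ and $Pre^-_{a_k} \cap s_{k-1} = \emptyset$. Write $s_{k-1} = (s \setminus D) \cup P$ with $D = \bigcup_{j<k} Eff^-_{a_j}$ and $P = \bigcup_{j<k} Eff^+_{a_j}$. By hypothesis $a_k$ is applicable in $s$, so $Pre^+_{a_k} \subseteq s$ and $Pre^-_{a_k} \cap s = \emptyset$. The precondition-effect clause of non-interference gives $Pre_{a_k} \cap (Eff^+_{a_j} \cup Eff^-_{a_j}) = \emptyset$ for every $j < k$, hence both $Pre^+_{a_k}$ and $Pre^-_{a_k}$ are disjoint from $D$ and from $P$. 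Consequently $Pre^+_{a_k} \subseteq s \setminus D \subseteq s_{k-1}$, and $Pre^-_{a_k} \cap s_{k-1} = (Pre^-_{a_k} \cap (s \setminus D)) \cup (Pre^-_{a_k} \cap P) = \emptyset$, using $Pre^-_{a_k} \cap s = \emptyset$ and $Pre^-_{a_k} \cap P = \emptyset$. Thus $a_k$ is applicable at every step and the recursion is well defined.

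I do not expect a genuine obstacle here: the argument is essentially disciplined bookkeeping with unions and set differences. The one place that demands care is the inductive step of $(\star)$, where one must invoke exactly the effect-disjointness half of non-interference to guarantee that earlier add effects are not clobbered by a later delete. The conceptual point worth flagging is that pairwise non-interference is what decouples the order of application from the result; without it, the serialised outcome would in general depend on the permutation of $A$ and need not agree with $\xi(s, A)$.
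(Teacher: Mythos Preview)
Your proof is correct and follows essentially the same line as the paper: both hinge on the closed-form identity $s_k = (s \setminus \bigcup_{j\le k} Eff^-_{a_j}) \cup \bigcup_{j\le k} Eff^+_{a_j}$, use the effect-disjointness clause of non-interference to justify it, and then use the precondition--effect clause to verify applicability. The only difference is organisational---you isolate $(\star)$ as a standalone lemma before treating applicability, whereas the paper folds the formula into a single induction on applicability---but the mathematical content is identical, and your version is in fact more explicit about which half of non-interference is invoked at each step.
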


An \emph{instantaneous timed action} has the following syntactic form: $(t, a)$, where $t$ is a positive rational number in floating point syntax and $a$ is a ground instantaneous action. A \emph{durative timed action} has the following syntactic form: $(t, Da[t'])$, where $t$ is a rational valued time, $Da$ is a ground durative action and $t'$ is a non-negative rational-valued duration. It is possible for multiple timed actions to be given the same time stamp, indicating that they should be executed concurrently.

A \emph{simple plan} $\pi$ for an instance $\cI$ is a finite collection of instantaneous timed actions and a \emph{plan} $\Pi$ consists of a finite collection of (instantaneous and durative) timed actions. The \emph{happening time sequence} $\{t_i\}_{i = 0, \ldots ,\bar k}$ for a plan $\Pi$ is: 
$$\{ t_{0} \} \cup \{ t | (t, a) \in \Pi \ or \ (t, Da[t']) \in \Pi \ or \ (t - t', Da[t']) \in \Pi \}$$ 
Note that the last disjunct allows the time corresponding to the end of execution of a durative action to be included as a happening time.

Given a plan $\Pi$, the \emph{induced} simple plan for $\Pi$ is the set of pairs $\pi$ containing:
\begin{itemize}[(i)]
\item $(t, a)$ for each $(t, a) \in \Pi$, where $a$ is an instantaneous ground action;
\item $(t, a^{st})$ and $(t + t', a^{end})$ for all pairs $(t, Da[t']) \in \Pi$, where $Da$ is a durative ground action; and
\item $((t_i + t_{i+1})/2, a^{inv})$ for each pair $(t, Da[t']) \in \Pi$ and for each $i$ such that $t \leq t_i < (t + t')$, where $t_i$ and $t_{i+1}$ are in the happening sequence for $\Pi$.
\end{itemize}

The \emph{happening} at time $t$ of the plan $\pi$ is defined as $A_t=\{a\in\cG\cA\,|\, \exists t\; (t,a)\in\pi\}$. Note that in $\pi$ we have formally lost the coupling among the start and end fragments of durative actions. Since in certain cases this information is necessary, we set a definition: a durative action $Da$ is said to happen in $\pi$ in the time interval $[t, t+t']$ whenever this holds true in the original plan $\Pi$, namely when $(t, Da[t'])\in\Pi$.

A simple plan $\pi$ for a planning instance $\cI$ is \emph{executable} if it defines a happening sequence $\{ t_i \}_{i = 0 \ldots \bar{k}}$ and there is a sequence of logical states $\{ s_i \}_{i = 0 \ldots \bar{k}}$ such that $s_0 = Init$ and for each $i = 0, \ldots ,\bar{k}$, $s_{i+1}$ is the result of executing the happening at time $t_i$ in $\pi$. Formally, we have that $A_{t_{i+1}}$ is applicable in $s_{i}$ and $s_{i+1} = \xi(s_i, A_{t_{i+1}})$. The state $s_{\bar{k}}$ is called the final logical state produced by $\pi$. The sequence of times and states $\{S_i = (t_i, s_i)_{i = 0 \ldots \bar{k}} \}$ is called the (unique) trace of $\pi$, $trace(\pi)$. Two simple plans are said to be \emph{equivalent} if they give rise to the same trace.  

The following result holds, from the definition of mutex, induced plan and executability:
\begin{remark}
\label{rem:mut}
Given a ground durative action $Da=(a^{st}, a^{inv}, a^{end})$ and a ground instantaneous action $a'$, if $a'$ and $a^{inv}$ are mutex, then there is no executable simple plan that contains the timed actions $(t_1, a^{st})$, $(t_2, a')$ and $(t_3, a^{end})$ with $t_1 < t_2 <t_3$.
\end{remark}

A simple plan for a planning instance $\cI$ is \emph{valid} if it is executable and produces a final state $s_{\bar{k}}\in\cG$.

We call $Plans$ all the valid (induced and not) simple plans for $\cI$ and $\cS_r$ the union of all the logical states that appear in the traces associated with the plans in $Plans$: $\cS_r = \{ s \ | \ \exists \ \pi \in \ Plans \ \textrm{and} \ (t, s) \in trace(\pi)\}$. Note that $\cS_r \subseteq \cS$. We call the states in $\cS_r$ \emph{reachable states}. 

\subsection{From PDDL2.1 to Canonical PDDL}
We build the canonical form described above starting from PDDL2.1 instances, which are characterised by metric and temporal information \citep{FoxLong-03}. Numeric variables can be seen as already in the variable/value form and so we do not handle them. We could potentially exploit metric information in order to find additional state variables, but currently we do not do that. Instead, we assume that numeric variables are already in the right form and ignore them and numeric constraints when we look for logical state variables. 

Temporal information are handled in PDDL2.1 by means of \emph{durative} actions. They can be either discretised or continuous, but we focus on discretised durative actions only here. They have a duration field and temporally annotated conditions and effects. The duration field contains temporal constraints involving terms composed of arithmetic expressions and the dedicated variable $duration$. The annotation of a condition makes explicit whether the associated proposition must hold at the \emph{start} of the interval (the point at which the action is applied), the \emph{end} of the interval (the point at which the final effects are asserted) or \emph{over all} the interval (open at both ends) from the start to the end (invariant over the duration of the action). The annotation of an effect makes explicit whether the effect is immediate (it happens at the start of the interval) or delayed (it happens at the end of the interval). No other time points are accessible. Logical changes are considered to be instantaneous and can only happen at the accessible points. To build our canonical form, we transform durative actions into triples of instantaneous actions. We do this in such a way that we do not change the set of plans that can be obtained for any goal specification. Plans with durative actions, in fact, are always given a semantics in terms of the semantics of simple plans \citep{FoxLong-03}, as explained in the previous section.

Let us see now in more detail how we obtain the PDDL canonical form from PDDL2.1 instances. 

A PDDL2.1 instance looks the same as a canonical instance, except for the set of action schemas in the domain. In particular, in a PDDL2.1 domain, in place of the sets $\cA^i$ and $\cA^d$, we find a set $\cA^a$ that contains both instantaneous and durative action schemas, which have the following characteristics. Durative action schemas have temporally annotated conditions and effects, which we indicate as $Pre^{x}$ and $Eff^{x}$, where $x$ is in the set $\{st, inv, end \}$. Given an action schema in $\cA^a$ (durative or not), the condition formula can be a relation, a negation, a conjunction or disjunction of relations or a quantified formula on relations. The effect formula can be a relation, a negation or a conjunction of relations, a universally quantified formula on relations or a conditional effect formula, which is a tuple formed by a precondition formula and and effect formula. We manipulate the action schemas in $\cA^a$ to obtain $\cA^i$ and $\cA^d$, where each action schema in these sets has the canonical form described in Section \ref{sec:canonical}. 

First, we eliminate conditional effects and existentially quantified formulae through an operation referred to as \emph{flattening} (see \cite{FoxLong-03} for details). Since these features are syntactic sugar, they can be eliminated by applying simple syntactic transformations. The resulting schemas are equivalent to the original ones. 

Given a flatten action schema $\alpha$, we take the formulas (temporally annotated or not) in its conditions and effects and \emph{normalise} them by using the algorithm introduced by \cite{Helmert-09} (we refer the interested reader to this paper for a full description of the normalisation process\footnote{Our normalisation differs from  \cite{Helmert-09} only in that we preliminarily eliminate conditional effects by applying the flattening operation before normalisation and we keep universal quantification in the preconditions. We also apply normalisation not only to formulas appearing in instantaneous actions as in  \cite{Helmert-09}, but also to temporally annotated formulas in durative actions. We normalise the formulas and leave the temporal annotation unchanged.}). After normalisation, all action schema conditions and effects become sets of universally quantified first-order literals $l$ of the form $\forall v_{1}, \ldots, v_{k}: q$, where $q$ is a non-quantified literal and the universal quantification can be trivial. We indicate by $Pre_{\alpha}^{+}$ and $Eff_{\alpha}^{+}$ the set of positive literals that appear positive in $\alpha$ and by $Pre_{\alpha}^{-}$ and $Eff_{\alpha}^{-}$ the set of positive literals that appear negative in $\alpha$.

Note that we consider illegal durative action schemas $D\alpha$ such that it exists a literal $l$ that satisfies one of the following conditions: 
\begin{itemize}
\item $l \in Pre_{D\alpha}^{-inv}$ and $l \in (Pre_{D\alpha}^{+st} \setminus Eff_{D\alpha}^{-st}) \cup Eff_{D\alpha}^{+st}$; 
\item $l \in Pre_{D\alpha}^{+inv}$ and $l \in (Pre_{D\alpha}^{-st} \setminus Eff_{D\alpha}^{+st}) \cup Eff_{D\alpha}^{-st}$; 
\item $l \in Pre_{D\alpha}^{+inv}$ and $l \in Pre_{D\alpha}^{-end}$;
\item $l \in Pre_{D\alpha}^{-inv}$ and $l \in Pre_{D\alpha}^{+end}$
\end{itemize}

\noindent We assume that $\cA^a$ contains no durative action schemas of such types.

After flattening and normalisation, we transform the durative action schemas in $\cA^a$ in triples of instantaneous action schemas. Given a durative action $D\alpha \in A^a$, we create two instantaneous action schemas that correspond to the end points of $D\alpha$, $\alpha^{st}$ and $\alpha^{end}$, and one that corresponds to the invariant conditions that must hold over that duration of $D\alpha$, $\alpha^{inv}$. 
More formally, given a durative action schema $D\alpha$ we create $\alpha^{st}$, $\alpha^{inv}$  and $\alpha^{end}$ as follows:

\begin{table}[htp]
\begin{center}
\begin{tabular}{|c|c|c|}
\hline
$\alpha^{st}$ & $\alpha^{inv}$ & $\alpha^{end}$\\
\hline
$Pre^{+}_{\alpha^{st}} = Pre^{+st}_{D\alpha}$ & $Pre^+_{\alpha^{inv}} = Pre^{+inv}_{D\alpha}$  & $Pre^+_{\alpha^{end}} = Pre^{+end}_{D\alpha}$\\
$Pre^{-}_{\alpha^{st}} = Pre^{-st}_{D\alpha}$ & $Pre^-_{\alpha^{inv}} = Pre^{-inv}_{D\alpha}$  & $Pre^-_{\alpha^{end}} = Pre^{-end}_{D\alpha}$ \\
$Eff^{+}_{\alpha^{st}} = Eff^{+st}_{D\alpha}$ & $Eff^+_{\alpha^{inv}} = \emptyset$ & $Eff^+_{\alpha^{end}} = Eff^{+end}_{D\alpha}$ \\
$Eff^-_{\alpha^{st}} = Eff^{-st}_{D\alpha}$ & $Eff^-_{\alpha^{inv}} = \emptyset$ & $Eff^-_{\alpha^{end}} = Eff^{-end}_{D\alpha}$\\
\hline
\end{tabular}
\end{center}
\label{tab:ia}
\caption{Transformation of durative action schemas in triples of instantaneous action schemas.}
\end{table}%

At this point, we are ready to construct $\cA^i$ and $\cA^d$ from $\cA^a$. We add each flatten and normalised instantaneous action in $\cA^a$ to $\cA^i$. For each durative action $D\alpha \in A^a$, after applying flattening and normalisation, we create the corresponding tuple $(\alpha^{st}, \alpha^{inv}, \alpha^{end})$ and add it to $\cA^d$. 

Given a planning instance $\cI$ in canonical form obtained from a PDDL2.1 instance $\cI'$ and a valid plan $\Pi$ for $\cI$, $\Pi$ can be converted into an equivalent valid plan $\Pi'$ for $\cI'$.

\subsection{Running Example: the \sfs{Floortile} Domain}
We use the \sfs{Floortile} domain as our running example. It has been introduced in the IPC-2014 and then reused in 2015. The full PDDL2.1 specification is available in Appendix A. The domain describes a set of robots that use different colours to paint patterns in floor tiles. The robots can move around the floor tiles in four directions (up, down, left and right). Robots paint with one color at a time, but can change their spray guns to any available color. Robots can only paint the tile that is in front (up) and behind (down) them, and once a tile has been painted no robot can stand on it. 

We have the following relations in this domain: 
$\cR = \{$ {\tt{up, down, right, left, robot-at, robot-has, painted,}
\tt{clear, available-color}} $\}$. They have arity two, except for the last two, which have arity one. {\tt {clear}} indicates whether a tile is still unpainted, {\tt {available-color}} whether a color gun is available to be picked by a robot and {\tt { up, down, right, left}} indicate the respective positions of two tiles. 

The set of instantaneous action schemas $\cA^i$  is empty, while the set of durative action schemas $\cA^d$ is the following:
$\cA^d = \{${\tt {change-color, paint-up, paint-down, up, down, right, left}} $\}$.  

As an example, the durative action schema {\tt {paint-up}} corresponds to the following triple: 

\noindent ({\tt {paint-up}}$^{st}$, {\tt {paint-up}}$^{inv}$, {\tt {paint-up}}$^{end}$), where the single instantaneous action schemas have the following specifications:
\begin{table}[htp]
\begin{center}
\begin{tabular}{|c|c|c|c|}
\hline
$\alpha$ & ${\tt {paint-up}}^{st}$ & ${\tt {paint-up}}^{inv}$ & ${\tt {paint-up}}^{end}$\\
\hline
$V_{\alpha}$ & $\{ {\tt{r, y, x, c }} \}$ & $\{ {\tt{r, y, x, c}} \}$ & $\{ {\tt{r, y, x, c}} \}$\\
\hline
$Pre^+_{\alpha}$ & $\{{\tt{robot-at(r, x)}}$ &  $\{{\tt{robot-has(r, c)}}$ & $\emptyset$\\
& ${\tt{clear(y)}} \}$ & ${\tt{up(y, x)}} \}$ & \\
\hline
$Pre^-_{\alpha}$ & $\emptyset$ & $\emptyset$ & $\emptyset$\\
\hline
$Eff^+_{\alpha}$ & $\emptyset$ & $\emptyset$ & $\{{\tt{painted(y, c)}}\}$\\
\hline
$Eff^-_{\alpha}$ & $\{{\tt{clear(y)}} \}$ & $\emptyset$ & $\emptyset$\\
\hline
\end{tabular}
\end{center}
\caption{Durative action schema {\tt {paint-up}} seen as a triple of instantaneous action schemas.}
\label{tb:paintup}
\end{table}%

Note that the triple of single instantaneous action schemas in canonical form is obtained from the following PDDL2.1 specification:

\lstset{language=Lisp}         
\begin{lstlisting}[frame=single,basicstyle=\footnotesize]
(:durative-action paint-up
  :parameters (?r - robot ?y - tile ?x - tile ?c - color)
  :duration (= ?duration 2)
  :condition (and (over all (robot-has ?r ?c))
                  (at start (robot-at ?r ?x))
                  (over all (up ?y ?x))
                  (at start (clear ?y)))
  :effect (and (at start (not (clear ?y)))
               (at end (painted ?y ?c))))
\end{lstlisting}


\section{Mutual Exclusion Invariants and Templates}
\label{sec:templates}
In this section, we formally introduce the concepts of invariant and mutual exclusion invariant and give examples of them. 

\begin{definition}[Invariant]
\label{def:invariant}
An \emph{invariant} of a PDDL2.1 planning instance is a property of the world states such that when it is satisfied in the initial state $Init$, it is satisfied in all reachable states $\cS_r$. 
\end{definition}

For example, given the $\sfs{Floortile}$ domain, a trivial invariant says that for each object $x$, if $x$ is a robot, then $x$ is not a tile. Similar invariants hold for each type defined in the domain. A more interesting invariant says that, for any two objects $x$ and $y$, if \verb+up(x,y)+ holds, then \verb+down(y,x)+ holds too, but \verb+down(x,y)+ does not. It is possible to identify several invariants for the $\sfs{Floortile}$ domain, ranging from trivial invariants such as those involving type predicates to very complex invariants.

In this paper, we focus on \emph{mutual exclusion} invariants, which state that a set of ground atoms can never be true at the same time. 

\begin{example}[\sfs{Floortile} domain]
\label{ex:template}
A mutual exclusion invariant for this domain states that two ground atoms indicating the position of a robot identified as \verb+rbt1+, such as \url{robot-at(rbt1,tile1)} and \verb+robot-at(rbt1,tile2)+, can never be true at the same time. Intuitively, this means that \verb+rbt1+ cannot be in two different positions simultaneously. Another more complex invariant states that, given a tile \verb+tile1+, a robot \verb+rbt1+ and a colour \verb+clr1+, atoms of the form  \verb+clear(tile1)+,  \verb+robot-at(rbt1,+ \verb+tile1)+ and \url{painted(tile1,clr1)} can never be true at the same time. This means that a tile can be in one of three possible states: not yet painted (clear), occupied by a robot that is painting it or already painted. 
\end{example}

\begin{definition}[Mutual Exclusion Invariant]
\label{def:mei}
Given a planning instance $\cI$, let $Z$ be a set of ground atoms in $2^{Atms}$. A \emph{mutual exclusion invariant} is an invariant stating that at most one element of $Z$ is true in any reachable state. We refer to a set $Z$ with this property as a \emph{mutual exclusion invariant set}.
\end{definition}

In what follows, we refer to mutual exclusion invariants and mutual exclusion invariant sets as simply invariants and invariant sets for the sake of brevity.

Although we aim to find sets of mutually exclusive ground atoms, we often work with relations and action schemas to control complexity. A convenient and compact way for indicating several invariant sets at the same time involves using \emph{invariant templates}, which are defined below, after introducing a few preliminary definitions. 

\begin{definition}[Component]
\label{def:comp}
A \emph{component} $c$ is a tuple $\langle r, a, p \rangle$, where $r$ is a relation symbol in $\cR$, $a$ is a number that represents the arity of $r$, i.e. $a=arity(r)$, and $p \in \{0, \ldots, a \}$ is a number that represents the position of one of the arguments of $r$, which is called the \emph{counted} argument. We put $p=a$ if there are no counted arguments. The set of the labelled \emph{fixed} arguments of $c$ is $F_c = \{(c, i) \, | \, i = 0, \ldots, (a-1);  i \not = p \}$.
\end{definition}

Given a set of components $\cC = \{ c_1, c_2, \ldots, c_n \}$, we define the set of fixed arguments of $\cC$ as $F_{\cC} = \bigcup\limits_{c \in \cC} F_c$.

\begin{definition}[Admissible Partition]
\label{def:part}
Given a set of components $\cC$ and a set of fixed arguments $F_{\cC}$, an \emph{admissible partition} of $F_{\cC}$ is a partition $\cF_{\cC} = \{ G_1, \ldots, G_k\}$ such that $| G_j \cap F_c | = 1$ for each $c \in \cC$.
\end{definition}

Given two elements $(c_1, i)$ and $(c_2, j)$ of $F_{\cC}$ that belong to the same set of the partition $\cF_{\cC}$ we use the notation: $(c_1, i) \sim_ {\cF_{\cC}}(c_2, j)$. 

\begin{remark}
\label{rem:boh}
Note that the existence of an admissible partition of $F_{\cC}$ implies that all the components in $\cC$ have the same number of fixed arguments, which is also the number of the sets in the partition. In the special case in which the number of fixed arguments in each component is equal to one, there is just one admissible (trivial) partition $\cF_{\cC} = \{F_{\cC}\}$.
\end{remark}

\begin{definition}[Template]
\label{def:temp}
A \emph{template} $\cT$ is a pair $( \cC, \cF_{\cC})$ such that $\cC$ is a set of components and $\cF_{\cC}$ is an admissible partition of $F_{\cC}$. We simply write $\cT = ( \cC)$ when the partition is trivial, i.e. $\cF_{\cC} = \{F_{\cC}\}$.
\end{definition}

By previous considerations, the set of relations appearing in the set of components $\cC$ of a template, up to a permutation of the position of the arguments, will always have the following form:
$$\{r_i(x_1^i, \dots , x_k^i, v^i)\,i=1,\dots ,n_1\}\cup\{r_i(x_1^i, \dots , x_k^i)\,i=n_1+1,\dots ,n_1+n_2\}$$ 
where $v^i$s indicate the counted arguments, $x_l^i$'s the fixed arguments and $x_l^i\sim_ {\cF_{\cC}}x_l^j$ for every $l, i, j$.

\begin{definition}[Template's Instance]
\label{def:temp-instance}
Given a template $\cT$, an \emph{instance} $\gamma$ of $\cT$ is a function that maps the elements in $F_{\cC}$ to the objects $\cO$ of the problem $\cP$ such that $\gamma(c_1, i) = \gamma(c_2, j)$ if and only if $(c_1, i) \sim_ {\cF_{\cC}}(c_2, j)$.
\end{definition}

\begin{definition}[Template's Instantiation]
\label{def:temp-instantiation}
The \emph{instantiation} of $\cT$ according to instance $\gamma$, $\gamma(\cT)$, is the set of ground atoms in $2^{Atms}$ obtained as follows: for each component $c = \langle r, a, p \rangle$ of $\cT$, take the relation symbol $r$, for each element $(c, i) \in F_{\cC}$ bind the argument in position $i$ according to $\gamma((c, i))$ and the counted argument in position $p$ to all the objects $\cO$ of the problem $\cP$.
\end{definition}

Given how we construct $\gamma(\cT)$, it is easy to see that its elements will look as follows:
$\gamma(\cT) = \{r_i(\gamma(x_1^i), \dots , \gamma(x_k^i), v^i)\,i=1,\dots ,n_1\}\cup\{r_i(\gamma(x_1^i), \dots , \gamma(x_k^i))\,i=n_1+1,\dots ,n_1+n_2\}$.

Instances are interesting because they can be used to reason about their (exponentially larger) instantiations without, in fact, constructing those instantiations.

Given a template $\cT$ and an instance $\gamma$, if the ground atoms in the instantiation of $\cT$ according to $\gamma$ are mutually exclusive in the initial state $Init$ and remain such in any state reachable $s \in \cS_r$, then $\gamma(\cT)$ is (by definition) a mutual exclusion invariant set. A template with this property for each possible instantiation $\gamma$ is called \emph{invariant template}.

\begin{definition}[Invariant Template]
\label{def:inv-temp}
A template $\cT$ is an \emph{invariant template} if, for each instance $\gamma$, the instantiation of $\cT$ according to $\gamma$ is a mutual exclusion invariant set.
\end{definition}

Given an invariant template $\cT$, we can create one state variable for each of its instances. The domains of these variables are the corresponding mutual exclusion invariant sets with an additional \emph{null} value, which is used when no element in the mutual exclusion invariant set is true. 

Before describing in what situations we can feasibly prove that a template is invariant, we introduce a final concept:

\begin{definition}[Template Instance's Weight]
\label{def:temp-weight}
Given an instance $\gamma$ of a template $\cT$, its \emph{weight} in a state $s$, $w(\cT, \gamma, s)$, is the number of ground atoms in its instantiation $\gamma(\cT)$ that are true in $s$:
$$w(\cT, \gamma,s) =  | s \cap \gamma(\cT) |$$
\end{definition}

\begin{proposition}
\label{rem:wei}
A template $\cT$ is an invariant template if and only if, for each instance $\gamma$ and each state $s \in \cS_r$, $w(\cT, \gamma,s) \leq 1$.
\end{proposition}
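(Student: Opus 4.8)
The plan is to prove this biconditional purely by unwinding the chain of definitions that connects the weight function to the notion of an invariant template, since the statement is essentially a reformulation of those definitions. The key observation I would make first is that, by Definition~\ref{def:temp-weight}, the condition $w(\cT,\gamma,s)\le 1$ is literally the inequality $|s\cap\gamma(\cT)|\le 1$, and this holds exactly when at most one ground atom of the instantiation $\gamma(\cT)$ belongs to the logical state $s$ --- that is, when at most one atom of $\gamma(\cT)$ is true in $s$. This single translation is the bridge between the two sides of the equivalence, and everything else is a matter of tracking quantifiers.

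For the forward direction, I would assume that $\cT$ is an invariant template. By Definition~\ref{def:inv-temp}, for every instance $\gamma$ the instantiation $\gamma(\cT)$ is a mutual exclusion invariant set. By Definition~\ref{def:mei}, this means that at most one element of $\gamma(\cT)$ is true in every reachable state $s\in\cS_r$, which, via the translation above, is exactly $w(\cT,\gamma,s)\le 1$. Since $\gamma$ and $s\in\cS_r$ were arbitrary, the right-hand condition follows.

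The backward direction reverses the same chain: assuming $w(\cT,\gamma,s)\le 1$ for every instance $\gamma$ and every $s\in\cS_r$, the translation yields $|s\cap\gamma(\cT)|\le 1$, so at most one atom of $\gamma(\cT)$ is true in any reachable state; hence each $\gamma(\cT)$ is a mutual exclusion invariant set (Definition~\ref{def:mei}), and therefore $\cT$ is an invariant template (Definition~\ref{def:inv-temp}).

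I do not expect a genuine obstacle here, as the argument is a definitional unfolding rather than a substantive deduction; the only point requiring care is the quantifier structure --- the equivalence must be read as ranging simultaneously over all instances $\gamma$ and all reachable states $s\in\cS_r$ on both sides. A secondary subtlety worth flagging is the status of the initial state: whenever the instance admits at least one valid plan we have $Init\in\cS_r$, so the reachable-state condition automatically subsumes any requirement at $Init$, and when no valid plan exists $\cS_r=\emptyset$ and both sides hold vacuously. Thus no separate initial-state check is needed, and the clean biconditional stands.
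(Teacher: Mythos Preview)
Your proposal is correct and takes essentially the same approach as the paper: the paper's own proof is the single line ``It follows from Definitions~\ref{def:inv-temp} and~\ref{def:temp-weight},'' and your argument is precisely the explicit unfolding of that definitional chain. Your closing remark about the initial state is a reasonable sanity check but is not needed for the equivalence as the paper states it.
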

\begin{proof}
It follows from Definitions \ref{def:inv-temp} and \ref{def:temp-weight}.
\end{proof}

\begin{example}[\sfs{Floortile} domain]
\label{ex:long}
A template for this domain is $\cT_{ft} = (\{c_1, c_2, c_3 \})$, where:
\begin{itemize}
\item $c_1 =  \langle \verb+robot-at+, 2, 0 \rangle $ is the first component.  It includes the relation \url{robot-at} that has an arity of two (i.e. the relation \verb+robot-at(robot,tile)+ has two arguments) and the argument in position zero, i.e. \verb+robot+, is the counted argument. The remaining argument, \verb+tile+, which is in position one, is the fixed argument: $F_{c_{1}} = \{(c_1, 1)\}$.
\item $c_2 = \langle \verb+painted+, 2, 1 \rangle$ is the second component with $F_{c_{2}} = \{(c_2, 0)\}$. 
\item  $c_3 =  \langle \verb+clear+, 1, 1 \rangle $ is the last component with  $F_{c_{3}} = \{(c_3, 0)\}$.
\end{itemize}

Note that, since the three components have one fixed argument, all components are in the same equivalent class (trivial partition). 

Assume that we have a problem $\cP$ with two robots \verb+rbt1+ and \verb+rbt2+, three tiles, \verb+tile1+, \verb+tile2+ and \verb+tile3+ and one colour \verb+black+. Consider one possible instance $\gamma_1$ such that  $\gamma_1((c_1, 1)) = \gamma_{1}((c_2, 0)) = \gamma_{1}((c_3, 0)) =  \verb+tile1+$. The instantiation of the template $\cT_{ft}$ according to $\gamma_{1}$ is: $\gamma_{1}(\cT_{ft}) = \{$ \verb+robot-at(rbt1,tile1)+, \url+robot-at(rbt2,tile1)+, \url+painted(tile1,black)+, \verb+clear(tile1)+ $\}$. 


The weight of the instance $\gamma_{1}$ in a state $s$ is the number of ground instantiations of \url+robot-at+ \url+(robot,tile)+, \url+painted(tile,colour)+ and \url+clear(tile)+ that are true in $s$, where the variable $tile$ has been instantiated as \verb+tile1+. If we have a state $s$ in which no instantiations of \verb+robot-at(robot,tile)+ and \url+painted(tile,colour)+ are true, but \verb+clear(tile1)+ is true, the weight in $s$ is one.
 

We will see that we can actually prove that $\cT_{ft}$ is an invariant, which states that a tile can be clear, already painted or in the process of being painted by a robot.  Hence, for the problem $\cP$, we can create a state variable that represents each of the three tiles, whose values are the possible configurations of such tiles and the null value. It can also be proved that at least and at most one element of the mutual exclusion invariant set need to be true in any reachable state and, in consequence, the \verb+null+ value can be removed from the domain of the state variable. Hence we have:
$\cS\cV_{tile1} = \{$ \verb+robot-at(rbt1,tile1)+, \verb+robot-at(rbt2,tile1)+, \url{painted} \url{(tile1,} \url{black)}, \verb+clear(tile1)+ $\}$ and similarly for $\cS\cV_{tile2}$ and $\cS\cV_{tile3}$.

\end{example}

\section{Safe Instantaneous Ground Actions}
\label{sec:safeness inst actions}
In this and in the following sections, given a planning instance $\cI = (\cD, \cP)$ and a template $\cT$, we discuss the conditions that $\cT$ needs to satisfy to be an invariant. We make the standing assumption that the initial state $Init$ satisfies the weight condition $w(\cT,\gamma, Init)\leq 1$ for every instance $\gamma$ and determine \emph{sufficient} conditions on the families of instantaneous and durative actions in $\cD$ that ensure that $\cT$ is an invariant. In this section, in particular, we work out a concept of \emph{safety} for instantaneous actions that guarantees that, when a safe action is executed, the weight bound is not violated. 

\subsection{Safe instantaneous ground actions}
We assume a template $\cT$ to be fixed as well an instance $\gamma$.
Consider a set of concurrent pairwise non-interfering ground actions $A\subseteq \cG\cA$. The set of states $s\in\cS$ on which $A$ is applicable is denoted by $\cS_A$. We start with the following definition:

\begin{definition}[Strongly safe actions] 
The set of actions $A$ is \emph{strongly} $\gamma$-\emph{safe} if, for each $s \in \cS_A$ such that $w(\cT, \gamma,s) \leq 1$, the successor state $s' = \xi(s, A)$ is such that $w(\cT, \gamma,s') \leq 1$.
\end{definition}

The study of strong $\gamma$-safety for an action set $A$ can be reduced to the study of the state dynamics on the template instantiation $\gamma(\cT)$. This is intuitive and is formalised below.

\begin{remark}\label{remark: split}
Given an action $a\in \cG\cA$, define $a_{\gamma}$ and $a_{\neg\gamma}$ as the actions, respectively, specified by
$$
\begin{array}{ll}Pre^{\pm}_{a_{\gamma}}=Pre^{\pm}_{a}\cap\gamma(\cT),\quad &Eff^{\pm}_{a_{\gamma}}=Eff^{\pm}_{a}\cap\gamma(\cT)\\
Pre^{\pm}_{a_{\neg\gamma}}=Pre^{\pm}_{a}\cap\gamma(\cT)^c,\quad &Eff^{\pm}_{a_{\neg\gamma}}=Eff^{\pm}_{a}\cap\gamma(\cT)^c
\end{array}
$$
(where $A^c$ denotes the set complement of $A$).
Accordingly, we define, given an action set $A$, the action sets $A_\gamma=\{a_{\gamma}\,|\, a \in A\}$ and $A_{\neg\gamma}=\{a_{\neg\gamma}\,|\, a \in A\}$.
Split now the states in a similar way: given $s\in \cS$, put $s_{\gamma}=s\cap\gamma(\cT)$ and $s_{\neg\gamma}=s\cap\gamma(\cT)^c$. Then, it is immediate to see that given a state $s$ we have that $s\in\cS_A$ if and only if $s_{\gamma}\in\cS_{A_{\gamma}}$ and$s_{\neg\gamma}\in\cS_{A_{\neg\gamma}}$ and it holds that:
\begin{equation}\label{eq: state split}s'=\xi(s, A)\; \Leftrightarrow\; \left\{\begin{array}{rcl}s'_{\gamma}&=&\xi(s_{\gamma}, A_{\gamma})\\ s'_{\neg\gamma}&=&\xi(s_{\neg\gamma}, A_{\neg\gamma})\end{array}\right.\end{equation}
\end{remark}

This leads to the following simple but useful result.

\begin{proposition}\label{prop: equiv safe} 
For a set of actions $A$, the following conditions are equivalent:
\begin{enumerate}[(i)]
\item $A$ is strongly $\gamma$-safe;
\item $A_{\gamma}$ is strongly $\gamma$-safe;
\item For every $s\in\cS_{A_{\gamma}}$ such that $s\subseteq \gamma(\cT)$ and $w(\cT,\gamma, s)\leq 1$, it holds that the successor state $s' = \xi(s, A_{\gamma})$ is such that $w(\cT, \gamma,s') \leq 1$.
\end{enumerate}
\end{proposition}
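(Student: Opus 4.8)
The plan is to prove the three-way equivalence by leaning entirely on Remark~\ref{remark: split}, which reduces the dynamics to the template instantiation. The key observation that makes everything work is that the weight of a state depends only on its $\gamma$-part, $w(\cT,\gamma,s)=|s_\gamma|$, and that by the splitting identity~\eqref{eq: state split} the $\gamma$-part of a successor is produced by one and the same map, $s_\gamma\mapsto\xi(s_\gamma,A_\gamma)$, regardless of whether we apply $A$ or $A_\gamma$. Indeed $(a_\gamma)_\gamma=a_\gamma$ and $(a_\gamma)_{\neg\gamma}$ is the trivial (empty precondition/effect) action, so $\xi(s,A_\gamma)_\gamma=\xi(s_\gamma,A_\gamma)=\xi(s,A)_\gamma$. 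Hence all three statements test the single quantity ``how $A_\gamma$ transforms the $\gamma$-part'', and differ only in the class of states over which the test ranges. Before the main argument I would record two applicability facts, both immediate from $Pre^{\pm}_{a_\gamma}\subseteq\gamma(\cT)$: that $A_\gamma$ is applicable in $s$ iff it is applicable in $s_\gamma$, and (from the Remark) that $s\in\cS_A$ iff $s_\gamma\in\cS_{A_\gamma}$ and $s_{\neg\gamma}\in\cS_{A_{\neg\gamma}}$.

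I would then dispatch (ii)$\Leftrightarrow$(iii), which is unconditional. The direction (ii)$\Rightarrow$(iii) is immediate, since the states allowed in (iii) are a subclass of those in (ii). For (iii)$\Rightarrow$(ii), given any $s\in\cS_{A_\gamma}$ with $w(\cT,\gamma,s)\le 1$, I would pass to $s_\gamma$: it satisfies $s_\gamma\subseteq\gamma(\cT)$, $s_\gamma\in\cS_{A_\gamma}$ and $w(\cT,\gamma,s_\gamma)=w(\cT,\gamma,s)\le 1$, so (iii) bounds $w(\cT,\gamma,\xi(s_\gamma,A_\gamma))$; since $\xi(s,A_\gamma)_\gamma=\xi(s_\gamma,A_\gamma)$, the two successor weights coincide and the bound transfers back to $s$.

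Next I would treat (i)$\Leftrightarrow$(ii). For (ii)$\Rightarrow$(i), take $s\in\cS_A$ with $w(\cT,\gamma,s)\le1$; then $s\in\cS_{A_\gamma}$, so (ii) gives $w(\cT,\gamma,\xi(s,A_\gamma))\le1$, and this equals $w(\cT,\gamma,\xi(s,A))$ because the two successors share the same $\gamma$-part. This direction is clean. For the converse (i)$\Rightarrow$(ii), given $s\in\cS_{A_\gamma}$ with $w(\cT,\gamma,s)\le1$, I would manufacture a witness state in $\cS_A$ with the same $\gamma$-part: pick some $\hat s_{\neg\gamma}\in\cS_{A_{\neg\gamma}}$, set $\hat s=s_\gamma\cup\hat s_{\neg\gamma}$, and check $\hat s\in\cS_A$ and $w(\cT,\gamma,\hat s)=w(\cT,\gamma,s)\le1$; then (i) bounds $w(\cT,\gamma,\xi(\hat s,A))=w(\cT,\gamma,\xi(s,A_\gamma))$.

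The main obstacle is exactly this last step: it requires $\cS_{A_{\neg\gamma}}\neq\emptyset$, equivalently that $A$ be applicable in at least one state ($\cS_A\neq\emptyset$). This is the natural standing hypothesis in our setting, since the relevant action sets are happenings occurring in executable plans and are therefore applicable in reachable states; were $\cS_A=\emptyset$, statement (i) would be vacuous while (ii) need not be, so the equivalence is to be understood under this assumption. Aside from supplying this witness, the entire argument is routine bookkeeping with the splitting identity, involving no computation beyond intersecting sets with $\gamma(\cT)$ and its complement $\gamma(\cT)^c$.
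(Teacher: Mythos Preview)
Your argument is correct and follows essentially the same route as the paper: both rely on Remark~\ref{remark: split} and the identity $w(\cT,\gamma,s)=w(\cT,\gamma,s_\gamma)$ to reduce everything to the dynamics on $\gamma(\cT)$. The paper organises the cycle as (ii)$\Rightarrow$(iii)$\Rightarrow$(i)$\Rightarrow$(ii) rather than your pair of biconditionals, but the content is the same.

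The one place where the paper is slightly sharper is precisely the step you flagged as an obstacle. For (i)$\Rightarrow$(ii), instead of abstractly invoking some $\hat s_{\neg\gamma}\in\cS_{A_{\neg\gamma}}$ and then worrying about nonemptiness, the paper \emph{names} the witness: it takes $\hat s_{\neg\gamma}=Pre^+_{A_{\neg\gamma}}$ and sets $s^*=s_\gamma\cup Pre^+_{A_{\neg\gamma}}$. This state automatically satisfies the positive preconditions of $A_{\neg\gamma}$, and it satisfies the negative ones provided $Pre^+_{A_{\neg\gamma}}\cap Pre^-_{A_{\neg\gamma}}=\emptyset$, which holds whenever $A$ is applicable anywhere. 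So the explicit construction does not eliminate the degenerate case $\cS_A=\emptyset$ you identified (where (i) is vacuous but (ii) need not be), but it makes clear that no further hypothesis is needed beyond that standing one, and it avoids the appeal to an unspecified element.
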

%

\begin{definition}[Classification of Ground Actions]
\label{def: classification}
A set of ground actions $A$ is: 
\begin{itemize}
\item $\gamma$-\emph{unreachable} if $|Pre^+_{A_{\gamma}}|\geq 2$;
\item $\gamma$-\emph{heavy} if $|Pre^+_{A_{\gamma}}|\leq 1$ and $|Eff^+_{A_{\gamma}}|\geq 2$;
\item $\gamma$-\emph{irrelevant}  if $|Pre^+_{A_{\gamma}}|\leq 1$ and $|Eff^+_{A_{\gamma}}|=0$;
\item $\gamma$-\emph{relevant} for $\cT$ if $|Pre^+_{A_{\gamma}}|\leq 1$ and $|Eff^+_{A_{\gamma}}|=1$.
\end{itemize}
\end{definition}

It is immediate to see that each $A\subseteq \cG\cA$ belongs to one and just one of the above four disjoint classes. The following result clarifies their relation with strong safety. 

\begin{theorem}
\label{prop:weight class} 
Let $A$ be a set of ground actions. Then, 
\begin{enumerate}[1.]
\item if $A$ is $\gamma$-unreachable or $\gamma$-irrelevant, $A$ is strongly $\gamma$-safe;
\item if $A$ is heavy, $A$ is not strongly $\gamma$-safe.
\end{enumerate}
\end{theorem}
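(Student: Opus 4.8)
The plan is to reduce everything to the $\gamma$-restricted dynamics via Proposition \ref{prop: equiv safe}, using the equivalence $(i)\Leftrightarrow(iii)$. The advantage of $(iii)$ is that it only tests states $s$ with $s\subseteq\gamma(\cT)$, and for such states the weight collapses to a cardinality: $w(\cT,\gamma,s)=|s\cap\gamma(\cT)|=|s|$, so the hypothesis $w(\cT,\gamma,s)\leq 1$ simply reads $|s|\leq 1$. Moreover the successor $s'=\xi(s,A_{\gamma})=(s\setminus Eff^{-}_{A_{\gamma}})\cup Eff^{+}_{A_{\gamma}}$ again lies inside $\gamma(\cT)$ (all effects of $A_{\gamma}$ do, by Remark \ref{remark: split}), so $w(\cT,\gamma,s')=|s'|$ as well. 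Throughout I write $Pre^{+}_{A_{\gamma}}=\bigcup_{a\in A}Pre^{+}_{a_{\gamma}}$ and similarly for $Pre^{-}_{A_{\gamma}}$, $Eff^{\pm}_{A_{\gamma}}$, all subsets of $\gamma(\cT)$, and recall that $A_{\gamma}$ is applicable in $s$ exactly when $Pre^{+}_{A_{\gamma}}\subseteq s$ and $Pre^{-}_{A_{\gamma}}\cap s=\emptyset$.

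Statement 1 is then short. In the $\gamma$-unreachable case, the premise of $(iii)$ is vacuous: any $s\in\cS_{A_{\gamma}}$ must contain $Pre^{+}_{A_{\gamma}}$, so $|s|\geq|Pre^{+}_{A_{\gamma}}|\geq 2$, contradicting $|s|\leq 1$; there is no admissible witness state and strong safety holds trivially. In the $\gamma$-irrelevant case I would take any admissible $s$ and note that $Eff^{+}_{A_{\gamma}}=\emptyset$ forces $s'=s\setminus Eff^{-}_{A_{\gamma}}\subseteq s$, hence $|s'|\leq|s|\leq 1$ and the weight bound is preserved.

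The content is in Statement 2. Here I would exhibit an explicit counterexample to $(iii)$. The natural candidate is $\hat{s}=Pre^{+}_{A_{\gamma}}$, which has $|\hat{s}|=|Pre^{+}_{A_{\gamma}}|\leq 1$ and so meets the weight hypothesis. If $A_{\gamma}$ is applicable in $\hat{s}$, its successor satisfies $\xi(\hat{s},A_{\gamma})\supseteq Eff^{+}_{A_{\gamma}}$, whence $|\xi(\hat{s},A_{\gamma})|\geq|Eff^{+}_{A_{\gamma}}|\geq 2$, violating the bound and showing $A$ is not strongly $\gamma$-safe.

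The delicate point, which I expect to be the main obstacle, is justifying that $\hat{s}$ is genuinely applicable, i.e. that $Pre^{-}_{A_{\gamma}}\cap Pre^{+}_{A_{\gamma}}=\emptyset$. This is not automatic, since two non-interfering actions may still carry contradictory preconditions (one requiring an atom present, another requiring it absent); and if $A$ were applicable in no state whatsoever then $\cS_A=\emptyset$ and $A$ would be vacuously strongly safe. I would therefore make explicit the standing assumption that $A$ is applicable in some state $\bar{s}$ (the only case of interest, as action sets that never occur are irrelevant). Given such $\bar{s}$, applicability of $A_{\gamma}$ is monotone under deleting atoms outside $Pre^{+}_{A_{\gamma}}$: from $\bar{s}_{\gamma}=\bar{s}\cap\gamma(\cT)\in\cS_{A_{\gamma}}$ one has $Pre^{+}_{A_{\gamma}}\subseteq\bar{s}_{\gamma}$ and $Pre^{-}_{A_{\gamma}}\cap\bar{s}_{\gamma}=\emptyset$, so removing every atom not in $Pre^{+}_{A_{\gamma}}$ keeps both conditions and lands exactly on $\hat{s}=\bar{s}_{\gamma}\cap Pre^{+}_{A_{\gamma}}=Pre^{+}_{A_{\gamma}}$. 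This simultaneously shows $\hat{s}\in\cS_{A_{\gamma}}$ and $Pre^{-}_{A_{\gamma}}\cap Pre^{+}_{A_{\gamma}}=\emptyset$, completing the witness and hence the theorem.
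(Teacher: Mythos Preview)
Your proof is correct and follows essentially the same route as the paper's: vacuity for the unreachable case, monotonicity $s'\subseteq s$ (on the $\gamma$-part) for the irrelevant case, and the witness state built from the positive preconditions for the heavy case. The main difference is presentational: you explicitly invoke Proposition~\ref{prop: equiv safe}(iii) to work entirely inside $\gamma(\cT)$, whereas the paper argues more directly with $A$ and $s$ (e.g.\ it takes $s=Pre^{+}_{A}$ rather than $Pre^{+}_{A_{\gamma}}$).

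One point worth noting: your care about applicability in the heavy case is well placed. The paper simply asserts ``$A$ is applicable in $s=Pre^{+}_{A}$'' without addressing whether $Pre^{-}_{A}\cap Pre^{+}_{A}=\emptyset$; you correctly observe that non-interference alone does not preclude two actions having contradictory preconditions on the same atom, and you patch this by assuming $A$ is applicable somewhere (which is the only interesting case, since otherwise $\cS_{A}=\emptyset$ and strong safety is vacuous). This is a genuine refinement over the paper's argument, not a departure from it.
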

%
%
%

As the next example shows, relevant action sets may be strongly safe or not. 


\begin{example} Consider a template $\cT$ and an instance $\gamma$ such that $\gamma(\cT) = \{q, q', q''\}$, where $q$, $q'$ and $q''$ are three ground atoms, and $A = \{ a\}$, where $a$ is a ground action such that $Eff^+_a = \{ q \}$. Since $|Eff^+_a \cap \gamma(\cT)|=1$, $a$ is $\gamma$-relevant. Consider a state $s \in \cS_A$ such that $w(\cT, \gamma,s) \leq 1$.
\begin{itemize}
\item Suppose that $Pre^+_a = \{ q'\}$ and $Eff^-_a = \{ q' \}$ as shown in Figure \ref{fig:ex1}, left. In this case, $a$ is strongly $\gamma$-safe. In fact, $q' \in s$ and in consequence $w(\cT, \gamma,s) = 1$. Given $s' = \xi(s, a)$, $q' \not \in s'$, but $q \in s'$ and therefore $w(\cT, \gamma,s') = 1$. 
\item  Supposed that $Pre^+_a = \emptyset$ and $Eff^-_a = \{ q' \}$ as shown in Figure \ref{fig:ex1}, left. In this case, $a$ is $\gamma$-relevant, but is not strongly $\gamma$-safe. In fact, suppose that $q'' \in s$ and therefore $w(\cT, \gamma,s) = 1$. Since $q'' \not \in Eff^-_a$ and $q \in Eff^+_a$ , $s' = \xi(s, a)$ is such that $w(\cT, \gamma,s') = 2$.
\end{itemize}
 \end{example}

\begin{figure}[tbh]
\centering\includegraphics[width=100mm]{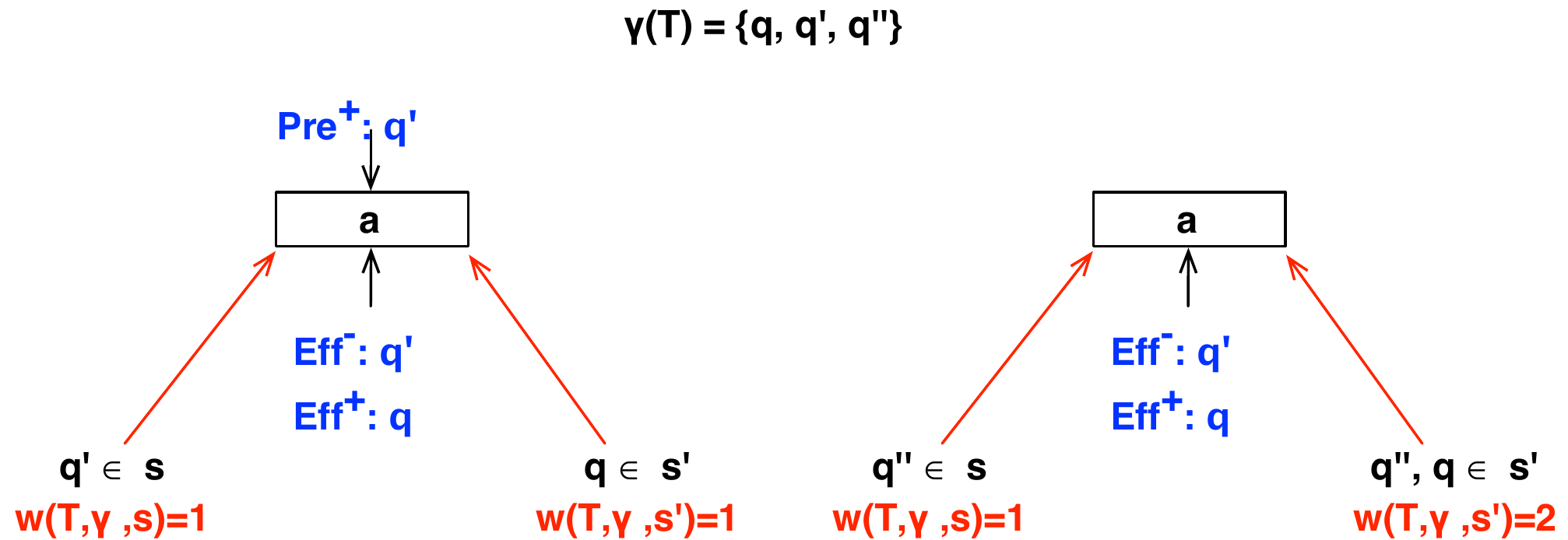}
\caption{Right: action $a$ is $\gamma$-relevant and strongly $\gamma$-safe. Left: action $a$ is essentially $\gamma$-relevant. Note that, in action specifications, only preconditions and effects different from the empty sets are represented.}
\label{fig:ex1}
\end{figure}

We now propose the following classification of relevant actions.

\begin{definition}[Classification of Relevant Actions] 
\label{def:relevant action sets} 
A $\gamma$-relevant set of ground actions $A$ is:
\begin{itemize}
\item \emph{balanced}  if $|Pre^+_{A_{\gamma}}|= 1$ and $Pre^+_{A_{\gamma}}\subseteq Eff^+_{A_{\gamma}}\cup Eff^-_{A_{\gamma}}$;
\item \emph{unbalanced} if $|Pre^+_{A_{\gamma}}|= 1$ and $Pre^+_{A_{\gamma}}\cap ( Eff^+_{A_{\gamma}}\cup Eff^-_{A_{\gamma}})=\emptyset$;
\item \emph{bounded} if $|Pre^+_{A_{\gamma}}|= 0$ and $Pre_{A_{\gamma}}\cup Eff_{A_{\gamma}}=\gamma(\cT)$;
\item \emph{unbounded} if $|Pre^+_{A_{\gamma}}|= 0$ and $Pre_{A_{\gamma}}\cup Eff_{A_{\gamma}}\neq \gamma(\cT)$.
\end{itemize}
\end{definition}

Again it is obvious that every relevant set of ground actions $A$ belongs to one and just one of the above four disjoint classes. The following results completes the analysis of strong safety.

\begin{theorem}
\label{prop:relevant actions} 
Let $A$ be a $\gamma$-relevant set of ground actions. Then, 
\begin{enumerate}[1.]
\item  if $A$ is balanced or bounded, $A$ is strongly $\gamma$-safe;
\item if $A$ is unbalanced or unbounded, $A$ is not strongly $\gamma$-safe.
\end{enumerate}
\end{theorem}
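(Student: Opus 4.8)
The plan is to lean on Proposition~\ref{prop: equiv safe}, which lets me replace $A$ by its restriction $A_{\gamma}$ and test strong $\gamma$-safety only on logical states contained in $\gamma(\cT)$. On such states the weight equals the cardinality, so condition (iii) of that proposition reads exactly as a finite combinatorial statement: for every $s \subseteq \gamma(\cT)$ with $|s| \le 1$ on which $A_{\gamma}$ is applicable, the successor $s' = \xi(s, A_{\gamma}) = (s \setminus Eff^-_{A_{\gamma}}) \cup Eff^+_{A_{\gamma}}$ must satisfy $|s'| \le 1$. Since $A$ is $\gamma$-relevant, $Eff^+_{A_{\gamma}}$ is a single atom, which I will call $e$. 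The entire argument then amounts to tracking the fate of the at-most-one atom present in $s$ together with the forced insertion of $e$, and splits cleanly along the four classes of Definition~\ref{def:relevant action sets}.

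For the two safe cases I show that every admissible $s$ is driven to the singleton $\{e\}$. In the \emph{balanced} case the unique positive precondition $p$ must lie in $s$, so applicability forces $s = \{p\}$; because $p \in Eff^+_{A_{\gamma}} \cup Eff^-_{A_{\gamma}}$, the atom $p$ is either deleted and replaced by $e$, or is itself $e$, so $s' = \{e\}$. In the \emph{bounded} case there is no positive precondition, hence the applicable states are $\emptyset$ and those singletons $\{x\}$ with $x \notin Pre^-_{A_{\gamma}}$; the empty state maps to $\{e\}$, while the covering identity $Pre_{A_{\gamma}} \cup Eff_{A_{\gamma}} = \gamma(\cT)$ guarantees that any such surviving $x$ lies in $Eff^+_{A_{\gamma}} \cup Eff^-_{A_{\gamma}}$, again collapsing the successor to $\{e\}$. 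In both cases $|s'| = 1$, so $A_{\gamma}$, and therefore $A$, is strongly $\gamma$-safe.

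For the two unsafe cases I exhibit an applicable singleton whose successor has weight two. In the \emph{unbalanced} case I take $s = \{p\}$ with $p$ the positive precondition; since $p \notin Eff^+_{A_{\gamma}} \cup Eff^-_{A_{\gamma}}$, the atom $p$ is neither deleted nor equal to $e$, so $s' \supseteq \{p, e\}$ with $p \neq e$, giving $w(\cT, \gamma, s') = 2$. In the \emph{unbounded} case the failure of the covering identity yields an atom $x \in \gamma(\cT)$ outside $Pre^-_{A_{\gamma}} \cup Eff^+_{A_{\gamma}} \cup Eff^-_{A_{\gamma}}$; then $s = \{x\}$ is applicable, $x$ survives the deletions and differs from $e$, so once more $s' = \{x, e\}$ violates the bound. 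The weight counting here is routine; the one point demanding care is \emph{applicability} of the candidate witnesses, which is precisely why the bounded/unbounded split is phrased through $Pre_{A_{\gamma}} \cup Eff_{A_{\gamma}}$, including the negative preconditions, rather than through the effects alone. Concretely, I must check that $\{p\}$ is applicable in the unbalanced case, which uses the standing assumption that the action set under consideration is applicable in at least one state (so that $Pre^+_{A_{\gamma}} \cap Pre^-_{A_{\gamma}} = \emptyset$ and hence $p \notin Pre^-_{A_{\gamma}}$), and that the chosen $x$ in the unbounded case avoids $Pre^-_{A_{\gamma}}$, which its very choice guarantees. I expect this applicability bookkeeping, not the weight arithmetic, to be the only place where the argument could go astray.
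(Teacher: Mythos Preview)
Your proof is correct and follows essentially the same route as the paper's: reduce to $A_\gamma$ via Proposition~\ref{prop: equiv safe}(iii), let $e$ (the paper's $q_2$) be the unique positive effect, and in each of the four cases either show every admissible $s\subseteq\gamma(\cT)$ with $|s|\le 1$ is sent to $\{e\}$, or exhibit a singleton witness whose successor has two atoms. Your treatment is in fact slightly more careful than the paper's, which in the unbalanced case simply writes $s'=\{q_1,q_2\}$ without verifying that $\{q_1\}$ is applicable; your remark that one needs $p\notin Pre^-_{A_\gamma}$ (and your justification via non-contradictory preconditions) is exactly the missing applicability check, and it is the only place where either argument requires any care.
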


Immediate consequence of Theorems \ref{prop:weight class} and \ref{prop:relevant actions} is the following result:

\begin{corollary}
\label{cor: strongly safe} 
Let $A$ be a set of ground actions. Then,
\begin{enumerate}[1.]
\item if $A$ is either $\gamma$-unreachable, $\gamma$-irrelevant, $\gamma$-relevant balanced, or $\gamma$-relevant bounded, $A$ is strongly $\gamma$-safe;
\item if $A$ is either $\gamma$-heavy, $\gamma$-relevant unbalanced, or $\gamma$-relevant unbounded, $A$ is not strongly $\gamma$-safe.
\end{enumerate}
\end{corollary}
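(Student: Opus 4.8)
The plan is to prove the result purely by case analysis, feeding each of the seven named conditions into whichever of Theorems \ref{prop:weight class} and \ref{prop:relevant actions} governs it; no new machinery is required, which is exactly why the statement is phrased as an immediate corollary. The preliminary step I would make explicit is the bookkeeping of the two classification schemes. By Definition \ref{def: classification} and the observation following it, every $A\subseteq\cG\cA$ falls into exactly one of the four disjoint classes $\gamma$-unreachable, $\gamma$-heavy, $\gamma$-irrelevant, $\gamma$-relevant. By Definition \ref{def:relevant action sets} and the remark after it, every $\gamma$-relevant $A$ falls into exactly one of balanced, unbalanced, bounded, unbounded. Hence the seven labels appearing in the corollary partition the $\gamma$-relevant and non-$\gamma$-relevant cases without overlap or omission, so the two items of the corollary are genuinely a complete and mutually exclusive dichotomy.

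For item~1 I would treat the non-relevant labels and the relevant labels separately. If $A$ is $\gamma$-unreachable or $\gamma$-irrelevant, strong $\gamma$-safety is precisely Theorem \ref{prop:weight class}(1), so nothing further is needed. If $A$ is $\gamma$-relevant and balanced, or $\gamma$-relevant and bounded, then in particular $A$ is $\gamma$-relevant, so Theorem \ref{prop:relevant actions} is applicable, and its part~1 delivers strong $\gamma$-safety. For item~2 the argument is symmetric: the $\gamma$-heavy case is exactly Theorem \ref{prop:weight class}(2), while the $\gamma$-relevant unbalanced and $\gamma$-relevant unbounded cases invoke Theorem \ref{prop:relevant actions}(2), again after recording that $A$ is $\gamma$-relevant so that the second theorem applies.

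The only thing to verify is that the condition labels in the corollary match, verbatim, the hypotheses of the two theorems, which they do; once the disjointness and exhaustiveness of the two classifications are in hand there is no genuine obstacle and no computation to perform. I would therefore keep the written proof to a few lines, essentially the two citations above, rather than reproving any of the weight-counting estimates, since those were already established in the two theorems being combined.
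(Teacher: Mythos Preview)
Your proposal is correct and takes essentially the same approach as the paper, which simply states that the corollary is an immediate consequence of Theorems \ref{prop:weight class} and \ref{prop:relevant actions}. Your explicit bookkeeping of the exhaustiveness and disjointness of the two classifications is a welcome elaboration, but the underlying argument is identical.
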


Finally, this result shows that strong $\gamma$-safety can always be checked at the level of single actions.

\begin{proposition}
\label{rem: strongly-safe-action} 
Let $A$ be a set of actions. Then,
$A$ is strongly $\gamma$-safe if $a$ is strongly $\gamma$-safe for all $a \in A$.
\end{proposition}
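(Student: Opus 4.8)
The plan is to reduce the collective claim to the single-action hypothesis by \emph{serialising} the concurrent application of $A$ and then arguing by induction along the serialisation. The crucial observation that makes this work is that strong $\gamma$-safety of an individual action is a condition quantified over \emph{all} states of weight at most one on which the action is applicable (not merely over reachable states), so it can be applied freely to the auxiliary states produced by the serialisation.

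Concretely, I would first fix an arbitrary state $s\in\cS_A$ with $w(\cT,\gamma,s)\leq 1$ and enumerate $A=\{a_1,\dots,a_n\}$. By Proposition~\ref{prop: ser}, the serialised sequence defined by $s_0=s$ and $s_k=\xi(s_{k-1},a_k)$ is well defined --- each $a_k$ is applicable in $s_{k-1}$, so that $s_{k-1}\in\cS_{\{a_k\}}$ --- and moreover $s_n=\xi(s,A)$. I would then establish by induction on $k$ that $w(\cT,\gamma,s_k)\leq 1$ for every $k=0,\dots,n$. The base case $k=0$ is exactly the hypothesis on $s$. For the inductive step, assuming $w(\cT,\gamma,s_{k-1})\leq 1$, the applicability of $a_k$ in $s_{k-1}$ places $s_{k-1}$ in $\cS_{\{a_k\}}$ with weight at most one; since $a_k$ is strongly $\gamma$-safe by hypothesis, its successor $s_k=\xi(s_{k-1},a_k)$ satisfies $w(\cT,\gamma,s_k)\leq 1$. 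Taking $k=n$ gives $w(\cT,\gamma,\xi(s,A))\leq 1$, and as $s$ was arbitrary this is precisely the definition of $A$ being strongly $\gamma$-safe.

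The step I expect to require the most care is the inductive one, but the difficulty is conceptual rather than computational: one must be sure that strong safety of $a_k$ may legitimately be invoked at $s_{k-1}$, even though $s_{k-1}$ is merely an intermediate logical state arising from the serialisation and need not belong to $\cS_r$. This is exactly why strong $\gamma$-safety is formulated as a property holding for every $s\in\cS_{\{a_k\}}$ with $w(\cT,\gamma,s)\le 1$, so the only facts the argument consumes are the applicability guarantee and the telescoping identity $s_n=\xi(s,A)$ supplied by Proposition~\ref{prop: ser}. Everything else is a routine induction, and no reference to the split decomposition of Proposition~\ref{prop: equiv safe} is needed.
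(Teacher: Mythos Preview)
Your proposal is correct and follows essentially the same approach as the paper: serialise $A$ via Proposition~\ref{prop: ser} and then apply the strong $\gamma$-safety of each $a_k$ inductively along the serialised state sequence to conclude $w(\cT,\gamma,s_n)\leq 1$. Your write-up is in fact more explicit than the paper's, which simply states that ``by the assumption, it follows that $w(\cT,\gamma,s_i)\leq 1$ for every $i$'' without spelling out the induction or the point about strong safety being quantified over all applicable states.
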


\begin{example}[\sfs{Floortile} domain]
Consider a template $\cT = (\{c\}, \{ \{ (c, 0)\} \})$, where $c = \langle \verb+painted+,$ $ 2, 1 \rangle$. Take two instances $\gamma_1(c, 0) =$ \url{tile1} and $\gamma_2(c, 0) =$ \url{tile2} and the ground action $a=$\url{paint} \url{-up(rbt1, tile1, tile3, red)} (Table \ref{tb:paintup}). This action is strongly $\gamma_2-$safe since it is irrelevant, but it is $\gamma_1$-relevant and not strongly $\gamma_1-$safe. This is because, given a state $s \in \cS_A$ such that, for example, \url{painted(tile4, black)}$ \in s$, $s' = \xi(s, a)$ is such that \url{painted(tile4, black)}, \url{painted(tile1, black)} $ \in s'$, with  $w(\cT, \gamma_1,s') = 2$.
\end{example}

We conclude now with a definition and a first result that expresses a sufficient condition for a template to be invariant.

\begin{definition}
\label{def: strongly-safe-action}
Given a template $\cT$, a set of actions $A \subseteq \cG\cA$ is \emph{strongly safe} if it is strongly $\gamma$-\emph{safe} for every instance $\gamma$.
\end{definition}

We have the following result:

\begin{corollary}
\label{cor:inv}
Given a template $\cT$, $\cT$ is invariant if for each $a \in \cG\cA$, $a$ is strongly safe.
\end{corollary}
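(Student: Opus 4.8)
The plan is to invoke Proposition \ref{rem:wei}, which reduces the claim that $\cT$ is invariant to the purely quantitative statement that $w(\cT, \gamma, s) \leq 1$ holds for every instance $\gamma$ and every reachable state $s \in \cS_r$. Fixing an arbitrary instance $\gamma$, I would then argue that this weight bound propagates along the trace of any valid simple plan, so that the standing assumption on the initial state is preserved under execution.

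Concretely, let $s \in \cS_r$. By definition of $\cS_r$, there is a valid simple plan $\pi \in Plans$ whose trace $\{(t_i, s_i)\}_{i = 0, \ldots, \bar{k}}$ contains a pair $(t, s)$, so $s = s_i$ for some index $i$. I would prove by induction on $i$ that $w(\cT, \gamma, s_i) \leq 1$. The base case $i = 0$ is exactly the standing assumption $w(\cT, \gamma, Init) \leq 1$, since $s_0 = Init$. For the inductive step, recall that executability gives $s_{i+1} = \xi(s_i, A_{t_{i+1}})$, where the happening $A_{t_{i+1}}$ is a set of pairwise non-interfering ground actions applicable in $s_i$, so that $s_i \in \cS_{A_{t_{i+1}}}$.

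The core of the step combines two earlier results. By hypothesis each $a \in \cG\cA$ is strongly safe, hence in particular strongly $\gamma$-safe; applying Proposition \ref{rem: strongly-safe-action} to the happening $A_{t_{i+1}} \subseteq \cG\cA$ then shows that the whole set $A_{t_{i+1}}$ is strongly $\gamma$-safe. Since the induction hypothesis gives $w(\cT, \gamma, s_i) \leq 1$ and $s_i \in \cS_{A_{t_{i+1}}}$, the definition of strong $\gamma$-safety yields directly $w(\cT, \gamma, s_{i+1}) \leq 1$, closing the induction. As $\gamma$ and $s$ were arbitrary, every reachable state satisfies the weight bound, and Proposition \ref{rem:wei} then delivers invariance of $\cT$.

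I do not expect a serious obstacle: the argument is a clean induction along the happening sequence, with all the technical content already packaged in Propositions \ref{rem:wei} and \ref{rem: strongly-safe-action} and in the standing weight assumption on $Init$. The only points requiring mild care are carrying out the induction uniformly in $\gamma$, so that strong safety (quantified over all instances) can be specialised to the fixed $\gamma$ at each step, and observing that the happening $A_{t_{i+1}}$ is by convention a set of pairwise non-interfering actions, which is precisely what makes $\xi(s_i, A_{t_{i+1}})$ well defined and licenses the use of Proposition \ref{rem: strongly-safe-action}.
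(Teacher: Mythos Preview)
Your proof is correct and follows essentially the same approach as the paper, which simply cites Proposition~\ref{rem: strongly-safe-action}, Definition~\ref{def: strongly-safe-action} and Proposition~\ref{rem:wei} without spelling out the induction. You have just made explicit the induction along the trace of a valid plan that these citations leave implicit.
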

\begin{proof} It follows from Remark \ref{rem: strongly-safe-action}, Definition \ref{def: strongly-safe-action} and Proposition \ref{rem:wei}.
\end{proof}

The condition expressed in Corollary \ref{cor:inv} cannot be inverted in general. Indeed, a template can be invariant even if not all actions are strongly safe. We will see when this happens in the following section. 

\section{Safe action sequences and safe durative actions}
\label{sec:safeness sequences}
A template can be invariant even if not all ground actions are strongly safe. This happens for two reasons. On the one hand, since the set of reachable states $\cS_r$ is in general smaller than $\cS$, it may be that all the states that are responsible for the lack of strong safety are unreachable, i.e. they are not in $\cS_r$. On the other hand, in domains with durative actions, some instantaneous actions are temporally coupled because they are the start and end fragments of the same durative action. This coupling imposes constraints on the states where the end part can be applied, which might prove helpful to establish that a template is invariant. 
While in this paper we will not analyse the first case as it would require an analysis of the set of reachable states $\cS_r$, which is practically unfeasible, we now elaborate suitable simple concepts of safety for durative actions, which are weaker than strong safety. This extension is of great importance to apply our technique to real-world planning domains. In fact, they often present durative actions that have a non strongly safe end fragment, but that nonetheless never violate the weight condition when appearing in a plan. We propose a definition of safety for durative actions that captures this case. However, given that in a plan a durative action may intertwine with other actions that happen in between its start and end points, we need to work out a concept of safety for more general sequence of actions than just durative ones.

Below, we consider general sequences of ground action sets ${\bf A}:=(A^1,A^2,\dots ,A^n)$. Note that any valid simple plan $\pi$ naturally induces such a sequence. Indeed, if $trace(\pi)=\{S_i=(t_i,s_i)_{i=0,\dots , \bar k}\}$ and $A_{t_i}$ are the relative happenings, we can consider the so called \emph{happening sequence} of $\pi$: ${\bf A}_{\pi}=(A_{t_0}, \dots , A_{t_{\bar k}})$. $\bf A_{\pi}$ contains all the information on the plan $\pi$ except the time values at which the various actions happen. 

To study the invariance of a template, we break the happening sequence of each plan into subsequences determined by the happenings of durative actions. More precisely, we consider sequences ${\bf A}:=(A^1,A^2,\dots ,A^n)$ where, for some durative action $Da=(a^{st}, a^{inv}, a^{end})$, we have that $a^{st}\in A^1$ and $a^{end}\in A^n$. $A^2, \dots, A^{n-1}$, as well as $A^1$ and $A^n$, possibly contain other actions that are executed over the duration of $Da$. 
However, in the first instance, it is convenient to consider general sequences of ground actions ${\bf A}:=(A^1,A^2,\dots ,A^n)$ without referring to plans or durative actions. Hence, in this section, we first propose a definition of safety for $\bf A$ such that, when $\bf A$ is effectively executed serially in any valid plan $\pi$, the weight constraint is not violated in any intermediate step and at the end of the sequence, if it is not violated in the state where the sequence is initially applied. For single action sets (sequences of length $n=1$), such concept coincides with the notion of strong safety.

We then consider a slightly stronger notion of safety which is \emph{robust} to the insertion, between elements of the sequence, of other ground actions whose positive effects have no intersection with the template. To do this, it is necessary to introduce a number of auxiliary concepts relating to the state dynamics induced by the execution of $\bf A$. This general theory will be then applied to sequences constructed from durative actions. 


\subsection{Safe ground action sequences}

Given a sequence of ground action sets ${\bf A}:=(A^1,A^2,\dots ,A^n)$, we denote with ${\bf S}_{\bf A}$ the set of state sequences $(s^0,\dots ,s^n)\in \cS^{n+1}$ such that 
$$A^i\;\hbox{\rm  is applicable in}\; s^{i-1}\;{\rm and}\; s^i=\xi(s^{i-1}, A_i)\;\forall i=1,\dots ,n$$
If $(s^0,\dots ,s^n)\in {\bf S}_{\bf A}$, we say that $(s^0,\dots ,s^n)$ is a state sequence \emph{compatible} with ${\bf A}$. Given an instance $\gamma$, we also define ${\bf S}_{\bf A}(\gamma)$ as the set of compatible state sequences $(s^0,\dots ,s^n)$ such that $w(\cT,\gamma, s^0)\leq 1$. We use the following notation for subsequences of ${\bf A}$: ${\bf A}_h^k=(A^h,A^{h+1},\dots ,A^k)$.

We now fix a template $\cT$ and an instance $\gamma$ and propose the following natural definition of safety for a sequence.

\begin{definition}[Individually safe actions]
\label{def: gamma ind safe}
A sequence of ground action sets ${\bf A}:=(A^1,A^2,\dots ,A^n)$ is \emph{individually $\gamma$-safe} if for every sequence of states $(s^0,\dots ,s^n)$ $\in {\bf S}_{\bf A}$ we have that 
$$w(\cT, \gamma, s^0)\leq 1\;\Rightarrow\; w(\cT, \gamma, s^i)\leq 1\,\forall i=1,\dots , n$$
\end{definition}

The invariance of a template can now be expressed in terms of individual safety for the happening sequences. 

\begin{proposition}\label{prop: invariance safety} Let $\cT$ be a template. Suppose that for every valid simple plan $\pi$, the sequence ${\bf A}_{\pi}$ is individually $\gamma$-safe for every instance $\gamma$. Then, $\cT$ is invariant.
\end{proposition}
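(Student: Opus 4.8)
The plan is to reduce the claim to the weight characterisation of invariance given in Proposition \ref{rem:wei}: it suffices to show that $w(\cT,\gamma,s)\leq 1$ for every instance $\gamma$ and every reachable state $s\in\cS_r$. So I fix an arbitrary instance $\gamma$ and an arbitrary $s\in\cS_r$ and aim to bound its weight.

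By the definition of $\cS_r$, there is a valid simple plan $\pi\in Plans$ with $(t,s)\in trace(\pi)$; writing $trace(\pi)=\{S_i=(t_i,s_i)\}_{i=0,\dots,\bar k}$, this means $s=s_j$ for some index $j$. The key observation is that the trace states $(s_0,s_1,\dots,s_{\bar k})$ form a state sequence compatible with the happening sequence ${\bf A}_{\pi}=(A_{t_0},\dots,A_{t_{\bar k}})$, i.e. $(s_0,\dots,s_{\bar k})\in{\bf S}_{{\bf A}_{\pi}}$. This is exactly what executability of $\pi$ asserts: each happening is applicable in the preceding logical state and yields the next one via $\xi$. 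Moreover $s_0=Init$, which by the standing assumption satisfies $w(\cT,\gamma,Init)\leq 1$ for every instance $\gamma$, so this compatible sequence in fact lies in ${\bf S}_{{\bf A}_{\pi}}(\gamma)$.

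Now I invoke the hypothesis that ${\bf A}_{\pi}$ is individually $\gamma$-safe. Applying Definition \ref{def: gamma ind safe} to the compatible sequence $(s_0,\dots,s_{\bar k})$, the bound $w(\cT,\gamma,s_0)\leq 1$ propagates to $w(\cT,\gamma,s_i)\leq 1$ for every $i=1,\dots,\bar k$; in particular $w(\cT,\gamma,s)=w(\cT,\gamma,s_j)\leq 1$. Since $\gamma$ and $s$ were arbitrary, Proposition \ref{rem:wei} yields that $\cT$ is invariant.

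The argument is essentially an unwinding of definitions, and the only point requiring care is the matching between the trace of $\pi$ and the happening sequence ${\bf A}_{\pi}$: one must confirm that the executability conditions on $\pi$ are literally the compatibility conditions defining ${\bf S}_{{\bf A}_{\pi}}$ (up to the index convention relating the happening times $t_i$ to the transition from $s_{i-1}$ to $s_i$), so that individual $\gamma$-safety applies to the whole trace rather than to some unrelated sequence. I expect this bookkeeping, rather than any substantive difficulty, to be the main thing to get right.
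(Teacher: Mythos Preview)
Your proof is correct and follows essentially the same approach as the paper's own proof: identify the trace of a valid plan with a state sequence compatible with ${\bf A}_{\pi}$, invoke the standing assumption $w(\cT,\gamma,Init)\leq 1$, and apply individual $\gamma$-safety to bound the weight at every reachable state. The only cosmetic difference is that you explicitly route through Proposition~\ref{rem:wei} and the definition of $\cS_r$, whereas the paper leaves those steps implicit; your caveat about the indexing convention is well taken but does not affect the argument.
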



Below are elementary properties of individual $\gamma$-safety for subsequences of ${\bf A}$. 

\begin{proposition}\label{prop: ind safe} 
Consider a sequence of ground action sets ${\bf A}:=(A^1,A^2,\dots ,A^n)$. The following properties hold:
\begin{enumerate}[(i)]
\item if, for some $k$ and $h$ such that $k\geq h-1$, ${\bf A}_1^k=(A^1,A^2,\dots ,A^k)$ and ${\bf A}_h^n=(A^{h},\dots ,A^n)$ are both individually $\gamma$-safe, then also ${\bf A}$ is individually $\gamma$-safe;
\item if ${\bf A}$ is individually $\gamma$-safe and $A^k$ and $A^{k+1}$ are non-interfering, then ${\bf A}'=(A^1,A^2,\dots ,A^k\cup A^{k+1},\dots ,A^n)$ is individually $\gamma$-safe;
\item if ${\bf A}$ is individually $\gamma$-safe and $B^j$, for $j=1,\dots , n$ are action sets such that $Eff_{B^j}=\emptyset$, then, ${\bf A}'=(A^1,  B^1, A^2,\dots , B^n, A^n)$ and ${\bf A}''=(A^1\cup  B^1,\dots , A^n\cup B^n)$ are individually $\gamma$-safe.  
\end{enumerate}
\end{proposition}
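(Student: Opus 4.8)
The plan is to prove each of the three properties by taking an arbitrary compatible state sequence for the \emph{modified} sequence, relating it to a genuine compatible state sequence for ${\bf A}$, and then invoking the individual $\gamma$-safety hypothesis. The unifying observation is that in every case the desired weight bounds $w(\cT, \gamma, \cdot)\leq 1$ follow immediately once we have manufactured a sequence in ${\bf S}_{\bf A}$ (or in ${\bf S}_{{\bf A}_1^k}$ and ${\bf S}_{{\bf A}_h^n}$) to which the assumption applies.

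For part (i), I would fix $(s^0,\dots ,s^n)\in {\bf S}_{\bf A}$ with $w(\cT, \gamma, s^0)\leq 1$. The truncation $(s^0,\dots ,s^k)$ lies in ${\bf S}_{{\bf A}_1^k}$, so individual $\gamma$-safety of ${\bf A}_1^k$ gives $w(\cT, \gamma, s^i)\leq 1$ for $i=1,\dots ,k$; in particular position $h-1$ is covered, either as the hypothesis (when $h-1=0$) or as one of the bounds just established (when $1\leq h-1\leq k$). Then $(s^{h-1},\dots ,s^n)\in {\bf S}_{{\bf A}_h^n}$, and individual $\gamma$-safety of ${\bf A}_h^n$ yields $w(\cT, \gamma, s^j)\leq 1$ for $j=h,\dots ,n$. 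Since $k\geq h-1$ forces $\{1,\dots ,k\}\cup\{h,\dots ,n\}=\{1,\dots ,n\}$ with no gap, every position is bounded. The only point to watch is this index bookkeeping, which is precisely where the overlap hypothesis $k\geq h-1$ is consumed.

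For part (ii), the essential tool is the serialisability result, Proposition \ref{prop: ser}. Given a compatible state sequence for ${\bf A}'$ whose merged step is $s^k=\xi(s^{k-1}, A^k\cup A^{k+1})$, I would use serialisability (valid because $A^k$ and $A^{k+1}$ are non-interfering, so $A^k\cup A^{k+1}$ is a legitimate concurrent set) to insert the intermediate state $u=\xi(s^{k-1}, A^k)$, obtaining $A^k$ applicable in $s^{k-1}$, $A^{k+1}$ applicable in $u$, and $s^k=\xi(u, A^{k+1})$. This expands the sequence into one lying in ${\bf S}_{\bf A}$; individual $\gamma$-safety of ${\bf A}$ then bounds every state of the expansion, hence every state of the original ${\bf A}'$-sequence. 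Part (iii) rests on the simpler fact that $Eff_{B^j}=\emptyset$ makes each $B^j$ a no-op on logical states, so that $\xi(s, B^j)=s$ and, more generally, $\xi(s, A^j\cup B^j)=\xi(s, A^j)$. Consequently any sequence compatible with ${\bf A}''$ is already compatible with ${\bf A}$ (the applicability of $A^j\cup B^j$ is only stronger while the transition is unchanged), and any sequence compatible with ${\bf A}'$ differs from an ${\bf A}$-compatible one only by repeated states inserted at the $B^j$ steps; in both cases the bounds transfer directly from the individual $\gamma$-safety of ${\bf A}$.

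I expect the main obstacle to be part (ii): the delicate step is verifying that the expanded sequence genuinely lies in ${\bf S}_{\bf A}$ rather than merely producing the correct endpoints, and this is exactly what Proposition \ref{prop: ser} supplies. Parts (i) and (iii) are then routine once the right compatible sequence is identified.
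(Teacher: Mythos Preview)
Your proposal is correct and follows essentially the same approach as the paper: for (i) you truncate a compatible sequence and apply the two safety hypotheses with the overlap $k\geq h-1$ bridging them; for (ii) you invoke serialisability (Proposition \ref{prop: ser}) to insert the intermediate state; and for (iii) you observe that effectless $B^j$'s leave the state unchanged. The only minor stylistic difference is that for ${\bf A}''$ in (iii) you argue directly that an ${\bf A}''$-compatible sequence is already ${\bf A}$-compatible, whereas the paper first establishes safety of ${\bf A}'$ and then applies item (ii) to merge each $B^j$ with the adjacent $A^j$; both routes are valid and equally short.
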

%
%

The following is a useful consequence of the previous results: it asserts that if individual safety holds locally in a sequence, then it also holds globally.

\begin{corollary}\label{cor: ind safe} 
For a sequence of ground action sets ${\bf A}:=(A^1,A^2,\dots ,A^n)$, the following conditions are equivalent:
\begin{enumerate}
\item[(i)]  the sequence ${\bf A}$ is individually $\gamma$-safe;
\item[(ii)]  for each $j=1,\dots , n$, there exists a subsequence ${\bf A}_{j-r}^{j+s}$, with $r, s \geq 0$, that is individually $\gamma$-safe.
\end{enumerate}
\end{corollary}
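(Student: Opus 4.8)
The plan is to prove the two implications separately, with the forward direction being immediate and the reverse direction handled by an inductive gluing argument based on Proposition \ref{prop: ind safe}(i).

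For (i) $\Rightarrow$ (ii) there is essentially nothing to do: if ${\bf A}$ is individually $\gamma$-safe, then for every $j=1,\dots,n$ I simply take $r=j-1$ and $s=n-j$, so that the subsequence ${\bf A}_{j-r}^{j+s}$ coincides with the full sequence ${\bf A}_1^n={\bf A}$, which is individually $\gamma$-safe by hypothesis.

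The substantive direction is (ii) $\Rightarrow$ (i). For each $j$ let $[l_j,m_j]$ denote the index interval of the individually $\gamma$-safe subsequence ${\bf A}_{l_j}^{m_j}$ guaranteed by (ii), where $l_j=j-r_j$ and $m_j=j+s_j$ satisfy $1\le l_j\le j\le m_j\le n$. I would then build, by induction, a strictly increasing sequence of indices $k_0<k_1<\cdots$ such that every prefix ${\bf A}_1^{k_t}$ is individually $\gamma$-safe. For the base case, the interval associated with $j=1$ forces $l_1=1$, so ${\bf A}_1^{m_1}$ is individually $\gamma$-safe and I set $k_0=m_1$. For the inductive step, assuming ${\bf A}_1^{k_t}$ is individually $\gamma$-safe with $k_t<n$, I examine position $j=k_t+1$ and its interval $[l_j,m_j]$; since $l_j\le j=k_t+1$, the overlap condition $k_t\ge l_j-1$ holds. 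Applying Proposition \ref{prop: ind safe}(i) to the sequence ${\bf A}_1^{m_j}$, split into the prefix ${\bf A}_1^{k_t}$ and the suffix ${\bf A}_{l_j}^{m_j}$, then yields that ${\bf A}_1^{m_j}$ is individually $\gamma$-safe, and I set $k_{t+1}=m_j\ge k_t+1>k_t$.

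Since the $k_t$ form a strictly increasing sequence of integers bounded above by $n$, the process terminates after finitely many steps at $k_t=n$, at which point ${\bf A}_1^n={\bf A}$ is individually $\gamma$-safe, establishing (i). The only point requiring care — and the main obstacle — is verifying at each step that the newly chosen interval genuinely overlaps or abuts the current prefix, so that the hypothesis $k\ge h-1$ of Proposition \ref{prop: ind safe}(i) is met; this reduces to the simple observation that the interval covering position $k_t+1$ has left endpoint at most $k_t+1$. I would also note explicitly that part (i) of Proposition \ref{prop: ind safe}, although stated for a full sequence, applies verbatim to any initial segment ${\bf A}_1^{m_j}$ regarded as a sequence in its own right.
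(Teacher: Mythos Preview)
Your proof is correct and follows the same approach as the paper, which simply states that (i)$\Rightarrow$(ii) is trivial and (ii)$\Rightarrow$(i) follows from an iterative use of Proposition \ref{prop: ind safe}(i). Your argument is a careful, explicit unpacking of exactly this iteration: you build individually $\gamma$-safe prefixes of increasing length by repeatedly gluing the current prefix to the next covering subsequence via Proposition \ref{prop: ind safe}(i), checking the overlap condition $k\ge h-1$ at each step.
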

\begin{proof}
(i)$\Rightarrow$(ii) is trivial and (ii)$\Rightarrow$(i) follows from an iterative use of (i) of Proposition \ref{prop: ind safe}.
\end{proof}

Individual safety is a weak property since it is not robust with respect to the insertion of other actions, even when these actions are irrelevant but possess delete effects. This is connected to the fact that, while individual safety has this nice local to global feature illustrated in Corollary \ref{cor: ind safe}, it does not possess the opposite feature: subsequences of individual safe sequences may not be individual safe. The following example shows both these phenomena.

\begin{example}
\label{ex: safe}
Consider a template $\cT$ and an instance $\gamma$ such that $\gamma(\cT) = \{q, q'\}$. The set of state sequences compatible with ${\bf A}:=(a^1,a^2)$ (Figure \ref{fig:ind} - top diagram) is: ${\bf S}_{\bf A} = \{ (s^0, s^1, s^2) | q \not\in s^0, s^1=s^0 \cup \{q'\}, s^2=s^1\}$. Note that $q \not\in s^0$ because, by hypothesis, $a^2$ is applicable in $s^1$ and $s^1=s^0 \cup \{q'\}$. ${\bf A}$ is individually  $\gamma$-safe since $w(\cT, \gamma, s^i) \leq 1$ for every state $s^i$ that appears in ${\bf S}_{\bf A}$. Note that $a^1$ is $\gamma$-relevant unbounded and thus not strongly $\gamma$-safe.

Now consider the sequence ${\bf \tilde A}:=(a^1, b, a^2)$ (Figure \ref{fig:ind} - bottom diagram) where a $\gamma$-irrelevant action $b$ is inserted between $a^1$ and $a^2$. The new set of state sequences compatible with ${\bf \tilde A}$ is: ${\bf S}_{\bf \tilde A} = \{ (s^0, s^1, s^2, s^3) | s^1=s^0 \cup \{q'\}, s^2 = s^1 \setminus \{q \}, s^3=s^2\}$. Note that now $q$ can be in $s^0$ since it is the action $b$ that ensures the applicability of $a^2$. If $q\in s^0$, since $a^1$ adds $q'$ to $s^0$, $w(\cT, \gamma, s^1) = 2$. Clearly, this new sequence is not individually $\gamma$-safe. The insertion of a $\gamma$-irrelevant action has failed the individual $\gamma$-safety of the sequence ${\bf A}$.
\end{example}

\begin{figure}[tbh]
\centering\includegraphics[width=60mm]{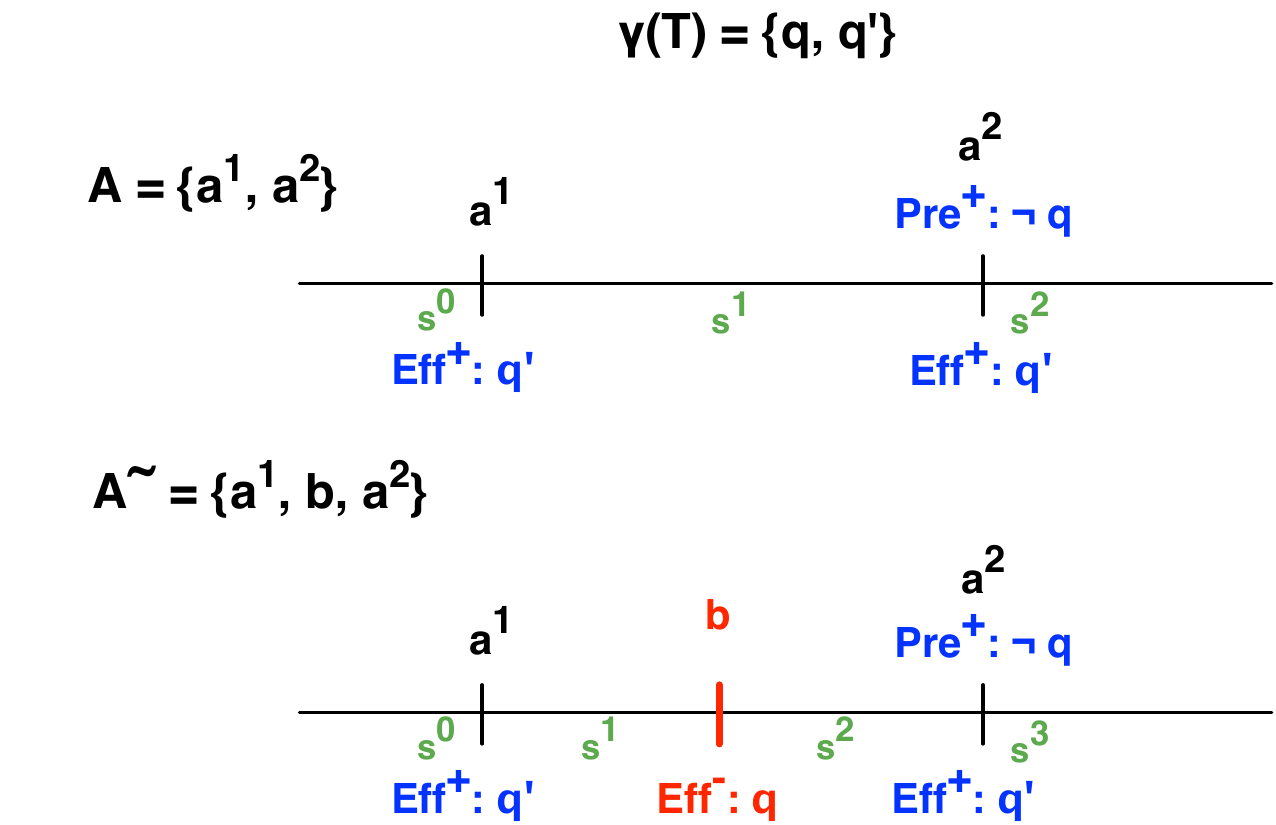}
\caption{The insertion of the $\gamma$-irrelevant action $b$ fails the individual $\gamma$-safety of the sequence ${\bf A}$.}
\label{fig:ind}
\end{figure}

For proving some of our results, the concept of individual safety is not sufficient. Below we present a stronger definition of safety for an action sequence that is robust with respect to the insertion of irrelevant actions in it. First, we define the simple concepts of executable and reachable sequences.

\begin{definition}[Executable and reachable actions]
\label{def: executable} The sequence ${\bf A}=(A^1,A^2,\dots,$ $A^n)$ is called:
\begin{itemize}
\item \emph{executable} if ${\bf S}_{\bf A}\neq \emptyset$;
\item $\gamma$-\emph{(un)reachable} if ${\bf S}_{\bf A}(\gamma)\neq \emptyset$ (${\bf S}_{\bf A}(\gamma)= \emptyset$).
\end{itemize}
\end{definition}

\begin{remark}\label{rem: exec reach safe}
Note the following chain of implications
$${\rm non-executable}\;\Rightarrow\; \gamma{\rm -unreachable} \;\Rightarrow\; {\rm individually}\, \gamma{\rm -safe}$$
Note that if $\pi$ is a valid simple plan with happening sequence ${\bf A}_{\pi}$, then ${\bf A}_{\pi}$ is $\gamma$-reachable for every $\gamma$ due to the standing assumption that $w(\cT, \gamma, Init)\leq 1$ for every $\gamma$. Moreover, every subsequence ${\bf A}$ of ${\bf A}_{\pi}$ is executable. If a subsequence ${\bf A}$ of ${\bf A}_{\pi}$ is $\gamma$-unreachable, the weight will surely exceed $2$ at some point of the plan $\pi$ and thus the template $\cT$ will not be invariant. 
\end{remark}

In the special case of a sequence of length $2$, executability and reachability admit  very simple characterisations. We report them below as we will need them later.
First define, for a generic set of actions $A$, the subsets
$$\Gamma^+_A:=(Pre^+_A\setminus Eff^-_A)\cup Eff^+_A,\quad \Gamma^-_A:=(Pre^-_A\setminus Eff^+_A)\cup Eff^-_A$$
We have the following result:
\begin{proposition}\label{prop: exec two} Given a sequence of two ground action sets ${\bf A}=(A^1,A^2)$, the following conditions are equivalent:
\begin{enumerate}
\item[(i)] $\bf A$ is executable;
\item[(ii)] $\Gamma^+_{A^1}\cap Pre^-_{A^2}=\emptyset = \Gamma^-_{A^1}\cap Pre^+_{A^2}$.
\end{enumerate}
\end{proposition}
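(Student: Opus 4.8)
The statement is a biconditional, so the plan is to prove each implication separately, with the auxiliary sets $\Gamma^+_{A^1}$ and $\Gamma^-_{A^1}$ serving as a complete record of what the application of $A^1$ forces. The first thing I would establish is the elementary fact that whenever $A^1$ is applicable in a state $s^0$ and $s^1=\xi(s^0,A^1)$, the intermediate state necessarily satisfies $\Gamma^+_{A^1}\subseteq s^1$ and $\Gamma^-_{A^1}\cap s^1=\emptyset$. This is immediate from the definition of $\xi$: the add effects $Eff^+_{A^1}$ land in $s^1$, the positive preconditions survive unless deleted (giving $Pre^+_{A^1}\setminus Eff^-_{A^1}$), while the delete effects are removed and the negative preconditions stay absent unless added (giving the two pieces of $\Gamma^-_{A^1}$); here I use that $Eff^+_{A^1}\cap Eff^-_{A^1}=\emptyset$, which holds because the actions in $A^1$ are pairwise non-interfering. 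Intuitively, $\Gamma^+_{A^1}$ (resp.\ $\Gamma^-_{A^1}$) is exactly the set of atoms \emph{guaranteed} true (resp.\ false) in $s^1$ no matter how the unconstrained part of $s^0$ is chosen.

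For the direction (i)$\Rightarrow$(ii), I would take a compatible sequence $(s^0,s^1,s^2)\in{\bf S}_{\bf A}$. Applicability of $A^2$ in $s^1$ means $Pre^+_{A^2}\subseteq s^1$ and $Pre^-_{A^2}\cap s^1=\emptyset$. Combining with the auxiliary fact, $\Gamma^-_{A^1}\cap Pre^+_{A^2}\subseteq\Gamma^-_{A^1}\cap s^1=\emptyset$ and $\Gamma^+_{A^1}\cap Pre^-_{A^2}\subseteq s^1\cap Pre^-_{A^2}=\emptyset$, which is precisely (ii). This direction needs no extra hypotheses.

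The more delicate direction is (ii)$\Rightarrow$(i), where I have to exhibit an actual witness state. My candidate is
\[
s^0 \;=\; Pre^+_{A^1}\,\cup\,(Pre^+_{A^2}\setminus Eff^+_{A^1}).
\]
I would then check the two applicability requirements in turn. That $A^1$ is applicable in $s^0$ reduces to $Pre^-_{A^1}\cap s^0=\emptyset$; the part meeting $Pre^+_{A^1}$ is empty by internal consistency of $A^1$, and the part meeting $Pre^+_{A^2}\setminus Eff^+_{A^1}$ is empty because any common atom would lie in $Pre^-_{A^1}\setminus Eff^+_{A^1}\subseteq\Gamma^-_{A^1}$ while also lying in $Pre^+_{A^2}$, contradicting the second half of (ii). Setting $s^1=\xi(s^0,A^1)$, I then verify $Pre^+_{A^2}\subseteq s^1$ (each atom of $Pre^+_{A^2}$ is either added by $A^1$, or already in $s^0$ and not deleted, using $Eff^-_{A^1}\subseteq\Gamma^-_{A^1}$ together with (ii)) and $Pre^-_{A^2}\cap s^1=\emptyset$ (an atom of $Pre^-_{A^2}$ surviving in $s^1$ would have to come from $Eff^+_{A^1}$ or from $Pre^+_{A^1}\setminus Eff^-_{A^1}$, both inside $\Gamma^+_{A^1}$ and thus excluded by the first half of (ii), or else from $Pre^+_{A^2}$, contradicting internal consistency of $A^2$).

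I expect the construction of $s^0$ to be the crux, and specifically the verification that it respects the negative preconditions of $A^1$. The subtlety is that an atom may simultaneously be a negative precondition of $A^1$ and a positive precondition of $A^2$; condition (ii) resolves this by forcing such an atom into $Eff^+_{A^1}$, which is exactly why the term $Pre^+_{A^2}\setminus Eff^+_{A^1}$ (rather than $Pre^+_{A^2}$) appears in the definition of $s^0$. The only hypotheses beyond (ii) that the argument requires are that $A^1$ and $A^2$ are each internally consistent, i.e.\ $Pre^+_{A^i}\cap Pre^-_{A^i}=\emptyset$ so that each is applicable in some state in isolation; this is implicit in treating them as genuine happenings.
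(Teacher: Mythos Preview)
Your proof is correct and follows essentially the same approach as the paper: both directions use the auxiliary inclusions $\Gamma^+_{A^1}\subseteq s^1$ and $\Gamma^-_{A^1}\cap s^1=\emptyset$, and for (ii)$\Rightarrow$(i) you construct exactly the same witness state $s^0=Pre^+_{A^1}\cup(Pre^+_{A^2}\setminus Eff^+_{A^1})$ as the paper does. Your write-up simply expands in detail what the paper dismisses as ``straightforward set theoretic computation,'' including the subtle point about why the subtraction of $Eff^+_{A^1}$ is needed to respect the negative preconditions of $A^1$.
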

%
%
%

\begin{proposition}\label{prop: reach two} Given a sequence of two ground action sets ${\bf A}=(A^1,A^2)$, the following conditions are equivalent:
\begin{enumerate}
\item[(i)] $\bf A$ is $\gamma$-reachable;
\item[(ii)] $\bf A$ is executable and $|Pre^+_{A_{\gamma}^1}\cup (Pre^+_{A_{\gamma}^2}\setminus Eff^+_{A_{\gamma}^1})|\leq 1$.
\end{enumerate}
\end{proposition}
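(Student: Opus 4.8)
The plan is to reduce everything to the template instantiation $\gamma(\cT)$ via the splitting of Remark~\ref{remark: split}, and then to exhibit an explicit minimal initial state. Recall that $w(\cT,\gamma,s)=|s\cap\gamma(\cT)|=|s_\gamma|$, so $\gamma$-reachability of $\mathbf{A}=(A^1,A^2)$ asks precisely for a compatible state sequence whose initial state $s^0$ satisfies $|s^0_\gamma|\le 1$. By Remark~\ref{remark: split}, a triple $(s^0,s^1,s^2)$ is compatible with $\mathbf{A}$ if and only if $(s^0_\gamma,s^1_\gamma,s^2_\gamma)$ is compatible with $\mathbf{A}_\gamma$ and $(s^0_{\neg\gamma},s^1_{\neg\gamma},s^2_{\neg\gamma})$ is compatible with $\mathbf{A}_{\neg\gamma}$; since the two halves live on complementary atom sets, they can be chosen independently and recombined by union, so $\mathbf{A}$ is executable iff both $\mathbf{A}_\gamma$ and $\mathbf{A}_{\neg\gamma}$ are, while the weight only sees the $\gamma$-half. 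Thus, granting executability (which $\gamma$-reachability already entails by the implication chain of Remark~\ref{rem: exec reach safe}), the whole question collapses to whether $\mathbf{A}_\gamma$ admits a compatible sequence with $|s^0_\gamma|\le 1$.

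For the direction (i)$\Rightarrow$(ii), I would take any compatible sequence realising reachability and use applicability at both steps: $Pre^+_{A^1_\gamma}\subseteq s^0_\gamma$ and $Pre^+_{A^2_\gamma}\subseteq s^1_\gamma=(s^0_\gamma\setminus Eff^-_{A^1_\gamma})\cup Eff^+_{A^1_\gamma}$. Any atom of $Pre^+_{A^2_\gamma}$ not supplied by $Eff^+_{A^1_\gamma}$ must therefore already sit in $s^0_\gamma$, whence the pivotal inclusion $Pre^+_{A^1_\gamma}\cup(Pre^+_{A^2_\gamma}\setminus Eff^+_{A^1_\gamma})\subseteq s^0_\gamma$, and the bound $\le 1$ follows from $|s^0_\gamma|\le 1$.

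For (ii)$\Rightarrow$(i), the key move is to \emph{define} the candidate initial state $s^0_\gamma:=Pre^+_{A^1_\gamma}\cup(Pre^+_{A^2_\gamma}\setminus Eff^+_{A^1_\gamma})$, which has at most one element by hypothesis, and then verify that it drives a legal execution of $\mathbf{A}_\gamma$. Checking applicability of $A^1_\gamma$ in $s^0_\gamma$ and of $A^2_\gamma$ in the resulting $s^1_\gamma$ reduces to a handful of disjointness facts, for which I would lean on Proposition~\ref{prop: exec two} applied to $\mathbf{A}_\gamma$: the conditions $\Gamma^+_{A^1_\gamma}\cap Pre^-_{A^2_\gamma}=\emptyset$ and $\Gamma^-_{A^1_\gamma}\cap Pre^+_{A^2_\gamma}=\emptyset$, together with the intra-set consistency $Pre^+\cap Pre^-=\emptyset$ and $Eff^+\cap Eff^-=\emptyset$ inherited from non-interference, rule out exactly the clashes I need --- for instance $Pre^-_{A^1_\gamma}\cap(Pre^+_{A^2_\gamma}\setminus Eff^+_{A^1_\gamma})=\emptyset$ comes from $\Gamma^-_{A^1_\gamma}\cap Pre^+_{A^2_\gamma}=\emptyset$, and $Pre^-_{A^2_\gamma}\cap Eff^+_{A^1_\gamma}=\emptyset$ from $\Gamma^+_{A^1_\gamma}\cap Pre^-_{A^2_\gamma}=\emptyset$, the survival $Pre^+_{A^2_\gamma}\setminus Eff^+_{A^1_\gamma}\subseteq s^0_\gamma\setminus Eff^-_{A^1_\gamma}$ using $Eff^-_{A^1_\gamma}\cap Pre^+_{A^2_\gamma}=\emptyset$. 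Having built a valid $(s^0_\gamma,s^1_\gamma,s^2_\gamma)$ with $|s^0_\gamma|\le1$, I recombine it by union with any compatible sequence for $\mathbf{A}_{\neg\gamma}$ (guaranteed by executability of $\mathbf{A}$) to obtain a compatible sequence for $\mathbf{A}$ of initial weight $\le 1$, i.e.\ $\gamma$-reachability.

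The main obstacle I anticipate is bookkeeping rather than conceptual: discharging each of the four applicability conditions for the explicit $s^0_\gamma$ from the correct piece of the executability hypothesis, and being careful that restriction to $\gamma(\cT)$ preserves both non-interference (so that $Eff^+_{A^i_\gamma}\cap Eff^-_{A^i_\gamma}=\emptyset$) and the $\Gamma$-disjointness conditions. The one genuinely delicate point is that the same cardinality bound does double duty --- forced by reachability in one direction, and exactly what keeps the constructed $s^0_\gamma$ small in the other --- so I would foreground the inclusion $Pre^+_{A^1_\gamma}\cup(Pre^+_{A^2_\gamma}\setminus Eff^+_{A^1_\gamma})\subseteq s^0_\gamma$ as the shared pivot of both implications.
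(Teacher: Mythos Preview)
Your proof is correct and rests on the same pivotal inclusion $Pre^+_{A^1_\gamma}\cup(Pre^+_{A^2_\gamma}\setminus Eff^+_{A^1_\gamma})\subseteq s^0_\gamma$ as the paper. The paper, however, bypasses the preliminary splitting into $\gamma$ and $\neg\gamma$ halves and the separate re-verification of applicability for $\mathbf{A}_\gamma$: for (ii)$\Rightarrow$(i) it simply recalls that the proof of Proposition~\ref{prop: exec two} already exhibited a compatible sequence for the \emph{full} $\mathbf{A}$ with initial state $s^0=Pre^+_{A^1}\cup(Pre^+_{A^2}\setminus Eff^+_{A^1})$, and then observes that $s^0\cap\gamma(\cT)$ is exactly the set appearing in (ii), so $w(\cT,\gamma,s^0)\le 1$ at once. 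Your route---split first, build the $\gamma$-half of the witness from scratch, discharge the four applicability conditions via the $\Gamma^\pm$ disjointness, then recombine with an $\neg\gamma$-witness---is sound and more self-contained, but the paper's shortcut of reusing the already-validated full $s^0$ eliminates precisely the bookkeeping you flagged as the main obstacle. For (i)$\Rightarrow$(ii) the two arguments coincide.
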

%

The following are immediate properties of executability and unreachability:

\begin{proposition}\label{prop: exec reach} 
Consider a sequence ${\bf A}=(A^1,A^2,\dots ,A^n)$ that is executable or $\gamma$-reachable. Then,
\begin{enumerate}
\item[(i)] if $B^j\subseteq A^j$ are such that $Eff_{B^j}=\emptyset $ for every $j=1, \dots , n-1$, then 
also ${\bf A}'=(A^1\setminus B^1,A^2\setminus B^2,\dots ,A^n\setminus B^n)$ is, respectively, executable or $\gamma$-reachable. 
\item[(ii)] if $A^j=A'^j\cup A''^j$ for some $j=1,\dots , n$, then also 
${\bf A}'=(A^1,A^2,\dots A'^j, A''^j,$ $\dots ,A^n)$ is, respectively, executable or $\gamma$-reachable.
\end{enumerate}
\end{proposition}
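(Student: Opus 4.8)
The plan is to prove both items by transporting a witness state sequence from $\bf A$ to $\bf A'$. Recall that, by Definition \ref{def: executable}, executability of a sequence means ${\bf S}_{\bf A}\neq\emptyset$ and $\gamma$-reachability means ${\bf S}_{\bf A}(\gamma)\neq\emptyset$; the latter differs from the former only by the extra requirement $w(\cT,\gamma,s^0)\leq 1$ on the initial state of a compatible sequence. Since every construction below leaves $s^0$ untouched, it suffices to build, from a compatible sequence for $\bf A$, a compatible sequence for $\bf A'$ having the same $s^0$; the two cases (executable / $\gamma$-reachable) then follow simultaneously.

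For item (i), I would fix $(s^0,\dots,s^n)\in{\bf S}_{\bf A}$ and show that the \emph{same} sequence is compatible with $\bf A'$, i.e. ${\bf S}_{\bf A}\subseteq{\bf S}_{\bf A'}$. Two observations do the work. First, $A^j\setminus B^j$ is a subset of the pairwise non-interfering set $A^j$, hence is itself pairwise non-interfering, and its combined preconditions satisfy $Pre^{\pm}_{A^j\setminus B^j}\subseteq Pre^{\pm}_{A^j}$; therefore applicability of $A^j$ in $s^{j-1}$ forces applicability of $A^j\setminus B^j$ in $s^{j-1}$. Second, since $Eff_{B^j}=\emptyset$ every removed action carries no effect, so $Eff^{\pm}_{A^j\setminus B^j}=Eff^{\pm}_{A^j}$ and consequently $\xi(s^{j-1},A^j\setminus B^j)=\xi(s^{j-1},A^j)=s^j$. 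Thus each transition of the old witness is reproduced verbatim, giving ${\bf S}_{\bf A}\subseteq{\bf S}_{\bf A'}$; and because $s^0$ is unchanged, the same inclusion holds for the $\gamma$-restricted sets.

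For item (ii), the only step that changes is the $j$-th, where $A^j=A'^j\cup A''^j$ is split into two consecutive happenings. Starting from a compatible sequence $(s^0,\dots,s^n)$ for $\bf A$, I would insert one intermediate state $s^*$ between $s^{j-1}$ and $s^j$. Since $A^j$ is applicable in $s^{j-1}$, so is its subset $A'^j$; set $s^*=\xi(s^{j-1},A'^j)$. By Serialisability (Proposition \ref{prop: ser}), applied after ordering the actions of $A^j$ so that those of $A'^j$ precede those of $A''^j$, the subsequent application of $A''^j$ is well defined and yields $\xi(s^*,A''^j)=\xi(s^{j-1},A^j)=s^j$. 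The one point needing care — and the main obstacle — is verifying that $A''^j$ is still applicable \emph{as a set} in the intermediate state $s^*$: this is exactly where pairwise non-interference of $A^j$ enters, since for every $a'\in A'^j$ and $a''\in A''^j$ it gives $Pre_{a''}\cap(Eff^+_{a'}\cup Eff^-_{a'})=\emptyset$, so the effect atoms of $A'^j$ are disjoint from the precondition atoms of $A''^j$ and passing from $s^{j-1}$ to $s^*$ neither deletes a positive precondition nor establishes a negative one of $A''^j$. The resulting sequence $(s^0,\dots,s^{j-1},s^*,s^j,\dots,s^n)$ is then compatible with $\bf A'$, has the same initial state $s^0$, and hence witnesses executability and, when $w(\cT,\gamma,s^0)\leq 1$, $\gamma$-reachability of $\bf A'$.
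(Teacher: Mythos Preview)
Your approach matches the paper's in spirit, but there is a small oversight in item (i). The hypothesis only assumes $Eff_{B^j}=\emptyset$ for $j=1,\dots,n-1$, \emph{not} for $j=n$. Hence $B^n$ may carry effects, and your claim that ``the same sequence is compatible with $\bf A'$'' (i.e.\ ${\bf S}_{\bf A}\subseteq{\bf S}_{\bf A'}$) is slightly too strong: the last transition need not be reproduced verbatim. The paper's proof reflects exactly this, writing $(s^0,\dots,s^{n-1},s'^n)\in{\bf S}_{{\bf A}'}$ for a \emph{suitable} $s'^n$, not $s^n$. The fix is immediate and already implicit in your argument: for $j<n$ your two observations give identical transitions, so $s^0,\dots,s^{n-1}$ are preserved; at step $n$, the subset $A^n\setminus B^n$ is still applicable in $s^{n-1}$ (your first observation), yielding some $s'^n$. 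Since executability and $\gamma$-reachability only require the existence of a compatible sequence with the same initial state $s^0$, this suffices.

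Item (ii) is correct and is precisely the paper's one-line appeal to serialisability (Proposition~\ref{prop: ser}); your extra paragraph checking applicability of $A''^j$ in $s^*$ simply unpacks what that proposition already guarantees.
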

%

Here is our stronger notion of safety:

\begin{definition}[Safe actions]
\label{def: gamma str safe}
A sequence of ground action sets ${\bf A}:=(A^1,A^2,$ $\dots A^n)$ is \emph{$\gamma$-safe} if it is executable and ${\bf A}_1^k$ is individually $\gamma$-safe for every $k=1,\dots ,n$.
\end{definition}

Note how the sequence ${\bf A}:=(a^1,a^2)$ considered in Example \ref{ex: safe} is indeed not $\gamma$-safe, since $a^1$ is not individually $\gamma$-safe. The next example shows instead the reason why executability is required. 

\begin{example}
\label{ex: exe}
Consider a template $\cT$ and an instance $\gamma$ such that $\gamma(\cT) = \{q, q', q''\}$. The sequence ${\bf A}:=(a^1,a^2)$ (Figure \ref{fig:notexe} - top diagram) is individually $\gamma-$safe because ${\bf S}_{\bf A} = \emptyset$ ($\neg q''$ is required false by $a^2$, but it is asserted true by $a^1$). 

Now consider the sequence ${\bf \tilde A}:=(a^1, b, a^2)$ (Figure \ref{fig:notexe} - bottom diagram) where a $\gamma$-irrelevant action $b$ is inserted between $a^1$ and $a^2$. This insertion makes ${\bf S}_{\bf A} \neq \emptyset$. Since $q, q' \in s^3$, $w(\cT, \gamma, s^3) = 2$ and therefore ${\bf A}$ is not individually $\gamma-$safe.
\end{example}

\begin{figure}[tbh]
\centering\includegraphics[width=80mm]{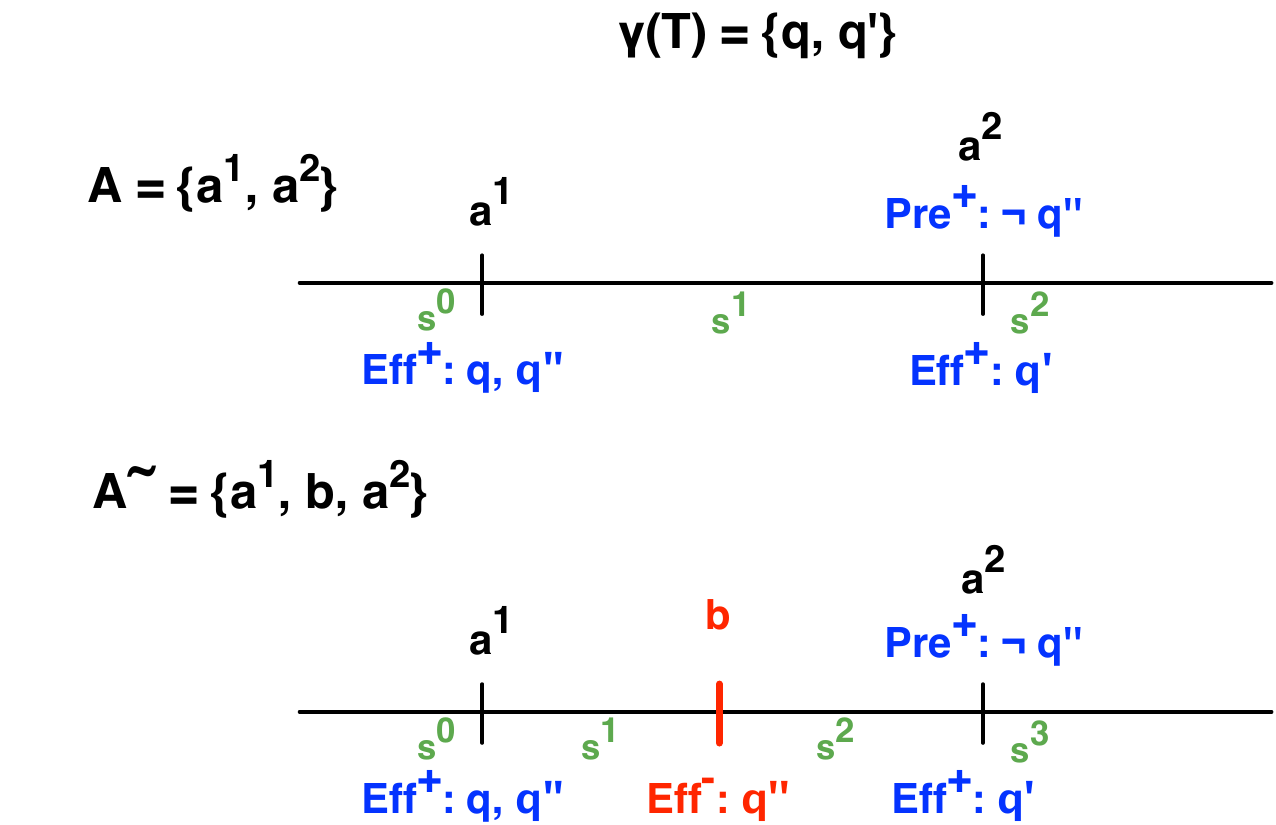}
\caption{Lack of robustness for individually $\gamma$-safe actions.}
\label{fig:notexe}
\end{figure}

\begin{remark}
If  ${\bf A}=(A^1,A^2,\dots A^n)$ is $\gamma$-safe, the first action set $A^1$ must necessarily be strongly $\gamma$-safe. On the other hand, if ${\bf A}$ is executable and every $A^j$ for $j=1,\dots , n$ is strongly $\gamma$-safe then, $\bf A$ is $\gamma$-safe. 
\end{remark}

This motivates the following definition.

\begin{definition}[Strongly and simply safe actions]
\label{def: gamma simply safe}
A sequence of ground action sets ${\bf A}=(A^1,A^2,$ $\dots ,A^n)$ is:
\begin{itemize}
\item \emph{strongly $\gamma$-safe} if it is executable and every $A^j$ for $j=1,\dots , n$ is strongly $\gamma$-safe;
\item \emph{simply $\gamma$-safe} if it is $\gamma$-safe but not strongly $\gamma$-safe.
\end{itemize}
\end{definition}

The following result shows that heavy or relevant unbalanced actions cannot be part of safe reachable sequences.
\begin{proposition}\label{prop: heavy presence} 
Suppose ${\bf A}=(A^1,A^2,\dots ,A^n)$ is a $\gamma$-safe and $\gamma$-reachable sequence of ground action sets. Then, for every $j=1,\dots ,n$, $A^j$ is not $\gamma$-heavy and is no $\gamma$-relevant unbalanced.
\end{proposition}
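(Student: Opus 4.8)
The plan is to exploit $\gamma$-reachability to produce a concrete compatible state sequence along which the weight is controlled, and then show that a $\gamma$-heavy or $\gamma$-relevant unbalanced $A^j$ is incompatible with this control. Concretely, $\gamma$-reachability yields a sequence $(s^0,\dots ,s^n)\in{\bf S}_{\bf A}(\gamma)$, so that $w(\cT,\gamma,s^0)\leq 1$. Since ${\bf A}$ is $\gamma$-safe, its full prefix ${\bf A}_1^n={\bf A}$ is individually $\gamma$-safe; applying the definition of individual safety to this very sequence propagates the bound forward, giving $w(\cT,\gamma,s^i)\leq 1$ for all $i=0,\dots ,n$. The whole argument then reduces to deriving a contradiction by proving that, if some $A^j$ is $\gamma$-heavy or $\gamma$-relevant unbalanced, then necessarily $w(\cT,\gamma,s^j)\geq 2$.

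To analyse the single step $s^{j-1}\mapsto s^j=\xi(s^{j-1},A^j)$, I would first restrict attention to the template instantiation via Remark \ref{remark: split} and Proposition \ref{prop: equiv safe}: the weight depends only on $s_\gamma=s\cap\gamma(\cT)$, and $s^j_\gamma=(s^{j-1}_\gamma\setminus Eff^-_{A^j_\gamma})\cup Eff^+_{A^j_\gamma}$. The crucial observation, which is stronger than the bare failure of strong safety recorded in Theorems \ref{prop:weight class} and \ref{prop:relevant actions}, is that a heavy or relevant unbalanced action set maps \emph{every} applicable state of weight $\leq 1$ to a state of weight $\geq 2$. It is this strengthening that lets me conclude along the \emph{specific} reachable trajectory, rather than merely asserting the existence of some bad state somewhere in $\cS$.

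Then two short computations settle the two cases. If $A^j$ is $\gamma$-heavy, then $Eff^+_{A^j_\gamma}\subseteq s^j_\gamma\subseteq\gamma(\cT)$ gives $w(\cT,\gamma,s^j)=|s^j_\gamma|\geq |Eff^+_{A^j_\gamma}|\geq 2$, independently of $s^{j-1}$. If $A^j$ is $\gamma$-relevant unbalanced, write $Pre^+_{A^j_\gamma}=\{p\}$ and $Eff^+_{A^j_\gamma}=\{q\}$; applicability of $A^j$ in $s^{j-1}$ forces $p\in s^{j-1}_\gamma$, and since $w(\cT,\gamma,s^{j-1})\leq 1$ we get $s^{j-1}_\gamma=\{p\}$. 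The unbalanced condition $p\notin Eff^+_{A^j_\gamma}\cup Eff^-_{A^j_\gamma}$ ensures that $p$ is not deleted and that $q\neq p$, so $s^j_\gamma=\{p,q\}$ and $w(\cT,\gamma,s^j)=2$. In either case $w(\cT,\gamma,s^j)\geq 2$ contradicts the propagated bound $w(\cT,\gamma,s^j)\leq 1$, showing that no such $A^j$ can appear.

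I expect the main obstacle to be the relevant unbalanced case, and specifically the step of upgrading ``not strongly safe'' to ``every applicable weight-$\leq 1$ state is mapped to weight $\geq 2$''. One must check that applicability genuinely pins the $\gamma$-part of the state down to the singleton $\{p\}$, and that the defining equalities of the unbalanced class, namely $|Pre^+_{A^j_\gamma}|=1$, $|Eff^+_{A^j_\gamma}|=1$, together with the disjointness $Pre^+_{A^j_\gamma}\cap(Eff^+_{A^j_\gamma}\cup Eff^-_{A^j_\gamma})=\emptyset$, simultaneously guarantee that $p$ survives the delete effects and that the added atom $q$ is genuinely new; the heavy case, by contrast, is immediate.
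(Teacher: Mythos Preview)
Your proposal is correct and follows essentially the same approach as the paper's proof. The paper's own argument is a one-liner: it takes $(s^0,\dots,s^n)\in{\bf S}_{\bf A}(\gamma)$ and asserts that if $A^j$ is $\gamma$-heavy or $\gamma$-relevant unbalanced then necessarily $w(\cT,\gamma,s^j)\geq 2$, leaving both the case analysis and the contradiction with individual $\gamma$-safety implicit; you have simply unpacked these steps explicitly, including the key strengthening (every applicable weight-$\leq 1$ state is sent to weight $\geq 2$) that the paper treats as understood from the proofs of Theorems~\ref{prop:weight class} and~\ref{prop:relevant actions}.
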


The property of $\gamma$-reachability is necessary for the previous result to hold, as the following example shows.

\begin{example}
Consider a template $\cT$ and an instance $\gamma$ such that $\gamma(\cT) = \{q, q'\}$. The sequence ${\bf A}:=(a^1,a^2)$ (Figure \ref{fig:ex7}) is $\gamma$-safe because it is executable and the subsequences $(a^1)$ and $(a^1,a^2)$ are both individually safe given that $a^1$ is $\gamma$-unreachable. In this case, Proposition \ref{prop: heavy presence} does not hold since $a^2$ is $\gamma$-heavy. 
\end{example}

\begin{figure}[tbh]
\centering\includegraphics[width=70mm]{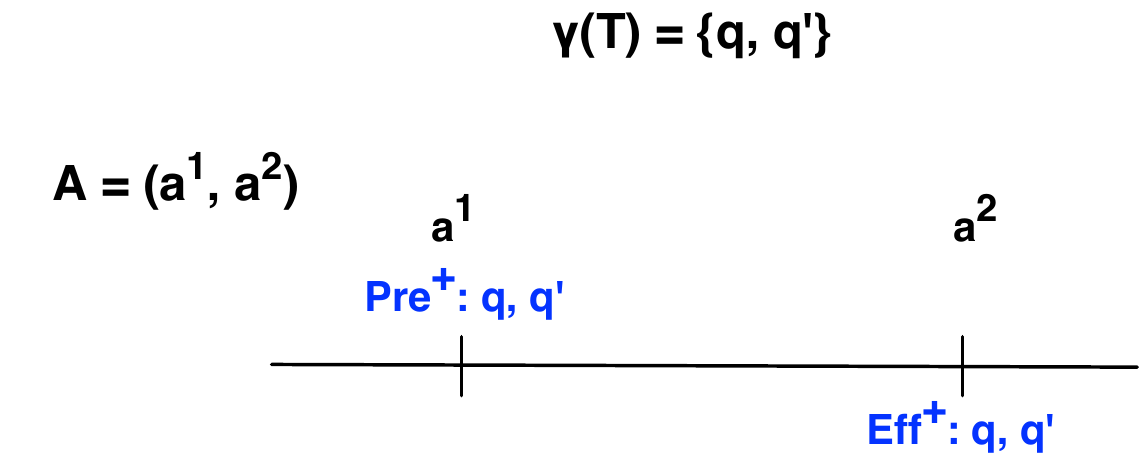}
\caption{The sequence ${\bf A}:=(a^1,a^2)$ is $\gamma$-safe.}
\label{fig:ex7}
\end{figure}

In studying the two safety properties for a sequence $\bf A$ introduced so far, we can essentially restrict ourselves to study the state dynamics on the template instantiation $\gamma(\cT)$ as we did for strong $\gamma$-safety of instantaneous actions (see Remark \ref{remark: split}).
 
Given the sequence ${\bf A}:=(A^1,A^2,\dots A^n)$, we denote by ${\bf A}_{\gamma}:=(A^1_{\gamma},A^2_{\gamma},\dots A^n_{\gamma})$ and ${\bf A}_{\neg\gamma}:=(A^1_{\neg\gamma}, A^2_{\neg\gamma},\dots A^n_{\neg\gamma})$ the corresponding restricted sequences. We have the following result.

\begin{proposition}\label{prop: safe split} Given the sequence ${\bf A}:=(A^1,A^2,\dots A^n)$,
\begin{enumerate}[1.]
\item $\bf A$ is executable if and only if ${\bf A}_{\gamma}$ and ${\bf A}_{\neg\gamma}$ are both executable;
\item $\bf A$ is $\gamma$-reachable if and only if ${\bf A}_{\gamma}$ is $\gamma$-reachable and ${\bf A}_{\neg\gamma}$ is executable;
\item $\bf A$ is individually $\gamma$-safe if and only if ${\bf A}_{\gamma}$ is individually $\gamma$-safe.
\item $\bf A$ is $\gamma$-safe if and only if ${\bf A}_{\gamma}$ is $\gamma$-safe and ${\bf A}_{\neg\gamma}$ is executable.
\end{enumerate}
\end{proposition}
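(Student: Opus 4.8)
The backbone of all four statements is a sequence-level version of Remark~\ref{remark: split}. The plan is first to establish, by induction on the length $n$, that the map $(s^0,\dots,s^n)\mapsto\big((s^0_\gamma,\dots,s^n_\gamma),(s^0_{\neg\gamma},\dots,s^n_{\neg\gamma})\big)$ sets up a bijection between ${\bf S}_{\bf A}$ and the product of the compatible state sequences of ${\bf A}_\gamma$ (living in $\gamma(\cT)$) and of ${\bf A}_{\neg\gamma}$ (living in $\gamma(\cT)^c$). The inductive step is exactly Remark~\ref{remark: split} applied to the single action set $A^i$: applicability of $A^i$ in $s^{i-1}$ decouples into applicability of $A^i_\gamma$ in $s^{i-1}_\gamma$ and of $A^i_{\neg\gamma}$ in $s^{i-1}_{\neg\gamma}$, while the transition $s^i=\xi(s^{i-1},A^i)$ decouples into the two restricted transitions via Equation~\eqref{eq: state split}. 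Since $\gamma(\cT)$ and $\gamma(\cT)^c$ partition $Atms$, the two coordinate sequences can be recombined freely, which is the content of the product structure.

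Given this lemma, the first statement is immediate: ${\bf S}_{\bf A}\neq\emptyset$ if and only if both factors are non-empty, i.e. if and only if ${\bf A}_\gamma$ and ${\bf A}_{\neg\gamma}$ are both executable. For the second statement I would note that $w(\cT,\gamma,s^0)=|s^0\cap\gamma(\cT)|=|s^0_\gamma|$ depends only on the $\gamma$-coordinate; hence the defining constraint of ${\bf S}_{\bf A}(\gamma)$ bears solely on the ${\bf A}_\gamma$-factor. Consequently ${\bf S}_{\bf A}(\gamma)\neq\emptyset$ if and only if ${\bf A}_\gamma$ admits a compatible sequence with $|s^0_\gamma|\le1$ (that is, ${\bf A}_\gamma$ is $\gamma$-reachable) \emph{and} ${\bf A}_{\neg\gamma}$ admits some compatible sequence (that is, ${\bf A}_{\neg\gamma}$ is executable); the second conjunct is exactly the extra executability requirement in the statement.

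For the third statement, the same observation that $w(\cT,\gamma,s^i)=|s^i_\gamma|$ reduces individual $\gamma$-safety to a property of the $\gamma$-coordinate alone: the implication $w(\cT,\gamma,s^0)\le1\Rightarrow w(\cT,\gamma,s^i)\le1$ quantified over ${\bf S}_{\bf A}$ becomes the same implication quantified over the $\gamma$-factor, which is precisely individual $\gamma$-safety of ${\bf A}_\gamma$. I expect the main obstacle to be here, in the direction that reads off safety of ${\bf A}_\gamma$ from safety of $\bf A$: this requires that every compatible sequence of ${\bf A}_\gamma$ actually arise as the $\gamma$-coordinate of some compatible sequence of $\bf A$, i.e. that the projection ${\bf S}_{\bf A}\to{\bf S}_{{\bf A}_\gamma}$ be surjective. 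By the product lemma this surjectivity is equivalent to ${\bf A}_{\neg\gamma}$ being executable, so the argument is clean whenever $\bf A$ (hence ${\bf A}_{\neg\gamma}$, by statement~1) is executable; the care needed to handle the degenerate non-executable corner is the delicate point, and it is exactly what forces the explicit executability conjuncts in statements~2 and~4.

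Finally, the fourth statement is assembled from the previous three. Unfolding Definition~\ref{def: gamma str safe}, $\bf A$ is $\gamma$-safe if and only if it is executable and every prefix ${\bf A}_1^k$ is individually $\gamma$-safe. By statement~1, executability of $\bf A$ splits as executability of ${\bf A}_\gamma$ and of ${\bf A}_{\neg\gamma}$; applying statement~3 to each prefix, using $({\bf A}_1^k)_\gamma=({\bf A}_\gamma)_1^k$, turns the prefix conditions into individual $\gamma$-safety of every prefix of ${\bf A}_\gamma$. Packaging executability of ${\bf A}_\gamma$ together with these prefix conditions is exactly the assertion that ${\bf A}_\gamma$ is $\gamma$-safe, leaving ``${\bf A}_{\neg\gamma}$ executable'' as the remaining conjunct, which matches the claimed equivalence.
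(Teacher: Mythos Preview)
Your approach mirrors the paper's exactly: lift Remark~\ref{remark: split} to sequences to obtain the product decomposition ${\bf S}_{\bf A}\cong{\bf S}_{{\bf A}_\gamma}\times{\bf S}_{{\bf A}_{\neg\gamma}}$ (the paper records this as equation~(\ref{eq: sequence split})), then read off items 1--4. The paper's proof is terser---it handles item~1 explicitly, waves at item~2 as ``analogous'', refers item~3 to ``a straightforward extension of the arguments used to prove Proposition~\ref{prop: equiv safe}'', and derives item~4 from the earlier items---so your write-up is, if anything, more detailed than the original.

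Your hesitation about the non-executable corner in item~3 is perceptive and in fact well founded. The direction ``${\bf A}$ individually $\gamma$-safe $\Rightarrow$ ${\bf A}_\gamma$ individually $\gamma$-safe'' can genuinely fail when ${\bf A}_{\neg\gamma}$ is not executable: with $\gamma(\cT)=\{q,q'\}$ and $p\notin\gamma(\cT)$, take $A^1$ with only $Eff^+=\{q,q',p\}$ and $A^2$ with only $Pre^-=\{p\}$; then ${\bf S}_{\bf A}=\emptyset$, so ${\bf A}$ is vacuously individually $\gamma$-safe, whereas ${\bf A}_\gamma$ plainly is not. The paper's appeal to Proposition~\ref{prop: equiv safe} glosses over this: the single-action construction $s^*:=s_\gamma\cup Pre^+_{A_{\neg\gamma}}$ used there always produces an applicable state, but its sequence analogue requires an actual compatible ${\bf A}_{\neg\gamma}$-run. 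This imprecision does not affect item~4 or any downstream use in the paper: in item~4 executability of $\bf A$ is part of the hypothesis, hence every prefix $({\bf A}_1^k)_{\neg\gamma}$ is executable, and your application of item~3 to prefixes is sound.
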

%

We are now ready to state and prove the following fundamental result, which ensures that the concept of safe sequence is robust to the insertion of irrelevant actions.

\begin{theorem}\label{theo: safe sequences} 
Consider a $\gamma$-safe sequence ${\bf A}:=(A^1,A^2)$ and $\gamma$-irrelevant ground action sets $B^1, B^2,\dots , B^{n}$. Then, the sequence
$\tilde{\bf A}:=(A^1,B^1,\dots , B^{n}, A^2)$ is either non executable or $\gamma$-safe.
\end{theorem}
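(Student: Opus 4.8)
The plan is to prove the stated dichotomy by assuming $\tilde{\bf A}$ is executable and deducing that it is then $\gamma$-safe (if $\tilde{\bf A}$ is non-executable there is nothing to prove). Everything is pushed down to the template instantiation through Proposition~\ref{prop: safe split}: since $\tilde{\bf A}$ is executable, part~1 of that proposition makes both restrictions $\tilde{\bf A}_{\gamma}$ and $\tilde{\bf A}_{\neg\gamma}$ executable, so by part~4 it suffices to prove that $\tilde{\bf A}_{\gamma}=(A^1_{\gamma},B^1_{\gamma},\dots ,B^n_{\gamma},A^2_{\gamma})$ is $\gamma$-safe. On $\gamma(\cT)$ a state is a subset and its weight is just its cardinality, so I only need to control cardinalities along the restricted dynamics.

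I would first dispatch the easy prefixes. From $(A^1,A^2)$ being $\gamma$-safe, the length-one prefix $(A^1)$ is individually $\gamma$-safe, so $A^1$ — and hence $A^1_{\gamma}$ — is strongly $\gamma$-safe, i.e. $\xi(\cdot ,A^1_{\gamma})$ sends weight $\leq 1$ to weight $\leq 1$. Each $B^j$ is $\gamma$-irrelevant, hence $Eff^+_{B^j_{\gamma}}=\emptyset$ and $\xi(s,B^j_{\gamma})=s\setminus Eff^-_{B^j_{\gamma}}\subseteq s$, so no $B^j_{\gamma}$ can raise the weight. Consequently every prefix of $\tilde{\bf A}_{\gamma}$ that stops before $A^2_{\gamma}$ is individually $\gamma$-safe, and by Definition~\ref{def: gamma str safe} the only remaining obligation is that the full sequence $\tilde{\bf A}_{\gamma}$ is individually $\gamma$-safe.

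So I would fix a compatible state sequence of $\tilde{\bf A}_{\gamma}$ with $w(\cT,\gamma,s^0)\leq 1$, set $s^1=\xi(s^0,A^1_{\gamma})$ and let $\hat s$ be the state reached immediately before $A^2_{\gamma}$. By the previous paragraph $|s^1|\leq 1$ and $\hat s\subseteq s^1$, so either $\hat s=s^1$ or $\hat s=\emptyset\subsetneq s^1$. If $\hat s=s^1$, then $(s^0,s^1,\xi(s^1,A^2_{\gamma}))$ is compatible with the pair $(A^1_{\gamma},A^2_{\gamma})$ from weight $\leq 1$, and individual $\gamma$-safety of that pair (it is $\gamma$-safe by Proposition~\ref{prop: safe split}.4) gives final weight $\leq 1$. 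The hard case, which I expect to be the main obstacle, is $\hat s=\emptyset$: applicability of $A^2_{\gamma}$ in $\emptyset$ forces $Pre^+_{A^2_{\gamma}}=\emptyset$ and makes the successor exactly $Eff^+_{A^2_{\gamma}}$, so I must establish $|Eff^+_{A^2_{\gamma}}|\leq 1$, i.e. that $A^2$ is not $\gamma$-heavy.

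I would close this gap by exploiting the strong $\gamma$-safety of $A^1$ via the classification in Corollary~\ref{cor: strongly safe}. If $A^1$ is $\gamma$-unreachable then $|Pre^+_{A^1_{\gamma}}|\geq 2$, so $A^1_{\gamma}$ fires in no weight-$\leq 1$ state; then $\tilde{\bf A}_{\gamma}$ is $\gamma$-unreachable and individually $\gamma$-safe by Remark~\ref{rem: exec reach safe}, and the case $\hat s=\emptyset$ cannot arise from a legal start. Otherwise $|Pre^+_{A^1_{\gamma}}|\leq 1$, and I choose a weight-$\leq 1$ witness $s^0_{*}$ on which $A^1_{\gamma}$ fires, namely $s^0_{*}=\emptyset$ if $Pre^+_{A^1_{\gamma}}=\emptyset$ and $s^0_{*}=Pre^+_{A^1_{\gamma}}$ otherwise (executability of the pair gives $Pre^+_{A^1_{\gamma}}\cap Pre^-_{A^1_{\gamma}}=\emptyset$, so $A^1_{\gamma}$ is indeed applicable there). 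Writing $s^1_{*}=\xi(s^0_{*},A^1_{\gamma})$ with $|s^1_{*}|\leq 1$, I would assume $A^2$ heavy and derive a contradiction: if $A^2_{\gamma}$ is applicable in $s^1_{*}$ then $(s^0_{*},s^1_{*},\xi(s^1_{*},A^2_{\gamma}))$ breaks individual $\gamma$-safety of the pair, since the successor has weight $\geq|Eff^+_{A^2_{\gamma}}|\geq 2$; if $A^2_{\gamma}$ is not applicable in $s^1_{*}$, then $s^1_{*}=\{e\}$ with $e\in Pre^-_{A^2_{\gamma}}$, and from $e\in s^1_{*}$ one gets either $e\in Eff^+_{A^1_{\gamma}}$ or $e\in Pre^+_{A^1_{\gamma}}\setminus Eff^-_{A^1_{\gamma}}$, so $e$ lies in $\xi(s^0,A^1_{\gamma})$ for every $s^0$ admitting $A^1_{\gamma}$, meaning $A^2_{\gamma}$ can never follow $A^1_{\gamma}$ and the pair is non-executable. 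Both branches contradict the hypothesis, so $A^2$ is not $\gamma$-heavy, the final weight is $\leq 1$, $\tilde{\bf A}_{\gamma}$ is $\gamma$-safe, and Proposition~\ref{prop: safe split}.4 then yields that $\tilde{\bf A}$ is $\gamma$-safe.
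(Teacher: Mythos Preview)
Your proof is correct, but it takes a genuinely different route from the paper in the final step. Both of you begin identically: reduce to $\tilde{\bf A}_{\gamma}$ via Proposition~\ref{prop: safe split}, use strong $\gamma$-safety of $A^1$ for the first transition, and observe that the $B^j_{\gamma}$'s only delete so the weight is non-increasing up to the state $\hat s=s^{n+1}$ just before $A^2_{\gamma}$. Where you diverge is in bounding $w(\cT,\gamma,s^{n+2})$. The paper performs a uniform state surgery: it splits the cumulative deletions of the $B^j_{\gamma}$'s into $\Omega'=(\cup_i Eff^-_{B^i_{\gamma}})\cap Pre^-_{A^2_{\gamma}}$ and $\Omega''=(\cup_i Eff^-_{B^i_{\gamma}})\setminus Pre^-_{A^2_{\gamma}}$, then shows that $(s^0\setminus\Omega',\,s^1\setminus\Omega',\,\tilde s)\in{\bf S}_{{\bf A}_{\gamma}}$ for a suitable $\tilde s$ with $w(\cT,\gamma,\tilde s)\geq w(\cT,\gamma,s^{n+2})$, so the $\gamma$-safety of ${\bf A}_{\gamma}$ applies directly to this surrogate run. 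You instead exploit the smallness $|s^1_{\gamma}|\leq 1$ to do a two-case split: if $\hat s=s^1$ the original pair's safety applies verbatim; if $\hat s=\emptyset$ you deduce $Pre^+_{A^2_{\gamma}}=\emptyset$ and then argue by contradiction (via a witness run from $s^0_*=Pre^+_{A^1_{\gamma}}$ and the classification of Corollary~\ref{cor: strongly safe}) that $A^2$ cannot be $\gamma$-heavy. Your argument is more elementary and makes very explicit why nothing bad can happen, at the price of the extra case analysis; the paper's $\Omega',\Omega''$ trick is slicker, handles both cases at once, and would scale to hypothetical variants where the intermediate weight bound is larger than $1$.
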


We conclude this section with a last definition:

\begin{definition}
\label{def: sequence safe}
Given a template $\cT$, a sequence of ground action sets ${\bf A}$ is, respectively, \emph{safe} or \emph{strongly safe} if it is, respectively, $\gamma$-safe or strongly $\gamma$-safe, for every instance $\gamma$. It is \emph{simply safe} if it is safe but not strongly safe.
\end{definition}

\subsection{Safe ground durative actions}
\label{subsec:sgda}
We now restrict our attention to ground durative actions $Da=(a^{st}, a^{inv}, a^{end})$. If we interpret $Da$ as a sequence of three actions, we can consider for it the properties defined for general sequences such as $\gamma$-safety and strong $\gamma$-safety. We propose an explicit characterisation of these properties in this case, which will be useful later on.

First, let us focus on the specific way in which durative actions appear in the happening sequence of a plan. 
Consider a simple induced plan $\pi$ having $trace(\pi)=\{S_i=(t_i,s_i)_{i=0,\dots , \bar k}\}$ and happenings $A_{t_i}$. Let ${\bf A}_{\pi}$ be the corresponding happening sequence. If a durative action $Da$ happens in $\pi$ in the time interval $[t_{i+1}, t_j]$, we clearly have that $a^{st}\in A_{t_{i+1}}$ and $a^{end}\in A_{t_j}$. Moreover, by the way $\pi$ is constructed from the original plan, we also have that $j-i$ is odd and for every even $h=2,4,\dots ,j-i-1$, $A_{t_{i+h}}$ consists of $\{a^{inv}\}$ and, possibly, preconditions of other durative actions happening in the original plan $\Pi$ simultaneously or intertwined with $Da$. This motivates the following definition.

\begin{definition}[Admissible actions] A sequence ${\bf A}:=(A^1,A^2,\dots ,A^n)$ is:
\begin{itemize}
\item \emph{admissible} if, for any durative action $Da'$, it holds
$$a'^{st}\in A^{i}\;\Rightarrow\; a'^{inv}\in A^{i+1}\quad a'^{end}\in A^{j}\;\Rightarrow\; a'^{inv}\in A^{j-1}$$
\item $Da$-\emph{admissible}, for some durative action $Da$, if it is admissible and the following conditions are satisfied:
\begin{enumerate}[(a)]
\item $a^{st}\in A^1$ and $a^{end}\in A^n$;
\item $n$ is odd and for every $j=2,4,\dots ,n-1$, $A^j$ consists of $\{a^{inv}\}$ and, possibly, preconditions of other durative actions.
\end{enumerate}
\end{itemize}
\end{definition}

Any subsequence of the happening sequence of a simple plan is admissible and if its starting and its ending coincide with, respectively, the start and the end of a durative action $Da$, it is $Da$-admissible. 

To study the safety of a $Da$-{admissible} sequence,  we can, in many cases, reduce the analysis of the durative action $Da$ to the analysis of an auxiliary sequence of just two actions $Da_*=(a^{st}_*, a^{end}_*)$, where $a_*^{st}$ and $a_*^{end}$ are instantaneous actions such that:
$$\begin{array}{ll} Eff^{\pm}_{a_*^{st}}=Eff^{\pm}_{a^{st}},\quad &Pre^{\pm}_{a_*^{st}}=Pre^{\pm}_{a^{st}}\cup (Pre^{\pm}_{a^{inv}}\setminus Eff^{\pm}_{a^{st}})\\
Eff^{\pm}_{a_*^{end}}=Eff^{\pm}_{a^{end}},\quad &Pre^{\pm}_{a_*^{end}}=Pre^{\pm}_{a^{end}}\cup Pre^{\pm}_{a^{inv}}
\end{array}
$$
The relation between the two sequences $Da$ and $Da_*$ is clarified by the following result. Assume, as always, that a template $\cT$ and an instance $\gamma$ have been fixed.

\begin{proposition}\label{prop: aux durative 1} The following facts hold true:
\begin{enumerate}
\item[(i)] $(s^0, s^1, s^2)\in {\bf S}_{(a^{st}, a^{inv})}$ if and only if $s^1=s^2$ and $(s^0, s^1)\in {\bf S}_{a^{st}_*}$;
\item[(ii)] $(s^0, s^1, s^2)\in {\bf S}_{(a^{inv}, a^{end})}$ if and only if $s^0=s^1$ and $(s^1, s^2)\in {\bf S}_{a^{end}_*}$;
\item[(iii)] $(s^0, s^1, s^2, s^3)\in {\bf S}_{Da}$ if and only if $s^1=s^2$ and $(s^0, s^1, s^3)\in {\bf S}_{Da_*}$;
\item[(iv)] $(a^{st}, a^{inv})$  is individually $\gamma$-safe if and only if $a^{st}_*$  is strongly $\gamma$-safe;
\item[(v)] $(a^{inv}, a^{end})$  is individually $\gamma$-safe if and only if $a^{end}_*$  is strongly $\gamma$-safe;
\item[(vi)] $Da$ is individually $\gamma$-safe if and only if $Da_*$  is individually $\gamma$-safe.
\end{enumerate}
\end{proposition}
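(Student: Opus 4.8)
The plan is to establish the three state-sequence identities (i)--(iii) first and then read off the safety equivalences (iv)--(vi) as routine consequences. Everything hinges on one structural fact from the canonical construction: the table defining $\alpha^{inv}$ sets $Eff^+_{a^{inv}}=Eff^-_{a^{inv}}=\emptyset$, so $a^{inv}$ induces the identity transition whenever it is applicable, $\xi(s,a^{inv})=s$. This is exactly what forces $s^1=s^2$ in (i) and (iii) and $s^0=s^1$ in (ii). The only item carrying real content is (i); part (ii) is immediate, and (iii) is obtained by gluing (i) and (ii).

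For (i) I would unfold ${\bf S}_{(a^{st},a^{inv})}$: a triple lies in it iff $a^{st}$ is applicable in $s^0$, $s^1=\xi(s^0,a^{st})$, $a^{inv}$ is applicable in $s^1$, and $s^2=\xi(s^1,a^{inv})=s^1$. Since $Eff^{\pm}_{a^{st}_*}=Eff^{\pm}_{a^{st}}$, the update $s^1=\xi(s^0,a^{st}_*)$ is the very same transition, so all that remains is the applicability equivalence
$$\bigl[a^{st}\text{ applicable in }s^0\bigr]\wedge\bigl[a^{inv}\text{ applicable in }\xi(s^0,a^{st})\bigr]\;\Longleftrightarrow\;\bigl[a^{st}_*\text{ applicable in }s^0\bigr].$$
The left-to-right direction is a direct computation from $Pre^{\pm}_{a^{st}_*}=Pre^{\pm}_{a^{st}}\cup(Pre^{\pm}_{a^{inv}}\setminus Eff^{\pm}_{a^{st}})$: any positive invariant precondition $p\notin Eff^+_{a^{st}}$ that survives into $s^1$ must already lie in $s^0$, and symmetrically for the negative ones. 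The converse is the crux and is where the legality assumptions on durative schemas enter. Given $p\in Pre^+_{a^{inv}}$ with $p\notin Eff^+_{a^{st}}$, applicability of $a^{st}_*$ yields $p\in s^0$, and to conclude $p\in s^1=(s^0\setminus Eff^-_{a^{st}})\cup Eff^+_{a^{st}}$ I must exclude $p\in Eff^-_{a^{st}}$; this is precisely forbidden by the legality condition ruling out $l\in Pre^{+inv}_{D\alpha}$ together with $l\in Eff^{-st}_{D\alpha}$, which carries over to $Da$. The mirror-image condition on $Pre^{-inv}_{D\alpha}$ handles the negative preconditions. I expect this case analysis to be the main obstacle: the precondition set of $a^{st}_*$ front-loads exactly those invariant requirements not already settled by $a^{st}$'s effects, so one implication is automatic and all the work concentrates on showing that these invariant preconditions genuinely persist across the start transition.

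Part (ii) is straightforward and needs no legality assumption: because $a^{inv}$ is the identity, any $(s^0,s^1,s^2)\in{\bf S}_{(a^{inv},a^{end})}$ has $s^0=s^1$, and both $a^{inv}$ and $a^{end}$ are then tested in the single state $s^1$, so their joint applicability is just the union of precondition sets, which is exactly applicability of $a^{end}_*$ via $Pre^{\pm}_{a^{end}_*}=Pre^{\pm}_{a^{end}}\cup Pre^{\pm}_{a^{inv}}$; the effects coincide. For (iii) I would compose (i) and (ii): a quadruple is in ${\bf S}_{Da}$ iff the start step together with applicability of $a^{inv}$ in $s^1$ holds (equivalently $(s^0,s^1)\in{\bf S}_{a^{st}_*}$ by (i)), $s^2=s^1$, and $a^{end}$ fires in $s^2=s^1$ after $a^{inv}$ (equivalently $(s^1,s^3)\in{\bf S}_{a^{end}_*}$ by (ii)). The applicability of $a^{inv}$ in $s^1$ is merely repeated in the two halves, so they glue to $(s^0,s^1,s^3)\in{\bf S}_{Da_*}$.

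Finally, (iv)--(vi) follow mechanically. By (i), individual $\gamma$-safety of $(a^{st},a^{inv})$ quantifies over triples with $s^1=s^2$, so $w(\cT,\gamma,s^1)=w(\cT,\gamma,s^2)$ and the required bounds collapse to: for every $(s^0,s^1)\in{\bf S}_{a^{st}_*}$, $w(\cT,\gamma,s^0)\leq 1\Rightarrow w(\cT,\gamma,s^1)\leq 1$, which is strong $\gamma$-safety of $a^{st}_*$. Part (v) is identical via (ii) with $s^0=s^1$, and (vi) follows from (iii): again $s^1=s^2$ makes the weight at $s^2$ redundant, so individual $\gamma$-safety of $Da$ over quadruples matches individual $\gamma$-safety of $Da_*$ over the triples $(s^0,s^1,s^3)$.
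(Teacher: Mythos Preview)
Your proposal is correct and follows the same approach as the paper: establish the state-sequence bijections (i)--(iii) directly from the definitions and then read off (iv)--(vi) as immediate corollaries. Your treatment of the converse direction in (i) is in fact more thorough than the paper's own proof, which checks only that $a^{st}$ is applicable in $s^0$ and that the transitions coincide, without explicitly verifying that $a^{inv}$ is applicable in $s^1$; your invocation of the legality assumptions on durative schemas is exactly what closes that step.
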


The next result studies the effect of exchanging the start and end of a durative action $Da$ with those of the auxiliary sequence $Da_*$ in a $Da$-admissible sequence.

\begin{proposition}\label{prop: aux durative 2}  Consider a durative action $Da=(a^{st}, a^{inv}, a^{end})$ and a $Da$-admissible sequence of actions ${\bf A}=(\{a^{st}\}, A^2,  \dots, A^{n-1}, \{a^{end}\}).$
Put ${\bf A}_*=(\{a^{st}_*\}, A^2,  \dots, A^{n-1}, \{a^{end}_*\})$.
Then ${\bf S}_{\bf A}={\bf S}_{\bf A_*}$. In particular,  $\bf A$ is individually $\gamma$-safe if and only if $\bf A_*$ is individually $\gamma$-safe.
\end{proposition}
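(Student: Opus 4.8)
The plan is to establish the set equality ${\bf S}_{\bf A}={\bf S}_{\bf A_*}$ directly and then read off the safety equivalence. The key observation is that ${\bf A}$ and ${\bf A}_*$ differ only in their first and last action sets, and by construction $Eff^{\pm}_{a^{st}_*}=Eff^{\pm}_{a^{st}}$ and $Eff^{\pm}_{a^{end}_*}=Eff^{\pm}_{a^{end}}$. Hence the transition $\xi(\cdot,\{a^{st}_*\})$ agrees with $\xi(\cdot,\{a^{st}\})$, and similarly at the end, so the two sequences produce identical successors from identical predecessors. The whole problem thus reduces to showing that the \emph{applicability} of the starred and unstarred first and last actions coincides along the states that occur.

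One inclusion is immediate. Since $a^{st}_*$ and $a^{end}_*$ are obtained by only \emph{adding} preconditions, we have $Pre^{\pm}_{a^{st}}\subseteq Pre^{\pm}_{a^{st}_*}$ and $Pre^{\pm}_{a^{end}}\subseteq Pre^{\pm}_{a^{end}_*}$. Therefore applicability of a starred action implies applicability of its unstarred counterpart, and since effects agree the same state sequence is compatible with ${\bf A}$; this gives ${\bf S}_{\bf A_*}\subseteq{\bf S}_{\bf A}$ without using admissibility at all.

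The substantive inclusion is ${\bf S}_{\bf A}\subseteq{\bf S}_{\bf A_*}$, and this is precisely where $Da$-admissibility supplies the added preconditions for free. Fix $(s^0,\dots ,s^n)\in{\bf S}_{\bf A}$. For the start, admissibility forces $a^{inv}\in A^2$, so $a^{inv}$ is applicable in $s^1=(s^0\setminus Eff^-_{a^{st}})\cup Eff^+_{a^{st}}$. A short set computation then shows that each literal of $Pre^+_{a^{inv}}\setminus Eff^+_{a^{st}}$ already belongs to $s^0$ and each literal of $Pre^-_{a^{inv}}\setminus Eff^-_{a^{st}}$ is absent from $s^0$; these are exactly the preconditions added to form $a^{st}_*$, so $a^{st}_*$ is applicable in $s^0$. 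For the end, admissibility forces $a^{inv}\in A^{n-1}$, and since $n$ is odd the even-indexed set $A^{n-1}$ consists of $a^{inv}$ together with invariant fragments of other durative actions, all of which have empty effects; hence $s^{n-1}=\xi(s^{n-2},A^{n-1})=s^{n-2}$, and the applicability of $a^{inv}$ in $s^{n-2}$ yields $Pre^+_{a^{inv}}\subseteq s^{n-1}$ and $Pre^-_{a^{inv}}\cap s^{n-1}=\emptyset$. This is the full precondition added to form $a^{end}_*$, so $a^{end}_*$ is applicable in $s^{n-1}$. As the intermediate sets $A^2,\dots ,A^{n-1}$ and all effects are unchanged, $(s^0,\dots ,s^n)\in{\bf S}_{\bf A_*}$, and the equality follows.

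The ``in particular'' statement is then immediate: by Definition \ref{def: gamma ind safe}, individual $\gamma$-safety is a condition only on the weights $w(\cT,\gamma,s^i)$ of the states occurring in the compatible sequences, so the equality ${\bf S}_{\bf A}={\bf S}_{\bf A_*}$ makes ${\bf A}$ and ${\bf A}_*$ individually $\gamma$-safe under identical conditions. I expect the main obstacle to be the bookkeeping in the forward inclusion: handling the set differences in $Pre^{\pm}_{a^{st}_*}$ correctly, and justifying $s^{n-1}=s^{n-2}$ from the fact that the even-indexed happenings carry no effects.
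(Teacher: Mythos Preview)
Your proof is correct and follows essentially the same route as the paper. The paper dispatches the easy inclusion ${\bf S}_{\bf A_*}\subseteq{\bf S}_{\bf A}$ in one line (more preconditions) and, for the reverse inclusion, invokes items (i) and (ii) of Proposition~\ref{prop: aux durative 1} to conclude $(s^0,s^1)\in{\bf S}_{a^{st}_*}$ and $(s^{n-1},s^n)\in{\bf S}_{a^{end}_*}$; you instead unfold those two lemma applications inline, carrying out the set-difference bookkeeping for $Pre^{\pm}_{a^{st}_*}$ and using $s^{n-1}=s^{n-2}$ (from the effect-free even-indexed happening) to transfer applicability of $a^{inv}$ to $s^{n-1}$. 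Both arguments rest on exactly the same facts from $Da$-admissibility, so the difference is purely one of packaging.
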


The last proposition implies that, in analysing the state dynamics in a valid plan, we can replace the start and end of each durative action $Da$ with the corresponding ones of the auxiliary sequence $Da_*$, if such start and end happen isolated from other actions. This is useful for two reasons. On the one hand, there are cases in which $Da_*$ is strongly safe even if $Da$ is not. On the other hand, we can directly apply Theorem \ref{theo: safe sequences} to $Da_*$ since it is of length $2$.

As we shall see later, our sufficient results for the invariance of a template always require safety (strong or simple) of the auxiliary actions $Da_*=(a^{st}_*, a^{end}_*)$. The check for strong safety can be done by considering the single components of $Da_*$ and referring back to the analysis that we carried out in previous chapter. Below, we propose a full characterisation of simple safety for auxiliary actions. 

Note first that if $Da_*=(a^{st}_*, a^{end}_*)$ is simply $\gamma$-safe (Definition \ref{def: gamma simply safe}), necessarily $Da_*$ is executable, $a^{st}_*$ is strongly $\gamma$-safe and $a^{end}_*$ is not strongly $\gamma$-safe. If, besides these three properties, $Da_*$ is $\gamma$-unreachable, then, $Da_*$ is simply $\gamma$-safe because of Remark \ref{rem: exec reach safe}. If we instead assume that $Da_*$ is simply $\gamma$-safe and $\gamma$-reachable, then, because of Proposition \ref{prop: heavy presence}, we have that $a^{end}_*$ is $\gamma$-relevant unbounded. The following result completely characterises simple $\gamma$-safety for such actions.

\begin{proposition}
\label{prop: aux simple safe} Assume that  $Da_*=(a^{st}_*, a^{end}_*)$ is a $\gamma$-reachable sequence such that $a^{st}_*$ is strongly $\gamma$-safe and $a^{end}_*$ is relevant unbounded.
Then, $Da_*$ is simply $\gamma$-safe if and only if one of the following mutually exclusive conditions are satisfied:
\begin{enumerate}[(a)]
\item $a^{st}_{*}$ $\gamma$-irrelevant, $|Pre^+_{a^{st}_{*\gamma}}|=1$, $Pre^+_{a^{st}_{*\gamma}}\subseteq Eff^-_{a^{st}_{*\gamma}}$;
\item $a^{st}_{*}$ $\gamma$-irrelevant, $|Pre^+_{a^{st}_{*\gamma}}|=1$, $Pre^+_{a^{st}_{*\gamma}}\not\subseteq Eff^-_{a^{st}_{*\gamma}}$, $Pre^+_{a^{st}_{*\gamma}}\subseteq Eff^-_{a^{end}_{*\gamma}}\cup Eff^+_{a^{end}_{*\gamma}}$;
\item $a^{st}_{*}$ $\gamma$-irrelevant, $|Pre^+_{a^{st}_{*\gamma}}|=0$, $Pre^-_{a^{st}_{*\gamma}}\cup Eff^-_{a^{st}_{*\gamma}}\cup Eff^-_{a^{end}_{*\gamma}}\cup Eff^+_{a^{end}_{*\gamma}}=\gamma(\cT)$;
\item $a^{st}_{*}$ $\gamma$-relevant, $Eff^+_{a^{st}_{*\gamma}}\subseteq Eff^-_{a^{st}_{*\gamma}}\cup Eff^+_{a^{st}_{*\gamma}}$.
\end{enumerate}
\end{proposition}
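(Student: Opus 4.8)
The plan is to reduce the bi-conditional to a statement about the individual $\gamma$-safety of a restricted two-element sequence, and then to settle that by a finite case split on the strongly-safe class of $a^{st}_*$. First I would dispose of the word \emph{simply}: since $a^{end}_*$ is $\gamma$-relevant unbounded, Theorem~\ref{prop:relevant actions} says it is not strongly $\gamma$-safe, so $Da_*$ can never be strongly $\gamma$-safe, and hence by Definition~\ref{def: gamma simply safe} ``$Da_*$ simply $\gamma$-safe'' coincides with ``$Da_*$ $\gamma$-safe''. As the sequence has length two and $a^{st}_*$ is strongly $\gamma$-safe, Definition~\ref{def: gamma str safe} makes $\gamma$-safety equivalent to executability together with individual $\gamma$-safety of the pair $(a^{st}_*,a^{end}_*)$, and executability is free because $\gamma$-reachable $\Rightarrow$ executable (Remark~\ref{rem: exec reach safe}). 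Finally, Proposition~\ref{prop: safe split}(3) lets me replace the pair by its restriction $(a^{st}_{*\gamma},a^{end}_{*\gamma})$ and argue entirely inside $Q:=\gamma(\cT)$ on states $s\subseteq Q$ with $|s|\leq 1$. So the task becomes: characterise when $(a^{st}_{*\gamma},a^{end}_{*\gamma})$ is individually $\gamma$-safe.

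Next I would locate the only source of a weight jump. Writing $Eff^+_{a^{end}_{*\gamma}}=\{q^+\}$ (a singleton, since $a^{end}_*$ is relevant) and recalling $Pre^+_{a^{end}_{*\gamma}}=\emptyset$ (since $a^{end}_*$ is unbounded), a transition $s^2=\xi(s^1,a^{end}_{*\gamma})$ from a state with $|s^1|\leq 1$ reaches weight $2$ exactly when $s^1$ is a \emph{dangerous singleton} $\{p\}$: one in which $a^{end}_{*\gamma}$ is applicable ($p\notin Pre^-_{a^{end}_{*\gamma}}$), the atom survives ($p\notin Eff^-_{a^{end}_{*\gamma}}$), and it differs from the added atom ($p\neq q^+$). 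Thus individual $\gamma$-safety is equivalent to the statement that no dangerous singleton arises as an intermediate state $s^1=\xi(s^0,a^{st}_{*\gamma})$ reachable from some admissible $s^0$ with $|s^0|\leq 1$.

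The heart of the argument is then to enumerate the intermediate singletons that $a^{st}_{*\gamma}$ can actually produce. Because $a^{st}_*$ is strongly $\gamma$-safe and the sequence is $\gamma$-reachable, $a^{st}_*$ cannot be $\gamma$-unreachable (that would force $|s^0|\geq 2$), so by Corollary~\ref{cor: strongly safe} it is $\gamma$-irrelevant, $\gamma$-relevant balanced, or $\gamma$-relevant bounded. If $a^{st}_*$ is $\gamma$-relevant it always asserts its unique positive effect $r^+\in Eff^+_{a^{st}_{*\gamma}}$, so the only weight-$\leq 1$ intermediate state is $s^1=\{r^+\}$; since $r^+\in\Gamma^+_{a^{st}_*}$, executability (Proposition~\ref{prop: exec two}) gives $r^+\notin Pre^-_{a^{end}_{*\gamma}}$, so $\{r^+\}$ is harmless exactly when $r^+\in Eff^+_{a^{end}_{*\gamma}}\cup Eff^-_{a^{end}_{*\gamma}}$, which is condition (d). If $a^{st}_*$ is $\gamma$-irrelevant it adds nothing, so $s^1=s^0\setminus Eff^-_{a^{st}_{*\gamma}}$, and I split on $|Pre^+_{a^{st}_{*\gamma}}|$: when it equals $1$ the positive precondition pins $s^0$ to that single atom, so either $a^{st}_*$ deletes it (always safe, condition (a)) or it does not, in which case executability again removes the $Pre^-_{a^{end}_{*\gamma}}$ escape and safety reads $Pre^+_{a^{st}_{*\gamma}}\subseteq Eff^+_{a^{end}_{*\gamma}}\cup Eff^-_{a^{end}_{*\gamma}}$ (condition (b)); when it equals $0$, every atom not deleted by $a^{st}_*$ and not blocked by $Pre^-_{a^{st}_{*\gamma}}$ is a producible singleton, so avoiding all dangerous ones is a covering equality of $Q$, which after folding the invariant's negative preconditions into those of $a^{st}_*$ (via the legality conditions on durative schemas) becomes condition (c). A final reading of the three class/precondition alternatives shows that (a)--(d) are pairwise mutually exclusive.

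I expect the case analysis just sketched to be the main obstacle. The delicate bookkeeping is twofold: (i) using $\gamma$-reachability to drop the unreachable class and to justify that the singleton states $s^0$ are legitimate starting states, and (ii) invoking executability to discard the ``$a^{end}_*$ inapplicable'' escape route precisely whenever $a^{st}_*$ contributes an atom to $\Gamma^+_{a^{st}_*}$. This second point is what converts the raw reachability bounds into the clean set inclusions (a)--(d) and, in the covering case, explains why only those negative effects and preconditions that genuinely persist on $Q$ need to appear.
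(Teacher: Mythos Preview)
Your proposal is correct and follows essentially the same route as the paper: reduce simple $\gamma$-safety to individual $\gamma$-safety of the restricted pair, observe that the only way the weight can exceed~$1$ is through an intermediate singleton $s^1$ not contained in $Eff^+_{a^{end}_{*\gamma}}\cup Eff^-_{a^{end}_{*\gamma}}$, and then case-split on the strongly-safe class of $a^{st}_*$ to enumerate the producible intermediate states (what the paper packages as the set $\cW_\gamma$). Your explicit use of executability via $\Gamma^+_{a^{st}_*}\cap Pre^-_{a^{end}_*}=\emptyset$ in cases~(b) and~(d) makes precise a point the paper leaves implicit; the only soft spot is case~(c), where your appeal to ``legality conditions on durative schemas'' is not the right lever (the proposition is stated for an arbitrary pair $(a^{st}_*,a^{end}_*)$), but the paper's own proof treats $\cW_\gamma$ there without the $Pre^-_{a^{end}_{*\gamma}}$ constraint either, so you are matching its level of rigour.
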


\begin{remark}\label{rem: type a} If Condition (a) of Proposition \ref{prop: aux simple safe} holds, this implies that the same conditions needs to be satisfied by $a^{st}$, namely it holds: $|Pre^+_{a^{st}_{\gamma}}|=1$, $Pre^+_{a^{st}_{\gamma}}\subseteq Eff^-_{a^{st}_{\gamma}}$.
\end{remark}

\begin{definition}[Safe durative actions]
\label{def: durative action type}
We say that $Da_*$ is \emph{simply $\gamma$-safe of type (x)} where $x\in\{a,b,c,d\}$ if it is $\gamma$-reachable, $a^{st}_{*}$ is strongly $\gamma$-safe, $a^{end}_{*}$ is $\gamma$-relevant unbounded, and, finally,
$Da_*$ satisfies the condition (x) of Proposition \ref{prop: aux simple safe}.
\end{definition}

\begin{example}
Consider a template $\cT$ and an instance $\gamma$ such that $\gamma(\cT) = \{q, q'\}$. Figure \ref{fig:notexe} shows possible instances of actions of types (a)-(d).
\end{example}

\begin{figure}[tbh]
\centering\includegraphics[width=80mm]{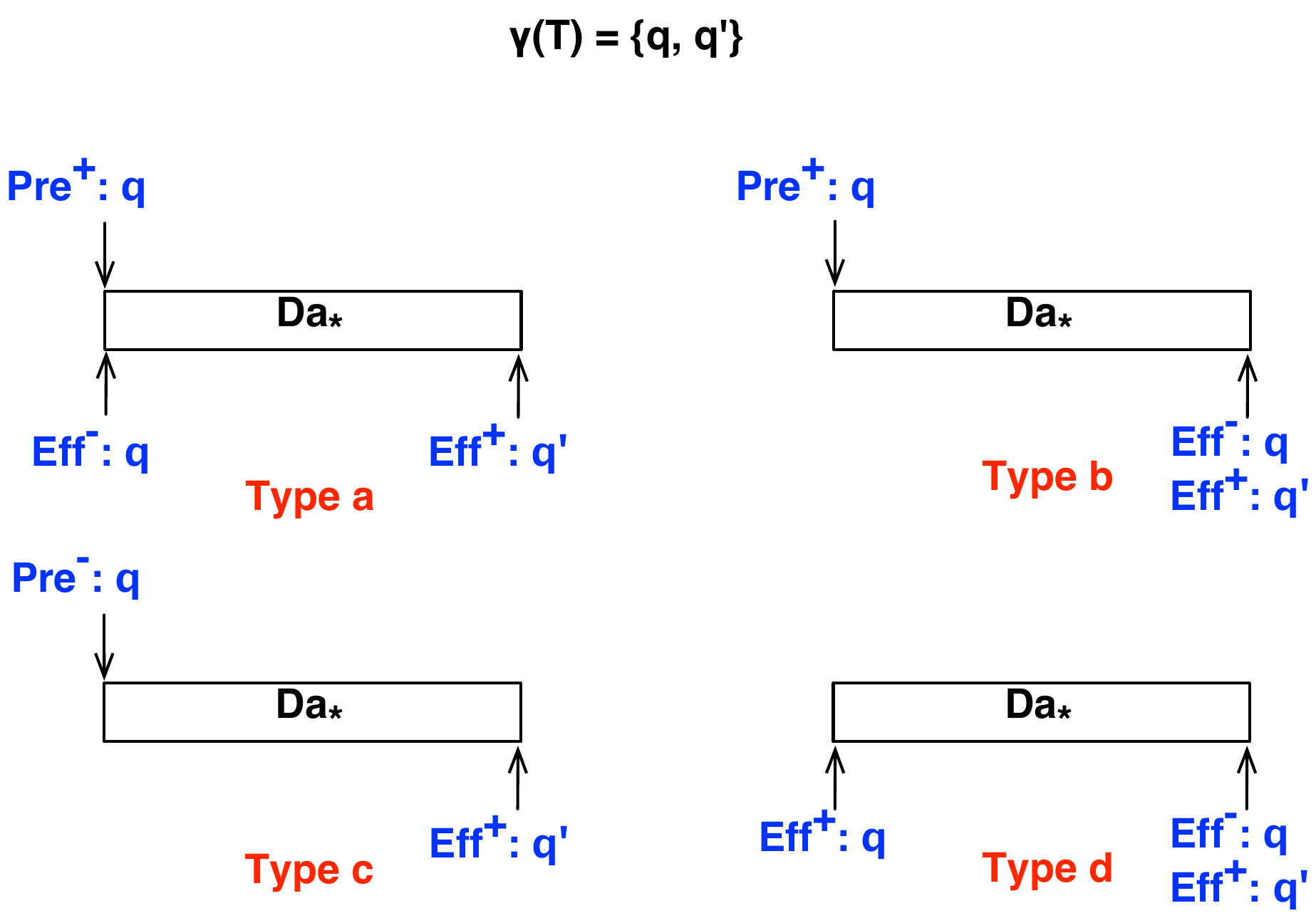}
\caption{Examples of actions of types (a)-(d).}
\label{fig:notexe}
\end{figure}

When the start or the end of a durative action $Da$ happen simultaneously with other actions, the reduction of $Da$ to $Da_*$ cannot be performed in general as shown in the following example.

\begin{example}
Consider a template $\cT$ and an instance $\gamma$ such that $\gamma(\cT) = \{q, q', q''\}$. Figure \ref{fig:simu} shows that, when the durative actions $Da$ and $Da'$ are considered in isolation, both $a^{st}_{*}$ and $a'^{st}_{*}$ are strongly safe since they are $\gamma$-unreachable. Since $a^{end}_{*}$ and $a'^{end}_{*}$ are irrelevant, $Da$ and $Da'$ are strongly safe. However, if we now consider the case in which $Da$ and $Da'$ happen simultaneously, giving rise to the sequence ${\bf A} = (A^1=\{a^{st}, a'^{st}\}, A^2=\{a^{inv}, a'^{inv}\})$, we see that ${\bf A}$ is not individually $\gamma$-safe. In fact, if we put $s^0=\{q''\}$ with $w(\cT, \gamma, s^0) = 1$, we have that $s^1=\xi(s^0,A^1) = \{q, q'', q''' \}$ with $w(\cT, \gamma, s^1) = 3$, which violates the definition of individual $\gamma$-safety.
\end{example}
\begin{figure}[tbh]
\centering\includegraphics[width=90mm]{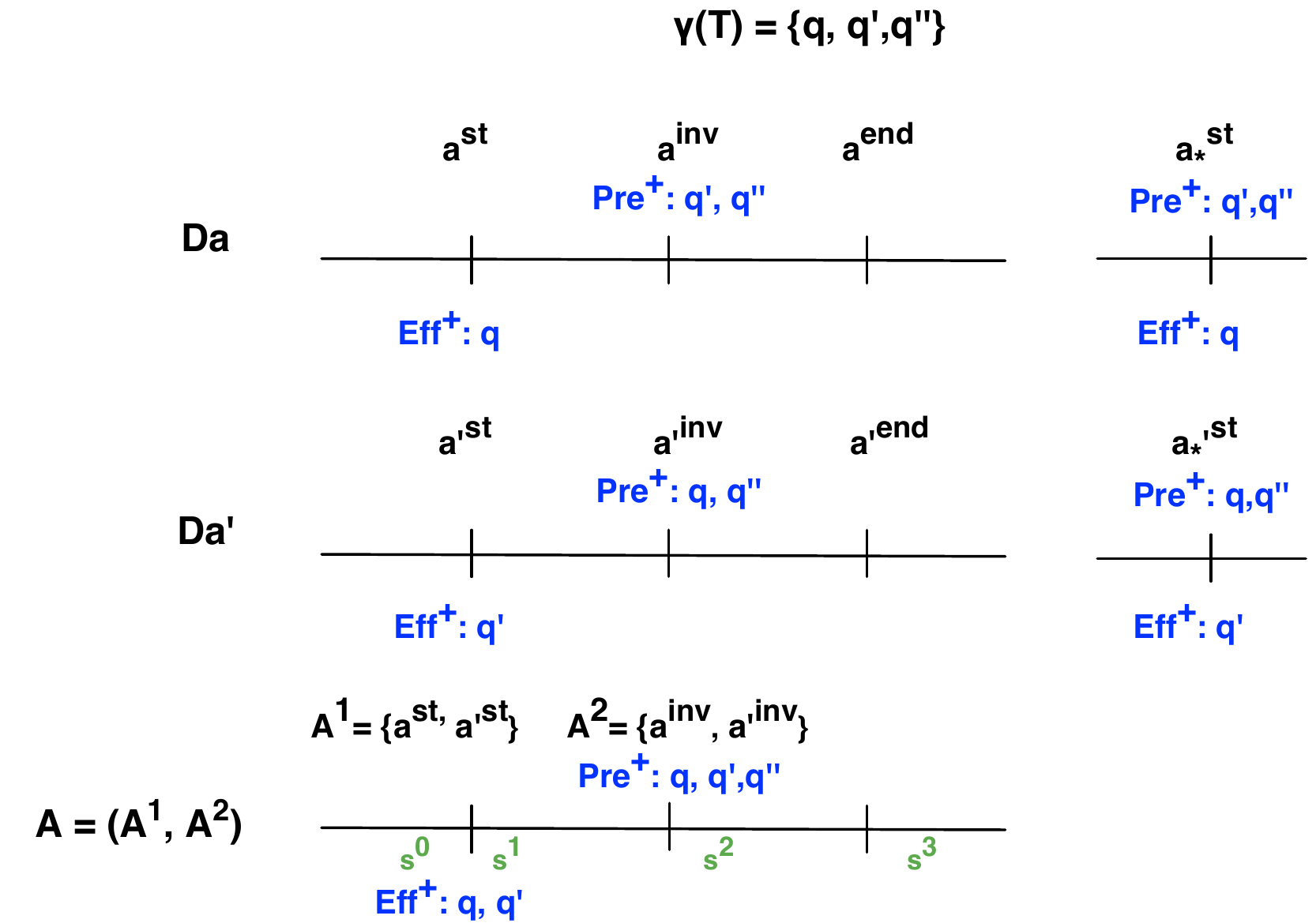}
\caption{The sequence ${\bf A}$ is not individually $\gamma$-safe.}
\label{fig:simu}
\end{figure}

Note that, in the previous example, the two durative actions are $\gamma$-unreachable. The following result shows that such pathological phenomena can only happen in that case and will be instrumental for the results of the next section.

\begin{proposition}\label{prop: start  * safe} Let $Da$ be a $\gamma$-reachable durative action such that $a^{st}$ is not strongly $\gamma$-safe, while $a^{st}_*$ is strongly $\gamma$-safe. Then,
\begin{enumerate}
\item[(i)] $a^{st}_*$ is $\gamma$-relevant bounded;
\item[(ii)] for every ground action sets $A^1$ such that $(\{a^{st}\}\cup A^1, a^{inv})$ is executable, $(\{a^{st}\}\cup A^1, a^{inv})$ is individually $\gamma$-safe.
\end{enumerate}
\end{proposition}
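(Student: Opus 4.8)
The plan is to reduce both claims to the structural classification of the auxiliary start action $a^{st}_*$ and then read off the conclusions. Throughout I will use that the effects of $a^{st}_*$ and $a^{st}$ coincide, so $Eff^{\pm}_{a^{st}_{*\gamma}}=Eff^{\pm}_{a^{st}_\gamma}$, whereas the positive preconditions only grow, $Pre^+_{a^{st}_\gamma}\subseteq Pre^+_{a^{st}_{*\gamma}}$ (the extra ones coming from $Pre^+_{a^{inv}}\setminus Eff^+_{a^{st}}$). I also record the key reachability fact: by Proposition \ref{prop: aux durative 1}(iii) the sets ${\bf S}_{Da}$ and ${\bf S}_{Da_*}$ are in bijection while preserving $w(\cT,\gamma,\cdot)$ on the initial state, so $Da$ being $\gamma$-reachable is equivalent to $Da_*$ being $\gamma$-reachable; applying Proposition \ref{prop: reach two} to the length-two sequence $Da_*$ then yields $|Pre^+_{a^{st}_{*\gamma}}|\leq 1$.

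For part (i), I start from the hypothesis that $a^{st}_*$ is strongly $\gamma$-safe, so by Corollary \ref{cor: strongly safe} it is $\gamma$-unreachable, $\gamma$-irrelevant, $\gamma$-relevant balanced, or $\gamma$-relevant bounded. The bound $|Pre^+_{a^{st}_{*\gamma}}|\leq 1$ immediately excludes the $\gamma$-unreachable case. To exclude the $\gamma$-irrelevant case, observe that it forces $Eff^+_{a^{st}_\gamma}=Eff^+_{a^{st}_{*\gamma}}=\emptyset$, whence $a^{st}$ is itself $\gamma$-irrelevant or $\gamma$-unreachable and so strongly $\gamma$-safe by Theorem \ref{prop:weight class}, a contradiction. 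The balanced case splits on $|Pre^+_{a^{st}_\gamma}|$: if it equals $1$ then $Pre^+_{a^{st}_\gamma}=Pre^+_{a^{st}_{*\gamma}}$ and, since the effects coincide, $a^{st}$ inherits the balanced condition and is strongly safe (again a contradiction); if it equals $0$, the single positive precondition $\{p\}=Pre^+_{a^{st}_{*\gamma}}$ comes from $Pre^+_{a^{inv}}\setminus Eff^+_{a^{st}}$, and the balanced inclusion forces $p\in Eff^-_{a^{st}}$, i.e. $p\in Pre^+_{a^{inv}}\cap Eff^-_{a^{st}}$, which is exactly the illegal-schema configuration ($l\in Pre_{D\alpha}^{+inv}$, $l\in Eff_{D\alpha}^{-st}$) excluded in Section \ref{sec:pddl}. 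Having eliminated the other three types, $a^{st}_*$ must be $\gamma$-relevant bounded.

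For part (ii), fix $A^1$ with $(\{a^{st}\}\cup A^1, a^{inv})$ executable and write $B=\{a^{st}\}\cup A^1$, recalling that the actions of a concurrent set are pairwise non-interfering. Take any compatible triple $(s^0,s^1,s^2)$ with $s^1=\xi(s^0,B)$ and $s^2=\xi(s^1,\{a^{inv}\})$. Since $a^{inv}$ has empty effects, $s^2=s^1$, so it suffices to prove that $t^1:=s^1\cap\gamma(\cT)$ has at most one element. Writing $\{q\}=Eff^+_{a^{st}_\gamma}$ (a singleton by part (i)), I use that $a^{st}_*$ bounded gives the decomposition $\gamma(\cT)=Pre^-_{a^{st}_{*\gamma}}\cup\{q\}\cup Eff^-_{a^{st}_\gamma}$, and show that every $p\in\gamma(\cT)$ with $p\neq q$ fails to lie in $t^1$. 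For $p\in Eff^-_{a^{st}_\gamma}$, non-interference gives $p\notin Eff^+_{A^1}$, so $p$ is deleted by $B$ and not re-added. For $p\in Pre^-_{a^{st}_\gamma}$, applicability of $a^{st}$ gives $p\notin s^0$ and non-interference gives $p\notin Eff^+_{A^1}$, so $p$ is never added (here $p\neq q$ rules out its addition by $a^{st}$). For the remaining $p\in(Pre^-_{a^{inv}}\setminus Eff^-_{a^{st}})\cap\gamma(\cT)$, applicability of $a^{inv}$ in $s^1$ gives $p\notin s^1$ directly. Hence $t^1\subseteq\{q\}$, so $w(\cT,\gamma,s^1)=w(\cT,\gamma,s^2)\leq 1$, establishing individual $\gamma$-safety.

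The main obstacle is controlling the arbitrary concurrent actions $A^1$ in part (ii): without restricting their effects one cannot conclude that the deleted or forbidden atoms of $a^{st}$ stay out of $t^1$. The resolution is to lean on pairwise non-interference within $B$, which prevents any action of $A^1$ from re-adding an atom that $a^{st}$ deletes or requires absent, together with the applicability of the invariant fragment $a^{inv}$, which removes precisely the atoms that $a^{st}$ alone would leave uncontrolled (this is exactly what upgrades $a^{st}$ from unbounded to the bounded $a^{st}_*$). A secondary delicate point, in part (i), is the appeal to the legality assumption on durative schemas to kill the balanced case; it is worth verifying that the ground-level intersection $Pre^+_{a^{inv}}\cap Eff^-_{a^{st}}$ indeed instantiates the forbidden schema pattern.
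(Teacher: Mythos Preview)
Your argument is essentially correct, and both parts follow a somewhat different route from the paper.

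For part (ii), the paper proceeds indirectly: it first shows $Eff^+_{A^1_\gamma}\subseteq Eff^+_{a^{st}_\gamma}$ (using the bounded decomposition and executability much as you do), then replaces $A^1$ by an action set $\tilde A^1$ with those positive $\gamma$-effects stripped out, observes that $(\tilde A^1, a^{st}, a^{inv})$ is individually $\gamma$-safe by Proposition~\ref{prop: aux durative 1}(iv) and Proposition~\ref{prop: ind safe}, and finally merges back via Proposition~\ref{prop: ind safe}(ii). Your direct analysis of $s^1\cap\gamma(\cT)$ via the bounded cover $\gamma(\cT)=\{q\}\cup Eff^-_{a^{st}_\gamma}\cup Pre^-_{a^{st}_{*\gamma}}$ is more elementary and self-contained; it avoids the sequence-manipulation machinery entirely and in fact never needs the hypothesis $w(\cT,\gamma,s^0)\leq 1$. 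The paper's route has the advantage of re-using existing lemmas, yours of being shorter.

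For part (i), the paper first argues that $a^{st}$ itself must be $\gamma$-relevant unbounded (hence $|Pre^+_{a^{st}_\gamma}|=0$), and then observes that the extra positive preconditions of $a^{st}_*$ all come from $Pre^+_{a^{inv}}\setminus Eff^+_{a^{st}}$ and therefore cannot meet $Eff_{a^{st}_\gamma}$, ruling out the balanced case at once. Your case-splitting on $|Pre^+_{a^{st}_\gamma}|\in\{0,1\}$ reaches the same conclusion, but your handling of the $|Pre^+_{a^{st}_\gamma}|=0$ subcase has a genuine wrinkle that you yourself flag: the schema-level illegality clause forbids a single \emph{literal} $l$ with $l\in Pre^{+inv}_{D\alpha}\cap Eff^{-st}_{D\alpha}$, and a ground atom $p\in Pre^+_{a^{inv}}\cap Eff^-_{a^{st}}$ need not arise from the same schema literal in both sets (e.g.\ one side could be universally quantified). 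The clean fix, which is what the paper implicitly relies on, is to invoke executability rather than illegality: if $p\in Pre^+_{a^{inv}}\cap Eff^-_{a^{st}}$ and $p\notin Eff^+_{a^{st}}$, then for every $s^0$ the successor $s^1=\xi(s^0,a^{st})$ has $p\notin s^1$, so $a^{inv}$ is never applicable and $(a^{st},a^{inv})$ is non-executable, contradicting the $\gamma$-reachability of $Da$. Replacing your schema-illegality appeal by this executability observation closes the gap.
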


No similar results hold for the end parts of durative actions as next example shows. 

\begin{example}
Consider a template $\cT$ and an instance $\gamma$ such that $\gamma(\cT) = \{q, q'\}$. When the durative actions $Da$ and $Da'$ (Figure \ref{fig:ex10}) are considered in isolation, both $a^{end}_{*}$ and $a'^{end}_{*}$ are strongly $\gamma$-safe since they are $\gamma$-bounded. Since $a^{st}_{*}$ and $a'^{st}_{*}$ are irrelevant, $Da$ and $Da'$ are strongly safe. However, if $Da$ and $Da'$ happen simultaneously, giving rise to the sequence ${\bf A} = (A^1=\{a^{inv}, a'^{inv}\}, A^2=\{a^{end}, a'^{end}\})$, ${\bf A}$ is not individually $\gamma$-safe. If we put $s^0=\emptyset$ with $w(\cT, \gamma, s^0) = 0$, we have that $s^1=\xi(s^0,A^1) = \emptyset$ and $s^2=\xi(s^1,A^2) = \{q, q'\}$ with $w(\cT, \gamma, s^2) = 2$, which violates the definition of individual $\gamma$-safety.
\end{example}
\begin{figure}[tbh]
\centering\includegraphics[width=80mm]{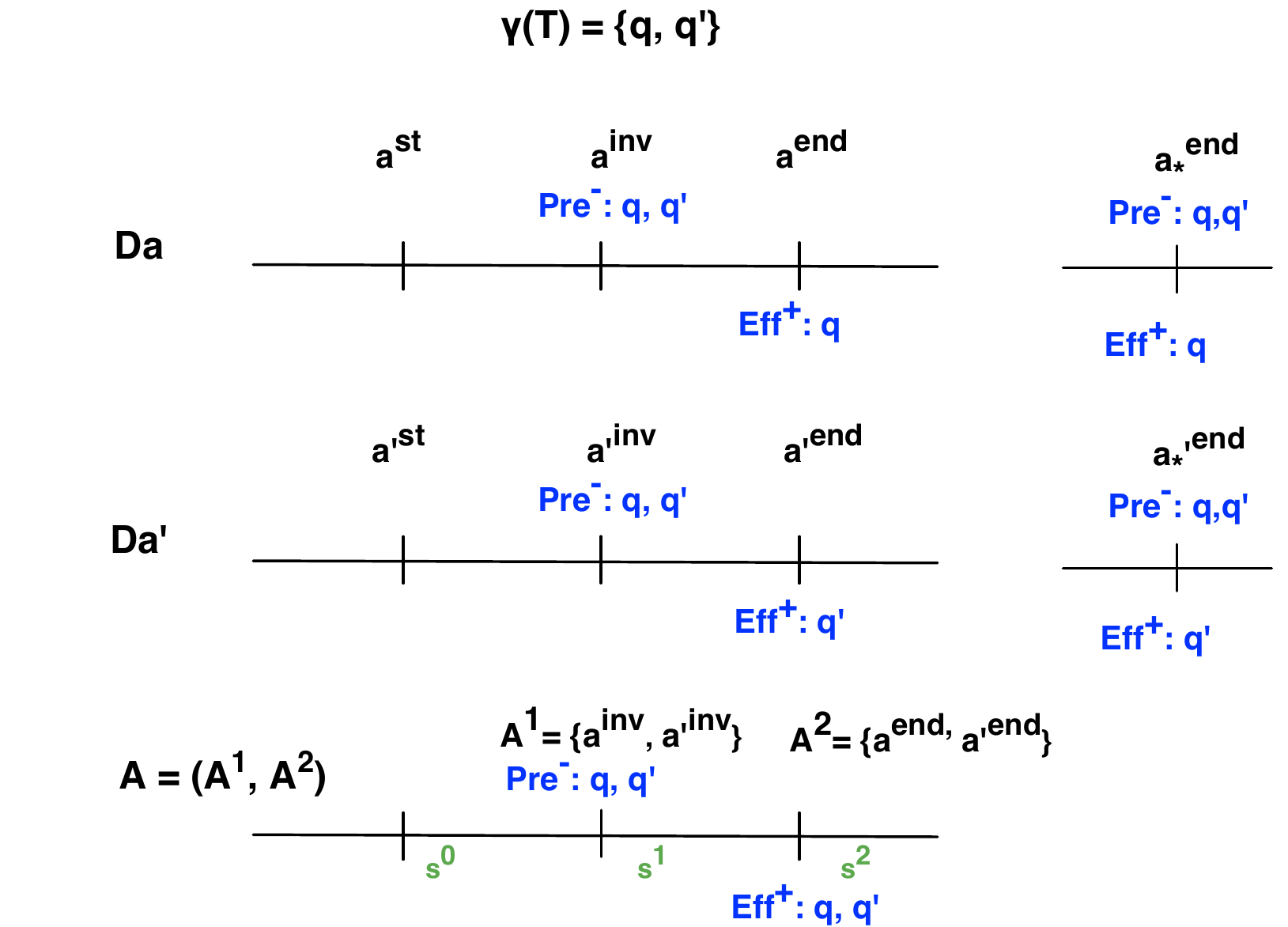}
\label{fig:ex10}
\caption{The sequence ${\bf A}$ is not individually $\gamma$-safe.}

\label{fig:endparts}
\end{figure}

\section{Conditions for the invariance of a template}
\label{sec: cond inv}
Any plan $\pi$ where all instantaneous ground actions are strongly safe, all durative ground actions are safe and take place in isolation, i.e. with no other actions happening in between them, yields a safe happening sequence ${\bf A}_{\pi}$, as a consequence of Corollary \ref{cor: ind safe}. The difficulty, in general, is that durative actions can in principle start or end together and be intertwined with other instantaneous or durative actions. 
Safety of durative actions must therefore be accompanied by suitable hypothesis guaranteeing that dangerous intertwinements or simultaneous happenings cannot take place in valid plans. In this way, we can work out sufficient conditions for the invariance of a template, which will be useful in analysing concrete examples.

In this section, we present two results that give \emph{sufficient} conditions for the invariance of a template. The first deals with the particular case when all instantaneous actions are strongly safe and all durative actions $Da$ are such that $Da_*$ is strongly safe. The second result considers a more general case when there are durative actions $Da$ for which $Da_*$ is only simply safe. We recall our standing assumption that $w(\cT, \gamma, Init)\leq 1$ for every $\gamma$.

Given a template $\cT$ and an instance $\gamma$, we denote by $\cG\cA^d(\gamma)$ the collection of durative actions which are not strongly $\gamma$-safe and with $\cG\cA^{st}(\gamma)$ and $\cG\cA^{end}(\gamma)$, respectively, the collection of their start and end fragments. The following property prevents the simultaneous end of durative actions that could yield unsafe phenomena.

\begin{definition}[Relevant right isolated actions]\label{def: relevant right isolated}
Given a template $\cT$, the set of ground durative actions $\cG\cA^d$ is said to be \emph{relevant right isolated} if, for every instance $\gamma$ and for every $Da^1, Da^2\in\cG\cA^d(\gamma)$, one of the following conditions is satisfied:
\begin{enumerate}[(i)]
\item $|Eff^+_{a^{1end}_\gamma}\cup Eff^+_{a^{2end}_\gamma}|\leq 1$;
\item at least one of the two pairs $\{a^{1end}, a^{2end}\}$ or $\{a^{1inv}, a^{2inv}\}$ is mutex;
\item if they are both non-interfering, $(\{a^{1inv}, a^{2inv}\}, \{a^{1end}, a^{2end}\})$ is $\gamma$-unreachable.
\end{enumerate}
\end{definition}

\begin{theorem}
\label{theo: *safety} 
Consider a template $\cT$ and suppose that the set of instantaneous actions $\cG\cA^i$ and that of durative actions $\cG\cA^d$ satisfy the following properties:
\begin{enumerate}[(i)]
\item every $a\in\cG\cA^i$ is strongly safe;
\item for every instance $\gamma$ and every $Da\in\cG\cA^d(\gamma)$,  $Da_*$ is $\gamma$-reachable and strongly $\gamma$-safe;
\item $\cG\cA^d$ is relevant right isolated.
\end{enumerate}
Then, $\cT$ is invariant.
\end{theorem}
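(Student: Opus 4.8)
The plan is to prove invariance through the bridge provided by Proposition~\ref{prop: invariance safety}: it suffices to show that for every valid simple plan $\pi$ and every instance $\gamma$, the happening sequence ${\bf A}_{\pi}=(A^1,\dots ,A^n)$ is individually $\gamma$-safe (the standing assumption $w(\cT,\gamma,Init)\leq 1$ being already in force). To establish individual $\gamma$-safety of ${\bf A}_{\pi}$ I would invoke the local-to-global principle of Corollary~\ref{cor: ind safe}: it is enough to produce, for each position $j$, some subsequence ${\bf A}_{j-r}^{j+s}$ that is individually $\gamma$-safe. Throughout I would exploit the admissible structure of ${\bf A}_{\pi}$ coming from the induced-plan construction, namely that the invariant fragment of a durative action occurs in the happening immediately following its start and immediately preceding its end, and that the happenings carrying invariant conditions have empty effects.

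The routine positions are the happenings $A^j$ whose effects miss $\gamma(\cT)$ entirely: since then $Eff^+_{A^j_{\gamma}}=\emptyset$, such an $A^j$ is $\gamma$-irrelevant or $\gamma$-unreachable and hence strongly $\gamma$-safe by Theorem~\ref{prop:weight class}, so the singleton $(A^j)$ is individually $\gamma$-safe. This disposes of all invariant-condition happenings. For a general happening I would split $A^j=T^j\cup S^j\cup E^j$, where $S^j$ and $E^j$ collect the start and end fragments of the durative actions lying in $\cG\cA^d(\gamma)$ and $T^j$ gathers everything else: the instantaneous actions, which are strongly safe by hypothesis~(i), together with every fragment of a durative action that is strongly $\gamma$-safe. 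By Proposition~\ref{rem: strongly-safe-action}, $T^j$ is strongly $\gamma$-safe, so when $S^j=E^j=\emptyset$ the singleton $(A^j)$ again works.

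It remains to cover the positions where $S^j$ or $E^j$ is nonempty; here I would reduce to the dynamics on $\gamma(\cT)$ via Proposition~\ref{prop: safe split}(3) and argue on the two flanking subsequences. For a start fragment I would use the subsequence $(A^j,A^{j+1})$, whose successor happening contains the corresponding invariant by admissibility; Proposition~\ref{prop: start  * safe} shows that each such pair is individually $\gamma$-safe and, crucially, that $a^{st}_*$ is $\gamma$-relevant bounded, which supplies the robustness needed to absorb the other actions firing concurrently. For an end fragment I would use $(A^{j-1},A^j)$, whose predecessor happening carries the corresponding invariant; since hypothesis~(ii) makes each $a^{end}_*$ strongly $\gamma$-safe, Proposition~\ref{prop: aux durative 1}(v) gives that each isolated pair $(\{a^{inv}\},\{a^{end}\})$ is individually $\gamma$-safe. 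The simultaneous ending of two durative actions of $\cG\cA^d(\gamma)$ is exactly what the relevant-right-isolated hypothesis~(iii) (Definition~\ref{def: relevant right isolated}) controls: either their combined positive effect on $\gamma(\cT)$ is at most one atom; or a mutex between their ends or their invariants forbids the intertwinement in any valid plan (Remark~\ref{rem:mut}); or the pair $(\{a^{1inv},a^{2inv}\},\{a^{1end},a^{2end}\})$ is $\gamma$-unreachable and hence individually $\gamma$-safe by Remark~\ref{rem: exec reach safe}.

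The main obstacle is the happening that simultaneously carries start fragments, end fragments and strongly safe actions: neither flanking subsequence can be treated by looking at $S^j$ or $E^j$ alone, so one must bound the combined positive effect $|Eff^+_{A^j_{\gamma}}|$ and verify directly that a pre-state of weight at most one is mapped to a post-state of weight at most one. I expect this to require assembling all the ingredients at once: the strong safety of $T^j$, the bounded structure of $a^{st}_*$ and $a^{end}_*$ (which encodes, through the invariant preconditions absorbed into $Da_*$, the constraints the surrounding state must satisfy while each durative action runs), the pairwise non-interference of the concurrent actions, and hypothesis~(iii) for the end--end interactions. Once each position $j$ is shown to lie in an individually $\gamma$-safe subsequence, Corollary~\ref{cor: ind safe} yields that ${\bf A}_{\pi}$ is individually $\gamma$-safe, and Proposition~\ref{prop: invariance safety} then gives the invariance of $\cT$.
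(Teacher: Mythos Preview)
Your high-level strategy coincides with the paper's: invoke Proposition~\ref{prop: invariance safety}, split each happening into $S^j\cup T^j\cup E^j$, treat starts via $(S^j,A^{j+1})$ using Proposition~\ref{prop: start  * safe}, treat ends via $(A^{j-1},E^j)$ using Proposition~\ref{prop: aux durative 1}(v) together with hypothesis~(iii), and then cover every position by Corollary~\ref{cor: ind safe}. Where you diverge is in your ``main obstacle''. You propose to bound $|Eff^+_{A^j_\gamma}|$ directly and argue on the weight transition; the paper does not do this and it is not needed. Instead, one proves the two flanking pieces $(A^{j-1},E^j)$ and $(S^j,A^{j+1})$ individually $\gamma$-safe \emph{separately} (note these involve only $E^j$ and $S^j$, not all of $A^j$), chains them with the strongly $\gamma$-safe $T^j$ in the middle via Proposition~\ref{prop: ind safe}(i) to obtain that $(A^{j-1},E^j,T^j,S^j,A^{j+1})$ is individually $\gamma$-safe, and then merges $E^j,T^j,S^j$ back into $A^j$ via two applications of Proposition~\ref{prop: ind safe}(ii), using that these three sets are pairwise non-interfering as subsets of a single happening. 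The mixed case thus reduces to the pure-start and pure-end cases you already handle.

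There is one further point you leave unresolved in the end case. When condition~(i) of Definition~\ref{def: relevant right isolated} holds (the combined positive $\gamma$-effect of two ends has cardinality at most one), you state the hypothesis but do not convert it into individual $\gamma$-safety of $(A^{j-1},E^j)$. The paper's move is: if no pair $\{a^{end},a'^{end}\}$ in $E^j$ is $\gamma$-heavy, then $Eff^+_{E^j_\gamma}=Eff^+_{a^{end}_\gamma}$ for one chosen $a^{end}$; strip the redundant positive $\gamma(\cT)$-effects from the remaining ends to form $\tilde E^j$ so that $E^j=\{a^{end}\}\cup\tilde E^j$ as action sets; now $\tilde E^j$ is strongly $\gamma$-safe, $(a^{inv},a^{end})$ is individually $\gamma$-safe by Proposition~\ref{prop: aux durative 1}(v), and since $A^{j-1}\setminus\{a^{inv}\}$ contains only preconditions, repeated use of Proposition~\ref{prop: ind safe} yields individual $\gamma$-safety of $(A^{j-1},E^j)$. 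In the alternative (some pair is $\gamma$-heavy), hypothesis~(iii) forces either a mutex (impossible in the valid plan) or $\gamma$-unreachability of $(\{a^{inv},a'^{inv}\},\{a^{end},a'^{end}\})$, which by Proposition~\ref{prop: exec reach} propagates to $(A^{j-1},E^j)$, giving individual $\gamma$-safety via Remark~\ref{rem: exec reach safe}.
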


Note that assumption (iii) in the statement of Theorem \ref{theo: *safety} is to exclude the simultaneous end of durative actions; if such phenomena can be excluded a-priori, the hypothesis can be removed.

When there are durative ground actions $Da$ for which $Da_*$ is not strongly $\gamma$-safe, further hypotheses are needed in order to guarantee that the template $\cT$ is invariant. The main point is that, in this case, not only simultaneity can be harmful, but also any intertwinement between such a durative action and other actions. The following examples show the type of phenomena that can happen and that any theorem extending Theorem \ref{theo: *safety} needs to prevent.

\begin{example}
Consider a template $\cT$ and an instance $\gamma$ such that $\gamma(\cT) = \{q, q', q''\}$. Both the durative actions $Da$ and $Da'$ (Figure \ref{fig:ex11}) are $\gamma$-safe. However, they can intertwine in such a way to give rise to a sequence that is individually unsafe: ${\bf A} = (A^1=\{a^{st}\}, A^2=\{a'^{st}\}, A^3=\{a^{end}\}, A^4=\{a'^{end}\})$. If we put $s^0=\{q\}$ with $w(\cT,\gamma,s^0)=1$, we have that $s^4=\{q',q''\}$ with $w(\cT,\gamma,s^4)=2$.
\end{example}
\begin{figure}[tbh]
\centering\includegraphics[width=100mm]{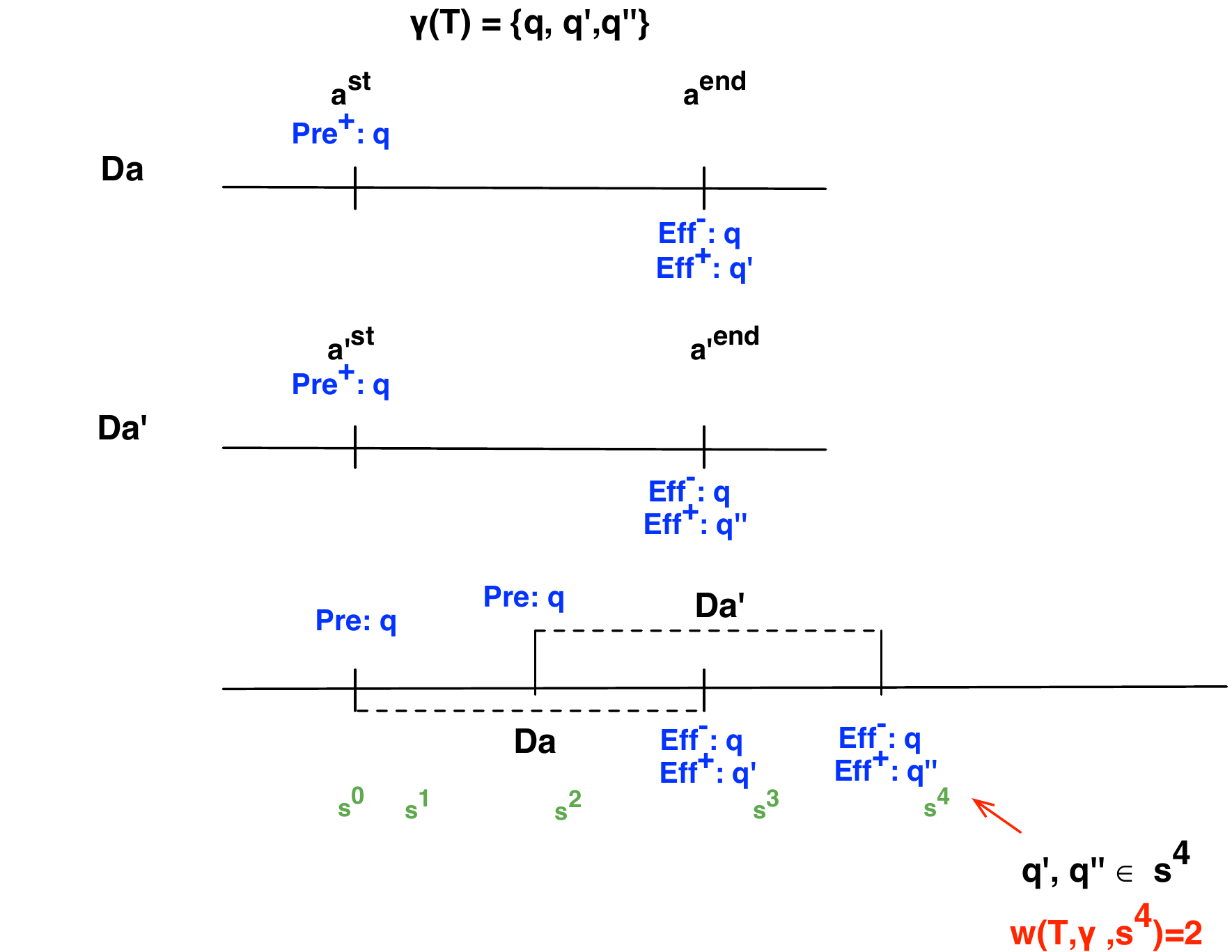}
\caption{Schemas $Da$ and $Da'$ can intertwine in such a way to give rise to a sequence that is individually unsafe.}
\label{fig:ex11}
\end{figure}

The following definition describes a set of durative actions for which such phenomena cannot take place. It consists of three requirements acting, for each instantiation $\gamma$, on the subset of dangerous durative actions $\cG\cA^d(\gamma)$. The first prevents the simultaneous happening of two start fragments of such durative actions. The second states that, between two successive start fragments of durative actions in $\cG\cA^d(\gamma)$, there must be the end of a third action also in $\cG\cA^d(\gamma)$. Finally, the third requirement prevents $\gamma$-relevant actions to happen in between a durative action in $\cG\cA^d(\gamma)$.

\begin{definition}[Relevant non intertwining actions]\label{def: non intertwining}
Given a template $\cT$, the set of ground durative actions $\cG\cA^d$ is said to be \emph{relevant non intertwining} if, for every instance $\gamma$, every $Da\in\cG\cA^d(\gamma)$ and for every $\gamma$-reachable $Da$-admissible sequence of actions
\begin{equation}\label{def: rel isol}{\bf A}=(\{a^{st}\}\cup A^1, A^2, \dots , A^{n-1}, \{a^{end}\}\cup A^n)\,,\end{equation}
the following conditions are satisfied:
\begin{enumerate}[(i)]
\item $A^1\cap \cG\cA^{st}(\gamma)=\emptyset$;
\item If $A^1=\emptyset$ and $b\in A^j\cap \cG\cA^{st}(\gamma)$ for $j<n$, then there exists $b'\in A^{j'}\cap \cG\cA^{end}(\gamma)$ for some $0<j'\leq j$;
\item If $A^1=\emptyset$ and $A^j\cap (\cG\cA^{st}(\gamma)\cup \cG\cA^{end}(\gamma))=\emptyset$ for every $j=2,\dots ,{n-1}$, then each $A^j$ is $\gamma$-irrelevant for $j=2,\dots ,{n-1}$.
\end{enumerate}
\end{definition}

We are now ready to state and prove the main result of this section, that expresses a sufficient condition for a template to be invariant. 

\begin{theorem} 
\label{theo: non intertwining} 
Consider a template $\cT$ and suppose that the set of instantaneous actions $\cG\cA^i$ and that of durative actions $\cG\cA^d$ satisfy the following properties:
\begin{enumerate}[(i)]
\item every $a\in\cG\cA^i$ is strongly safe;
\item for every $Da\in\cG\cA^d$, $Da_*$ is safe;
\item the set $\cG\cA^d$ is relevant non-intertwining.
\end{enumerate}
Then, $\cT$ is invariant.
\end{theorem}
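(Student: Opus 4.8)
The plan is to prove invariance via Proposition \ref{prop: invariance safety}: it suffices to show that for every valid simple plan $\pi$ and every instance $\gamma$, the happening sequence ${\bf A}_{\pi}$ is individually $\gamma$-safe. By Corollary \ref{cor: ind safe}, individual $\gamma$-safety is a local-to-global property, so I would fix an arbitrary index $j$ in the sequence and exhibit a subsequence ${\bf A}_{j-r}^{j+s}$ around position $j$ that is individually $\gamma$-safe. The whole difficulty is concentrated in locating, for each $j$, a suitable window whose safety I can certify using the three hypotheses. I would split the argument according to which action sets in ${\bf A}_{\pi}$ fail to be strongly $\gamma$-safe. By hypothesis (i), every instantaneous action is strongly safe, and by Theorem \ref{prop:weight class} and the classification results, an action set that contains only strongly safe actions is itself strongly $\gamma$-safe (Proposition \ref{rem: strongly-safe-action}); such sets are individually $\gamma$-safe as singleton subsequences. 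Hence the only positions $j$ requiring work are those where a fragment $a^{end}$ of some durative action $Da\in\cG\cA^d(\gamma)$ appears, since this is the only source of non-strong-safety.

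So fix such a $j$ and let $Da=(a^{st},a^{inv},a^{end})$ with $a^{end}\in A^{j}$. Since ${\bf A}_{\pi}$ comes from a valid plan, the start fragment $a^{st}$ appears at some earlier position, and by the structure of induced plans the subsequence strung between them is $Da$-admissible; moreover it is $\gamma$-reachable by Remark \ref{rem: exec reach safe}. I would take the window ${\bf A}_{j-r}^{j+s}$ to be exactly this $Da$-admissible sequence ${\bf A}=(\{a^{st}\}\cup A^1, A^2,\dots,A^{n-1},\{a^{end}\}\cup A^n)$. The plan is to invoke hypothesis (iii), relevant non-intertwining, to control everything happening strictly between $a^{st}$ and $a^{end}$. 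Condition (i) of Definition \ref{def: non intertwining} forces $A^1$ to contain no start fragment of a dangerous durative action, condition (iii) forces every intervening action set $A^2,\dots,A^{n-1}$ to be $\gamma$-irrelevant when no other dangerous fragment intrudes, and condition (ii) handles the case where another dangerous start fragment does appear by guaranteeing a preceding matching end fragment. Under these conditions the intertwining is tame enough that the sequence reduces, via Propositions \ref{prop: aux durative 2} and \ref{prop: aux durative 1}, to the auxiliary two-action sequence $Da_*=(a^{st}_*,a^{end}_*)$ with only $\gamma$-irrelevant sets inserted in between. By hypothesis (ii), $Da_*$ is $\gamma$-safe, so Theorem \ref{theo: safe sequences} applies: the sequence with irrelevant insertions is either non-executable or $\gamma$-safe, hence individually $\gamma$-safe, giving the required safe window around $j$.

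The delicate part, and the main obstacle, is condition (ii) of Definition \ref{def: non intertwining}: the induction must handle \emph{nested and chained} durative actions, where between $a^{st}$ and $a^{end}$ one encounters the start of a second dangerous durative action $Da'$. Here the reduction to $Da_*$ is not directly available, because the intervening set is not $\gamma$-irrelevant. The strategy is to argue by induction on the number of dangerous start fragments appearing in the window, using condition (ii) to guarantee that each such start is preceded by the end of another dangerous action in $\cG\cA^d(\gamma)$; this pairing lets me peel off already-closed durative actions and apply the inductive hypothesis to a shorter, less intertwined subsequence, while Proposition \ref{prop: start * safe} rules out the pathological simultaneity phenomena (like those in Figures \ref{fig:simu} and \ref{fig:endparts}) that would otherwise break individual safety at the start and end boundaries. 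I expect the bookkeeping of how overlapping windows tile the whole sequence, and verifying that the non-interference assumptions on concurrent happenings are preserved under the reduction to the $Da_*$ form, to be the most technically demanding steps, though each individual reduction is justified by a result already established above.
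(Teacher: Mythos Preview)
Your high-level strategy matches the paper's: reduce to Proposition \ref{prop: invariance safety}, use Corollary \ref{cor: ind safe} locally, pass to the auxiliary pair $Da_*$ via Proposition \ref{prop: aux durative 2}, and appeal to Theorem \ref{theo: safe sequences} once only $\gamma$-irrelevant sets remain between the endpoints. However, there is a genuine gap in how you get from the raw $Da$-admissible subsequence of ${\bf A}_\pi$ to something on which these tools actually bite.

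The concrete obstruction is that in the plan $\pi$ the start and end fragments of $Da$ need not happen in isolation: the $Da$-admissible subsequence has the form $(\{a^{st}\}\cup A^1,\dots,\{a^{end}\}\cup A^n)$ with $A^1,A^n$ possibly nonempty. But conditions (ii) and (iii) of Definition \ref{def: non intertwining} are stated under the hypothesis $A^1=\emptyset$; Proposition \ref{prop: aux durative 2} requires the endpoints to be the singletons $\{a^{st}\}$ and $\{a^{end}\}$; and Theorem \ref{theo: safe sequences} is formulated for a length-two outer sequence. Your proposal does not explain how to strip away the companions in $A^1$ and $A^n$, and the tools you cite do not apply until that is done. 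The paper's solution is a step you are missing entirely: it builds an auxiliary plan $\tilde\pi$ by $\epsilon$-shifting each $a^{st}\in\cG\cA^{st}(\gamma)$ slightly later and each $a^{end}\in\cG\cA^{end}(\gamma)$ slightly earlier, so that in $\tilde\pi$ these fragments occur alone. Serialisability (Proposition \ref{prop: ser}) guarantees $\tilde\pi$ is still valid; one then proves individual $\gamma$-safety of ${\bf A}_{\tilde\pi}$ and transfers back to ${\bf A}_\pi$ via item (ii) of Proposition \ref{prop: ind safe}. Without this isolation trick your window argument does not go through.

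Two smaller points. First, your claim that only positions containing an $a^{end}$ require work is incomplete: a durative action in $\cG\cA^d(\gamma)$ may have $a^{st}$ itself not strongly $\gamma$-safe (only $a^{st}_*$ is), so the window must already begin at the start fragment, not merely cover the end. Second, the paper does not run an induction on the number of dangerous start fragments inside a window. Instead it proves, once and for all and by a ``first counterexample'' argument, two global structural facts about any $Da\in\cG\cA^d(\gamma)$ occurring in $\pi$ on $[t_h,t_k]$: every $A_{t_i}$ with $h<i<k$ is $\gamma$-irrelevant, and no $A_{t_i}$ with $h\le i<k$ contains a start fragment from $\cG\cA^{st}(\gamma)$. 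These properties (which also force no dangerous end fragment in $(h,k]$) are exactly what makes the $\epsilon$-shifted plan decompose into strongly safe singletons and clean $Da$-admissible intervals. Your proposed induction is gesturing at the same conclusion but is not well-posed as stated, because the ``peeling off'' of nested actions interacts with the $A^1=\emptyset$ hypothesis you have not yet secured.
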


\begin{figure}[tbh]
\centering\includegraphics[width=70mm]{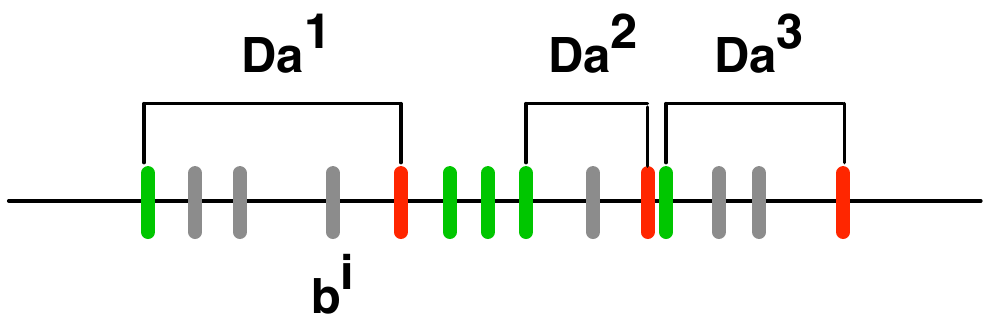}
\caption{Structure of a plan $\tilde\pi$ as constructed in Theorem \ref{theo: non intertwining}. Strongly safe actions are indicated in green, relevant in red and irrelevant in grey.}
\label{fig:planstruct}
\end{figure}


The properties that the set of durative actions $\cG\cA^d$ needs to satisfy to be relevant non intertwining, which are expressed in Definition \ref{def: non intertwining}, are in general difficult to check as they require to consider sequences of actions of possibly any length. Below we propose a sufficient condition that guarantees such properties to hold, which is much simpler and suitable to be later analysed at the lifted level of action schemas.

We start with two definitions. The first is a left version of the relevant right isolated property. It prevents dangerous durative actions to start simultaneously. It is needed to insure condition (i) of Definition \ref{def: non intertwining} of relevant non intertwining. The second definition allows us to reformulate conditions (ii) and (iii) of Definition \ref{def: non intertwining}.

\begin{definition}[Relevant left isolated actions]\label{def: relevant left isolated}
Given a template $\cT$, the set of ground durative actions $\cG\cA^d$ is said to be \emph{relevant left isolated} if, for every instance $\gamma$ and for every $Da^1, Da^2\in\cG\cA^d(\gamma)$, one of the following conditions is satisfied:
\begin{enumerate}[(i)]
\item at least one of the two pairs  $\{a^{1st}, a^{2st}\}$ or  $\{a^{1inv}, a^{2inv}\}$ is mutex;
\item if they are both non-interfering, $(\{a^{1st}, a^{2st}\}, \{a^{1inv}, a^{2inv}\})$ is $\gamma$-unreachable.
\end{enumerate}
\end{definition}

\begin{definition}[Irrelevant unreachable actions]
\label{def: irrelevant unreachable} Consider a template $\cT$ and an instance $\gamma$. A pair of actions $(a,a')$ is $\gamma$\emph{-irrelevant unreachable} if any sequence of actions
$${\bf A}=(\{a\}, A^2, \dots , A^{n-1}, \{a'\})$$
such that $A^2,\dots ,A^{n-1}$ are $\gamma$-irrelevant, is $\gamma$-unreachable.
\end{definition}

The next result expresses a sufficient condition for the set $\cG\cA^d$ to be relevant non intertwining.

\begin{proposition}\label{prop: relevant non inter} Consider a template $\cT$.  The set $\cG\cA^d$ is relevant non intertwining if the following conditions are satisfied:
\begin{enumerate}[(i)]
\item  $\cG\cA^d$ is relevant left isolated;
\item  for every instance $\gamma$, for every $Da\in \cG\cA^{d}(\gamma)$, and for every $a'\in \cG\cA\setminus \cG\cA^{end}(\gamma)$ that is not $\gamma$-irrelevant or $a'\in \cG\cA^{st}(\gamma)$, $\{a^{inv}, a'\}$ is mutex or the pair $(a^{st}, a')$ is $\gamma$-irrelevant unreachable.
\end{enumerate}
\end{proposition}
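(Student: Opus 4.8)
The plan is to unfold the definition of \emph{relevant non intertwining} (Definition \ref{def: non intertwining}) and verify its three clauses for an arbitrary instance $\gamma$, an arbitrary $Da\in\cG\cA^d(\gamma)$, and an arbitrary $\gamma$-reachable $Da$-admissible sequence ${\bf A}=(\{a^{st}\}\cup A^1, A^2,\dots,A^{n-1},\{a^{end}\}\cup A^n)$, deriving each clause from the two hypotheses of the proposition. Two tools will recur throughout. First, a prefix of a $\gamma$-reachable sequence is again $\gamma$-reachable (immediate from the definition of ${\bf S}_{\bf A}(\gamma)$), and by Proposition \ref{prop: exec reach} a happening may be split, and effect-free actions removed, without destroying $\gamma$-reachability; this lets me isolate short sub-sequences built from selected fragments. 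Second, Remark \ref{rem: exec reach safe} gives $\gamma$-reachable $\Rightarrow$ executable, while Remark \ref{rem:mut} turns a mutex between $a^{inv}$ and any action occurring strictly between $a^{st}$ and $a^{end}$ into non-executability, i.e.\ into a contradiction with $\gamma$-reachability of ${\bf A}$.

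First I would prove clause (i), $A^1\cap\cG\cA^{st}(\gamma)=\emptyset$, from \emph{relevant left isolated} (Definition \ref{def: relevant left isolated}). Suppose $a^{2st}\in A^1$ for some $Da^2\in\cG\cA^d(\gamma)$. Both starts sit in the first happening, hence are non-interfering, and $Da$-admissibility forces $a^{inv}$ and $a^{2inv}$ into $A^2$, where they too are non-interfering. Consequently neither mutex alternative of Definition \ref{def: relevant left isolated} can hold, so the pair $(\{a^{st},a^{2st}\},\{a^{inv},a^{2inv}\})$ must be $\gamma$-unreachable. I then contradict this by exhibiting it as a $\gamma$-reachable sub-sequence of ${\bf A}$: starting from a compatible state sequence of ${\bf A}$ I restrict the first two happenings to these four fragments while keeping the same initial state $s^0$, and check applicability of the two invariant fragments in the restricted successor of $s^0$.

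Next I would handle clauses (ii) and (iii) together, under $A^1=\emptyset$, through a single induction. Let $m$ be the least index $\ge 2$ at which a dangerous end occurs (and $m=n$ if none does in the interior). I claim that for every $2\le j<m$ the happening $A^j$ is $\gamma$-irrelevant and contains no dangerous start, and I prove this by strong induction on $j$. In the inductive step, any dangerous start $b\in A^j$, or any action $a'\in A^j$ that is not $\gamma$-irrelevant and is not a dangerous end, falls under hypothesis (ii) of the proposition: either $\{a^{inv},\cdot\}$ is mutex, contradicting executability via Remark \ref{rem:mut}, or the pair $(a^{st},\cdot)$ is $\gamma$-irrelevant unreachable (Definition \ref{def: irrelevant unreachable}), which contradicts the $\gamma$-reachability of the extracted sub-sequence $(\{a^{st}\},A^2,\dots,A^{j-1},\{\cdot\})$ whose interior is $\gamma$-irrelevant by the induction hypothesis. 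The only remaining way for $A^j$ to fail to be $\gamma$-irrelevant is to be $\gamma$-unreachable in the sense of Definition \ref{def: classification} (two positive preconditions inside $\gamma(\cT)$ supplied by separate $\gamma$-irrelevant actions); since interior happenings add nothing to $\gamma(\cT)$, the weight is non-increasing after $s^1$, so the weight could only have reached $2$ at the single step performed by $a^{st}$. Granting the claim, clause (ii) follows because a dangerous start at some $j<n$ cannot occur before $m$, so a dangerous end sits at an index $j'\le j$; and clause (iii) follows because, absent interior dangerous fragments, $m=n$ and every interior $A^j$ is $\gamma$-irrelevant.

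The hard part will be the reachability-inheritance extractions, namely showing that restricting or splitting a happening preserves $\gamma$-reachability of the two-step sub-sequence that I feed to \emph{relevant left isolated} or to \emph{$\gamma$-irrelevant unreachability}: deleting an action from a happening can in principle falsify the preconditions of the later invariant fragment, so the naive restriction need not be executable. I would discharge this by a direct construction of a compatible state sequence, using within-happening non-interference together with the legality constraints on durative schemas (equivalently, by first replacing $Da$ by its auxiliary pair $Da_*$ via Propositions \ref{prop: aux durative 1} and \ref{prop: aux durative 2}, which attaches the invariant preconditions to $a^{st}_*$ and $a^{end}_*$ and makes the extraction robust). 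The most delicate point is the $\gamma$-unreachable interior case that I traced back to $a^{st}$ raising the weight above $1$ in one step; this cannot be excluded from hypotheses (i)--(ii) in isolation, and I would rule it out using the setting in which the proposition is applied, Theorem \ref{theo: non intertwining}, where $Da_*$ is safe and Proposition \ref{prop: start  * safe} forces $a^{st}_*$ to be $\gamma$-relevant bounded, so that the weight at $s^1$ stays within the bound.
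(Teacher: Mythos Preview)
Your plan matches the paper's proof in structure: clause~(i) from \emph{relevant left isolated} via the extracted two-step pair, and clauses~(ii)--(iii) via a first-violation argument (the paper runs two separate ``let $j$ be the first index\ldots'' passes rather than your single strong induction up to the first dangerous end, but the logic is identical). You are more careful than the paper in two places. First, the paper handles all extractions by a bare appeal to Proposition~\ref{prop: exec reach} and does not route through $Da_*$; your worry that removing actions with effects from the first happening could break applicability of the invariant fragments is legitimate, but the paper does not pause over it. Second, for the interior step the paper simply writes ``let $b\in A^{j'}$ be any action which is not $\gamma$-irrelevant'' and never confronts your $\gamma$-unreachable-as-a-set edge case (every action in $A^{j'}$ individually $\gamma$-irrelevant, yet $|Pre^+_{A^{j'}_\gamma}|\ge 2$). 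The paper does not import safety of $Da_*$ from Theorem~\ref{theo: non intertwining} to close this; it treats the proposition as standalone and simply does not address the case. So your caution here outruns the paper's own argument rather than missing anything from it.
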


The property of $\gamma$-irrelevant unreachable, though conceptually simpler than the original properties required in the definition of relevant non-intertwining actions, is still too complex for practical implementation, as it requires to verify properties over sequences of undefined length. We now propose a stronger version of it that is instead of simple computational complexity (linear in the number of ground actions). 

\begin{definition}[Strongly irrelevant unreachable actions]
\label{def: strongly del inter} 
A pair of actions $(a,a')$ is \emph{strongly} $\gamma$\emph{-irrelevant unreachable} if any of the following conditions are satisfied:
\begin{enumerate}[(i)]
\item there exists $q\in \Gamma_a^+\cap Pre_{a'}^-$ such that, for every $a''$ that is $\gamma$-irrelevant, $q\not\in Eff^-_{a''}$;
\item there exists $q\in \Gamma_a^-\cap Pre_{a'}^+$ such that, for every $a''$ that is $\gamma$-irrelevant, $q\not\in Eff^+_{a''}$;
\item $|Pre^+_{a_{\gamma}}\cup (Pre^+_{a'_{\gamma}}\setminus Eff^+_{a_\gamma})|>1$.
\end{enumerate}
\end{definition}
The first condition essentially says that the application of the action $a$ leads to a state containing a ground atom $q$ that needs to be false in order to then apply $a'$ and that there is no $\gamma$-irrelevant action that can make this atom false. The second condition is the analogous of the first, but exchanges the role of true and false atoms. Finally, the third condition is equivalent to require that $(a,a')$ is a $\gamma$-unreachable pair, assuming that it is executable. 

\begin{proposition}\label{prop: irr unr} 
If a pair of actions $(a,a')$ is strongly $\gamma${-irrelevant unreachable}, it is also $\gamma${-irrelevant unreachable}.
\end{proposition}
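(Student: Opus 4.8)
The plan is to prove the single implication directly: assuming that $(a,a')$ satisfies one of the three conditions (i)--(iii) of Definition~\ref{def: strongly del inter}, I show that \emph{every} sequence ${\bf A}=(\{a\}, A^2, \dots , A^{n-1}, \{a'\})$ with $A^2,\dots ,A^{n-1}$ all $\gamma$-irrelevant is $\gamma$-unreachable, i.e. ${\bf S}_{\bf A}(\gamma)=\emptyset$. I treat the three conditions one at a time. In each case I take an arbitrary compatible state sequence $(s^0,\dots ,s^n)\in {\bf S}_{\bf A}$ and either show that no such sequence can exist (for (i) and (ii)) or that every such sequence forces $w(\cT,\gamma,s^0)\geq 2$ (for (iii)). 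The one recurring tool is that a $\gamma$-irrelevant action set $A^j$ satisfies $Eff^+_{A^j}\cap\gamma(\cT)=\emptyset$, so it never adds an atom of $\gamma(\cT)$, and that each such $A^j$ is itself a legitimate instance of the $\gamma$-irrelevant action $a''$ quantified over in conditions (i) and (ii).

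For condition (i) I fix the atom $q\in \Gamma_a^+\cap Pre_{a'}^-$. Since $q\in \Gamma_a^+=(Pre^+_a\setminus Eff^-_a)\cup Eff^+_a$, applicability of $a$ in $s^0$ forces $q\in s^1$. By hypothesis no $\gamma$-irrelevant action deletes $q$, so applying the hypothesis with $a''=A^j$ and using $s^j=\xi(s^{j-1},A^j)$, a one-line induction along $s^1,\dots ,s^{n-1}$ keeps $q$ true and gives $q\in s^{n-1}$. But $q\in Pre^-_{a'}$ then makes $a'$ inapplicable in $s^{n-1}$, so ${\bf S}_{\bf A}=\emptyset$; the sequence is non-executable and hence $\gamma$-unreachable by the implication chain in Remark~\ref{rem: exec reach safe}. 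Condition (ii) is the exact mirror image via $q\in \Gamma_a^-\cap Pre_{a'}^+$: applying $a$ forces $q\notin s^1$, the $\gamma$-irrelevant middle never adds $q$ (hypothesis with $a''=A^j$), so $q\notin s^{n-1}$, contradicting $q\in Pre^+_{a'}$. Note that in both cases I never need $q\in\gamma(\cT)$, since the hypotheses speak directly about $Eff^\mp_{a''}$.

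For condition (iii) the argument becomes a reachability count rather than an executability obstruction, and is essentially the extension of Proposition~\ref{prop: reach two} from length-two sequences to sequences with a $\gamma$-irrelevant interior. I take any compatible $(s^0,\dots ,s^n)$ and bound $w(\cT,\gamma,s^0)=|s^0\cap\gamma(\cT)|$ from below. Applicability of $a$ gives $Pre^+_{a_\gamma}\subseteq s^0\cap\gamma(\cT)$. For any $q\in Pre^+_{a'_\gamma}\setminus Eff^+_{a_\gamma}$, applicability of $a'$ gives $q\in s^{n-1}$; since $q\in\gamma(\cT)$ is added neither by $a$ (because $q\notin Eff^+_{a_\gamma}$) nor by any $\gamma$-irrelevant $A^j$ (because $Eff^+_{A^j}\cap\gamma(\cT)=\emptyset$), I trace $q$ backwards through $\xi$ step by step, at each stage concluding $q\in s^{j-1}$ from $q\in s^j$, to obtain $q\in s^0$. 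Hence $Pre^+_{a_\gamma}\cup (Pre^+_{a'_\gamma}\setminus Eff^+_{a_\gamma})\subseteq s^0\cap\gamma(\cT)$, so $w(\cT,\gamma,s^0)\geq |Pre^+_{a_\gamma}\cup (Pre^+_{a'_\gamma}\setminus Eff^+_{a_\gamma})|>1$, and no compatible sequence can start from a weight-$\leq 1$ state; thus ${\bf A}$ is $\gamma$-unreachable.

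The step I expect to require the most care is the bookkeeping shared by (i) and (iii): I must justify cleanly that each interior set $A^j$, being $\gamma$-irrelevant \emph{as a set}, neither adds $q$ (via $Eff^+_{A^j}\cap\gamma(\cT)=\emptyset$, using $q\in\gamma(\cT)$ in case (iii)) nor, in case (i), deletes it, and that the backward trace in (iii) cannot be rescued by a delete-then-re-add, which is ruled out precisely because nothing along the sequence re-adds $q$. Everything else is a direct unfolding of the definitions of $\Gamma_a^{\pm}$, of applicability, and of the transition function $\xi$, together with the observation that a single $\gamma$-irrelevant set covers every interior happening of the sequence.
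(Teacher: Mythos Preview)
Your proposal is correct and follows essentially the same approach as the paper's proof: argue by contradiction from a $\gamma$-reachable sequence with $\gamma$-irrelevant interior, then in case (i) propagate $q$ forward from $s^1$ to $s^{n-1}$ using that no $\gamma$-irrelevant action deletes $q$, in case (ii) do the mirror argument, and in case (iii) trace atoms of $Pre^+_{a_\gamma}\cup (Pre^+_{a'_\gamma}\setminus Eff^+_{a_\gamma})$ back to $s^0$ to force $w(\cT,\gamma,s^0)>1$. The only minor looseness is treating the set $A^j$ directly as an instance of the quantified $a''$; strictly speaking, each individual action in a $\gamma$-irrelevant set $A^j$ is itself $\gamma$-irrelevant, so the hypothesis applies actionwise and then $q\notin Eff^-_{A^j}$ follows from $Eff^-_{A^j}=\bigcup_{a\in A^j}Eff^-_a$.
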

%
%
%

Based on the previous results, we conclude with a simple sufficient condition for the invariance, which is very useful in analysing concrete cases.

\begin{corollary}\label{cor: type a} Consider a template $\cT$ and suppose that, for every instance $\gamma$,
\begin{itemize}
\item every $Da\in \cG\cA^d(\gamma)$ is such that $Da_*$ is simply $\gamma$-safe of type (a);
\item every $a\in\cG\cA\setminus (\cG\cA^{st}(\gamma)\cup \cG\cA^{end}(\gamma))$ is either $\gamma$-irrelevant or $\gamma$-relevant balanced.
\end{itemize}
Then, $\cT$ is invariant.
\end{corollary}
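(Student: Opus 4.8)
The plan is to obtain the corollary directly from Theorem~\ref{theo: non intertwining}, by checking that its two hypotheses imply the three premises (i)--(iii) of that theorem. The auxiliary facts I expect to need are Corollary~\ref{cor: strongly safe} (to read off strong $\gamma$-safety from the classification), Proposition~\ref{prop: aux durative 1} (to pass between $Da$ and $Da_*$), Proposition~\ref{prop: relevant non inter} together with Proposition~\ref{prop: irr unr} and Definition~\ref{def: strongly del inter} (to establish relevant non intertwining), and the structural facts about type~(a) actions in Definition~\ref{def: durative action type} and Remark~\ref{rem: type a}. Since every premise of Theorem~\ref{theo: non intertwining} is required ``for every $\gamma$'', I fix an arbitrary instance $\gamma$ and argue for it.

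For premise (i), each $a\in\cG\cA^i$ is instantaneous and hence lies in $\cG\cA\setminus(\cG\cA^{st}(\gamma)\cup\cG\cA^{end}(\gamma))$, these two sets consisting only of fragments of durative actions. The second hypothesis then forces $a$ to be $\gamma$-irrelevant or $\gamma$-relevant balanced, and Corollary~\ref{cor: strongly safe} gives that $a$ is strongly $\gamma$-safe; as $\gamma$ is arbitrary, $a$ is strongly safe. For premise (ii) I split on whether $Da\in\cG\cA^d(\gamma)$. If it is, the first hypothesis says $Da_*$ is simply $\gamma$-safe of type~(a), which is in particular $\gamma$-safe. If it is not, then $Da$ is strongly $\gamma$-safe, so $a^{st},a^{inv},a^{end}$ are each strongly $\gamma$-safe; the prefixes $(a^{st},a^{inv})$ and $(a^{inv},a^{end})$ are executable and built from strongly $\gamma$-safe sets, hence individually $\gamma$-safe, so Proposition~\ref{prop: aux durative 1}(iv),(v) yields that $a^{st}_*$ and $a^{end}_*$ are strongly $\gamma$-safe, and the observation stated just before Definition~\ref{def: gamma simply safe} (an executable sequence of strongly $\gamma$-safe sets is $\gamma$-safe) gives that $Da_*$ is $\gamma$-safe. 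Either way $Da_*$ is safe.

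The substantive step is premise (iii), which I obtain from Proposition~\ref{prop: relevant non inter} by verifying its two conditions. For relevant left isolated, take $Da^1,Da^2\in\cG\cA^d(\gamma)$; by Remark~\ref{rem: type a} each $a^{jst}$ has $|Pre^+_{a^{jst}_\gamma}|=1$. If the two single positive template preconditions coincide, then since a type~(a) start deletes its own precondition the pair $\{a^{1st},a^{2st}\}$ interferes and is mutex; otherwise $|Pre^+_{\{a^{1st},a^{2st}\}_\gamma}|=2$, so whenever the fragments are non-interfering Proposition~\ref{prop: reach two} makes the length-two sequence $(\{a^{1st},a^{2st}\},\{a^{1inv},a^{2inv}\})$ $\gamma$-unreachable. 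For the second condition, fix $Da\in\cG\cA^d(\gamma)$ and an admissible $a'$; by the second hypothesis (or because $a'\in\cG\cA^{st}(\gamma)$ is itself type~(a)) we have $|Pre^+_{a'_\gamma}|=1$, say $Pre^+_{a'_\gamma}=\{q_1\}$, while $Pre^+_{a^{st}_\gamma}=\{q_0\}\subseteq Eff^-_{a^{st}_\gamma}$ and $Eff^+_{a^{st}_\gamma}=\emptyset$ because $a^{st}_*$ is $\gamma$-irrelevant. I then show $(a^{st},a')$ is strongly $\gamma$-irrelevant unreachable, hence $\gamma$-irrelevant unreachable by Proposition~\ref{prop: irr unr}: if $q_0\neq q_1$ then $|Pre^+_{a^{st}_\gamma}\cup(Pre^+_{a'_\gamma}\setminus Eff^+_{a^{st}_\gamma})|=2$, i.e.\ Definition~\ref{def: strongly del inter}(iii); if $q_0=q_1$ then $q_0\in\Gamma^-_{a^{st}}\cap Pre^+_{a'}$ (using $q_0\in Eff^-_{a^{st}}$) and no $\gamma$-irrelevant action can re-add the template atom $q_0$, i.e.\ Definition~\ref{def: strongly del inter}(ii). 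This discharges Proposition~\ref{prop: relevant non inter}, so $\cG\cA^d$ is relevant non intertwining and Theorem~\ref{theo: non intertwining} applies.

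I expect the delicate point to be the borderline case $q_0=q_1$ in the last step: there the cardinality test of Definition~\ref{def: strongly del inter}(iii) fails, and one must instead exploit the defining feature of type~(a) --- that the start fragment deletes its own unique positive template precondition, placing $q_0$ in $\Gamma^-_{a^{st}}$ --- to invoke condition~(ii). A secondary subtlety is the interference bookkeeping in the relevant-left-isolated step, where a coincidence of preconditions must be converted into a mutex through the delete effect of a type~(a) start. Everything else is routine verification against the definitions.
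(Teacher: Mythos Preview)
Your proposal is correct and follows essentially the same route as the paper: reduce to Theorem~\ref{theo: non intertwining}, then verify the relevant non intertwining property via Proposition~\ref{prop: relevant non inter}, establishing its condition~(ii) by the same case split on whether the two singleton template preconditions coincide (invoking Definition~\ref{def: strongly del inter}(ii) when they do and (iii) when they do not). The only difference is cosmetic: the paper dismisses premises~(i) and~(ii) of Theorem~\ref{theo: non intertwining} as ``clear'' while you spell them out, and in the left-isolated check the paper argues the contrapositive (non-interfering $\Rightarrow q^1\neq q^2\Rightarrow$ $\gamma$-unreachable) where you argue both cases directly.
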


\section{Safety of Action Schemas for a Template}
\label{sec:action}

In Section \ref{sec: cond inv}, we have established two results guaranteeing the invariance of a template, Theorems \ref {theo: *safety} and \ref{theo: non intertwining}. To be applied, they both need to check that all instantaneous and durative ground actions satisfy a safety condition as well as that other extra properties, which prevent potentially dangerous simultaneous happenings or intertwinements among actions, hold true. Since we aim to find invariants off-line quickly and efficiently, our algorithm does not work at the level of ground actions. Instead, it reasons at the lifted level and uses the structure of the action schemas, i.e. their conditions and effects, to decide whether the ground instantiations of these schemas are safe or not. Our main goal in this section is to obtain lifted versions of Theorems \ref {theo: *safety} and \ref{theo: non intertwining} and Corollary \ref{cor: type a}. 

In general, we call \emph{liftable} a property P of ground actions if, given an action schema $\alpha$, if one instantiation $a^*=gr^*(\alpha)$ satisfies P, then all instantiations $a=gr(\alpha)$ satisfy P. In this case, we say that the action schema $\alpha$ satisfies property P.

The results presented in this and the next sections achieve two main goals. On the one hand, they show that the properties of safety introduced for instantaneous and durative ground actions in Sections \ref{sec:safeness inst actions} and \ref{sec:safeness sequences} are liftable as well the non-intertwining properties, even if in a weaker sense, behind the formulation of Theorems \ref {theo: *safety} and \ref{theo: non intertwining}. On the other hand, they will give efficient characterisations of such properties at the lifted level, which we use in our algorithmic implementation (see Section \ref{sec:algorithm}).

In the remaining part of this section, we analyse instantaneous action schemas and their ground instantiations. We show that strong safety is liftable and work out a complete characterisation of this property at the lifted level. Next section is devoted to lifting properties for durative actions.

\subsection{Structure and properties of action schemas}

We start with the following definition that introduces the key concept of matching. It couples an action schema to a template and allows us to understand if, in the ground world, a ground literal appearing in an action schema is or is not in $\gamma(\cT)$.

\begin{definition}[Matching]
\label{def:match}
Given a template $\cT = ( \cC, \cF_{\cC})$ and an action schema $\alpha \in \cA$, a literal $l$ that appears in $\alpha$ such that it exists a template's component $c = \langle r, a, p \rangle \in \cC$ with $\rela[l] = \langle r, a \rangle$ and, if $l$ is universally quantified, $\varsq[l] = \{ p \}$ is said to \emph{match} $\cT$ via the component $c$. Given two literals $l$ and $l'$, we say that they are $\cT$-\emph{coupled} (and we write $l\sim_{\cT}l'$)  if the following two conditions hold:
\begin{enumerate}[(i)]
\item $l$ and $l'$ individually match $\cT$ via the components $c$ and $c'$; 
\item if $(c, i) \sim_ {\cF_{\cC}}(c', j)$, $\args[i,l] = \args[j, l']$.
\end{enumerate}
\end{definition}

We now fix a template $\cT$ and an action schema $\alpha$ and study the properties of the relation $\sim_{\cT}$ on the literals of $\alpha$ that match $\cT$, introduced above. First, it is useful to work out more concrete representations for literals: this is the content of next Remark.

\begin{remark}
\label{rem:matching}
Suppose that $l$ is a literal in the action schema $\alpha$ that matches the template $\cT$ via the component $c$. The corresponding relation $r$ will necessarily have the structure $r(x_1,\dots x_k, v)$ where $x_j$'s denote the fixed arguments, $v$ the counted argument (which could also be absent) and $l=r(a_1,\dots a_k, v)$ or $l=r(a_1,\dots a_k, a_{k+1})$ depending if, respectively, $l$ is universally quantified or simple, and where $a_1,\dots ,a_k, a_{k+1}$ are free arguments. 
Suppose now that $l_1$ and $l_2$ are two literals in the action schema $\alpha$ that match the template $\cT$ via $c_1$ and $c_2$, respectively. Up to a permutation of the position of the fixed arguments, the corresponding relations $r_1$ and $r_2$ can be written as,  $r_i(x_1^i,\dots x_k^i, v^i)$ for $i=1,2$ where the fixed arguments satisfy the relations $x_j^1\sim_ {\cF_{\cC}}x_j^2$ for every $j$. 
If, moreover, $l_1\sim_{\cT}l_2$, the two literals will have the form $l_i=r_i(a_1,\dots a_k, a_{k+1})$ (or $l_i= \forall v:\;  r_i(a_1,\dots a_k, v)$), where $a_1,\dots ,a_k, a_{k+1}$ are the free arguments. 
\end{remark}

\begin{proposition}  Given a template $\cT$ and an action schema $\alpha$, $\sim_{\cT}$ is an equivalence relation.
\end{proposition}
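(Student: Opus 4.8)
The plan is to verify directly the three defining properties of an equivalence relation — reflexivity, symmetry and transitivity — for $\sim_{\cT}$ on the set of literals of $\alpha$ that match $\cT$ (each such literal being understood together with the component $c$ via which it matches). The single structural fact I will lean on throughout is the defining property of an admissible partition (Definition \ref{def:part}): every block of $\cF_{\cC}$ meets each set $F_c$ in exactly one element. In particular $\sim_{\cF_{\cC}}$ is itself an equivalence relation on $F_{\cC}$, and within one component $c$ no two \emph{distinct} fixed-argument positions can lie in the same block.

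Reflexivity and symmetry are the routine steps. For reflexivity, let $l$ match $\cT$ via $c$; condition (i) of $\cT$-coupling holds trivially with $c'=c$, and for condition (ii) suppose $(c,i)\sim_{\cF_{\cC}}(c,j)$: since both fixed arguments lie in a common block $G$ and $|G\cap F_c|=1$, we must have $i=j$, whence $\args[i,l]=\args[j,l]$ automatically. For symmetry, condition (i) is visibly symmetric in $l$ and $l'$, and because both $\sim_{\cF_{\cC}}$ and equality of arguments are symmetric, condition (ii) read for the ordered pair $(l,l')$ is literally condition (ii) for $(l',l)$.

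The substantive part is transitivity, and this is where admissibility does the real work. Suppose $l\sim_{\cT}l'$ via $(c,c')$ and $l'\sim_{\cT}l''$ via $(c',c'')$; condition (i) for $(l,l'')$ is immediate, so it remains to check condition (ii). Take any pair with $(c,i)\sim_{\cF_{\cC}}(c'',m)$ and let $G$ be the block containing both. Since $|G\cap F_{c'}|=1$, there is a \emph{unique} position $j$ with $(c',j)\in G$, so that $(c,i)\sim_{\cF_{\cC}}(c',j)$ and $(c',j)\sim_{\cF_{\cC}}(c'',m)$; applying the two hypotheses gives $\args[i,l]=\args[j,l']$ and $\args[j,l']=\args[m,l'']$, and chaining yields $\args[i,l]=\args[m,l'']$, which is exactly what condition (ii) for $(l,l'')$ requires.

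The main obstacle — indeed the only point beyond bookkeeping — is precisely the existence and uniqueness of this intermediate ``pivot'' position $j$ in the component $c'$: the transitivity constraint on the arguments of $l$ and $l''$ must be routed through $l'$, and this is legitimate only because admissibility guarantees that the relevant block $G$ meets $F_{c'}$ in exactly one fixed argument. Were a block to contain zero or several positions of $c'$, the chaining step would have no canonical intermediary and could fail; admissibility is what makes $l'$ a well-defined bridge for every block relevant to the pair $(c,c'')$.
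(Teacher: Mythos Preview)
Your proof is correct. The only difference from the paper's argument is methodological: the paper invokes Remark~\ref{rem:matching}, which rewrites any two $\cT$-coupled literals as $l_i=r_i(a_1,\dots,a_k,\cdot)$ with the \emph{same} tuple $(a_1,\dots,a_k)$ of free arguments in the fixed positions (after aligning positions via $\cF_{\cC}$). Once the relation is recast as ``share the same $k$-tuple of fixed-position arguments'', transitivity is immediate and the paper stops there, treating reflexivity and symmetry as obvious.

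You instead work directly from Definition~\ref{def:match} and make explicit the role of admissibility at each step: for reflexivity you use $|G\cap F_c|=1$ to force $i=j$, and for transitivity you use $|G\cap F_{c'}|=1$ to produce the unique pivot position $j$ through which to chain the argument identities. This is a genuine, if modest, difference of route: the paper buys brevity by front-loading the work into the reformulation of Remark~\ref{rem:matching} (which itself tacitly uses admissibility to pick one fixed position per block), whereas your argument is self-contained and makes the dependence on admissibility visible at exactly the points where it is needed. Your observation that transitivity would fail without the ``exactly one'' clause is a nice conceptual bonus that the paper's compressed proof does not surface.
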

\begin{proof} The only property to be checked is transitivity and this is evident from the equivalent description of the relation $\sim_{\cT}$ proposed in Remark \ref{rem:matching}.
\end{proof}

An equivalence class of literals with respect to $\sim_{\cT}$ is called a $\cT$-\emph{class}.

We now consider a grounding function $gr$ for $\alpha$ and an instance $\gamma$ for $\cT$. We recall the standing assumption that both $gr$ and $\gamma$ are injective maps (this will be often used in what follows). If $l$ is a literal in $\alpha$ that matches $\cT$,
the subset of ground atoms $gr(l)$ is either a subset of $\gamma(\cT)$ or it must have empty intersection with $\gamma(\cT)$. This is simply because, both sets are closed under any modification of the assignment of the counted argument. This motivates the following definition:

\begin{definition}[Coherence] $gr$ and $\gamma$ are coherent over $l$ if $gr(l)\subseteq \gamma(\cT)$.
\end{definition}

Coherence is more concretely described in the following Remark.

\begin{remark}
\label{rem:coherent} Suppose that $l$ matches $\cT$ via the component $c$ whose relation is $r$. It follows from the considerations in Remark \ref{rem:matching} that, depending if $r$ posses a counted variable or not and if $l$ is simple or universally quantified, $r$, $l$ and $gr(l)$ take the following forms:
$$\begin{array}{lll}r(x_1,\dots x_k)\quad &l=r(a_1,\dots a_k)\quad &gr(l)=\{r(gr(a_1),\dots ,gr(a_k))\}\\
r(x_1,\dots x_k, v)\quad &l=r(a_1,\dots a_k, a_{k+1})\quad &gr(l)=\{r(gr(a_1),\dots ,gr(a_k),gr(a_{k+1}))\}\\
r(x_1,\dots x_k, v)\quad &l=\forall v:\; r(a_1,\dots a_k, v)\quad &gr(l)=\{r(gr(a_1),\dots gr(a_k), o),\,|\, o\in \cO\}\end{array}
$$
Note that, in all cases, the coherence condition $gr(l)\subseteq \gamma(\cT)$ is equivalent to require that:
\begin{equation}\label{coherent}gr(a_j)=\gamma(x_j),\;\forall j=1,\dots ,k\end{equation}
\end{remark}

The following result is immediate from the conditions (\ref{coherent}):
\begin{proposition} Let $l$ be a literal of the action schema $\alpha$. Then, for every grounding function $gr$, it is possible to find an instance $\gamma$ such that $gr$ and $\gamma$ are coherent over $l$ and viceversa.
\end{proposition}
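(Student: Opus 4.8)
The plan is to reduce both implications to the explicit characterisation of coherence recorded in equation (\ref{coherent}) of Remark \ref{rem:coherent}. Since coherence over $l$ is only meaningful when $l$ matches $\cT$ (otherwise $gr(l)$ involves a relation symbol absent from $\gamma(\cT)$ and $gr(l)\subseteq\gamma(\cT)$ can never hold), I assume $l$ matches $\cT$ via a component $c=\langle r,a,p\rangle$ with fixed arguments $x_1,\dots,x_k$. Writing $l=r(a_1,\dots,a_k,a_{k+1})$ or $l=\forall v:\,r(a_1,\dots,a_k,v)$ as in Remark \ref{rem:matching}, the pair $(gr,\gamma)$ is coherent over $l$ if and only if $gr(a_j)=\gamma(x_j)$ for every $j=1,\dots,k$. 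The whole argument is then just a matter of solving this finite system of equations while respecting the standing requirement that both $gr$ and $\gamma$ be injective.

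For the first direction I would take $gr$ as given. Each fixed argument $x_j$ of $c$ corresponds to a labelled fixed argument in $F_c$, and by admissibility of the partition $\cF_{\cC}$ (which forces $|G\cap F_c|=1$ for each class $G$) these $k$ labelled arguments lie in $k$ \emph{distinct} classes $G_1,\dots,G_k$ of $\cF_{\cC}$. I would define $\gamma$ on these classes by $\gamma(G_j):=gr(a_j)$, which is precisely (\ref{coherent}); since $a_1,\dots,a_k$ are distinct arguments and $gr$ is injective, the values $gr(a_j)$ are pairwise distinct, so $\gamma$ is injective on $G_1,\dots,G_k$. I would then extend $\gamma$ to the remaining classes of $\cF_{\cC}$ by assigning them objects of $\cO$ not already used. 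This yields an injective instance $\gamma$ that is coherent with $gr$ over $l$.

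For the converse I would take $\gamma$ as given, so that the objects $\gamma(x_1),\dots,\gamma(x_k)$ are fixed. I would define a grounding function on $V_{\alpha}$ by setting $gr(a_j):=\gamma(x_j)$ for $j=1,\dots,k$, again realising (\ref{coherent}). Injectivity of $\gamma$ guarantees that these forced values are pairwise distinct, and I would extend $gr$ to the counted argument $a_{k+1}$ (present when $l$ is simple rather than quantified) and to the remaining free arguments of $\alpha$ using fresh, pairwise distinct objects, so that $gr$ is an injective grounding function coherent with $\gamma$ over $l$.

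The two constructions are mirror images of one another, and the only genuine points to verify are that the assignments dictated by (\ref{coherent}) are internally consistent and extendable to \emph{injective} maps. Consistency is immediate, because (\ref{coherent}) pins down exactly one value per class (resp.\ per argument), and admissibility of $\cF_{\cC}$ together with the injectivity of the given map keeps the forced values distinct. Thus the main obstacle is not conceptual but the mild bookkeeping constraint that $\cO$ be large enough to complete each partial assignment injectively; this is exactly the standing hypothesis under which instances and grounding functions are defined at all, and granting it the construction goes through in both directions.
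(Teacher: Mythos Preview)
Your proposal is correct and follows the same approach as the paper, which simply declares the result ``immediate from the conditions (\ref{coherent})''; you have merely spelled out the construction and the injectivity bookkeeping that the paper leaves implicit. One minor remark: by Remark~\ref{rem:boh} the partition $\cF_{\cC}$ has exactly $k$ classes, so your extension of $\gamma$ to ``remaining classes'' is vacuous, though harmless.
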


\begin{lemma}
\label{lemma:g-i}
Assume that $gr$ and $\gamma$ are coherent over a literal $l_1$ of $\alpha$ and let $l_2$ be another literal in $\alpha$ that matches $\gamma$. Then,
$gr$ and $\gamma$ are coherent over $l_2$ if and only if $l_2\sim_{\cT}l_1$.
\end{lemma}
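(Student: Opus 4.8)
The plan is to reduce both \emph{coherence} and the relation $\sim_{\cT}$ to explicit equalities on the arguments occupying the fixed positions of $l_1$ and $l_2$, and then read off the equivalence directly. First I would invoke Remark \ref{rem:matching} to put $l_1$ and $l_2$ in their aligned forms: since both match $\cT$, via components $c_1$ and $c_2$ respectively, up to a permutation of positions we may write $l_i=r_i(a^i_1,\dots,a^i_k,\cdot)$, where $a^i_1,\dots,a^i_k$ are the actual arguments of $l_i$ sitting in the fixed positions of $c_i$, and where the slots are arranged so that the $j$-th fixed argument of $c_1$ and the $j$-th fixed argument of $c_2$ lie in the same block of the admissible partition, i.e. $x^1_j\sim_{\cF_{\cC}}x^2_j$ for every $j$. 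This alignment is exactly what admissibility of $\cF_{\cC}$ provides (each block meets each component's fixed arguments in a single element, Remark \ref{rem:boh}), and it is the point requiring the most care: one must check that ``lie in the same block'' is a genuine bijection between the fixed positions of $c_1$ and those of $c_2$, so that the relation $l_2\sim_{\cT}l_1$ is captured precisely by the componentwise equalities $a^1_j=a^2_j$ for $j=1,\dots,k$.

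Next I would translate coherence into equalities using Remark \ref{rem:coherent}. The crucial observation there is that coherence constrains only the fixed positions (the counted argument, and whether $l$ is simple or universally quantified, play no role): coherence of $gr$ and $\gamma$ over $l_i$ is equivalent to the conditions (\ref{coherent}), namely $gr(a^i_j)=\gamma(x^i_j)$ for all $j=1,\dots,k$. I would also record the single fact about $\gamma$ that I need, taken straight from Definition \ref{def:temp-instance}: since $x^1_j\sim_{\cF_{\cC}}x^2_j$, the instance assigns them the same object, so $\gamma(x^1_j)=\gamma(x^2_j)$ for every $j$.

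With these three ingredients both implications are immediate. For the ($\Leftarrow$) direction, assuming $l_2\sim_{\cT}l_1$ gives $a^2_j=a^1_j$, and combining coherence over $l_1$ with the $\gamma$-identity yields $gr(a^2_j)=gr(a^1_j)=\gamma(x^1_j)=\gamma(x^2_j)$ for every $j$, which is coherence over $l_2$. For the ($\Rightarrow$) direction, coherence over both $l_1$ and $l_2$ gives $gr(a^1_j)=\gamma(x^1_j)=\gamma(x^2_j)=gr(a^2_j)$, and then the standing injectivity of $gr$ forces $a^1_j=a^2_j$ for all $j$, i.e. $l_2\sim_{\cT}l_1$. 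I expect the only genuine subtlety to be the bookkeeping of the first step --- lining up the fixed positions of $c_1$ and $c_2$ through the partition --- whereas the injectivity of $gr$ is precisely what makes the forward implication go through and should be flagged explicitly rather than used silently.
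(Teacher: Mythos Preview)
Your proposal is correct and follows essentially the same route as the paper's proof: both use Remarks \ref{rem:matching} and \ref{rem:coherent} to reduce coherence and $\sim_{\cT}$ to equalities on the fixed-position arguments, exploit $\gamma(x^1_j)=\gamma(x^2_j)$ from the definition of an instance, and use injectivity of $gr$ for the forward direction. The only cosmetic difference is that the paper argues the forward implication by contrapositive (if $l_2\not\sim_{\cT}l_1$ then some $a^1_j\neq a^2_j$, whence $gr(a^2_j)\neq gr(a^1_j)=\gamma(x^2_j)$), whereas you argue it directly; your explicit attention to the alignment bookkeeping and to the role of injectivity is if anything more careful than the paper's.
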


The following result immediately follows from the definition of coherence and Lemma \ref{lemma:g-i}. 
\begin{proposition}
\label{cor:g-i}
Suppose that $M$ is a subset of literals appearing in $\alpha$. Then, $gr(M)\cap \gamma(\cT) =gr(M\cap L)$ where $L$ is the $\cT$-class of literals of $\alpha$ on which $gr$ and $\gamma$ are coherent.
\end{proposition}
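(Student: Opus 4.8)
The plan is to reduce everything to a literal-by-literal computation. Since $gr(M)=\bigcup_{l\in M}gr(l)$ and intersection distributes over union, I would first write
$$gr(M)\cap\gamma(\cT)=\bigcup_{l\in M}\bigl(gr(l)\cap\gamma(\cT)\bigr),$$
so that the statement reduces to identifying, for a single literal $l$ of $\alpha$, what $gr(l)\cap\gamma(\cT)$ can be. The target is to show that this intersection equals $gr(l)$ exactly when $l$ is coherent (hence, by Lemma \ref{lemma:g-i}, when $l\in L$) and is empty otherwise; summing the surviving contributions then gives $\bigcup_{l\in M\cap L}gr(l)=gr(M\cap L)$.

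The inclusion $gr(M\cap L)\subseteq gr(M)\cap\gamma(\cT)$ is immediate: each $l\in M\cap L$ is coherent, so $gr(l)\subseteq\gamma(\cT)$ by the definition of coherence, and trivially $gr(l)\subseteq gr(M)$; taking the union over $l\in M\cap L$ yields the inclusion. For the reverse inclusion I would split $M$ into three groups. First, if $l$ matches $\cT$ and is coherent, then by Lemma \ref{lemma:g-i} it lies in the single $\cT$-class $L$ of coherent literals, and $gr(l)\cap\gamma(\cT)=gr(l)$. Second, if $l$ matches $\cT$ but is not coherent, the preliminary observation that $gr(l)$ is either contained in or disjoint from $\gamma(\cT)$ (both sets being closed under reassignment of the counted argument) forces $gr(l)\cap\gamma(\cT)=\emptyset$. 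Third, if $l$ does not match $\cT$, I would argue the intersection is again empty. Assembling these, only the literals of the first group survive and each contributes exactly $gr(l)$, so $gr(M)\cap\gamma(\cT)\subseteq\bigcup_{l\in M\cap L}gr(l)=gr(M\cap L)$.

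The main obstacle is the third group, the non-matching literals. When the relation and arity of $l$ match no component of $\cT$, disjointness from $\gamma(\cT)$ is clear, since every atom of $\gamma(\cT)$ carries a component relation while no atom of $gr(l)$ does. The delicate subcase is when $l$ shares its relation and arity with a component $c=\langle r,a,p\rangle$ but is universally quantified over a position $p'\neq p$, so that by Definition \ref{def:match} it fails to match only because $\varsq[l]=\{p'\}\neq\{p\}$. Using the explicit forms in Remark \ref{rem:coherent}, here $gr(l)$ is saturated over position $p'$ while $\gamma(\cT)$ is saturated over the counted position $p$; these two sets can in principle still share a single ground atom, so establishing that the intersection is empty is the genuinely delicate point and appears to rely on an implicit well-formedness restriction forbidding literals quantified over a non-counted argument of a template relation. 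I expect this to be the crux of the argument; once it is settled, the matching cases and the remaining bookkeeping are routine.
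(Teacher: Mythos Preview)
Your literal-by-literal decomposition is exactly what the paper has in mind: it disposes of the proposition in one line, saying it ``immediately follows from the definition of coherence and Lemma \ref{lemma:g-i}'', which is precisely your first two cases (matching and coherent, matching and not coherent) unpacked. So the approach is the same.

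The subtlety you flag in the third case is real and is not addressed by the paper. A literal $l=\forall v:\,r(a_0,v)$ quantified over a position different from the counted position $p$ of the matching component does fail to match $\cT$ by Definition \ref{def:match}, yet $gr(l)$ and $\gamma(\cT)$ can share exactly one ground atom (each set is saturated in a different coordinate), so $gr(l)\cap\gamma(\cT)$ need not be empty. The paper's one-line justification simply does not contemplate such literals; it implicitly treats every literal in $M$ either as matching or as having a relation foreign to $\cT$. Since the proposition is later applied to the full sets $Pre^{\pm}_{\alpha}$ and $Eff^{\pm}_{\alpha}$, this is a genuine gap in the paper rather than in your argument. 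Your diagnosis---that an implicit well-formedness restriction (no universal quantification over a non-counted position of a template relation) is being assumed---is the correct reading; once that is granted, your proof is complete and coincides with the paper's intended argument.
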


Proposition \ref{cor:g-i} has an important practical consequence. Once $gr$ and $\gamma$ have been fixed, only the part of $\alpha$ made of literals in the class $L$ where $gr$ and $\gamma$ are coherent affect the part of state dynamics concerning the set $\gamma(\cT)$. Precisely, if $a=gr(\alpha)$, it follows from the definition of $a_{\gamma}$ (see Remark \ref{remark: split}) that: 
$$Pre^{\pm}_{a_{\gamma}}=gr(Pre^{\pm}_{\alpha}\cap L),\quad Eff^{\pm}_{a_{\gamma}}=gr(Eff^{\pm}_{\alpha}\cap L)$$
Considering that, by Proposition \ref{prop: equiv safe}, $a$ is strongly $\gamma$-safe if and only if $a_\gamma$ is also strongly safe, the property of strong safety of an action schema $\alpha$ does not depend on the literals in $\alpha$ that do not match $\cT$. Hence, in principle, such a property should be analysed by studying the restrictions of $\alpha$ to the different $\cT$-classes $L$ of matching literals. This intuition leads to the following definition.

\begin{definition}[Pure Action Schemas]
\label{def:pureactionschema}
Given a template $\cT$, an action schema $\alpha$ and a $\cT$-class $L$ of literals in $\alpha$, we define $\alpha_L$ to be the action schema where we only consider literals belonging to $L$. More precisely, $\alpha_L$ is the action schema such that
$$Pre^{\pm}_{\alpha_L} =Pre^{\pm}_{\alpha}\cap L,\quad Eff^{\pm}_{\alpha_L} =Eff^{\pm}_{\alpha}\cap L$$
We call $\alpha_L$ a \emph{pure action schema}. 
\end{definition}

\begin{example}[\sfs{Floortile} domain]
\label{ex:pure}
Consider the template $\cT_{ft}$ given in Example \ref{ex:long} and the action schema $\alpha=${\tt paint-up}$^{st}$: $Pre_{\alpha}^+=\{$\url{robot-at(r,x)}, \url{clear(y)}$\}$, $Eff_{\alpha}^-=\{$\url{clear(y)}$\}$.

Note that both literals \url{robot-at(r,x)} and \url{clear(y)} in $\alpha$ match $\cT_{ft}$ and form two different $\cT$-classes because they do not satisfy condition (ii) in Definition \ref{def:match}: $L_1 = \{$\url{robot-at(r,x)}$\}$ and $L_2 = \{$\url{clear(y)}$\}$.

Consider the instance $\gamma_1$ that associates \url{tile1} to each fixed argument in the components of $\cT_{ft}$ and grounding function $gr({\tt r})$=\url{rbt1}, $gr({\tt x})$ = \url{tile1} and $gr({\tt y})$ = \url{tile2}. In this case, $gr$ and $\gamma_1$ are coherent on the $\cT$-class $L_1$.

We have two pure action schemas corresponding to $\alpha$: $\alpha_{L_{1}}$ and $\alpha_{L_2}$. $\alpha_{L_{1}}$ has the following specification: $Pre_{\alpha_{L_1}}^+=\{{\tt robot-at(r,x)}\}$ and $\alpha_{L_{2}}$: $Pre_{\alpha_{L_2}}^+=\{ {\tt clear(y)}\}, Eff_{\alpha_{L_2}}^-=\{{\tt clear(y)}\}$.
\end{example}

\subsection{Pure Action Schema Classification} \label{Classification}

We now carry on a detailed analysis of pure action schemas, showing in particular how the check for strong safety for a ground action $a=gr(\alpha)$ can be efficiently performed at the lifted level working with the different pure action schemas $\alpha_L$.

We fix an action schema $\alpha$ and a $\cT$-class $L$ of its literals.
First, we introduce a concept of weight at the level of literals in $L$ that allows us to distinguish between simple and universally quantified literals. Precisely, given $l\in L$, we put $w_l=1$ if $l$ is simple, while $w_l=\omega$ if $l$ is universally quantified and where $\omega =|\cO|$. Given a subset $A\subseteq L$, we define $w(A)=\sum_{l\in A}w_l$. Note that $w(\cdot )$ simply coincides with the notion of cardinality in case all literals in $L$ are simple. If we consider a grounding function $gr$ for $\alpha$, then for every subset $A\subseteq L$, it holds:
\begin{equation}\label{weight eq1} |gr(A)|=w(A)\end{equation}
Similarly, if $c$ is a component of $\cT$, we define $w_c$ equal to $1$ or to $\omega$ if $c$, respectively, does not have or does have a counted variable.

We need a last concept:

\begin{definition}[Coverage] Given a component $c\in \cT$, we let $L_c$ to be the subset of literals in $L$ that match $\cT$ through the component $c$. A subset of literals $M\subseteq L$ is said to \emph{cover} the component $c$, if $w(M\cap L_c)\geq w_c$. $M$ is said to \emph{cover} $\cT$, if $M$ covers every component $c\in\cT$.
\end{definition}

\begin{remark}\label{rem: coverage}
If we consider a component $c\in\cT$, we have that all ground atoms generated by $c$ are in $gr(M)$ if and only if $M$ covers $c$. In particular, $\gamma(\cT)=gr(M)$ if and only if $M$ covers $\cT$.
\end{remark}

We now propose a classification of the pure action schemas $\alpha_L$, formally analogous to the one introduced for action sets in Definitions \ref{def: classification} and \ref{def:relevant action sets}: we simply replace preconditions and effects of $a_{\gamma}$ with those of $\alpha_L$ and the concept of cardinality with that of weight.

\begin{definition}[Classification of Pure Action Schemas]
\label{def:weight class}
The pure action schema $\alpha_L$ is:
\begin{itemize}
\item \emph{unreachable} for $\cT$ if $w(Pre^+_{\alpha_L})\geq 2 $;
\item \emph{heavy} for $\cT$ if $w(Pre^+_{\alpha_L})\leq 1$ and $w(Eff^+_{\alpha_L})\geq 2$;
\item \emph{irrelevant} for $\cT$ if $w(Pre^+_{\alpha_L})\leq 1$ and $w(Eff^+_{\alpha_L})=0$;
\item \emph{relevant} for $\cT$ if $w(Pre^+_{\alpha_L})\leq 1$ and $w(Eff^+_{\alpha_L})=1$.
\end{itemize}
\end{definition}

\begin{definition}[Classification of Relevant Action Schemas]
\label{def:relevant actions}
The pure relevant action schema $\alpha_L$ is:
\begin{itemize}
\item \emph{balanced} for $\cT$ if $w(Pre^+_{\alpha_L})= 1$ and $Pre^+_{\alpha}\subseteq Eff^+_{\alpha_L}\cup Eff^-_{\alpha_L}$;
\item \emph{unbalanced} for $\cT$ if $w(Pre^+_{\alpha_L})= 1$ and $Pre^+_{\alpha}\cap ( Eff^+_{\alpha_L}\cup Eff^-_{\alpha_L})=\emptyset$;
\item \emph{bounded} for $\cT$ if $w(Pre^+_{\alpha_L})= 0$ and $L$ covers $\cT$;
\item \emph{unbounded} for $\cT$ if $w(Pre^+_{\alpha_L})= 0$ and $L$ does not cover $\cT$.
\end{itemize}
\end{definition}

The following result clarifies the relation with the corresponding ground actions.

\begin{proposition}\label{prop: pure schema} Consider an action schema $\alpha$, a $\cT$-class $L$ of its literals, a grounding function $gr$ and an instance $\gamma$ coherent over $L$. Put $a=gr(\alpha)$. Then, $\alpha_L$ satisfies any of the properties expressed in Definitions \ref{def:weight class} and \ref{def:relevant actions} if and only if $a$ satisfies the corresponding $\gamma$-property as defined in Definitions \ref{def: classification} and \ref{def:relevant action sets}.
\end{proposition}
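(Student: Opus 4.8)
The plan is to reduce each clause of Definitions \ref{def:weight class} and \ref{def:relevant actions} to the matching clause of Definitions \ref{def: classification} and \ref{def:relevant action sets} by transporting the literals of $\alpha_L$ through $gr$ and comparing with the $\gamma$-restriction $a_\gamma$ of $a=gr(\alpha)$ (Remark \ref{remark: split}). First I would record the dictionary between the two levels. Since $gr$ and $\gamma$ are coherent exactly on the $\cT$-class $L$, Proposition \ref{cor:g-i} together with Definition \ref{def:pureactionschema} gives
$$Pre^{\pm}_{a_\gamma}=gr(Pre^{\pm}_{\alpha_L}),\qquad Eff^{\pm}_{a_\gamma}=gr(Eff^{\pm}_{\alpha_L}).$$
For a single action the set $A=\{a\}$ has $A_\gamma=\{a_\gamma\}$, so the $\gamma$-properties of Definitions \ref{def: classification} and \ref{def:relevant action sets}, stated for action sets, are merely conditions on $Pre^{\pm}_{a_\gamma}$ and $Eff^{\pm}_{a_\gamma}$. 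Thus each lifted clause and its ground counterpart differ only by the substitution $gr(\cdot)$ applied to the conditions and effects of $\alpha_L$, and it remains to check that this substitution preserves the truth of each clause.

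The engine of the whole argument is the faithfulness of $gr$ on $L$, and here I would lean on equation \eqref{weight eq1}. Applied to two-element subsets of $L$, the identity $|gr(A)|=w(A)$ forces the images of distinct literals of $L$ to be pairwise disjoint: if two blocks overlapped, their union would have cardinality strictly below the sum of their weights. Consequently, for all $X,Y\subseteq L$ one has $|gr(X)|=w(X)$, $gr(X\cap Y)=gr(X)\cap gr(Y)$, $gr(X\cup Y)=gr(X)\cup gr(Y)$, and, crucially, $gr(X)\subseteq gr(Y)\Leftrightarrow X\subseteq Y$ together with $gr(X)\cap gr(Y)=\emptyset\Leftrightarrow X\cap Y=\emptyset$; that is, $gr$ transports inclusions, intersections and unions of subsets of $L$ without distortion.

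With this dictionary the four classification clauses are immediate: since $|Pre^+_{a_\gamma}|=w(Pre^+_{\alpha_L})$ and $|Eff^+_{a_\gamma}|=w(Eff^+_{\alpha_L})$, the inequalities defining unreachable, heavy, irrelevant and relevant (which involve only these two quantities) hold for $a$ iff they hold for $\alpha_L$. For the relevant sub-classification I would argue as follows. For balanced and unbalanced, the conditions $Pre^+_{a_\gamma}\subseteq Eff^+_{a_\gamma}\cup Eff^-_{a_\gamma}$ and $Pre^+_{a_\gamma}\cap(Eff^+_{a_\gamma}\cup Eff^-_{a_\gamma})=\emptyset$ transfer verbatim to $\alpha_L$ by the inclusion- and intersection-preservation just established (the positive preconditions being understood within the class $L$, i.e. $Pre^+_{\alpha_L}=Pre^+_{\alpha}\cap L$). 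For bounded and unbounded, I would first observe that $L$ is precisely the set of literals of the class that occur in $\alpha$, so $Pre_{\alpha_L}\cup Eff_{\alpha_L}=L$ and hence $Pre_{a_\gamma}\cup Eff_{a_\gamma}=gr(L)$; Remark \ref{rem: coverage} then gives $gr(L)=\gamma(\cT)$ iff $L$ covers $\cT$, which is exactly the defining condition. Since the four (respectively two) classes are exhaustive and mutually exclusive on both sides, these equivalences establish the stated correspondence.

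I expect the only delicate point to be the faithfulness step, namely confirming that distinct literals of a $\cT$-class never collide under $gr$ so that set-theoretic relations lift cleanly; once this is extracted from \eqref{weight eq1} (or, equivalently, from the common-free-arguments structure of a $\cT$-class in Remark \ref{rem:matching} combined with the injectivity of $gr$), all remaining verifications are pure bookkeeping.
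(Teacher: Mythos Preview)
Your proposal is correct and follows essentially the same approach as the paper, which dispatches the proposition in one line by citing the identity $a_\gamma=gr(\alpha_L)$, equation \eqref{weight eq1}, and Remark \ref{rem: coverage}. You simply unpack these three ingredients in greater detail, in particular making explicit the faithfulness of $gr$ on $L$ (disjointness of the blocks $gr(l)$) that is already implicit in the paper's assertion of \eqref{weight eq1} for arbitrary subsets $A\subseteq L$.
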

\begin{proof}
Immediate consequence of the fact that $a_{\gamma}=gr(\alpha_L)$, of equation (\ref{weight eq1}), and of Remark \ref{rem: coverage}.
\end{proof}

We are now ready to propose the following final result concerning strong safety of general action schemas. It shows how strong safety can be seen as a property of an action schema and can be studied by analysing its pure parts. 

\begin{corollary}
\label{cor: strongly safe final} Strong safety is a liftable property. Moreover, an action schema $\alpha$  is strongly safe if and only if, for every $\cT$-class of literals $L$ of $\alpha$, $\alpha_L$ is unreachable, irrelevant, relevant balanced or relevant bounded.
\end{corollary}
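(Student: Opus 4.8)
The plan is to reduce strong safety of a ground action $a=gr(\alpha)$ to the $\gamma$-classification of its restriction $a_\gamma$, observe that $a_\gamma$ is itself a grounding of one of the pure schemas $\alpha_L$, and then exploit the fact that the pure-schema classification does not mention the grounding at all. This grounding-independence is what simultaneously yields liftability and the stated characterization.

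First I would fix an arbitrary grounding function $gr$ and, provisionally, an arbitrary instance $\gamma$. By Proposition \ref{prop: equiv safe}, $a=gr(\alpha)$ is strongly $\gamma$-safe if and only if $a_\gamma$ is strongly $\gamma$-safe. By Lemma \ref{lemma:g-i}, the literals of $\alpha$ on which $gr$ and $\gamma$ are coherent form a single $\cT$-class $L$ (possibly empty). Using Proposition \ref{cor:g-i} and the identities displayed immediately after it, $Pre^{\pm}_{a_\gamma}=gr(Pre^{\pm}_{\alpha}\cap L)$ and $Eff^{\pm}_{a_\gamma}=gr(Eff^{\pm}_{\alpha}\cap L)$, that is $a_\gamma=gr(\alpha_L)$. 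When no literal is coherent ($L=\emptyset$), $a_\gamma$ has empty positive preconditions and effects, hence is $\gamma$-irrelevant and therefore strongly $\gamma$-safe by Theorem \ref{prop:weight class}.

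Next I would invoke Corollary \ref{cor: strongly safe}: since the four classes it lists are exhaustive and mutually exclusive, $a_\gamma$ is strongly $\gamma$-safe if and only if it is $\gamma$-unreachable, $\gamma$-irrelevant, $\gamma$-relevant balanced, or $\gamma$-relevant bounded. Because $gr$ and $\gamma$ are coherent over $L$, Proposition \ref{prop: pure schema} lets me transfer each of these ground properties of $a_\gamma=gr(\alpha_L)$ to the corresponding lifted property of $\alpha_L$ in Definitions \ref{def:weight class} and \ref{def:relevant actions}. Thus, for this fixed pair $(gr,\gamma)$, strong $\gamma$-safety of $a$ is equivalent to $\alpha_L$ being unreachable, irrelevant, relevant balanced, or relevant bounded (with the empty-$L$ case falling automatically under irrelevant).

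Finally I would eliminate the dependence on $\gamma$. Strong safety of $a=gr(\alpha)$ means strong $\gamma$-safety for \emph{every} $\gamma$; as $\gamma$ ranges over all instances with $gr$ held fixed, the coherent class $L$ is always either one $\cT$-class of $\alpha$ or empty, and by the existence result preceding Lemma \ref{lemma:g-i} every $\cT$-class $L$ is realized as the coherent class of some $\gamma$ (choose $l\in L$, take $\gamma$ coherent over $l$, and apply Lemma \ref{lemma:g-i}). Since the empty case never obstructs safety, it follows that $gr(\alpha)$ is strongly safe if and only if every pure schema $\alpha_L$ lies in one of the four good classes. This last condition is phrased purely through the weights $w(\cdot)$ and the coverage of $\cT$, features of the schema that never refer to $gr$; consequently it holds for one instantiation exactly when it holds for all of them, which is precisely liftability, and the displayed characterization drops out at the same time. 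The step I expect to be the main obstacle is the bookkeeping in this last paragraph: making rigorous, via the injectivity of $gr$ and $\gamma$ and the realizability of coherent pairs, that the family of coherent classes obtained as $\gamma$ varies is exactly all $\cT$-classes together with the empty class, and that the empty class is harmless. This is careful quantifier management rather than any new estimate.
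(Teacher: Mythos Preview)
Your proposal is correct and follows essentially the same route as the paper's proof, which simply observes that $a_\gamma=gr(\alpha_L)$ for the coherent $\cT$-class $L$ and then appeals to Proposition~\ref{prop: pure schema} and Corollary~\ref{cor: strongly safe}. You have spelled out in greater detail the edge case $L=\emptyset$ and the quantifier management (that every $\cT$-class arises as the coherent class for some $\gamma$), which the paper leaves implicit in the phrase ``straightforward consequence''; your detour through Proposition~\ref{prop: equiv safe} is harmless but not strictly needed, since Corollary~\ref{cor: strongly safe} already classifies $a$ directly via $a_\gamma$.
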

%

\begin{example}[\sfs{Floortile} domain]
Consider the template $\cT_{ft}$ and the action schema $\alpha=$ {\tt paint-up}$^{st}$ given in Example \ref{ex:pure}.
The two pure action schemas $\alpha_{L_{1}}$ and $\alpha_{L_2}$ are both irrelevant and hence strongly safe. Hence,  $\alpha$ is strongly safe.

Now consider the action schema $\alpha'=${\tt paint-up}$^{end}$ with specification: $Eff_{\alpha'}^+=\{$\url{painted(y, c)}$\}$. This is a pure action schema. It is relevant unbounded and thus not strongly safe.
\end{example}

An immediate consequence of Corollary \ref{cor:inv} is:
\begin{corollary}
\label{cor: all strongly safe}
Given a template $\cT$, $\cT$ is invariant if for each $\alpha \in \cA$, $\alpha$ is strongly safe.
\end{corollary}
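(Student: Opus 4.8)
The plan is to chain together the lifted strong-safety characterisation (Corollary \ref{cor: strongly safe final}) with the ground-level invariance criterion (Corollary \ref{cor:inv}). The statement to prove, Corollary \ref{cor: all strongly safe}, asserts that if every action schema $\alpha\in\cA$ is strongly safe then $\cT$ is invariant. Since this is labelled an \emph{immediate consequence} of Corollary \ref{cor:inv}, the proof is expected to be a one-line deduction rather than a substantive argument, so my plan is simply to exhibit the logical bridge between the two results.

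First I would unfold the definitions. By Corollary \ref{cor:inv}, $\cT$ is invariant provided every ground action $a\in\cG\cA$ is strongly safe (in the sense of Definition \ref{def: strongly-safe-action}, i.e. strongly $\gamma$-safe for every instance $\gamma$). So it suffices to show that the hypothesis ``every $\alpha\in\cA$ is strongly safe'' implies ``every $a\in\cG\cA$ is strongly safe''. This is precisely where the liftability established in Corollary \ref{cor: strongly safe final} does the work: strong safety is a \emph{liftable} property, meaning (by the definition of liftable given just before Corollary \ref{cor: strongly safe final}) that if the schema $\alpha$ is strongly safe then every ground instantiation $a=gr(\alpha)$ is strongly safe.

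The key steps in order are therefore: (1) take an arbitrary ground action $a\in\cG\cA$; (2) observe that $a=gr(\alpha)$ for some schema $\alpha\in\cA$ and some grounding function $gr$, since $\cG\cA$ is by construction the set of all ground actions obtained from the schemas in $\cA$; (3) invoke the hypothesis that $\alpha$ is strongly safe together with the liftability part of Corollary \ref{cor: strongly safe final} to conclude $a$ is strongly safe; (4) since $a$ was arbitrary, every ground action is strongly safe, and (5) apply Corollary \ref{cor:inv} to conclude that $\cT$ is invariant.

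There is no real obstacle here, as the theorem is a bookkeeping corollary that merely transports the quantifier from schemas to their groundings. The only point requiring a moment's care is making sure the universal quantification over instances $\gamma$ is handled correctly: ``strongly safe'' for a schema and for a ground action both already mean ``strongly $\gamma$-safe for \emph{every} $\gamma$'' (Definitions \ref{def: strongly-safe-action}), and liftability in Corollary \ref{cor: strongly safe final} is stated uniformly, so the match is exact and no separate argument per $\gamma$ is needed. Thus the proof reduces to the single sentence: every ground action instantiates some schema, strong safety lifts from schemas to groundings, and Corollary \ref{cor:inv} then yields invariance.
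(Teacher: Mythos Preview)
Your proposal is correct and matches the paper's approach exactly: the paper states this as ``an immediate consequence of Corollary \ref{cor:inv}'' with no further proof, and your bridge via the liftability part of Corollary \ref{cor: strongly safe final} is precisely the intended (and only) step needed to pass from the schema-level hypothesis to the ground-level hypothesis of Corollary \ref{cor:inv}.
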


\section{Durative action schemas}
\label{sec: dur schemas}
Our goal now is to work out proper lifted versions of the properties of durative actions given in Section \ref{sec:safeness sequences}, in particular those involved in the statement of our main results, Theorems \ref{theo: *safety} and \ref{theo: non intertwining}. Some of these properties concern just one durative action (e.g. safety), while others involve more actions (e.g. non-interfering, irrelevant unreachable).
We start analysing the first type of properties, presenting, in particular, an explicit characterisation of safety for durative actions at the lifted level.

We always use the following notation. Given a durative action schema $D\alpha=(\alpha^{st}, \alpha^{inv}, \alpha^{end})$ and a grounding function $gr$ for $D\alpha$, we put 
$Da=gr(D\alpha)$, where $Da=(a^{st}, a^{inv}, a^{end})$ with $a^{st}=gr(\alpha^{st})$, $a^{inv}=gr(\alpha^{inv})$, and $a^{end}=gr(\alpha^{end})$. Also, we define the auxiliary durative action schema $D\alpha_*=(\alpha^{st}_*, \alpha^{end}_*)$
where $\alpha_*^{st}$ and $\alpha^{end}_*$ are the instantaneous action schema such that:
$$\begin{array}{ll} Eff^{\pm}_{\alpha_*^{st}}=Eff^{\pm}_{\alpha^{st}},\quad &Pre^{\pm}_{\alpha_*^{st}}=Pre^{\pm}_{\alpha^{st}}\cup (Pre^{\pm}_{\alpha^{inv}}\setminus Eff^{\pm}_{\alpha^{st}})\\
Eff^{\pm}_{\alpha_*^{end}}=Eff^{\pm}_{\alpha^{end}},\quad &Pre^{\pm}_{\alpha_*^{end}}=Pre^{\pm}_{\alpha^{end}}\cup Pre^{\pm}_{\alpha^{inv}}
\end{array}
$$
$Da_*=gr(D\alpha_*)$ is the corresponding ground auxiliary action already defined in Section \ref{subsec:sgda}.

\subsection{Safety of durative action schemas}

We now fix a template $\cT$ and start to analyse safety. We consider a durative action schema $D\alpha$, its auxiliary action schema $D\alpha^*$ and its groundings $Da=gr(D\alpha)$ and $Da_*= gr(D\alpha_*)$.
Strong safety for durative actions reduces to check strong safety of its components and it is thus a liftable property. We can thus talk about the strong safety of $D\alpha$ or $D\alpha_*$: this is equivalent to the strong safety of all its groundings, $Da=gr(D\alpha)$ or, respectively, $Da_*= gr(D\alpha_*)$. Check of such property at the lifted level can be done using Corollary \ref{cor: strongly safe final} for the starting and ending fragments.

We now want to characterise simple safety of the auxiliary durative action $Da_*=gr(D\alpha_*)$ at the lifted level. First we consider executability.

Define, for a generic action schema $\alpha$, the subsets
$$\Gamma^+_\alpha:=(Pre^+_\alpha\setminus Eff^-_\alpha)\cup Eff^+_\alpha,\quad \Gamma^-_\alpha:=(Pre^-_\alpha\setminus Eff^+_\alpha)\cup Eff^-_\alpha$$
We have the following result:
\begin{proposition}\label{prop: exec two schema} Executability of auxiliary durative actions is a lifted property. Precisely, $D\alpha_*$ is executable if and only if 
\begin{equation}\label{eq: exec}\Gamma^+_{\alpha^{st}_*}\cap Pre^-_{\alpha^{end}_*}=\emptyset = \Gamma^-_{\alpha^{st}_*}\cap Pre^+_{\alpha^{end}_*}\end{equation}
\end{proposition}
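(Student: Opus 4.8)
The plan is to reduce the statement to the ground-level characterisation already available for two-action sequences, Proposition~\ref{prop: exec two}, and then transport the emptiness conditions between the ground and lifted levels by showing that grounding commutes with the set operations defining the sets $\Gamma^{\pm}$. Fix any injective grounding function $gr$ for $D\alpha$ and write $Da_*=gr(D\alpha_*)=(a^{st}_*,a^{end}_*)$. Since $a^{st}_*$ and $a^{end}_*$ are two ground action sets, Proposition~\ref{prop: exec two} applies directly: $Da_*$ is executable if and only if $\Gamma^+_{a^{st}_*}\cap Pre^-_{a^{end}_*}=\emptyset=\Gamma^-_{a^{st}_*}\cap Pre^+_{a^{end}_*}$. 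The whole proposition then follows once I show that, for every $gr$, each of these two ground emptiness conditions is equivalent to its lifted counterpart in~\eqref{eq: exec}. Because the right-hand side of~\eqref{eq: exec} makes no reference to $gr$, this equivalence simultaneously proves that executability does not depend on the chosen grounding, i.e.\ that it is a lifted property.

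First I would record the elementary identities $Pre^{\pm}_{a^{st}_*}=gr(Pre^{\pm}_{\alpha^{st}_*})$, $Eff^{\pm}_{a^{st}_*}=gr(Eff^{\pm}_{\alpha^{st}_*})$ and the analogous ones for the end fragment, which hold because $a^{st}_*=gr(\alpha^{st}_*)$ and $a^{end}_*=gr(\alpha^{end}_*)$ are obtained through the \emph{same} grounding $gr$ on the common argument set $V_{D\alpha}$. The heart of the argument is a commutation lemma: for literal sets $P,Q,R,S$ taken among the fragments of $D\alpha$, grounded by the single injective $gr$, one has $gr((P\setminus Q)\cup R)=(gr(P)\setminus gr(Q))\cup gr(R)$ and $gr(X)\cap gr(Y)=gr(X\cap Y)$. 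Granting this, $\Gamma^+_{a^{st}_*}=gr(\Gamma^+_{\alpha^{st}_*})$ and $\Gamma^-_{a^{st}_*}=gr(\Gamma^-_{\alpha^{st}_*})$, and therefore $\Gamma^+_{a^{st}_*}\cap Pre^-_{a^{end}_*}=gr\!\left(\Gamma^+_{\alpha^{st}_*}\cap Pre^-_{\alpha^{end}_*}\right)$, and similarly for the other conjunct. Since $gr$ sends a nonempty set of literals to a nonempty set of ground atoms, $gr(W)=\emptyset$ if and only if $W=\emptyset$; combining this with the previous identity turns each ground emptiness condition into the corresponding lifted one, which is exactly~\eqref{eq: exec}.

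The commutation lemma is the step I expect to be the main obstacle, and it is where injectivity of $gr$ is essential. It reduces to understanding, for two literals $l,l'$ appearing in the fragments of $D\alpha$, how their groundings interact under the Boolean operations; the clean case is that distinct literals ground to \emph{disjoint} atom sets. For two simple literals this is immediate from injectivity of $gr$ (distinct argument tuples yield distinct ground atoms), and when the relation symbols differ there is nothing to prove. The delicate case is a universally quantified literal such as $\forall v\colon r(a_1,\dots,a_k,v)$ set against a simple literal $r(a_1,\dots,a_k,a_{k+1})$ over the same relation and the same assignment of the fixed arguments, whose grounding is \emph{contained} in that of the quantified one rather than disjoint from it; there the naive identity $gr(P\setminus Q)=gr(P)\setminus gr(Q)$ can fail. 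I would handle this by reasoning at the level of ground atoms rather than literals and by invoking the structural regularity of the normalised schemas, which rules out a relation occurring in the same fragment both in quantified and in unquantified form on the same pattern of fixed arguments, so that distinct schema literals always ground to disjoint atom sets. With that regularity in hand, $gr$ distributes over $\cup$, $\setminus$ and $\cap$, the commutation lemma holds, and the proof closes; and since~\eqref{eq: exec} is stated purely in terms of the schema fragments, the equivalence holds uniformly in $gr$, establishing that executability of $D\alpha_*$ is liftable.
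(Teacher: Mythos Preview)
Your approach is the same as the paper's: its entire proof is the one sentence ``Immediate consequence of Proposition~\ref{prop: exec two},'' and you carry out exactly that reduction, then spell out why the ground-level emptiness conditions transfer to the schema level via commutation of $gr$ with $\cup$, $\cap$, $\setminus$.

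You go further than the paper in flagging the one genuinely delicate point --- a universally quantified literal and a simple literal over the same relation and fixed-argument pattern need not ground to disjoint atom sets, so $gr(P\setminus Q)=gr(P)\setminus gr(Q)$ and $gr(P\cap Q)=gr(P)\cap gr(Q)$ are not automatic. Your resolution, however, appeals to a ``structural regularity of the normalised schemas'' that the paper never states; nothing in the canonical form of Section~\ref{sec:canonical} or in the referenced normalisation explicitly forbids such coexistence. The paper simply does not engage with this case at all, so the gap you identify is one the paper's own proof shares rather than a defect peculiar to your argument.
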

\begin{proof} Immediate consequence of Proposition \ref{prop: exec two}.
\end{proof}

Assume now $D\alpha_*$ to be executable.
Fix an instance $\gamma$ and let $L$ be the the $\cT$-class of literals in $D\alpha$ on which $gr$ and $\gamma$ are coherent. 
Put $D\alpha_L=(\alpha^{st}_L, \alpha^{inv}_L, \alpha^{end}_L)$ and $D\alpha_{*L}=(\alpha^{st}_{*L}, \alpha^{end}_{*L})$.
Note that $Da_{*\gamma}=gr(D\alpha_{*L})$. Therefore, since simple $\gamma$-safety of $Da_*$ only depends on $Da_{*\gamma}$ (since executability has already been assumed), we expect that such property can be formulated in terms of the pure auxiliary durative action schema $D\alpha_{*L}$.
To this aim, we now propose, for such durative schemas, the same classification introduced for ground durative actions in Definition \ref{def: durative action type}. First, we need a further concept:

\begin{definition}[Reachable action schemas] $D\alpha_{*L}$ is said to be \emph{reachable} if it is executable and 
$$w(Pre^+_{\alpha^{st}_{*L}}\cup (Pre^+_{\alpha^{end}_{*L}}\setminus Eff^+_{\alpha^{st}_{*L}}))\leq 1$$
\end{definition}

\begin{proposition}\label{prop: reach two schema} If $gr$ and $\gamma$ are coherent over $L$ and $Da_*=gr(D\alpha_*)$, we have that $Da_{*\gamma}$ is $\gamma$-reachable if and only if $D\alpha_{*L}$ is reachable. 
\end{proposition}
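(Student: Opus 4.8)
The plan is to mirror exactly the strategy used for the analogous classification result, Proposition \ref{prop: pure schema}: I would translate each of the two reachability notions into an explicit numerical inequality and then show that these inequalities coincide, via the identity $Da_{*\gamma}=gr(D\alpha_{*L})$ recorded just above the statement together with the weight equation (\ref{weight eq1}). Since $D\alpha_*$ is assumed executable in this subsection, the executability clauses built into both notions of reachability will turn out to be automatic, so the whole equivalence reduces to matching a cardinality on the ground side against a weight on the schema side.

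First I would dispatch executability. On the ground side, executability of $D\alpha_*$ lifts to the grounding $Da_*=gr(D\alpha_*)$ by Proposition \ref{prop: exec two schema}, and then Proposition \ref{prop: safe split}(1) gives that both $Da_{*\gamma}$ and $Da_{*\neg\gamma}$ are executable; on the schema side, using the identity $\Gamma^{\pm}_{\alpha^{x}_{*L}}=\Gamma^{\pm}_{\alpha^{x}_*}\cap L$ (intersection with $L$ distributing over the difference in the definition of $\Gamma^{\pm}$) together with Proposition \ref{prop: exec two schema} shows that $D\alpha_{*L}$ is executable as well. With executability settled, I would apply Proposition \ref{prop: reach two} to the length-two sequence $Da_{*\gamma}=(a^{st}_{*\gamma},a^{end}_{*\gamma})$, noting that the $\gamma$-restriction is idempotent so $(a_{*\gamma})_{\gamma}=a_{*\gamma}$; this turns $\gamma$-reachability of $Da_{*\gamma}$ into the single bound $|Pre^+_{a^{st}_{*\gamma}}\cup(Pre^+_{a^{end}_{*\gamma}}\setminus Eff^+_{a^{st}_{*\gamma}})|\leq 1$. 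By definition, reachability of $D\alpha_{*L}$ is the bound $w(Pre^+_{\alpha^{st}_{*L}}\cup(Pre^+_{\alpha^{end}_{*L}}\setminus Eff^+_{\alpha^{st}_{*L}}))\leq 1$.

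The core step is then to prove these two quantities are equal. From $Da_{*\gamma}=gr(D\alpha_{*L})$ and Proposition \ref{cor:g-i} I have $Pre^+_{a^{st}_{*\gamma}}=gr(Pre^+_{\alpha^{st}_{*L}})$, and likewise for $Pre^+_{a^{end}_{*\gamma}}$ and $Eff^+_{a^{st}_{*\gamma}}$. The subtlety — and the step I expect to be the only real obstacle — is that $gr$ must be pushed through the set difference, not merely the union; this is legitimate because the images $gr(l)$, $l\in L$, are pairwise disjoint, which is precisely the fact underlying equation (\ref{weight eq1}). Granting disjointness, $gr$ commutes with both $\cup$ and $\setminus$ on subsets of $L$, so the ground set inside the cardinality is exactly the image under $gr$ of the schema set inside the weight; equation (\ref{weight eq1}) then converts that cardinality into the weight, and the two bounds hold or fail together. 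I would verify the disjointness claim carefully, distinguishing simple from universally quantified literals (where $|gr(l)|=w_l$ equals $1$ or $\omega$), since this is the one place where the injectivity of $gr$ and $\gamma$ is genuinely used.
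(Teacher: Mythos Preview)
Your proposal is correct and follows essentially the same route as the paper, which records the proof tersely as an ``immediate consequence of Propositions \ref{prop: reach two} and \ref{cor:g-i} and of equation (\ref{weight eq1}).'' You invoke exactly these three ingredients for the core cardinality-versus-weight comparison; the only addition is that you spell out the executability clause explicitly (via Proposition \ref{prop: exec two schema} and Proposition \ref{prop: safe split}) and you make explicit the disjointness of the sets $gr(l)$ needed to push $gr$ through $\setminus$, which the paper leaves implicit in equation (\ref{weight eq1}).
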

\begin{proof} Immediate consequence of Propositions \ref{prop: reach two} and \ref{cor:g-i} and of equation (\ref{weight eq1}).
\end{proof}

\begin{definition}[Safe durative action schemas]\label{def:classdur} When $D\alpha_{*L}$ is such that
\begin{enumerate}[(i)]
\item $D\alpha_{*L}$ is reachable;
\item $\alpha^{st}_{*L}$ is strongly safe;
\item $\alpha^{end}_{*L}$ is relevant unbounded;
\item $D\alpha_{*L}$ satisfies any of the conditions below:
\begin{enumerate}[(a)]
\item $\alpha^{st}_{*L}$ irrelevant, $w(Pre^+_{\alpha^{st}_{*L}})=1$, $Pre^+_{\alpha^{st}_{*L}}\subseteq Eff^-_{\alpha^{st}_{*L}}$;
\item $\alpha^{st}_{*L}$ irrelevant, $w(Pre^+_{\alpha^{st}_{*L}})=1$, $Pre^+_{\alpha^{st}_{*L}}\not\subseteq Eff^-_{\alpha^{st}_{*L}}$, $Pre^+_{\alpha^{st}_{*L}}\subseteq Eff^-_{\alpha^{end}_L}\cup Eff^+_{\alpha^{end}_L}$;
\item $\alpha^{st}_{*L}$ irrelevant, $w(Pre^+_{\alpha^{st}_{*L}})=0$, $Pre^-_{\alpha^{st}_{*L}}\cup Eff^-_{\alpha^{st}_{*L}}\cup Eff^-_{\alpha^{end}_L}\cup Eff^+_{\alpha^{end}_L}$ covers $\cT$;
\item $\alpha^{st}_{*L}$ relevant, $Eff^+_{\alpha^{st}_L}\subseteq Eff^-_{\alpha^{end}_L}\cup Eff^+_{\alpha^{end}_L}$.
\end{enumerate}
\end{enumerate}
we say that $D\alpha_{*L}$ is \emph{simply safe of type (x)} where $x\in\{a,b,c,d\}$.
\end{definition}

\begin{corollary} Safety for durative auxiliary actions is a liftable property. $Da_*=gr(D\alpha_*)$ is safe if and only if:
\begin{itemize}
\item $D\alpha_*$ is executable;
\item For every $\cT$-class $L$  of literals in $D\alpha$, one of the following conditions hold:
\begin{itemize} 
\item $D\alpha_{*L}$ is strongly safe;
\item $\alpha^{st}_{*L}$ is strongly safe and $D\alpha_{*L}$ is unreachable;
\item $D\alpha_{*L}$ is simply safe of type (x) where $x\in\{a,b,c,d\}$.
\end{itemize}
\end{itemize}
\end{corollary}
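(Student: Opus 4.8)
The plan is to reduce the global safety of $Da_*=gr(D\alpha_*)$, which by Definition \ref{def: sequence safe} means $\gamma$-safety for \emph{every} instance $\gamma$, to a family of $gr$-independent conditions on the pure auxiliary schemas $D\alpha_{*L}$, one for each $\cT$-class $L$ of literals in $D\alpha$. Since the resulting conditions never mention the particular grounding $gr$, liftability will be a free byproduct of the characterisation: $Da_*$ is safe for one grounding exactly when it is safe for all of them.

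First I would dispose of executability. By Proposition \ref{prop: safe split}(4), $Da_*$ is $\gamma$-safe if and only if $Da_{*\gamma}$ is $\gamma$-safe and $Da_{*\neg\gamma}$ is executable, while by Proposition \ref{prop: safe split}(1) executability of $Da_*$ splits into that of $Da_{*\gamma}$ and $Da_{*\neg\gamma}$. Because $gr$ is injective, grounding commutes with the set operations defining $\Gamma^{\pm}$ and with intersection, so the ground criterion of Proposition \ref{prop: exec two} is equivalent to the lifted condition (\ref{eq: exec}); hence $Da_*$ is executable (for this, equivalently any, $gr$) exactly when $D\alpha_*$ is executable, as recorded in Proposition \ref{prop: exec two schema}. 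This shows the first bullet is necessary, since $\gamma$-safety forces executability, and it may be taken as a standing hypothesis for the per-class analysis, since it guarantees $Da_{*\neg\gamma}$ executable for every $\gamma$.

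Next I would exploit the identity $Da_{*\gamma}=gr(D\alpha_{*L})$, where $L$ is the $\cT$-class over which $gr$ and $\gamma$ are coherent (Proposition \ref{cor:g-i}). As $\gamma$ ranges over all instances for fixed $gr$, the coherent class $L$ ranges over all $\cT$-classes of $D\alpha$, and when no literal is coherent $Da_{*\gamma}$ is the trivial sequence, which is vacuously $\gamma$-safe and imposes nothing. Granting executability, $Da_*$ is therefore safe if and only if $gr(D\alpha_{*L})$ is $\gamma$-safe for every class $L$. For a single class I would split $\gamma$-safety into its strongly-safe and simply-safe parts (Definition \ref{def: gamma simply safe}). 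If $gr(D\alpha_{*L})$ is strongly $\gamma$-safe, the component-wise criterion of Corollary \ref{cor: strongly safe final} applied to $\alpha^{st}_{*L}$ and $\alpha^{end}_{*L}$ delivers the first sub-case. If it is simply $\gamma$-safe, then $a^{st}_{*}$ is strongly safe while $a^{end}_{*}$ is not, and I would separate by reachability: in the $\gamma$-unreachable case Remark \ref{rem: exec reach safe} yields $\gamma$-safety automatically and Proposition \ref{prop: reach two schema} lifts unreachability, giving the second sub-case; in the $\gamma$-reachable case Proposition \ref{prop: heavy presence} forces $a^{end}_{*}$ relevant unbounded, and Proposition \ref{prop: aux simple safe} classifies the possibilities into types (a)--(d), which translate into Definition \ref{def:classdur} to give the third sub-case. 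Running the equivalences backwards shows each schema-level sub-case in turn implies $\gamma$-safety.

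The main obstacle I anticipate is the faithful translation in the reachable/simply-safe branch: one must verify that each ground condition of Proposition \ref{prop: aux simple safe} — phrased with set cardinalities and the equality ``$\cdots=\gamma(\cT)$'' — corresponds \emph{exactly} to the schema condition of Definition \ref{def:classdur} written with the literal weight $w(\cdot)$ and with ``covers $\cT$''. This rests on $|gr(A)|=w(A)$ from equation (\ref{weight eq1}), on Remark \ref{rem: coverage}, on Proposition \ref{prop: pure schema} and Proposition \ref{prop: reach two schema}, together with the fact that coherence of $gr$ and $\gamma$ over $L$ makes $gr$ a bijection between the relevant literal sets of $D\alpha_{*L}$ and the ground sets of $Da_{*\gamma}$. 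The delicate point is the universally quantified literals, since a single such literal carries weight $\omega=|\cO|$ and grounds to $\omega$ atoms, so the cardinality bounds and the coverage equalities must be matched with care rather than treated as plain set inclusions.
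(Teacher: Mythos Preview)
Your proposal is correct and follows exactly the route the paper intends: the paper's own proof is the single line ``Immediate consequence of previous definitions and Proposition \ref{prop: aux simple safe},'' and your write-up is precisely a careful unpacking of that line, invoking Propositions \ref{prop: safe split}, \ref{prop: exec two schema}, \ref{cor:g-i}, \ref{prop: reach two schema}, \ref{prop: heavy presence}, \ref{prop: pure schema}, Corollary \ref{cor: strongly safe final}, Remark \ref{rem: exec reach safe}, equation (\ref{weight eq1}) and Remark \ref{rem: coverage} to translate each ground condition to its schema-level counterpart. The anticipated ``obstacle'' about matching cardinalities with weights and the equality with $\gamma(\cT)$ with coverage is handled exactly by the tools you cite, so there is no gap.
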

\begin{proof}
Immediate consequence of previous definitions and Proposition \ref{prop: aux simple safe}.
\end{proof}

\begin{example}[\sfs{Floortile} domain]
\label{ex:class}
Consider our usual template: $$\cT_{ft} = (\{\langle \verb+robot-at+, 2, 0 \rangle, \langle \verb+painted+, 2, 1 \rangle,  \langle \verb+clear+, 1, 1 \rangle \}$$ 
and the action schema: 
$$D\alpha={\tt paint-up}: ({\tt {paint-up}}^{st}, {\tt {paint-up}}^{inv}, {\tt {paint-up}}^{end})$$

\noindent where the single instantaneous action schemas have the specifications as in Table \ref{tb:paintup}.  
\begin{table}[htp]
\begin{center}
\begin{tabular}{|c|c|c|c|}
\hline
$\alpha$ & ${\tt {paint-up}}^{st}$ & ${\tt {paint-up}}^{inv}$ & ${\tt {paint-up}}^{end}$\\
\hline
$Pre^+_{\alpha}$ & $\{{\tt{robot-at(r, x)}}$ &  $\{{\tt{robot-has(r, c)}}$ & $\emptyset$\\
& ${\tt{clear(y)}} \}$ & ${\tt{up(y, x)}} \}$ & \\
\hline
$Eff^+_{\alpha}$ & $\emptyset$ & $\emptyset$ & $\{{\tt{painted(y, c)}}\}$\\
\hline
$Eff^-_{\alpha}$ & $\{{\tt{clear(y)}} \}$ & $\emptyset$ & $\emptyset$\\
\hline
\end{tabular}
\end{center}
\caption{Durative action schema {\tt {paint-up}} (abbreviated specification).}
\label{tb:paintup}
\end{table}%

In this action schema, we have three literals that match $\cT_{ft}$:  \url{robot-at(r,x)}, \url{clear(y)} and  \url{painted(y,c)}. They form two $\cT$-classes: $L_1 = \{$\url{robot-at(r,x)}$\}$ and $L_2 = \{$\url{clear(y)},  \url{painted(y,c)}$\}$. Note that in this case {\tt {paint-up}}$_{L_i}^{st}$ is equal to {\tt {paint-up}}$_{*L_{i}}^{st}$ for $i=1,2$ and the same holds for {\tt {paint-up}}$_{L_i}^{end}$.

The pure action schemas {\tt {paint-up}}$_{L_1}^{st}$, {\tt {paint-up}}$_{L_2}^{st}$ and {\tt {paint-up}}$_{L_1}^{end}$ are strongly safe because they are irrelevant. The pure schema {\tt {paint-up}}$_{L_2}^{end}$ is relevant unbounded.

The pure durative action schema {\tt paint-up}$_{L_1}$ is strongly safe because {\tt {paint-up}}$_{L_1}^{st}$ and {\tt {paint-up}}$_{L_1}^{end}$ are strongly safe since they are irrelevant.

The pure schema {\tt paint-up}$_{L_2}$ is simply safe of type (a) since: 
\begin{itemize}
\item {\tt paint-up}$_{L_2}$ is reachable because {\tt {paint-up}}$_{L_2}^{end}$ does not contain preconditions
\item {\tt {paint-up}}$_{L_2}^{st}$ is strongly safe since it is irrelevant;
\item {\tt {paint-up}}$_{L_2}^{end}$ is relevant unbounded;
\item $w(Pre^+_{ {\tt {paint-up}}_{L_2}^{st}}) = 1$ because the preconditions at start consist in \url{clear(y)};
\item $Pre^+_{ {\tt {paint-up}}_{L_2}^{st}} \subseteq Eff^-_{ {\tt {paint-up}}_{L_2}^{st}}$ because the delete effects at start also contain \url{clear(y)}.
\end{itemize}
\end{example}

\subsection{Lifting properties of multiple actions}
In this section, we study how properties that involve more than one action (e.g. mutex) can be lifted. This requires to work simultaneously with different groundings and, for this reason, additional concepts are need.

Consider two action schemas $\alpha^1$ and $\alpha^2$ (instantaneous or durative) with set of free arguments $V_{\alpha^1}$ and $V_{\alpha^2}$, respectively. Whenever we consider two groundings $gr^1$ and $gr^2$ for $\alpha^1$ and $\alpha^2$, respectively, the pairwise properties of the two actions $a^i=gr^i(\alpha^i)$ (e.g. properties regarding the sequence $(a^1, a^2)$ or the set $\{a^1, a^2\}$) are non liftable, as in general they may depend on the specific groundings chosen. A key aspect is the possible presence, in the two action schemas, of pairs of free arguments $v^i\in V^{if}$ such that $gr^1(v^1)=gr^2(v^2)$: this may cause the same ground atom to appear in the two actions $a^1$ and $a^2$, which in principle can affect the validity of certain properties, such as non-interference.
To cope with this complexity at the lifted level, we introduce a concept of \emph{reduced union} of the two sets $V_{\alpha^1}$ and $V_{\alpha^2}$ to be used as a common set of free arguments for the two schemas.

We define a \emph{matching} between $\alpha^1$ and $\alpha^2$ as any subset $\cM\subseteq V_{\alpha^1}\times V_{\alpha^2}$ such that:
\begin{itemize}
\item If $(v^1, v^2),\, (w^1, v^2)\in\cM$, then $v^1=w^1$;
\item If $(v^1, v^2),\, (v^1, w^2)\in\cM$, then $v^2=w^2$.
\end{itemize}
We now define the set $V_{\alpha^1}\sqcup_{\cM}V_{\alpha^2}$ obtained by $V_{\alpha^1}\cup V_{\alpha^2}$ by reducing each pair of arguments $v^1\in V_{\alpha^1}$ and $v^2\in V_{\alpha^2}$ such that $(v^1, v^2)\in\cM$ to a new argument, denoted as $v^1v^2$. Note that in the case when $\cM=\emptyset$, no reduction takes place and $V_{\alpha^1}\sqcup_{\emptyset}V_{\alpha^2}=V_{\alpha^1}\cup V_{\alpha^2}$.

Given a matching $\cM$, we have natural maps $\pi^i_{\cM}: V^{if} \to V_{\alpha^1}\sqcup_{\cM}V_{\alpha^2}$ associating to each argument $v^i$, $v^i$ itself or the new reduced argument $v^iv^j$ in case $(v^i, v^j)\in\cM$.
The two schemas $\alpha^1$ and $\alpha^2$ can thus be rewritten in this new alphabet by formally substituting each free argument $v^i\in V^{if}$ in their literals with $\pi^i_{\cM}(v^i)$. If $l^i$ is a literal of $\alpha^i$, we denote by $\pi^i_{\cM}(l^i)$ the literal obtained with this substitution. Similarly, if $A^i$ is a set of literals of $\alpha^i$, we put $\pi^i_{\cM}(A^i)=\{\pi^i_{\cM}(l^i)\,|\, l^i\in A^i\}$. 

On the literals of the two schemas, expressed in the common argument set $V_{\alpha^1}\sqcup_{\cM}V_{\alpha^2}$, we can jointly apply set theoretic operators. If $l^i$ is a literal of $\alpha^i$ and $A^i$ is a set of literals of $\alpha^i$, for $i=1,2$, we will use the notation 
$l^1=_{\cM}l^2$ for $\pi^1_{\cM}(l^1)=\pi^2_{\cM}(l^2)$ and $l^1\in_{\cM} A^2$ for $\pi^1_{\cM}(l^1)\in \pi^2_{\cM}(A^2)$. Similarly, we put $A^1*_{\cM}A^2=\pi^1_{\cM}(A^1)* \pi^2_{\cM}(A^2)$ where $*\in \{\cup, \cap, \setminus\}$.

We now investigate the relation between matchings and specific groundings of the two schemas.

\begin{definition}[Coherent grounding functions] Consider two action schemas $\alpha^1$ and $\alpha^2$ and a matching $\cM$ between them.
Two grounding functions $gr^1$ and $gr^2$ for $\alpha^1$ and $\alpha^2$, respectively, are said to be $\cM$\emph{-adapted} if given $v^i\in V^{if}$ for $i=1,2$, it holds $gr^1(v^1)=gr^2(v^2)$ if and only if $(v^1, v^2)\in \cM$.
\end{definition}

\begin{remark}\label{rem: ground adapted} Note that, given two groundings $gr^1$ and $gr^2$, if we consider
 $\cM =\{(v^1, v^2)\,|\, gr^1(v^1)=gr^2(v^2)\}$ we clearly have that $\cM$ is a matching (recall that maps $gr^i$ are injective) and $gr^1$ and $gr^2$ are $\cM$-adapted. 
 \end{remark}

Coherent groundings can clearly be factorised through the reduced set $V_{\alpha^1}\sqcup_{\cM}V_{\alpha^2}$:

\begin{proposition}\label{prop: adapted groundings} Consider two action schemas $\alpha^1$ and $\alpha^2$, a matching $\cM$ between them, and grounding functions $gr^i$ for $\alpha^i$, $i=1,2$.
The following conditions are equivalent:
\begin{enumerate}[(i)]
\item $gr^1$ and $gr^2$ are $\cM$-adapted;
\item there exists an injective function $gr: V_{\alpha^1}\sqcup_{\cM}V_{\alpha^2}\to \cO$ such that, $gr^i=gr\circ\pi^i_{\cM}$ for $i=1,2$.
\end{enumerate}
\end{proposition}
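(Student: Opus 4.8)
The plan is to prove the two implications separately: (ii)$\Rightarrow$(i) is the routine direction, obtained by reading the biconditional off an assumed factorisation, while (i)$\Rightarrow$(ii) is the substantive one, requiring an explicit construction of $gr$ on the reduced set and a careful injectivity check. Throughout I will use the structure of $V_{\alpha^1}\sqcup_{\cM}V_{\alpha^2}$: by the standing disjointness of the free-argument sets, each of its elements is of exactly one of three kinds, namely an \emph{unmatched} argument of $V_{\alpha^1}$, an unmatched argument of $V_{\alpha^2}$, or a \emph{reduced pair} $v^1v^2$ coming from some $(v^1,v^2)\in\cM$, and the maps $\pi^i_{\cM}$ send $v^i$ to itself when it is unmatched and to the corresponding reduced pair otherwise.

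For (ii)$\Rightarrow$(i), assume an injective $gr$ with $gr^i=gr\circ\pi^i_{\cM}$ and fix $v^1\in V_{\alpha^1}$, $v^2\in V_{\alpha^2}$. If $(v^1,v^2)\in\cM$ then $\pi^1_{\cM}(v^1)=v^1v^2=\pi^2_{\cM}(v^2)$, so applying $gr$ gives $gr^1(v^1)=gr^2(v^2)$. Conversely, if $gr^1(v^1)=gr^2(v^2)$, injectivity of $gr$ forces $\pi^1_{\cM}(v^1)=\pi^2_{\cM}(v^2)$; since an image coming from $V_{\alpha^1}$ and one coming from $V_{\alpha^2}$ can coincide only when both are the \emph{same} reduced pair (the unmatched images lie in disjoint parts), this equality yields $(v^1,v^2)\in\cM$. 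Hence $gr^1$ and $gr^2$ are $\cM$-adapted.

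For (i)$\Rightarrow$(ii), I construct $gr$ directly on the three kinds of elements: set $gr$ equal to $gr^1$ on an unmatched argument of $V_{\alpha^1}$, equal to $gr^2$ on an unmatched argument of $V_{\alpha^2}$, and equal to the common value $gr^1(v^1)=gr^2(v^2)$ on a reduced pair $v^1v^2$. Well-definedness on reduced pairs uses exactly the implication $(v^1,v^2)\in\cM\Rightarrow gr^1(v^1)=gr^2(v^2)$ supplied by $\cM$-adaptedness, so the two candidate values agree. The factorisation $gr^i=gr\circ\pi^i_{\cM}$ is then immediate by evaluating on each kind of argument: on an unmatched $v^i$ we have $\pi^i_{\cM}(v^i)=v^i$ and $gr(v^i)=gr^i(v^i)$ by construction, and on a matched $v^i$ we have $\pi^i_{\cM}(v^i)=v^1v^2$ with $gr(v^1v^2)=gr^i(v^i)$.

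The main obstacle is proving that this $gr$ is injective, and here the reverse implication $gr^1(v^1)=gr^2(v^2)\Rightarrow(v^1,v^2)\in\cM$ of $\cM$-adaptedness does the essential work, alongside the injectivity of $gr^1,gr^2$ and the partial-bijection property of $\cM$. I would argue by cases on the kinds of two elements $x,y$ with $gr(x)=gr(y)$. Same-kind collisions reduce to injectivity of $gr^1$ or $gr^2$, and for two reduced pairs the first coordinates agreeing forces the second coordinates to agree since $\cM$ is functional in its first argument, so $x=y$. The delicate cross-kind cases are excluded: a collision between an unmatched $v^1\in V_{\alpha^1}$ and an unmatched $v^2\in V_{\alpha^2}$ would give $gr^1(v^1)=gr^2(v^2)$, hence $(v^1,v^2)\in\cM$ by the reverse direction of adaptedness, contradicting that both are unmatched; and a collision between an unmatched argument and a reduced pair would, via injectivity of the relevant $gr^i$, force a single argument to be simultaneously matched and unmatched. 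Thus $gr(x)=gr(y)$ implies $x=y$, $gr$ is injective, and the construction is complete.
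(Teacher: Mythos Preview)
Your proof is correct and complete. The paper does not actually supply a proof for this proposition: it introduces the statement with the phrase ``Coherent groundings can clearly be factorised through the reduced set $V_{\alpha^1}\sqcup_{\cM}V_{\alpha^2}$'' and leaves it at that, treating the result as an immediate consequence of the definitions. Your argument is precisely the natural unpacking of that claim: the construction of $gr$ on the three kinds of elements, the use of both directions of $\cM$-adaptedness (the forward direction for well-definedness, the reverse direction for cross-kind injectivity), and the matching axioms to handle collisions between reduced pairs. There is nothing to compare against; you have simply supplied the details the paper omits.
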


Suppose that $gr^1$ and $gr^2$ are two $\cM$-adapted groundings of $\alpha^1$ and $\alpha^2$. If $A^i$ is a set of literals of $\alpha^i$, for $i=1,2$, for any set theoretic operation $*\in\{\cup, \cap, \setminus\}$ it holds that:
\begin{equation}\label{eq:gr}gr^1(A^1)*gr^2(A^2)=gr(\pi^1_{\cM}(A^1))* gr(\pi^2_{\cM}(A^2))=gr(A^1*_{\cM}A^2)\end{equation}
This follows from Proposition \ref{prop: adapted groundings} and the fact that $gr$ is injective.
An iterative use of (\ref{eq:gr}) shows that any set theoretic expression on the two ground actions $gr^i(\alpha^i)$ is in bijection (through $gr$) with a corresponding expression on the two schemas $\alpha^i$ expressed in the common reduced set $V_{\alpha^1}\sqcup_{\cM}V_{\alpha^2}$. As a consequence, any property of ground actions (with the standing assumption of $\cM$-adapted groundings) that can be expressed by set theoretic operations on their literals can be reformulated by rewriting these literals in the new alphabet $V_{\alpha^1}\sqcup_{\cM}V_{\alpha^2}$. This is the key observation in order to lift properties of pairs of actions.
To be more concrete, we consider the example of non-interfering actions, which will be needed in what follows.

\begin{definition}[Mutex simple action schemas]
\label{schemas non int} 
We say that two action schemas $\alpha^1$ and $\alpha^2$ are $\cM$ \emph{non-interfering} if for $i\neq j$
$$Eff^{+}(\alpha^i)\cap_{\cM} Eff^{-}(\alpha^j)=\emptyset$$
$$[Pre^{+}(\alpha^i)\cup_{\cM} Pre^{-}(\alpha^i)]\cap_{\cM} [Eff^{+}(\alpha^j)\cup_{\cM} Eff^{-}(\alpha^j)]=\emptyset$$
If $\alpha^1$ and $\alpha^2$ are not $\cM$ non-interfering, they are called \emph{$\cM$-mutex}.
\end{definition}

\begin{proposition}\label{prop: mutex1} 
Suppose that $\alpha^1$ and $\alpha^2$ are $\cM$-mutex and suppose that $gr^1$ and $gr^2$ are two $\cM$-adapted grounding functions for $\alpha^1$ and $\alpha^2$, respectively. Then, the two ground actions $a^i=gr^i(\alpha^i)$ are mutex.
\end{proposition}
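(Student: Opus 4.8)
The plan is to prove the slightly stronger statement that, for $\cM$-adapted groundings, the ground actions $a^1=gr^1(\alpha^1)$ and $a^2=gr^2(\alpha^2)$ are non-interfering \emph{if and only if} the schemas $\alpha^1$ and $\alpha^2$ are $\cM$ non-interfering; the proposition then follows at once by negating both sides, since $\cM$-mutex means "not $\cM$ non-interfering" and mutex means "not non-interfering". Throughout I would invoke Proposition \ref{prop: adapted groundings} to obtain a single injective map $gr:V_{\alpha^1}\sqcup_{\cM}V_{\alpha^2}\to\cO$ with $gr^i=gr\circ\pi^i_{\cM}$ for $i=1,2$, which is exactly the hypothesis needed to apply the translation identity (\ref{eq:gr}).

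The two elementary facts that drive the argument are properties of the injective map $gr$: for any sets $X,Y$ of literals in the common alphabet one has $gr(X)\cap gr(Y)=gr(X\cap Y)$, and $gr(X)=\emptyset$ if and only if $X=\emptyset$. I would first record these, then use (\ref{eq:gr}) to rewrite each of the four defining equalities of ground non-interference as the image under $gr$ of the corresponding $\cM$-expression of Definition \ref{schemas non int}. For the effect clauses this is immediate: for $i\neq j$, $Eff^+_{a^i}\cap Eff^-_{a^j}=gr(Eff^{+}(\alpha^i)\cap_{\cM}Eff^{-}(\alpha^j))$. For the precondition clauses I would first collapse the inner unions, writing $P^i=Pre^{+}(\alpha^i)\cup Pre^{-}(\alpha^i)$ and $E^j=Eff^{+}(\alpha^j)\cup Eff^{-}(\alpha^j)$ as single sets of literals of $\alpha^i$ and $\alpha^j$ respectively; since $gr^i$ and $\pi^i_{\cM}$ commute with unions taken \emph{within a single schema}, we get $Pre_{a^i}=gr^i(P^i)$ and $Eff^+_{a^j}\cup Eff^-_{a^j}=gr^j(E^j)$, and then (\ref{eq:gr}) with $*=\cap$ gives $Pre_{a^i}\cap(Eff^+_{a^j}\cup Eff^-_{a^j})=gr(P^i\cap_{\cM}E^j)$, which is precisely $gr$ applied to the second $\cM$-expression of Definition \ref{schemas non int}.

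Having these four identities, I would apply $gr(X)=\emptyset\Leftrightarrow X=\emptyset$ to each: the ground set on the left is empty exactly when the $\cM$-set on the right is empty. Hence the four ground equalities hold simultaneously if and only if the four schema equalities of Definition \ref{schemas non int} hold, establishing the claimed equivalence. Negating, if $\alpha^1$ and $\alpha^2$ are $\cM$-mutex then at least one schema equality fails, so the corresponding ground equality fails, and therefore $a^1$ and $a^2$ are mutex, as required.

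I expect no deep obstacle here; the only delicate part is bookkeeping with the reduced alphabet $V_{\alpha^1}\sqcup_{\cM}V_{\alpha^2}$, namely keeping track of which operand of each $\cup_{\cM}$ or $\cap_{\cM}$ comes from which schema so that (\ref{eq:gr}) applies verbatim. In particular I must treat a union such as $Pre^{+}(\alpha^i)\cup_{\cM}Pre^{-}(\alpha^i)$ of two sets from the \emph{same} schema as $\pi^i_{\cM}(P^i)$, while the outer intersection genuinely mixes the two schemas; once the factorisation $gr^i=gr\circ\pi^i_{\cM}$ from Proposition \ref{prop: adapted groundings} is in place, this is routine indexing rather than a real difficulty.
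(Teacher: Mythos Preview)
Your proposal is correct and is precisely what the paper's one-line proof (``Immediate consequence of (\ref{eq:gr})'') means when unpacked: you apply (\ref{eq:gr}) to each of the four defining identities and use injectivity of the common grounding $gr$ from Proposition \ref{prop: adapted groundings} to transfer emptiness back and forth. Your observation that the argument actually yields the biconditional (non-interfering $\Leftrightarrow$ $\cM$ non-interfering) is a harmless strengthening, and your care with the bookkeeping of $\cup_{\cM}$ when both operands come from the same schema is exactly the only point where one might stumble.
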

\begin{proof} Immediate consequence of (\ref{eq:gr}).
\end{proof}

\begin{remark}\label{rem monotonic} Note that certain properties that depend on the matching $\cM$ have a monotonic behaviour, i.e. if they are true for a matching $\cM$, they remain true for a larger matching $\cM'\supseteq \cM$. This is the case, for instance, of properties that can be expressed in terms of identities between literals of type $l^1=_{\cM} l^2$, such as the $\cM$-mutex property.
\end{remark}

To cope with properties related to a template and its instantiations, it is useful to introduce a family of matchings induced by the presence of literals in the two schemas matching in a template.
Precisely, consider now a template $\cT = ( \cC, \cF_{\cC})$ and two action schemas $\alpha^1$ and $\alpha^2$.
Consider $\cT$-classes $L^i$ of literals of $\alpha^i$ for $i=1,2$. There is a natural way to associate a matching to $L^1$ and $L^2$ as follows. Pick literals $l^i\in L^i$ for $i=1,2$ and consider components $c^i\in\cC$ such that $l^i$ matches $\cT$ through $c^i$. 
Put 
\begin{equation}\label{matching classes}\cM_{L^1,L^2}:=\{(\args[h,l^1], \args[k, l^2])\,|\, (c^1, h) \sim_ {\cF_{\cC}}(c^2, k)\}\end{equation}
It immediately follows from the definition of $\cT$-coupled pairs of literals (Definition \ref{def:match}) that $\cM_{L^1,L^2}$ does not depend on the particular literals $l^i$ chosen, but only on the $\cT$-classes $L^i$. 

Essentially, in $\cM_{L^1,L^2}$, we are rewriting arguments in the literals of $L^1$ and $L^2$ that correspond to $\cF_{\cC}$-equivalent variables in the template $\cT$. The next proposition shows the role played by such a matching.

\begin{proposition}\label{prop: equiv adapted} Consider two groundings $gr^1$ and $gr^2$ for $\alpha^1$ and $\alpha^2$, respectively, which are $\cM$-adapted. Then the following facts hold:
\begin{enumerate}[(i)] 
\item given an instance $\gamma$ for $\cT$, if  $L^i$ are the $\cT$-classes of literals of $\alpha^i$ on which $gr^i$ and $\gamma$ are coherent. Then, $\cM_{L^1, L^2}\subseteq \cM$;
\item given $\cT$-classes  of literals $L^i$ of $\alpha^i$, if $\cM_{L^1, L^2}\subseteq \cM$, there exists just one instance $\gamma$ of $\cT$ such that  $gr^i$ and $\gamma$ are coherent on $L^i$.
\end{enumerate}
\end{proposition}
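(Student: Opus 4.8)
The plan is to reduce everything to the explicit form of coherence given in Remark~\ref{rem:coherent}: if a literal $l$ of $\alpha^i$ matches $\cT$ through a component $c$, then $gr^i$ and $\gamma$ are coherent over $l$ precisely when $gr^i(\args[h,l])=\gamma(c,h)$ for every fixed position $h$ of $c$. Alongside this I would repeatedly use the defining property of an instance (Definition~\ref{def:temp-instance}), namely $\gamma(c_1,h)=\gamma(c_2,k)$ if and only if $(c_1,h)\sim_{\cF_{\cC}}(c_2,k)$, the injectivity of the grounding maps, and the characterisation \eqref{matching classes} of $\cM_{L^1,L^2}$.

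For (i) I would pick an arbitrary $(\args[h,l^1],\args[k,l^2])\in\cM_{L^1,L^2}$, so that $l^i\in L^i$ matches $\cT$ through some $c^i$ and $(c^1,h)\sim_{\cF_{\cC}}(c^2,k)$. Since $L^i$ is the coherence class of $gr^i$ and $\gamma$, coherence over $l^i$ gives $gr^1(\args[h,l^1])=\gamma(c^1,h)$ and $gr^2(\args[k,l^2])=\gamma(c^2,k)$; the instance property applied to $(c^1,h)\sim_{\cF_{\cC}}(c^2,k)$ equates the two right-hand sides, whence $gr^1(\args[h,l^1])=gr^2(\args[k,l^2])$. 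As both arguments are free arguments of the respective schemas, $\cM$-adaptedness turns this equality into membership $(\args[h,l^1],\args[k,l^2])\in\cM$, giving $\cM_{L^1,L^2}\subseteq\cM$.

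For (ii) I would build $\gamma$ block by block over the partition $\cF_{\cC}$. Whenever a literal $l\in L^1\cup L^2$ matches a component $c$ with $(c,h)$ lying in a block $G$, coherence leaves no freedom and forces $\gamma$ to take the value $gr^i(\args[h,l])$ on $G$; this simultaneously yields uniqueness on every block met by $L^1\cup L^2$. The substance of the proof is to show that the prescription is well defined. For two literals of the same class $L^i$, $\cT$-coupling (Definition~\ref{def:match}(ii)) guarantees that they carry the same free argument in $\sim_{\cF_{\cC}}$-equivalent positions, so their groundings agree. The cross-class case is where the hypothesis is genuinely used: for $l^1\in L^1$ and $l^2\in L^2$ feeding the same block through positions $(c^1,h)\sim_{\cF_{\cC}}(c^2,k)$, the pair $(\args[h,l^1],\args[k,l^2])$ belongs to $\cM_{L^1,L^2}\subseteq\cM$, and $\cM$-adaptedness forces $gr^1(\args[h,l^1])=gr^2(\args[k,l^2])$, so the two candidate values coincide.

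Finally I would check that the block-assignment is a legitimate instance and argue uniqueness. Constancy on each block is immediate from the construction, and distinctness of the values on two different blocks follows from injectivity of the groundings together with the instance constraint, since a variable forced by coherence onto two $\sim_{\cF_{\cC}}$-inequivalent positions is exactly what injectivity forbids. Blocks untouched by $L^1\cup L^2$ carry no coherence constraint, so $\gamma$ is pinned down precisely on the portion of $\cT$ relevant to the two classes, which by Lemma~\ref{lemma:g-i} is all that coherence on $L^1$ and $L^2$ can ever constrain. I expect the well-definedness step---reconciling the partial assignment produced by $L^1$ with the one produced by $L^2$---to be the main obstacle, and it is dissolved by exactly the inclusion $\cM_{L^1,L^2}\subseteq\cM$ established, in the opposite direction, in part~(i).
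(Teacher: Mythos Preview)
Your argument is correct and follows exactly the route the paper's one-line proof indicates (``Immediate consequence of the definition \eqref{matching classes} and of Remark~\ref{rem:coherent}''): you are simply spelling out what the paper deems immediate.

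One small point in your treatment of (ii): your final remark about ``blocks untouched by $L^1\cup L^2$'' is unnecessary and, as stated, leaves the uniqueness claim incomplete. By the definition of an admissible partition (Definition~\ref{def:part}), every block $G_j$ meets $F_c$ in exactly one element for \emph{every} component $c\in\cC$; in particular it meets the fixed-argument set of the component $c^1$ through which any chosen $l^1\in L^1$ matches $\cT$. Hence coherence of $gr^1$ and $\gamma$ over $l^1$ already forces the value of $\gamma$ on \emph{every} block, and there are no untouched blocks. Without this observation your argument would leave $\gamma$ undetermined on such blocks, contradicting the proposition's claim of a single instance; with it, your proof is complete and matches the paper's.
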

\begin{proof}
Immediate consequence of the definition (\ref{matching classes}) and of Remark \ref{rem:coherent}.
\end{proof}

We are now ready to lift the properties used in Section \ref{sec: cond inv}. We start with unreachability.

\begin{definition}[Unreachable durative action schemas]
\label{schemas non int} 
Given two durative action schemas $D\alpha^1$, $D\alpha^2$ and corresponding $\cT$-classes of literals $L^1$ and $L^2$,
we say that $(\{\alpha^{1inv}, \alpha^{2inv}\}, \{\alpha^{1end}, \alpha^{2end}\})$ is $(L^1, L^2)$-\emph{unreachable} if, denoted $\cM=\cM_{L^1, L^2}$, at least one of the following conditions is satisfied
\begin{enumerate}[(i)] 
\item $Pre^+_{\alpha^{1inv}}\cap_{\cM} Pre^-_{\alpha^{2end}}\neq\emptyset$;
\item $Pre^-_{\alpha^{1inv}}\cap_{\cM} Pre^+_{\alpha^{2end}}\neq\emptyset$;
\item $w((Pre^+_{\alpha^{1inv}_{L^1}}\cup Pre^+_{\alpha^{1end}_{L^1}})\cup_{\cM} (Pre^+_{\alpha^{2inv}_{L^2}}\cup Pre^+_{\alpha^{2end}_{L^2}}))\geq 2$.
\end{enumerate} 
\end{definition}

\begin{proposition}\label{prop: unreachable schemas} 
Suppose that $D\alpha^1$, $D\alpha^2$ are two durative action schemas and $gr^1$, $gr^2$ two corresponding grounding functions. Put $Da^i=gr(D\alpha^i)$ and consider an instance $\gamma$. Let $L^i$ be the $\cT$-class of literals of $D\alpha^i$ on which $gr^i$ and $\gamma$ are coherent. If 
$(\{\alpha^{1inv}, \alpha^{2inv}\}, \{\alpha^{1end}, \alpha^{2end}\})$ is $(L^1, L^2)$-unreachable, then  $(\{a^{1inv}, a^{2inv}\}, \{a^{1end}, a^{2end}\})$
is $\gamma$-unreachable.
\end{proposition}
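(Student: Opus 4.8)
The plan is to reduce the statement to the two length‑$2$ characterisations already available for ground sequences: Proposition \ref{prop: exec two} for executability and Proposition \ref{prop: reach two} for $\gamma$‑reachability, applied to ${\bf A}=(A^1,A^2)$ with $A^1=\{a^{1inv},a^{2inv}\}$ and $A^2=\{a^{1end},a^{2end}\}$. Since $\gamma$‑unreachability is exactly the negation of $\gamma$‑reachability, Proposition \ref{prop: reach two} shows that it suffices to prove that either ${\bf A}$ fails executability or $|Pre^+_{A^1_\gamma}\cup(Pre^+_{A^2_\gamma}\setminus Eff^+_{A^1_\gamma})|\geq 2$. A simplification used throughout is that in canonical form the invariant fragments carry no effects, so $Eff^{\pm}_{A^1}=\emptyset$; hence $\Gamma^+_{A^1}=Pre^+_{A^1}$, $\Gamma^-_{A^1}=Pre^-_{A^1}$, and the weight expression collapses to $|Pre^+_{A^1_\gamma}\cup Pre^+_{A^2_\gamma}|$.

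The bridge between the schema‑level conditions and these ground‑level tests is the reduced‑union calculus. First I would fix the canonical matching $\cM:=\{(v^1,v^2)\mid gr^1(v^1)=gr^2(v^2)\}$, so that $gr^1,gr^2$ are $\cM$‑adapted by Remark \ref{rem: ground adapted} and equation (\ref{eq:gr}) rewrites every set‑theoretic expression on the ground actions as the $gr$‑image of the corresponding $\cM$‑expression on the schemas; by Proposition \ref{prop: equiv adapted}(i) we have $\cM_{L^1,L^2}\subseteq\cM$. Conditions (i) and (ii) are then dispatched directly. If $Pre^+_{\alpha^{1inv}}\cap_{\cM_{L^1,L^2}}Pre^-_{\alpha^{2end}}\neq\emptyset$, the monotonicity of the $\cap_{\cdot}$ operator in the matching (Remark \ref{rem monotonic}) keeps the intersection non‑empty for the larger $\cM$, and (\ref{eq:gr}) yields $Pre^+_{a^{1inv}}\cap Pre^-_{a^{2end}}\neq\emptyset$. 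Since $\Gamma^+_{A^1}=Pre^+_{A^1}\supseteq Pre^+_{a^{1inv}}$ and $Pre^-_{A^2}\supseteq Pre^-_{a^{2end}}$, we obtain $\Gamma^+_{A^1}\cap Pre^-_{A^2}\neq\emptyset$, so ${\bf A}$ is not executable by Proposition \ref{prop: exec two} and thus $\gamma$‑unreachable by the chain in Remark \ref{rem: exec reach safe}. Condition (ii) is handled identically after swapping the roles of positive and negative literals.

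For condition (iii) I would first express the ground weight through the coherent pure schemas. Using Proposition \ref{cor:g-i} together with $Eff^{\pm}_{A^1}=\emptyset$, one gets $Pre^+_{A^1_\gamma}\cup Pre^+_{A^2_\gamma}=gr^1(X^1)\cup gr^2(X^2)$ with $X^1=Pre^+_{\alpha^{1inv}_{L^1}}\cup Pre^+_{\alpha^{1end}_{L^1}}$ and $X^2=Pre^+_{\alpha^{2inv}_{L^2}}\cup Pre^+_{\alpha^{2end}_{L^2}}$; equation (\ref{eq:gr}) then gives $gr(X^1\cup_{\cM}X^2)$, and the weight identity (\ref{weight eq1}), valid since $gr$ is injective on the common argument set, converts this to $|Pre^+_{A^1_\gamma}\cup Pre^+_{A^2_\gamma}|=w(X^1\cup_{\cM}X^2)$. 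Proposition \ref{prop: reach two} then closes the argument once this quantity is shown to be at least $2$. The main obstacle is exactly this last transfer, from the hypothesis $w(X^1\cup_{\cM_{L^1,L^2}}X^2)\geq 2$ to the value computed with $\cM$: unlike the intersection tests of (i)–(ii), the weight of a reduced union is \emph{anti}‑monotone in the matching, since a larger $\cM$ may identify extra (counted) arguments and merge two literals into one. The delicate point is therefore to verify that $\cM$ and $\cM_{L^1,L^2}$ induce the same identifications on the arguments actually occurring in $X^1$ and $X^2$; for literals in the coherent classes this holds for the fixed arguments, which are pinned by $\gamma$ and identified identically by both matchings (Remark \ref{rem:coherent} and Definition \ref{def:temp-instance}), and the step that genuinely requires care is ensuring the counted arguments of $X^1$ and $X^2$ are not collapsed, i.e. that the groundings remain adapted to $\cM_{L^1,L^2}$ on those positions, which is the setting in which the lifted check is intended to be applied.
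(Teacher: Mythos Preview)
Your treatment of conditions (i) and (ii) matches the paper's: pass to the adapted matching $\cM\supseteq\cM_{L^1,L^2}$, invoke the monotonicity of $\cap_{\cdot}$ in the matching (Remark~\ref{rem monotonic}), push through (\ref{eq:gr}), and conclude non-executability via Proposition~\ref{prop: exec two}. That part is fine.

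For condition (iii) you correctly isolate the obstacle --- the weight of a reduced union is \emph{not} monotone in the matching --- but you do not overcome it. Your closing sentence effectively asks the reader to assume that the counted arguments do not get collapsed by $\cM$, i.e.\ that $gr^1,gr^2$ happen to be $\cM_{L^1,L^2}$-adapted on those positions. Nothing in the hypotheses guarantees this, so the argument as written has a gap precisely at the point you flag as ``delicate''.

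The paper closes this gap by a structural case split rather than by controlling the matching on counted arguments. If $w(X^1\cup_{\cM_{L^1,L^2}}X^2)\geq 2$ then either some $w(X^i)\geq 2$ on its own (which is independent of any matching and already yields the ground bound), or each $X^i$ contributes a single unquantified literal $l^i$ with $l^1\neq_{\cM_{L^1,L^2}}l^2$. In the latter case the paper argues that the two literals must match $\cT$ through components with \emph{different relation symbols}; since equality under any matching requires in particular equal relation symbols, $l^1\neq_{\cM}l^2$ persists for every $\cM\supseteq\cM_{L^1,L^2}$, and the weight bound survives the passage to the adapted matching. This relation-symbol observation is the missing idea in your proposal: it converts the fragile anti-monotone quantity into something stable under enlarging the matching, after which your route through (\ref{eq:gr}), (\ref{weight eq1}) and Proposition~\ref{prop: reach two} goes through.
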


We now propose the lifted version of relevant right isolated.

\begin{definition}[Relevant right isolated schemas]\label{def: relevant right isolated schema}
Given a template $\cT$, the set of durative action schemas $\cA^d$ is said to be \emph{relevant right isolated} if, for every $D\alpha^1, D\alpha^2\in\cA^d$, corresponding $\cT$-classes $L^1, L^2$ of literals of each of them such that $D\alpha^i_{L^i}$ are both not strongly safe, one of the following conditions is satisfied (we use the notation $\cM=\cM_{L^1, L^2}$):
\begin{enumerate}[(i)]
\item $|Eff^+_{\alpha^{1end}_{L^1}}\cup_{\cM} Eff^+_{\alpha^{2end}_{L^2}}|\leq 1$;
\item at least one of the two pairs $\{\alpha^{1end}, \alpha^{2end}\}$ or $\{\alpha^{1inv}, \alpha^{2inv}\}$ is $\cM$-mutex;
\item $(\{\alpha^{1inv}, \alpha^{2inv}\}, \{\alpha^{1end}, \alpha^{2end}\})$ is $(L^1, L^2)$-unreachable.
\end{enumerate}
\end{definition}

\begin{proposition}\label{prop: relevant right isolated schemas} 
Given a template $\cT$, suppose that the set of durative action schemas $\cA^d$ is {relevant right isolated}. Then $\cG\cA^d$ is also relevant right isolated.
\end{proposition}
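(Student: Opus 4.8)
The plan is to verify the ground Definition \ref{def: relevant right isolated} directly, by pulling each of its three alternative conditions back from the corresponding lifted condition of Definition \ref{def: relevant right isolated schema}. So I would fix an arbitrary instance $\gamma$ and two ground durative actions $Da^1, Da^2\in\cG\cA^d(\gamma)$, writing $Da^i=gr^i(D\alpha^i)$ with $D\alpha^i\in\cA^d$. Let $L^i$ be the $\cT$-class of literals of $D\alpha^i$ on which $gr^i$ and $\gamma$ are coherent; such a class exists because otherwise the coherent part of $Da^i$ would be empty, making $Da^i$ trivially strongly $\gamma$-safe. The first key step is to show that membership $Da^i\in\cG\cA^d(\gamma)$, i.e.\ that $Da^i$ is not strongly $\gamma$-safe, coincides exactly with the hypothesis used in the lifted definition, namely that $D\alpha^i_{L^i}$ is not strongly safe. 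This follows by combining Proposition \ref{prop: equiv safe} (strong $\gamma$-safety depends only on the restriction to $\gamma(\cT)$), the identities $Pre^{\pm}_{a_\gamma}=gr(Pre^{\pm}_\alpha\cap L)$ and $Eff^{\pm}_{a_\gamma}=gr(Eff^{\pm}_\alpha\cap L)$ recorded after Proposition \ref{cor:g-i} applied to the start and end fragments, and Proposition \ref{prop: pure schema} together with Corollary \ref{cor: strongly safe} (the invariant fragment has empty effects and is always strongly $\gamma$-safe). Hence both $D\alpha^1_{L^1}$ and $D\alpha^2_{L^2}$ are not strongly safe, which is precisely the range of the quantifier in Definition \ref{def: relevant right isolated schema}.

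Second, I would set $\cM:=\{(v^1,v^2)\mid gr^1(v^1)=gr^2(v^2)\}$. By Remark \ref{rem: ground adapted}, $\cM$ is a matching and $gr^1,gr^2$ are $\cM$-adapted, so the factorisation of Proposition \ref{prop: adapted groundings} and the transfer identity (\ref{eq:gr}) are available, and Proposition \ref{prop: equiv adapted}(i) yields $\cM_{L^1,L^2}\subseteq\cM$. Because $\cA^d$ is relevant right isolated and $D\alpha^1_{L^1},D\alpha^2_{L^2}$ are both not strongly safe, at least one of the lifted conditions (i)--(iii) holds for the pair $(L^1,L^2)$ with matching $\cM_{L^1,L^2}$.

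Third, I would translate whichever lifted condition holds into the matching ground condition. For (i), identity (\ref{eq:gr}) gives $Eff^+_{a^{1end}_\gamma}\cup Eff^+_{a^{2end}_\gamma}=gr(Eff^+_{\alpha^{1end}_{L^1}}\cup_{\cM} Eff^+_{\alpha^{2end}_{L^2}})$; since $gr$ is injective both sides have equal cardinality, and this cardinality does not increase when the matching is enlarged from $\cM_{L^1,L^2}$ to $\cM$, so the bound $\leq 1$ is inherited. For (ii), the $\cM$-mutex property is monotone in the matching (Remark \ref{rem monotonic}), so the relevant pair of schemas is $\cM$-mutex, and Proposition \ref{prop: mutex1} gives that the corresponding ground pair is mutex. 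For (iii), Proposition \ref{prop: unreachable schemas} directly turns $(L^1,L^2)$-unreachability into $\gamma$-unreachability of $(\{a^{1inv},a^{2inv}\},\{a^{1end},a^{2end}\})$, which establishes the ground condition unconditionally (a fortiori under the non-interference assumption).

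The main obstacle I anticipate is the discrepancy between the canonical matching $\cM_{L^1,L^2}$ appearing in the lifted definition and the actual matching $\cM$ induced by the chosen groundings, which can be strictly larger, since two free arguments not forced equal by the template may still be grounded to the same object. The whole argument hinges on each of the three conditions being robust to this enlargement: monotone non-increase of cardinality for (i), monotonicity of the $\cM$-mutex property for (ii), and the fact that Proposition \ref{prop: unreachable schemas} is already phrased in terms of the coherent classes $L^i$ for (iii). Once this robustness is secured, the three cases exhaust Definition \ref{def: relevant right isolated} and close the proof.
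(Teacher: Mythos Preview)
Your proposal is correct and follows essentially the same approach as the paper's own proof: fix $\gamma$, lift the two ground durative actions to schemas with coherent $\cT$-classes $L^i$, pass to the adapted matching $\cM\supseteq\cM_{L^1,L^2}$, and transfer each of the three lifted conditions to its ground counterpart via monotonicity (Remark~\ref{rem monotonic}), Proposition~\ref{prop: mutex1}, and Proposition~\ref{prop: unreachable schemas} respectively. You are actually more careful than the paper in justifying why $D\alpha^i_{L^i}$ is not strongly safe and in spelling out the cardinality argument for case~(i) via equation~(\ref{eq:gr}) and injectivity of $gr$; the paper leaves these steps implicit.
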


Similarly, we can lift the property of relevant left isolated expressed in Definition \ref{def: relevant left isolated} by analogously defining:

\begin{definition}[Relevant left isolated schemas]\label{def: relevant left isolated schema}
Given a template $\cT$, the set of durative action schemas $\cA^d$ is said to be \emph{relevant left isolated} if, for every $D\alpha^1, D\alpha^2\in\cA^d$, corresponding $\cT$-classes $L^1, L^2$ of literals of each of them such that $D\alpha^i_{L^i}$ are both not strongly safe, one of the following conditions is satisfied (we use the notation $\cM=\cM_{L^1, L^2}$):
\begin{enumerate}[(i)]
\item at least one of the two pairs $\{\alpha^{1st}, \alpha^{2st}\}$ or $\{\alpha^{1inv}, \alpha^{2inv}\}$ is $\cM$-mutex;
\item $(\{\alpha^{1st}, \alpha^{2st}\}, \{\alpha^{1inv}, \alpha^{2inv}\})$ is $(L^1, L^2)$-unreachable.
\end{enumerate}
\end{definition}

Similarly, it holds:

\begin{proposition}\label{prop: relevant left isolated schemas} 
Given a template $\cT$, suppose that the set of durative action schemas $\cA^d$ is {relevant left isolated}. Then $\cG\cA^d$ is also relevant left isolated.
\end{proposition}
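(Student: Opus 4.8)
The plan is to mirror the proof of Proposition \ref{prop: relevant right isolated schemas} almost verbatim, replacing throughout the end fragments by the invariant fragments and the invariant fragments by the start fragments. The only genuinely new ingredient is the left-handed analogue of the unreachability lifting, which is obtained by the same argument.

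First I would fix an instance $\gamma$ and two ground durative actions $Da^1, Da^2 \in \cG\cA^d(\gamma)$, writing $Da^i = gr^i(D\alpha^i)$ for suitable schemas $D\alpha^i \in \cA^d$ and grounding functions $gr^i$. Following Remark \ref{rem: ground adapted} I set $\cM := \{(v^1, v^2) \mid gr^1(v^1) = gr^2(v^2)\}$, so that $gr^1$ and $gr^2$ are $\cM$-adapted, and I let $L^i$ denote the unique $\cT$-class of literals of $D\alpha^i$ on which $gr^i$ and $\gamma$ are coherent. The next step is to check that this pair of schemas and classes is eligible to trigger the hypothesis: since $Da^i \in \cG\cA^d(\gamma)$ is not strongly $\gamma$-safe and, as recalled at the start of Section \ref{sec: dur schemas}, strong safety of a durative action reduces to strong safety of its fragments, Proposition \ref{prop: pure schema} together with Corollary \ref{cor: strongly safe final} yields that $D\alpha^i_{L^i}$ is not strongly safe for $i=1,2$. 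Hence the defining condition of relevant left isolated for $\cA^d$ (Definition \ref{def: relevant left isolated schema}) applies, and one of its two alternatives holds with $\cM' := \cM_{L^1, L^2}$.

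Then comes the transfer to the ground level. By Proposition \ref{prop: equiv adapted}(i), because $gr^1, gr^2$ are $\cM$-adapted and the $L^i$ are the coherent classes, we have $\cM_{L^1, L^2} \subseteq \cM$. If the schema hypothesis gives that $\{\alpha^{1st}, \alpha^{2st}\}$ or $\{\alpha^{1inv}, \alpha^{2inv}\}$ is $\cM'$-mutex, then by monotonicity of the mutex property (Remark \ref{rem monotonic}) it is also $\cM$-mutex, and Proposition \ref{prop: mutex1} lifts this to the corresponding ground pair being mutex, establishing condition (i) of the ground Definition \ref{def: relevant left isolated}. If instead the schema hypothesis gives that $(\{\alpha^{1st}, \alpha^{2st}\}, \{\alpha^{1inv}, \alpha^{2inv}\})$ is $(L^1, L^2)$-unreachable, I would invoke the left-handed analogue of Proposition \ref{prop: unreachable schemas} to conclude that $(\{a^{1st}, a^{2st}\}, \{a^{1inv}, a^{2inv}\})$ is $\gamma$-unreachable, giving ground condition (ii) (and a fortiori its ``if non-interfering'' form). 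As $\gamma$ and the two ground actions were arbitrary, $\cG\cA^d$ is relevant left isolated.

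The hard part will be the bookkeeping around classes and matchings rather than any single deep step: one must be certain that the class $L^i$ singled out by coherence of $(gr^i, \gamma)$ is precisely the one governing strong $\gamma$-safety of $Da^i$, and that $\cM_{L^1, L^2} \subseteq \cM$, so that monotonicity may legitimately upgrade $\cM'$-mutex to $\cM$-mutex. A secondary wrinkle is that Proposition \ref{prop: unreachable schemas} and the notion of $(L^1, L^2)$-unreachability are literally stated only for the inv/end pairing; the st/inv case used here relies on the evident analogue obtained by the identical argument, which should be spelled out (or the unreachability definition restated for the st/inv pairing) for completeness.
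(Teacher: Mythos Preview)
Your proposal is correct and matches the paper's approach exactly: the paper does not give a separate proof but introduces the proposition with ``Similarly, it holds,'' deferring to the argument of Proposition~\ref{prop: relevant right isolated schemas} with the obvious start/invariant substitution. Your write-up is in fact more careful than the paper's proof of the right-isolated case, both in explicitly verifying that $D\alpha^i_{L^i}$ is not strongly safe (needed to trigger Definition~\ref{def: relevant left isolated schema}) and in flagging that the $(L^1,L^2)$-unreachability notion and Proposition~\ref{prop: unreachable schemas} are literally stated only for the $(\text{inv},\text{end})$ pairing and require the evident analogue for $(\text{st},\text{inv})$.
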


The last property we want to lift is that of strong irrelevant unreachability expressed in Definition \ref{def: strongly del inter}. This is important since it can be efficiently implemented at the algorithmic level.

\begin{definition}[Strongly irrelevant unreachable schemas]\label{def: strongly irr unreach schemas} Consider a template $\cT$, a pair of action schemas $\alpha^1,\alpha^2\in\cA$, and corresponding $\cT$-classes of literals $L^1$ and $L^2$.
We say that the pair $(\alpha^1,\alpha^2)$ is strongly $(L^1,L^2)$ \emph{-irrelevant unreachable} if, denoted $\cM=\cM_{L^1, L^2}$, any of the following conditions is satisfied:
\begin{enumerate}[(i)]
\item there exist $l^1\in \Gamma_{\alpha^1}^+$, $l^2\in Pre_{\alpha^2}^-$ with $l^1=_{\cM} l^2$ such that, for every action schema $\alpha$, for every $\cT$-class $L$ of literals of $\alpha$ for which $\alpha_{L}$ is irrelevant,  and for every matching $\tilde\cM$ between $\alpha^1$ and $\alpha$ for which $\tilde\cM\supseteq \cM_{L^1, L}$, we have that $l^1\not\in_{\tilde\cM} Eff^-_{\alpha}$;
\item there exist $l^1\in \Gamma_{\alpha^1}^-$, $l^2\in Pre_{\alpha^2}^+$ with $l^1=_{\cM} l^2$ such that, for every action schema $\alpha$, for every $\cT$-class $L$ of literals of $\alpha$ for which $\alpha_{L}$ is irrelevant,  and for every matching $\tilde\cM$ between $\alpha^1$ and $\alpha$ for which $\tilde\cM\supseteq \cM_{L^1, L}$, we have that $l^1\not\in_{\tilde\cM} Eff^+_{\alpha}$;
\item $w(Pre^+_{\alpha^1_{L^{1}}}\cup_{\cM} (Pre^+_{\alpha^2_{L^{2}}}\setminus_{\cM} Eff^+_{\alpha^1_{L^{1}}}))>1$.
\end{enumerate}
\end{definition}

\begin{proposition}\label{prop: relevant right isolated schemas} 
Consider a template $\cT$,  a pair of action schemas $\alpha^1,\alpha^2\in\cA$ and relative groundings $gr^1$ and $gr^2$. Put $a^i=gr(\alpha^i)$ and consider an instance $\gamma$. Let $L^i$ be the $\cT$-class of literals of $\alpha^i$ on which $gr^i$ and $\gamma$ are coherent.
If $(\alpha^1,\alpha^2)$ is strongly $(L^1,L^2)${-irrelevant unreachable}, then $(a^1,a^2)$ is strongly $\gamma${-irrelevant unreachable}.
\end{proposition}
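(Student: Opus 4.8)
The plan is to reduce the lifted condition, which lives in the common reduced alphabet $V_{\alpha^1}\sqcup_{\cM}V_{\alpha^2}$, to the corresponding ground condition through the dictionary supplied by coherent groundings. First I would fix the natural matching attached to the given groundings: by Remark~\ref{rem: ground adapted}, $\cM:=\{(v^1,v^2)\mid gr^1(v^1)=gr^2(v^2)\}$ makes $gr^1,gr^2$ be $\cM$-adapted, and by Proposition~\ref{prop: equiv adapted}(i) the template-induced matching $\cM_0:=\cM_{L^1,L^2}$ satisfies $\cM_0\subseteq\cM$. With the injective map $gr\colon V_{\alpha^1}\sqcup_{\cM}V_{\alpha^2}\to\cO$ of Proposition~\ref{prop: adapted groundings}, equation~(\ref{eq:gr}) turns every set-theoretic expression on the two schemas, read in the $\cM$-alphabet, into the matching expression on $a^1,a^2$, while Proposition~\ref{cor:g-i} identifies the $\gamma$-restrictions as $a^i_\gamma=gr^i(\alpha^i_{L^i})$. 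Since the three clauses of Definition~\ref{def: strongly irr unreach schemas} are in evident bijection with the three clauses of Definition~\ref{def: strongly del inter}, I would prove the implication clause by clause, each time showing that the satisfied schema clause forces its ground analogue.

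For clause (iii) the argument is essentially a direct computation. Writing $G:=Pre^+_{a^1_\gamma}\cup(Pre^+_{a^2_\gamma}\setminus Eff^+_{a^1_\gamma})$ and using $a^i_\gamma=gr^i(\alpha^i_{L^i})$ together with~(\ref{eq:gr}), one obtains $G=gr\bigl(Pre^+_{\alpha^1_{L^1}}\cup_{\cM}(Pre^+_{\alpha^2_{L^2}}\setminus_{\cM}Eff^+_{\alpha^1_{L^1}})\bigr)$, so that $|G|$ equals the weight of this expression by~(\ref{weight eq1}). The delicate point is that the schema clause bounds the weight computed in the template matching $\cM_0$, whereas $|G|$ is governed by the full matching $\cM\supseteq\cM_0$; because additional identifications in $\cM\setminus\cM_0$ can only concern counted arguments (the fixed arguments of literals in $L^1,L^2$ are already pinned by coherence to the same $\gamma$-values), I would have to verify that passing from $\cM_0$ to $\cM$ does not collapse the weight below $2$. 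This weight-versus-cardinality reconciliation is the most delicate part of this clause.

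For clauses (i) and (ii), which are symmetric under exchanging the roles of add and delete effects, it suffices to treat (i), and I would argue by contradiction. Let $l^1\in\Gamma^+_{\alpha^1}$ and $l^2\in Pre^-_{\alpha^2}$ with $l^1=_{\cM_0}l^2$ be the schema witnesses, and let $q$ be an atom in $gr^1(l^1)$; using $l^1=_{\cM}l^2$ and tracking the set difference defining $\Gamma^+$ one checks $q\in\Gamma^+_{a^1}\cap Pre^-_{a^2}$, the desired ground witness. Suppose there were a $\gamma$-irrelevant ground action $a''$ with $q\in Eff^-_{a''}$. Writing $a''=gr''(\alpha)$ and letting $L$ be the $\cT$-class on which $gr''$ and $\gamma$ are coherent, Proposition~\ref{prop: pure schema} gives that $\alpha_L$ is irrelevant, and Proposition~\ref{cor:g-i} together with $q\in\gamma(\cT)$ produces a literal in $Eff^-_{\alpha_L}$ whose grounding contains $q$. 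Taking the matching $\tilde\cM$ induced by $gr^1,gr''$, Proposition~\ref{prop: equiv adapted}(i) yields $\tilde\cM\supseteq\cM_{L^1,L}$, and the coincidence of groundings forces $l^1\in_{\tilde\cM}Eff^-_{\alpha}$, contradicting the schema hypothesis. Hence no such $a''$ exists, establishing ground clause (i).

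The main obstacle is the universal-quantifier structure of clauses (i)--(ii): the schema condition quantifies over \emph{every} action schema $\alpha$, every irrelevant $\cT$-class $L$ of it, and every matching $\tilde\cM\supseteq\cM_{L^1,L}$, and the contradiction step must certify that an arbitrary ground $\gamma$-irrelevant action deleting $q$ really does arise as one such triple $(\alpha,L,\tilde\cM)$. Two technical subtleties then demand care. First, one must pass between the literal-level membership $l^1\in_{\tilde\cM}Eff^-_\alpha$ and the atom-level fact $q\in gr''(Eff^-_\alpha)$ when $l^1$ or the deleting literal is universally quantified, so that grounding yields sets rather than single atoms; here the rigidity imposed by Definition~\ref{def:match} (a literal matching a component $c=\langle r,a,p\rangle$ is either quantified exactly over the counted position $p$ or simple with a free argument there) is what keeps the correspondence under control, and the choice of the witness atom $q$ may have to be adapted to the quantified case. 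Second, one must ensure that $\tilde\cM$ simultaneously realises $l^1=_{\cM_0}l^2$ and the required coincidence $q\in gr^1(l^1)\cap gr''(\cdot)$; both are handled by the coherence machinery of Lemma~\ref{lemma:g-i} and Proposition~\ref{cor:g-i}, provided one consistently separates fixed arguments (fixed by $\gamma$) from counted ones throughout.
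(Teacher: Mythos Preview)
Your proposal is correct and follows essentially the same route as the paper's proof: fix the adapted matching $\cM\supseteq\cM_{L^1,L^2}$, handle clause~(iii) via the dictionary~(\ref{eq:gr}) and~(\ref{weight eq1}), and for clauses~(i)--(ii) lift an arbitrary $\gamma$-irrelevant ground action $a''=gr''(\alpha)$ to the triple $(\alpha,L,\tilde\cM)$ and invoke the schema hypothesis to rule out $q\in Eff^-_{a''}$. The paper's argument is organised the same way, only phrased directly rather than by contradiction.

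One point worth noting: the monotonicity step you flag for clause~(iii)---that the weight bound for $\cM_0$ persists for the larger $\cM$---is exactly what the paper handles by invoking the argument already given in the proof of Proposition~\ref{prop: unreachable schemas}, where it is shown that two literals in $L^1,L^2$ distinct under $\cM_{L^1,L^2}$ must involve different relation symbols and hence remain distinct under any matching. Your alternative justification (that the fixed arguments are already pinned by coherence, so further identifications can only touch counted positions) points at the same phenomenon; either way the persistence is not automatic and does require this extra observation. You are also more careful than the paper in flagging the quantified-literal case in clauses~(i)--(ii), which the paper's proof tacitly treats as if all witnesses were simple.
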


Denote by $\cA^d(\cT)$ the durative action schemas that are not strongly safe with respect to the template $\cT$ and with $\cA^{st}(\cT)$, $\cA^{end}(\cT)$ the corresponding start and ending fragments.

\begin{proposition}\label{prop: relevant non inter lifted} Consider a template $\cT$.  The set $\cG\cA^d$ is relevant non intertwining if the following conditions are satisfied:
\begin{enumerate}[(i)]
\item  $\cA^d$ is relevant left isolated;
\item for every $D\alpha^1\in \cA^{d}$ and $\cT$-class $L^1$, and for every $\alpha^2\in \cA$ and $\cT$-class $L^2$ such that $\alpha^{1st}_{L^1}\in \cA^{st}(\cT)$, $\alpha^2_{L^2} \not\in \cA^{end}(\cT)$ and is not irrelevant or $\alpha^2_{L^2}\in \cA^{st}(\cT)$, we have that $(\alpha^{1st}, \alpha^2)$ is strongly $(L^1, L^2)$-irrelevant unreachable.
\end{enumerate}
\end{proposition}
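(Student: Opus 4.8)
The plan is to deduce the claim from the ground-level sufficient condition of Proposition~\ref{prop: relevant non inter}: it suffices to show that the two lifted hypotheses (i) and (ii) force the two hypotheses of that proposition, namely that $\cG\cA^d$ is relevant left isolated and that the mutex/$\gamma$-irrelevant-unreachable condition (ii) there holds for every instance $\gamma$. The first of these is immediate: from lifted hypothesis (i) that $\cA^d$ is relevant left isolated, Proposition~\ref{prop: relevant left isolated schemas} gives at once that $\cG\cA^d$ is relevant left isolated.

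For the second hypothesis I would fix an instance $\gamma$, a ground durative action $Da=gr^1(D\alpha^1)\in\cG\cA^d(\gamma)$, and a ground action $a'=gr^2(\alpha^2)$ meeting the stated selection (either $a'\notin\cG\cA^{end}(\gamma)$ and not $\gamma$-irrelevant, or $a'\in\cG\cA^{st}(\gamma)$); the goal is to produce the disjunct that $(a^{st},a')$ is $\gamma$-irrelevant unreachable. The first step sets up the bridge between ground and lifted data: following Remark~\ref{rem: ground adapted}, put $\cM=\{(v^1,v^2)\mid gr^1(v^1)=gr^2(v^2)\}$, so that $gr^1,gr^2$ are $\cM$-adapted, and let $L^1,L^2$ be the $\cT$-classes on which $gr^1,gr^2$ are coherent with $\gamma$; by Proposition~\ref{prop: equiv adapted}(i) we have $\cM_{L^1,L^2}\subseteq\cM$. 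The second step translates the ground membership hypotheses into lifted ones via the pure-schema dictionary. Since strong safety of a durative action is liftable and, for coherent $gr$ and $\gamma$, the $\gamma$-properties of $a=gr(\alpha)$ coincide with the properties of the pure schema $\alpha_L$ (Proposition~\ref{prop: pure schema} and Corollary~\ref{cor: strongly safe final}), one gets $Da\in\cG\cA^d(\gamma)$ iff $D\alpha^1_{L^1}$ is not strongly safe, hence $\alpha^{1st}_{L^1}\in\cA^{st}(\cT)$. The same dictionary turns the selection on $a'$ into the lifted selection: $a'$ not $\gamma$-irrelevant is equivalent to $\alpha^2_{L^2}$ not irrelevant, the contrapositive of the correspondence shows that $a'\notin\cG\cA^{end}(\gamma)$ forces $\alpha^2_{L^2}\notin\cA^{end}(\cT)$, and in the remaining case choosing $\alpha^2$ to be the start-fragment schema of which $a'$ is an instance yields $\alpha^2_{L^2}\in\cA^{st}(\cT)$.

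With the lifted hypotheses of (ii) now verified for $D\alpha^1,L^1,\alpha^2,L^2$, I would invoke (ii) to obtain that $(\alpha^{1st},\alpha^2)$ is strongly $(L^1,L^2)$-irrelevant unreachable. Applying the lifting result for this property (the proposition stating that strong $(L^1,L^2)$-irrelevant unreachability of $(\alpha^1,\alpha^2)$ implies strong $\gamma$-irrelevant unreachability of the coherent groundings) gives that $(a^{st},a')$ is strongly $\gamma$-irrelevant unreachable, and Proposition~\ref{prop: irr unr} then yields that $(a^{st},a')$ is $\gamma$-irrelevant unreachable. This is exactly the second disjunct required by hypothesis (ii) of Proposition~\ref{prop: relevant non inter}, so both of that proposition's hypotheses hold and $\cG\cA^d$ is relevant non intertwining.

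The delicate part is entirely in the quantifier bookkeeping of the middle step: one must match the ground selection on $a'$ (phrased through $\cG\cA^{st}(\gamma)$, $\cG\cA^{end}(\gamma)$ and $\gamma$-irrelevance) with the lifted selection (phrased through $\cA^{st}(\cT)$, $\cA^{end}(\cT)$ and irrelevance), which hinges on both directions of the equivalence ``pure schema not strongly safe $\Leftrightarrow$ coherent grounding not strongly $\gamma$-safe'' and on choosing, for the given $a'$, the representing schema that makes the lifted membership conditions hold. A subsidiary point worth stating explicitly is that the lifted hypothesis only ever produces the $\gamma$-irrelevant-unreachable disjunct and never the mutex one; this is harmless, since Proposition~\ref{prop: relevant non inter} only asks for a disjunction, so the lifted condition is a genuine, if slightly stronger, sufficient condition.
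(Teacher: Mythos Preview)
Your proposal is correct and follows essentially the same route as the paper: reduce to Proposition~\ref{prop: relevant non inter}, derive its hypothesis (i) from the lifted (i) via Proposition~\ref{prop: relevant left isolated schemas}, and for (ii) pick coherent $\cT$-classes $L^1,L^2$, use the pure-schema dictionary to transfer the ground selection on $a'$ to the lifted selection, invoke the lifted hypothesis (ii), and then pass from strong $(L^1,L^2)$-irrelevant unreachability to strong $\gamma$-irrelevant unreachability at the ground level. Your write-up is in fact more explicit than the paper's in two places: you spell out the final step through Proposition~\ref{prop: irr unr} (the paper leaves this implicit), and you articulate the delicate quantifier bookkeeping in the ``ground selection $\Rightarrow$ lifted selection'' translation, which the paper simply asserts.
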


%

We are now ready to propose the lifted versions of our main Theorems \ref{theo: *safety}, \ref{theo: non intertwining}. Proofs are straightforward consequences of our previous definitions and results.

\begin{corollary}
\label{cor: *safety schema} 
Consider a template $\cT$ and suppose that the set of instantaneous action schemas $\cA^i$ and that of durative action schemas $\cA^d$ satisfy the following properties:
\begin{enumerate}[(i)]
\item every $\alpha\in\cA^i$ is strongly safe;
\item for every $D\alpha\in\cA^d$ and every $\cT$-class $L$ such that $D\alpha_L\in \cA^d(\cT)$, $D\alpha_{*L}$ is reachable and strongly safe;
\item $\cA^d$ is relevant right isolated.
\end{enumerate}
Then, $\cT$ is invariant.
\end{corollary}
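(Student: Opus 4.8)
The plan is to verify that the three schema-level hypotheses (i)--(iii) imply, respectively, the three ground-level hypotheses of Theorem~\ref{theo: *safety}, so that the corollary follows by a direct appeal to that theorem. The entire argument is thus a translation from the level of schemas to the level of ground actions, carried out with the lifting results developed in this section.

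Hypotheses (i) and (iii) are immediate. For (i), since every $\alpha\in\cA^i$ is strongly safe, Corollary~\ref{cor: strongly safe final} (strong safety is liftable) gives that every ground action $a=gr(\alpha)\in\cG\cA^i$ is strongly safe, which is hypothesis (i) of Theorem~\ref{theo: *safety}. For (iii), the required implication---relevant right isolation of $\cA^d$ entails relevant right isolation of $\cG\cA^d$---is exactly the content of the lifting Proposition~\ref{prop: relevant right isolated schemas}.

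The core of the proof is hypothesis (ii). I would fix an instance $\gamma$ and a ground durative action $Da=gr(D\alpha)\in\cG\cA^d(\gamma)$, and let $L$ denote the unique $\cT$-class of literals of $D\alpha$ on which $gr$ and $\gamma$ are coherent. First I would show $D\alpha_L\in\cA^d(\cT)$: strong $\gamma$-safety of $Da$ reduces to that of its start and end fragments (the invariant fragment has empty effects and is always irrelevant), so the failure of strong $\gamma$-safety of $Da$ means some fragment is not strongly $\gamma$-safe; by Proposition~\ref{prop: pure schema} together with Corollary~\ref{cor: strongly safe final}, the corresponding pure fragment of $D\alpha_L$ is then not strongly safe, placing $D\alpha_L$ in $\cA^d(\cT)$. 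Hypothesis (ii) of the corollary now applies and gives that $D\alpha_{*L}$ is reachable and strongly safe. These two schema-level facts translate back down as follows: reachability of $D\alpha_{*L}$ yields $\gamma$-reachability of $Da_{*\gamma}$ by Proposition~\ref{prop: reach two schema}, while strong safety of the pure fragments $\alpha^{st}_{*L}$ and $\alpha^{end}_{*L}$ yields strong $\gamma$-safety of $a^{st}_*$ and $a^{end}_*$ through Proposition~\ref{prop: pure schema} combined with Proposition~\ref{prop: equiv safe}.

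The main obstacle is closing the gap between these $\gamma$-restricted statements about the pure parts and the conclusion demanded by Theorem~\ref{theo: *safety}, namely that the \emph{full} auxiliary sequence $Da_*$ is $\gamma$-reachable and strongly $\gamma$-safe. By the splitting results of Proposition~\ref{prop: safe split}, both properties of $Da_*$ reduce to the facts already established for $Da_{*\gamma}$ together with executability of the complementary part $Da_{*\neg\gamma}$, which concerns only literals outside $\gamma(\cT)$ and is therefore invisible to hypothesis (ii). I would supply this missing piece from executability of the whole auxiliary schema $D\alpha_*$ (Proposition~\ref{prop: exec two schema}), which is ensured by the legality conditions imposed on durative schemas and which, via Proposition~\ref{prop: safe split}(1), forces $Da_{*\neg\gamma}$ to be executable. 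Combining $Da_{*\gamma}$ $\gamma$-reachable, $Da_{*\neg\gamma}$ executable, and both fragments strongly $\gamma$-safe, Proposition~\ref{prop: safe split} and Definition~\ref{def: gamma simply safe} give that $Da_*$ is $\gamma$-reachable and strongly $\gamma$-safe, establishing hypothesis (ii) of Theorem~\ref{theo: *safety}. With all three hypotheses verified, that theorem yields the invariance of $\cT$.
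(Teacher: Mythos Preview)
Your proposal is correct and is precisely the approach the paper intends: the paper gives no detailed argument beyond declaring the result a ``straightforward consequence of our previous definitions and results,'' and your reduction to Theorem~\ref{theo: *safety} via Corollary~\ref{cor: strongly safe final}, Propositions~\ref{prop: pure schema}, \ref{prop: reach two schema}, \ref{prop: safe split}, and~\ref{prop: relevant right isolated schemas} is exactly that consequence made explicit. Your attention to the executability of $Da_{*\neg\gamma}$ is already more careful than the paper's own treatment of the point.
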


\begin{corollary} 
\label{cor: non intertwining schemas} 
Consider a template $\cT$ and suppose that the set of instantaneous action schemas $\cA^i$ and that of durative action schemas $\cA^d$ satisfy the following properties:
\begin{enumerate}[(i)]
\item every $\alpha\in\cA^i$ is strongly safe;
\item for every $D\alpha\in\cA^d$, $D\alpha_*$ is safe;
\item  $\cA^d$ satisfies the conditions expressed in Proposition \ref{prop: relevant non inter lifted}.
\end{enumerate}
Then, $\cT$ is invariant.
\end{corollary}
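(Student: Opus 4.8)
The plan is to reduce this lifted statement directly to its ground-level counterpart, Theorem \ref{theo: non intertwining}, by showing that each of the three lifted hypotheses (i)--(iii) entails the corresponding ground hypothesis of that theorem. Once all three ground hypotheses are in place, the invariance of $\cT$ follows immediately by invoking Theorem \ref{theo: non intertwining}. Since all the required liftability results have already been established in this and the previous section, the argument should be a short verification rather than a fresh derivation, consistent with the remark that these proofs are straightforward consequences of earlier results.

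First I would handle hypothesis (i). The ground theorem requires every $a\in\cG\cA^i$ to be strongly safe. By Corollary \ref{cor: strongly safe final}, strong safety is a liftable property, so the assumption that every $\alpha\in\cA^i$ is strongly safe guarantees that every grounding $a=gr(\alpha)$ is strongly safe for every instance $\gamma$; as $\cG\cA^i$ is exactly the set of such groundings, the first ground hypothesis holds. Next I would treat hypothesis (ii). The corollary establishing that safety of auxiliary durative actions is a liftable property states precisely that $Da_*=gr(D\alpha_*)$ is safe for every grounding whenever $D\alpha_*$ is safe as a schema; hence assuming $D\alpha_*$ is safe for every $D\alpha\in\cA^d$ yields that $Da_*$ is safe for every $Da\in\cG\cA^d$, which is the second ground hypothesis.

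For hypothesis (iii) I would appeal to Proposition \ref{prop: relevant non inter lifted}, whose conclusion is exactly that $\cG\cA^d$ is relevant non-intertwining whenever $\cA^d$ satisfies the conditions listed there. Since (iii) assumes precisely those conditions, the third ground hypothesis of Theorem \ref{theo: non intertwining} follows. With all three ground conditions verified, Theorem \ref{theo: non intertwining} applies and gives that $\cT$ is invariant.

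The one point demanding care -- and the part I expect to be the main obstacle -- is the bookkeeping of quantifiers across the lift. The lifted hypotheses are phrased ``for every schema and every $\cT$-class $L$'', whereas the ground hypotheses range ``for every grounding $Da$ and every instance $\gamma$''. The bridge is the coherence correspondence: by Lemma \ref{lemma:g-i} and Proposition \ref{prop: equiv adapted}, each pair $(gr,\gamma)$ singles out a unique $\cT$-class $L$ on which they are coherent, and conversely every $\cT$-class arises this way, so that universally quantifying over $\cT$-classes at the lifted level is equivalent to universally quantifying over ground instances $\gamma$. I would make explicit that this correspondence is what makes the per-$\cT$-class formulations of (ii) and (iii) capture \emph{all} groundings for \emph{all} instances, and hence that no ground action or instance escapes the hypotheses of Theorem \ref{theo: non intertwining}; the rest is then a direct citation of that theorem.
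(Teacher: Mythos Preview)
Your proposal is correct and matches the paper's approach: the paper states only that the proofs of Corollaries \ref{cor: *safety schema}--\ref{cor: type a schemas} are ``straightforward consequences of our previous definitions and results,'' and what you have written is precisely the intended unwinding, reducing each lifted hypothesis to its ground counterpart via the liftability results (Corollary \ref{cor: strongly safe final}, the corollary on liftability of safety for auxiliary durative actions, and Proposition \ref{prop: relevant non inter lifted}) and then invoking Theorem \ref{theo: non intertwining}. Your closing remark on the $(gr,\gamma)\leftrightarrow L$ correspondence is the only subtlety, and you have identified it correctly.
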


Finally, it is useful to consider a lifted version of Corollary \ref{cor: type a}.

\begin{corollary} \label{cor: type a schemas} 
Consider a template $\cT$ and suppose that the set of instantaneous action schemas $\cA^i$ and that of durative action schemas $\cA^d$ satisfy the following properties:
\begin{enumerate}[(i)]
\item for every $D\alpha\in\cA^d$ and every $\cT$-class $L$, if $D\alpha_L\in \cA^d(\cT)$, then $D\alpha_{*L}$ is simply safe of type (a);
\item for every $\alpha\in\cA$ and every $\cT$-class $L$, if $\alpha_L\not\in \cA^{st}(\cT)\cup \cA^{end}(\cT)$, then, $\alpha_L$ is either irrelevant or relevant balanced.
\end{enumerate}
Then, $\cT$ is invariant.
\end{corollary}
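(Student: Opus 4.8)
The plan is to derive Corollary~\ref{cor: type a schemas} as the lifted counterpart of the ground-level Corollary~\ref{cor: type a}, by verifying that each of its two lifted hypotheses implies the corresponding ground hypothesis for \emph{every} grounding function $gr$ and \emph{every} instance $\gamma$. Once both ground hypotheses hold for all $\gamma$, Corollary~\ref{cor: type a} directly yields that $\cT$ is invariant, so no new invariance argument is required---the entire content is the lifting of the two conditions.

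First I would fix an arbitrary instance $\gamma$ and handle hypothesis~(i). Take any durative ground action $Da = gr(D\alpha) \in \cG\cA^d(\gamma)$, i.e.\ one that is not strongly $\gamma$-safe. Let $L$ be the unique $\cT$-class of literals of $D\alpha$ on which $gr$ and $\gamma$ are coherent (its existence and uniqueness come from Lemma~\ref{lemma:g-i} and the surrounding discussion). Since strong $\gamma$-safety of $Da$ reduces to that of $Da_\gamma = gr(D\alpha_L)$, the fact that $Da$ is not strongly $\gamma$-safe forces $D\alpha_L \in \cA^d(\cT)$. The lifted hypothesis~(i) then tells us $D\alpha_{*L}$ is simply safe of type~(a) in the sense of Definition~\ref{def:classdur}. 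I would now invoke Proposition~\ref{prop: reach two schema} to transfer reachability to $Da_{*\gamma}$, and Corollary~\ref{cor: strongly safe final} together with Proposition~\ref{prop: pure schema} to transfer the strong-safety of $\alpha^{st}_{*L}$, the relevant-unbounded status of $\alpha^{end}_{*L}$, and the weight/subset conditions~(a) of Proposition~\ref{prop: aux simple safe} down to the ground level. Because $Da_{*\gamma} = gr(D\alpha_{*L})$, these conditions are exactly condition~(a) of Proposition~\ref{prop: aux simple safe} for $Da_*$, so $Da_*$ is simply $\gamma$-safe of type~(a). This establishes the first bullet of Corollary~\ref{cor: type a}.

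Next I would treat hypothesis~(ii). Take any ground action $a = gr(\alpha) \in \cG\cA \setminus (\cG\cA^{st}(\gamma) \cup \cG\cA^{end}(\gamma))$; letting $L$ be the coherent $\cT$-class for $(gr,\gamma)$, the membership condition translates to $\alpha_L \notin \cA^{st}(\cT) \cup \cA^{end}(\cT)$. The lifted hypothesis~(ii) then gives that $\alpha_L$ is either irrelevant or relevant balanced, and Proposition~\ref{prop: pure schema} transfers this verbatim to say that $a$ is $\gamma$-irrelevant or $\gamma$-relevant balanced. This is precisely the second bullet of Corollary~\ref{cor: type a}. Since $\gamma$ was arbitrary, both bullets hold for every instance $\gamma$, and Corollary~\ref{cor: type a} concludes that $\cT$ is invariant.

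The main obstacle I anticipate is purely bookkeeping rather than conceptual: one must be careful that the translation of the \emph{membership} conditions ($Da \in \cG\cA^d(\gamma)$, $a \notin \cG\cA^{st}(\gamma)\cup\cG\cA^{end}(\gamma)$) correctly corresponds to the lifted membership conditions ($D\alpha_L \in \cA^d(\cT)$, $\alpha_L \notin \cA^{st}(\cT)\cup\cA^{end}(\cT)$), and that the \emph{unique} coherent class $L$ attached to each $(gr,\gamma)$ pair is the one referenced in both directions. The only subtle point is the direction of this correspondence: $\alpha_L \in \cA^{end}(\cT)$ means the end fragment is not strongly safe \emph{for some} grounding, whereas we need it at the specific $\gamma$; here liftability of strong safety (Corollary~\ref{cor: strongly safe final}) guarantees that non-strong-safety of $\alpha_L$ is equivalent to non-strong-$\gamma$-safety of $a = gr(\alpha)$ whenever $gr,\gamma$ are coherent over $L$, so the equivalences match up. With that observation in place, all remaining steps are direct appeals to the already-established lifting propositions.
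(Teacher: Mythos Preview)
Your proposal is correct and follows precisely the approach the paper intends: the paper states only that Corollaries~\ref{cor: *safety schema}--\ref{cor: type a schemas} are ``straightforward consequences of our previous definitions and results,'' and your argument---reducing to the ground-level Corollary~\ref{cor: type a} by transferring each lifted hypothesis to its ground counterpart via Propositions~\ref{prop: pure schema}, \ref{prop: reach two schema}, and Corollary~\ref{cor: strongly safe final}---is exactly that straightforward derivation spelled out in full. Your identification and resolution of the membership-correspondence subtlety (that liftability of strong safety makes $D\alpha_L\in\cA^d(\cT)$ equivalent to $Da\in\cG\cA^d(\gamma)$ for the coherent class $L$) is the one genuine bookkeeping point, and you handle it correctly.
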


We end this section by presenting two examples from the IPCs in which we apply Corollaries \ref{cor: type a schemas} and \ref{cor: *safety schema} to demonstrate the invariance of the templates under consideration. Corollary \ref{cor: non intertwining schemas} is the most general one and can be used in more complex cases.

\begin{example}[\sfs{Floortile} domain]
\label{ex:coro}
Consider our usual template: $$\cT_{ft} = (\{\langle \verb+robot-at+, 2, 0 \rangle, \langle \verb+painted+, 2, 1 \rangle,  \langle \verb+clear+, 1, 1 \rangle \}$$ 
The actions schemas in the domains are: $$\cA^d = \{{\tt {change-color, paint-up, paint-down, up, down, right, left}} \}$$
The schemas {\tt paint-up} and {\tt paint-down} are symmetrical and differ only on literals not in the components of $\cT_{ft}$. They have the same $\cT$-classes $L_1 = \{$\url{robot-at(r,x)}$\}$ and $L_2 = \{$\url{clear(y)},  \url{painted(y,c)}$\}$. As seen in Example \ref{ex:class}, the pure schemas {\tt paint-up}$_{*L_1}^{st}$ and {\tt paint-up}$_{*L_1}^{end}$ are irrelevant and {\tt paint-up}$_{*L_2}$ is simply safe of type (a). The same holds for {\tt paint-down}$_{*L_1}$ and {\tt paint-down}$_{*L_2}$. 

The schemas {\tt up, down, right, left} are also symmetrical and differ only on literals not in the components of $\cT_{ft}$. They have the same $\cT$-classes $L_3 = \{$\url{robot-at(r,x)}, \url{clear(x)}$\}$ and $L_4 = \{$\url{robot-at(r,y)}, \url{clear(y)}$\}$. The schemas {\tt up}$_{*L_i}$, {\tt down}$_{*L_i}$, {\tt right}$_{*L_i}$ and {\tt left}$_{*L_i}$, with $i=3,4$, are all simply safe of type (a).

The schema {\tt change-color} has no equivalence classes and its start and end fragments are both irrelevant. 

By Corollary \ref{cor: type a schemas}, the template template $\cT_{ft}$ is invariant.
\end{example}

\begin{example}[\sfs{Depot} domain]
\label{ex:depot}
Consider the domain \sfs{Depot} (see Appendix B) and the template: $$\cT_{dp} = (\{\langle \verb+lifting+, 2, 1 \rangle, \langle \verb+available+, 1, 1 \rangle \}$$
Invariants of this template mean that, given a hoist, it can be in two possibile states: lifting a crate or available.
The actions schemas in the domains are all durative: $$\cA^d = \{{\tt {drive, lift, drop, load, unload}} \}$$
We indicate them as $D\alpha^1, \ldots, D\alpha^5$ respectively and, given $D\alpha^i$, its arguments as $x_i, y_i, \ldots$.

To demonstrate that $\cT_{dp}$ is invariant, we want to apply Corollary \ref{cor: *safety schema}. We start with condition (ii) since $\cA^i$ is empty. 

The action $D\alpha^1 = ${\tt drive} has no literals that match the template so it is strongly safe. The other schemas have respectively $\cT$-classes $L_i = \{$\url{lifting}$(x_i,y_i)$, \url{available}$(x_i)\}$. There are only two fragments of the durative actions that are not strongly safe as they are relevant unbounded: $\alpha_{L_3}^{3end}$ and $\alpha_{L_4}^{4end}$. However, their auxiliary versions $\alpha_{*L_3}^{3end}$ and $\alpha_{*L_4}^{4end}$ are strongly safe since they are balanced (when the over all condition \url{lifting}$(x_3,y_3)$ is added to the end effects, it matches the delete effect \url{lifting}$(x_3,y_3)$ and balances the add effect \url{available}$(x_3)$; similar considerations hold for $D\alpha^4$). Reachability for $\alpha_{*L_3}^{3end}$ and $\alpha_{*L_4}^{4end}$ is a straightforward check.  In consequence, condition (ii) holds.

We now need to verify condition (iii) of Corollary \ref{cor: *safety schema}, i.e. $\cA^d$ is relevant right isolated. Under the re-writing $\cM_{L_3L_4}$, we have that $x_3=x_4$ and $y_3=y_4$ and therefore $Eff^+_{\alpha^{3end}_{L_3}}\cup_{\cM_{L_3L_4}} Eff^+_{\alpha^{4end}_{L_4}} = \{ \verb+available+(x_3) = \verb+available+(x_4)\}$. Hence condition (i) of Definition \ref{def: relevant right isolated schema} is satisfied.

We can conclude that $\cT_{dp}$ is an invariant template.
\end{example}


\section{Guess, Check and Repair Algorithm}
\label{sec:algorithm}
As with related techniques \citep{GereviniSchubert-00,Rintanen-00,Helmert-09}, our algorithm for finding invariants implements a \emph{guess, check and repair} approach. It starts by generating a set of initial simple templates. For each template $\cT$, it then applies the results stated in the previous sections to check its invariance. If $\cT$ is invariant, the algorithm outputs it. On the other hand, if the algorithm does not manage to prove the invariance of $\cT$, it discards it. Before rejection, however, the algorithm tries to fix the template by generating a set of new templates that are guaranteed not to fail for the same reasons as $\cT$. In turn, these new templates need to be checked against the invariance conditions as they might fail due to other reasons.  

\subsection{Guessing initial templates}
\label{sec:initialtemp}
When we create the set of initial templates, we ignore constant relations, i.e. relations whose ground atoms have the same truth value in all the states (for example, type predicates). In fact, they are trivially invariants and so are typically not interesting. 

For each modifiable relation $r$ with arity $a$, we generate $a+1$ initial templates. They all have one component and zero or one counted argument (which can be in any position from $0$ to $(a-1)$):  $\langle r, a, a \rangle$ (no counted argument) and  $\langle r, a, p \rangle$ with $p \in \{0, \ldots, (a-1) \}$. Since the templates have one component, there is only one possible admissible partition $\cF_{\cC}$, with $\cC = \{ c \}$. Hence, we construct the template $\cT = ( \cC, \cF_{\cC})$. 

\begin{example}[\sfs{Floortile} domain]
\label{ex:guess}
Consider the components $c_1 =  \langle \verb+robot-at+, 2, 1 \rangle$. Put $\cF_{\cC} = \{F_1\}$ where $F_1 = \{(c_1, 0)\}$. An initial template is $\cT_1 = (\{c_1\},  \{F_1\})$. Intuitively, invariants of $\cT_1$ mean that a robot can occupy only one position at any given time and our algorithm validates it as an invariant. Another initial template is built by considering the component $c_2 =  \langle \verb+robot-at+, 2, 0 \rangle$ and the partition $\cF_{\cC} = \{F_2\}$ where $F_2 = \{(c_2, 1)\}$. We have another initial template: $\cT_2 = (\{c_2\},  \{F_2\})$. Invariants of this template mean that a tile cannot be occupied by more than one robot, which is not true in general, and our algorithm correctly discards it. 
Finally, consider the component $c_3 =  \langle \verb+robot-at+, 2, 2 \rangle$ and the partition $\cF_{\cC} = \{F_3\}$ where $F_3 = \{(c_3, 0), (c_3, 1)\}$. Another initial template is $\cT_3 = (\{c_3\},  \{F_3\})$. This is also not an invariant and is rejected. 

If we repeat this process with every modifiable relation $r$ in the $\sfs{Floortile}$ domain, we obtain the full set of initial templates. 
\end{example}

\subsection{Checking conditions for invariance}
Given a template, we apply the results stated in the previous sections to check its invariance. In particular, we apply our most operative results: Corollary \ref{cor: all strongly safe} and Corollaries \ref{cor: *safety schema} - \ref{cor: type a schemas}. Remarkably, all these results work at the level of action schemas, not ground actions. 

We first need to verify if all the instantaneous action schemas $\cA$ in the domain, both the native ones and those obtained from the fragmentation of durative actions, respect the strong safety conditions. We then check safety conditions that only involve durative action schemas that are not strongly safe. Finally, we validate additional conditions that avoid the intertwinement of potentially dangerous durative actions. Given the different computational complexity of our results (see considerations below), our algorithm checks the applicability of them in the following order: first, Corollary \ref{cor: all strongly safe}, which involves only conditions for instantaneous schemas, then Corollary \ref{cor: type a schemas}, which considers safety conditions for individual action schemas, and finally Corollaries \ref{cor: *safety schema} and \ref{cor: non intertwining schemas}, which verify conditions involving pairs of durative action schemas. To implement this procedure, we apply the decision tree shown in Figure \ref{fig:dectree} to the set of action schemas $\cA$. The leaves labelled as \emph{Possibly Not Invariant} arise when our sufficient results do not apply. In this case, we cannot assert anything about the invariance of the template.

\begin{figure}[!htbp]
\centering\includegraphics[height=165mm, width=185mm, keepaspectratio]{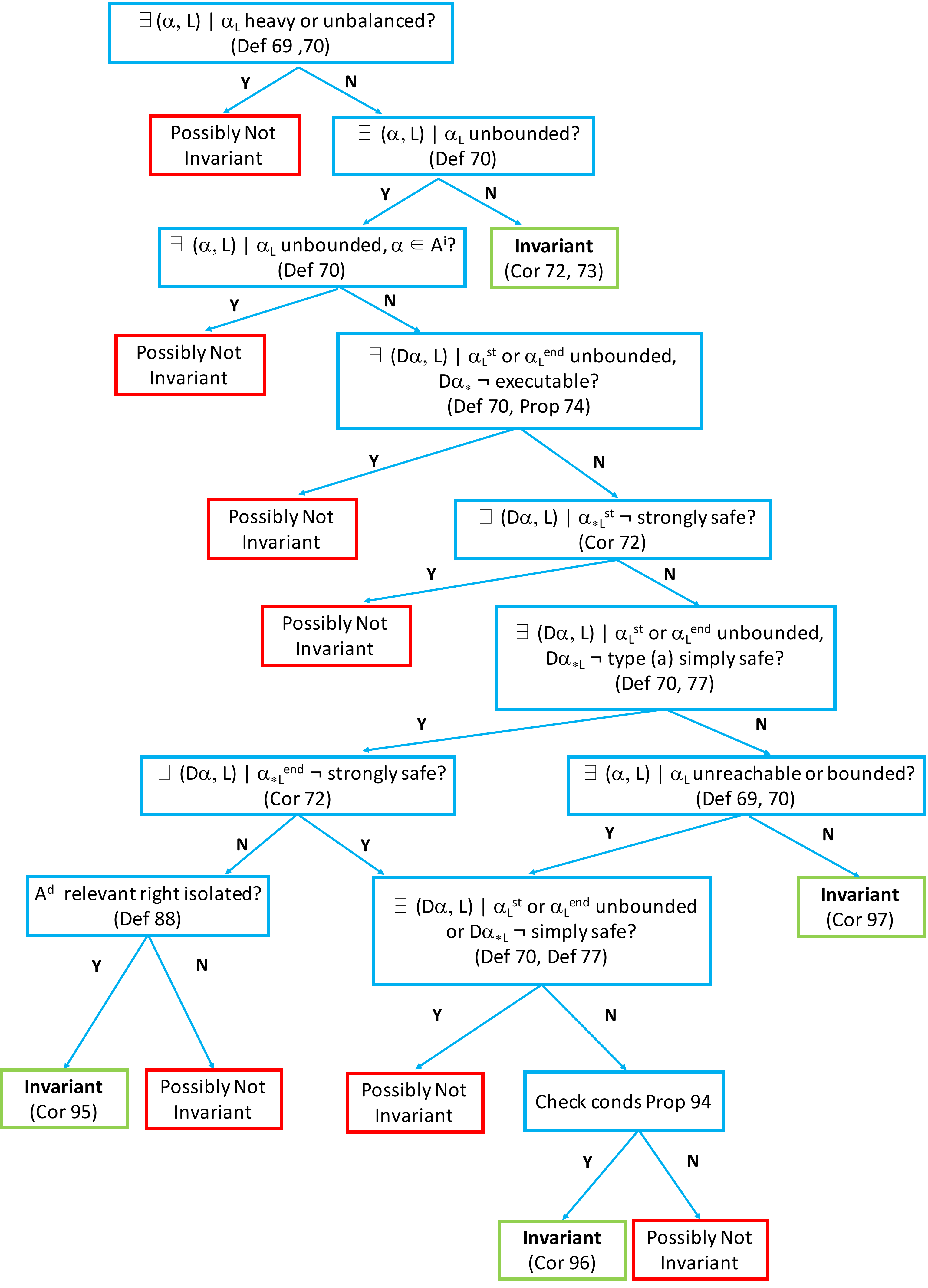}
\caption{Decision Tree for deciding the invariance of a template $\cT$.}
\label{fig:dectree}
\end{figure}

Our checks involve the analysis of all $\cT$-classes in each action schema $\alpha$ in the domain. Since the $\cT$-classes form a partition of the set of literals in the schema that match the template, the maximum number of $\cT$-classes is equal to the number of such literals. We can estimate this term with the product $\omega\cdot |\cC|$ where $\omega$ is the maximum number of literals in any schema that share the same relation and $|\cC|$ is the cardinality of the template's component set $\cC$.
We deduce that all safety checks for individual schemas (both the instantaneous and the durative ones) have a computational complexity of the order of $M \cdot |\cA| \cdot\omega\cdot |\cC|$, where $M$ is the maximum number of the literals appearing in any schema and $|\cA|$ is the total number of schemas. The check of right and left relevant isolated properties involve instead a pair of schemas and $\cT$-classes and, in consequence, the computational complexity is of the order of $M^2 \cdot |\cA|^2 \cdot \omega^2\cdot |\cC|^2$. The check that two schemas are strongly irrelevant unreachable, as required in Corollary \ref{cor: non intertwining schemas}, involves considering a third action schema and a family of matchings, $\tilde\cM$. The complexity relating to the check of the matchings can be shown to be reducible to a check of complexity of the order of the maximum number $N$ of arguments in any literal in the domain. Therefore, the total complexity of the check for strongly irrelevant unreachable schemas is of the order $N\cdot M^3 \cdot |\cA|^3 \cdot\omega^3\cdot |\cC|^3$. In our experiments, we have found no cases in which this check needs to be applied.

\begin{example}[\sfs{Floortile} domain]
\label{ex:guess}
The parameters for the computational complexity analysis are as follows:
$$|\cA|=14,\; M=4,\; \omega =1, |\cC|=3, N=2$$
\end{example}

\subsection{Repairing templates}
\label{sec:newtemp}
When, in analysing an action schema $\alpha$, we reach a failure node in our decision tree, we discard the template $\cT$ under consideration since we cannot prove its invariance. This might be because of two reasons: either $\cT$ is not an invariant or our sufficient conditions are not powerful enough to capture it. Given that we cannot assume that the template is not invariant with certainty, before discarding it, we try to fix it in such a way to obtain new templates for which it might be possible to prove invariance under our conditions. In particular, based on the schema $\alpha$, we enlarge the set of components of the template by adding suitable literals that appear in the preconditions and negative effects of $\alpha$ since they can be useful to prove that $\alpha$ is simple or strong safety.

More precisely, if the algorithm rejects $\cT$ because it finds an instantaneous schema that is heavy or unbalanced (first step in the decision tree), no fixes are possible for $\cT$. Since $\alpha$ leads to a weight greater or equal to two for at least an instance of $\cT$, enlarging the set of components of $\cT$ cannot help in repairing the template. Similarly, if there are durative schemas that are non executable or unreachable, no fixes are possible since these properties cannot be changed by adding components. However, when a failure node is reached in the presence of unbounded schemas, enlarging the set of components might prove useful in making them simply or strongly safe schemas. We operate as follows: for each unbounded schema $\alpha$, we try to turn it into a balanced action schema and, when $\alpha$ is the end fragment of a durative action $D\alpha$, we alternatively attempt to make $D\alpha$ a simply safe schema, as defined in Definition \ref{def:classdur}.

Given a template $\cT = ( \cC, \cF_{\cC})$ that has been rejected by the algorithm, put $k$ the number of fixed arguments for $\cT$ and $m$ the number of its components. Consider an unbounded schema $\alpha$ with relevant literal $l$. We look for another literal $l'$ in $\alpha$ with the following characteristics: 
\begin{enumerate}[(i)]
\item $\rela[l'] = \langle r', a' \rangle$, where $a' = k$ or $a' = (k+1)$; 
\item There exists a bijection $\beta$ from the set of free arguments of $l$ to the set of free arguments of $l'$ such that $Arg[i, l]  = Arg[\beta(i), l']$ for every $i\in I$;
\item $l' \in Pre^{+}_{\alpha} \cap Eff^{-}_{\alpha}$.
\end{enumerate}

If $\alpha$ is the end fragment of a durative action $D\alpha$, then condition (iii) can be substituted with one the alternative following conditions:
\begin{enumerate}[(iv)]
\item $l' \in Pre^{+}_{\alpha^{st}_*} \cap Eff^{-}_{\alpha^{st}_*}$
\item $l' \in Pre^{+}_{\alpha^{st}_*} \cap Eff^{-}_{\alpha^{end}_*}$
\end{enumerate}

For each literal $l'$ that satisfies conditions (i), (ii) and one between conditions (iii), (iv) and (v), we create a new component 
$c' = \langle r', a', p' \rangle$, where $p' \in \{0, \ldots, a' \}$, and one new template $\cT' = ( \cC', \cF'_{\cC})$, where $\cC' = \cC \cup \{ c' \}$ and $\cF'_{\cC}$ is an admissible partition of $F_{\cC'}$ such that for each $c_1, c_2 \in \cC$, we have that $(c_1, i) \sim_{F_{\cC'}} (c_2, j)$ if and only if $(c_1, i) \sim_{F_{\cC}}  (c_2, j)$ and $(c, i) \sim_{F_{\cC'}} (c', j)$ if and only if $Arg[i, l]  =  Arg[j, l']$ (or, equivalently, $j=\beta(i)$). 

If we find a literal $l'$ that satisfies condition (iii), the schema $\alpha$ is guaranteed to be balanced for $\cT'$; if the literal $l'$ satisfies condition (iv), $\alpha$ is guaranteed to be simply safe of type (a) for $\cT'$; finally, if the literal $l'$ satisfies condition (v), $\alpha$ is guaranteed to be simply safe of type (b) for $\cT'$.

\begin{example}[\sfs{Floortile} domain]
\label{ex:repair}
Consider the template  $\cT_2 = (\{c_2\},  \{F_2\})$ as indicated in Example \ref{ex:guess} and the action schema $\alpha=${\tt up}$^{end}$: $Pre_{\alpha}= \emptyset$, $Eff_{\alpha}^+=\{$\url{robot-at(r,y)}, \url{clear(x)}$\}$. The literal \url{robot-at(r,y)} matches the  $\cT_{2}$ and forms a $\cT$-class $L_1 = \{$\url{robot-at(r,y)}$\}$. The pure action schema $\alpha_{L_{1}}$ is relevant unbounded as well as the end parts of the other schemas that indicate movements. If we apply our decision tree to $\cT_2$ and the set of actions $\cA$, we cannot prove that $\cT_2$ is an invariant since the unbounded schemas are not simply safe. Before discarding $\cT_2$, we try to fix it. In particular, the literal \url{clear(y)} satisfies conditions (i), (ii) and (iv) above. If we add it to $\cT_{2}$, we obtain a new template $\cT_2' = (\{c_2, c_2'\}, \{F'_2\})$ where $c_2' =  \langle \verb+clear+, 1, 1 \rangle $ with $F_{c'_{2}} = \{(c_2', 0)\}$ and $F'_2 = \{(c_2, 1), (c_2', 0) \}$. If we apply our decision tree to this new template, we can prove that $\cT'_{2}$ is an invariant since Corollary \ref{cor: type a schemas} can be successfully applied (all schemas are either strongly sage or simply safe of type (a)). Intuitively, invariants of this template mean that, given a tile, either it is clear or it is occupied by a robot.
\end{example}


\section{Experimental Results}
\label{sec:experiments}
To evaluate the performance of our Temporal Invariant Synthesis, referred as \textsc{TIS} in what follows, we have performed a number of experiments on the IPC benchmarks. We implemented the \textsc{TIS} algorithm, reported in Section \ref{sec:algorithm}, in the Python language and conducted the experiments by using a 2.53 GHz Intel Core 2 Duo processor with a memory of 4 GB.

Currently, it is difficult to compare our technique for generating lifted temporal invariants to related techniques since they either handle non temporal domains only (STRIPS domains, in particular) \citep{FoxLong-98,GereviniSchubert-00,Rintanen-00,Rintanen-08,Helmert-09} or find ground temporal invariants \citep{Rintanen-14}. The approach that appears most similar to ours is the invariant synthesis implemented within the Temporal Fast Downward (TFD) planner \citep{Eyerich-09}. However, there is no formal account of such a technique and its soundness, and our knowledge of it is based on a manual inspection of the code\footnote{TFD-0.4 code available at http://gki.informatik.uni-freiburg.de/tools/tfd/index.html}. 

The TFD invariant synthesis is a simple extension of Helmert's original synthesis devised to deal with temporal and numeric domains. The algorithm analyses the temporal schemas directly, without slitting them into their start, overall and end fragments. As the original technique, only the weight of one is considered safe and only this weight is checked for assessing whether an action schema is safe or not. This implies that bounded action schemas (which are strongly safe) and simply safe schemas of type (c) are always considered unsafe. Only two types of relevant durative schemas are evaluated as safe:
\begin{inparaenum}[(i)]
\item those that add and delete relevant literals at start; and
\item those that check that one relevant literal is true at start, then delete this literal at start and finally add another relevant literal at end. 
\end{inparaenum}
Case (i) corresponds to a balanced schema in our classification. However, the TFD synthesis misses schemas that are balanced at end. Case (ii) corresponds to simply safe schemas of type (a). In all the other cases, the action schemas are labelled as unsafe and the candidate invariant is dismissed. 

The TFD invariant synthesis seems to have been carefully tailored to meet the particular requirements of the domains of the most recent competitions, the IPC6 in particular, in which almost all the action schemas fall in Cases (i) and (ii) above. In consequence, the TFD synthesis succeeds in finding useful invariants for these domains, but it fails in addressing the more general problem of finding invariants in generic PDDL temporal domains. While for the IPC6 domains the \textsc{TIS} and the TFD synthesis produce similar results, this is not true in general. Their output is different when the domains used offer a broader variety of action schema's types, as shown in Figure \ref{fig:TDFvsTIS}. This figure highlights the limited applicability of the TFD synthesis in domains not included in the IPC6. In these domains, almost all the ground atoms end up being translated as state variables with two values (true and false) and the performance of TFD suffers from this trivial encoding\footnote{We do not include experiments concerning the performance of TFD as this is outside the scope of our paper.}. 

 \begin{figure}[!htbp]
\centering\includegraphics[width=120mm]{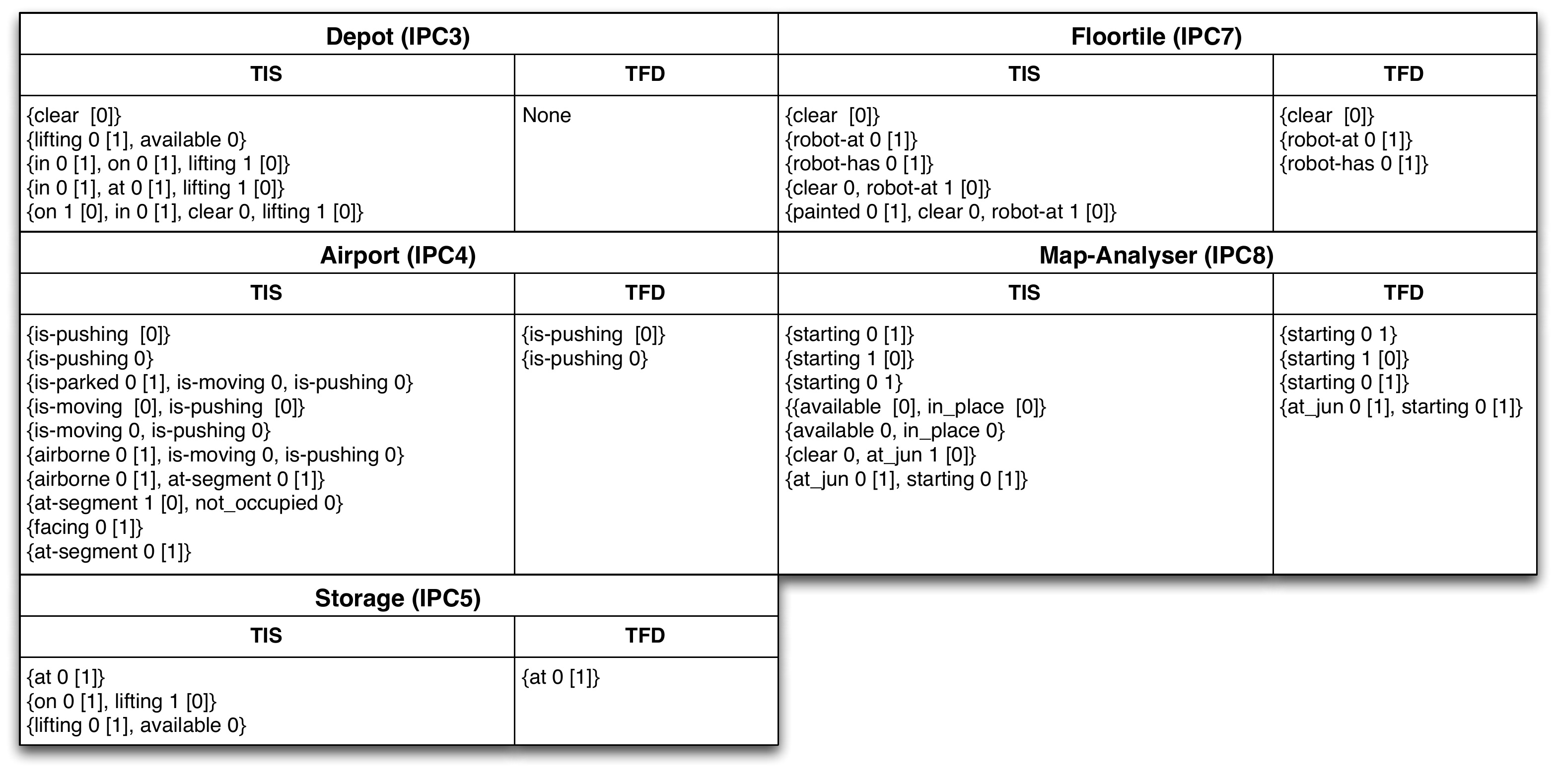}
\caption{\footnotesize Examples of the different output of the \textsc{TIS} and the TFD invariant syntheses for IPC domains.}\label{fig:TDFvsTIS}
\end{figure}

Given the limited scope of the TFD synthesis, in our experimental results, we propose a comparison between our \textsc{TIS} and another technique, which we call Simple Invariant Synthesis (\textsc{SIS}). We designed and used the \textsc{SIS} in order to analyse the impact on temporal planners' performance of different encodings, which result from synthesising state variables based on different sets of lifted invariants. The \textsc{SIS} is also a simple extension of Helmert's original technique to temporal domains, but it is general in its applicability and not devised around specific IPC domains. It adopts the simplest strategy to extend the synthesis of invariants from instantaneous to durative actions: it considers safe only changes in weight that happen at the same time point. More in depth, there are two main differences between the \textsc{TIS} and the \textsc{SIS}: 
\begin{inparaenum}[i)]
\item in the \textsc{SIS}, only a weight equal to one is considered to be legal when a template is checked, whereas in the \textsc{TIS} both zero and one are considered to be legal weights; and
\item in the \textsc{SIS}, all the potential interactions between concurrent action schemas that affect the weight of a template are considered to be problematic and so such schemas are labeled as unsafe.
\end{inparaenum}
As a consequence of these conservative choices, the \textsc{SIS} algorithm considers safe only irrelevant schemas and schemas that we define as balanced in our classification, and judges unsafe all the other action schemas. 

\subsection{Invariants, state variables and quality of the representation}
Figure \ref{fig:Invariants-IPC-2008} provides the readers with examples of the invariants that our algorithm finds when applied to IPC domains. Each set in Figure \ref{fig:Invariants-IPC-2008} corresponds to a set $\cC$ of components, which are separated by a comma and indicated with the relation name (arity is omitted here), the positions of the fixed arguments (not enclosed in square brackets) and the position of the counted variable (enclosed in square brackets). For example, \{at 0 [1], in 0 [1]\} indicates the invariant with the components $c_1 =  \langle \verb+at+, 2, 1 \rangle$ and $c_2 = \langle \verb+in+, 2, 1 \rangle$. For these domains, the only admissible partition is the trivial one and so it is not indicated. The only exception is the $\sfs{Parkprinter-strips}$ domain and the invariant \{hasimage 0 1 [2], notprintedwith 0 1 [2]\}, in which we choose an admissible partition that connects together the fixed arguments in position 0 and those in position 1.

\begin{figure}[!htbp]
\centering\includegraphics[width=120mm]{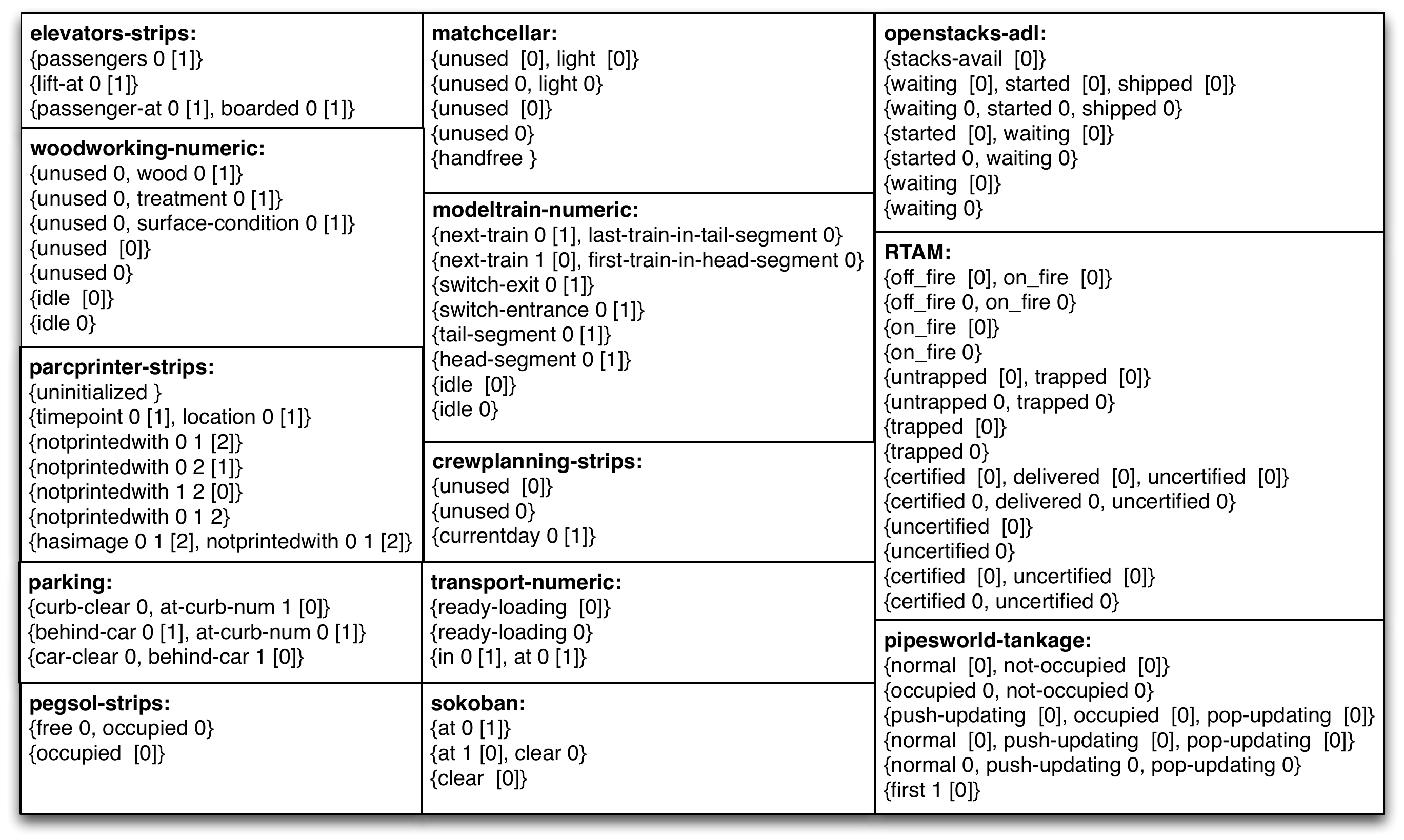}
\caption{\footnotesize Examples of invariants for the temporal domains of the IPCs. Each invariant is enclosed in braces where the predicate names indicate the components of the invariant, the numbers not enclosed in square brackets indicate the positions of the fixed variables in the list of arguments of the corresponding predicate and numbers enclosed in square brackets indicate the position of the counted variables. }\label{fig:Invariants-IPC-2008}
\end{figure}

Table \ref{fig:tableIS}-Left reports the number of invariants (\textsc{\# Inv}), number of invariants obtained by applying fixes (\textsc{\# Fix}) and run time (\textsc{RT}) for generating invariants for the temporal domains of the IPC-6, IPC-7, and IPC-8 when the \textsc{TIS} algorithm is applied. The first two columns of Table \ref{fig:tableIS}-Left also compares the number of invariants found by the \textsc{TIS} to those found by the \textsc{SIS}. This table shows that the computational time to compute invariants is negligible and that there is no significant delay associated with splitting each action schema in its initial and final parts and with checking a broad set of configurations in the schemas' conditions and effects. While these features of our algorithm do not impact the computational time, they allow us to find a more comprehensive set of invariants than related techniques.

Tables \ref{fig:tableIS}-Right and \ref{fig:TableSV-IPC8} show a comparison between the number of state variables obtained by instantiating invariants for the domains of the IPC-6, IPC-7, and IPC-8 obtained by applying our \textsc{TIS} and the \textsc{SIS} as well as by simply producing a state variable with two truth values (true and false) for each atom in the domain (Basic Invariant Synthesis, \textsc{BIS}). In many domains, the \textsc{TIS} produces a significant reduction in the number of state variables in comparison with the other two techniques. In several cases (see instances of $\sfs{Elevators}$, $\sfs{Sokoban}$, $\sfs{Transport}$, $\sfs{Drivelog}$, and $\sfs{Parking}$), the reduction is greater than an order of magnitude. In addition, Tables \ref{fig:tableIS}-Right and \ref{fig:TableSV-IPC8} report the mean (M) of the number of values in the domain of each state variable (when different from 2). In the \textsc{BIS}, the mean is always two as each state variable can only assume two values, true and false, and it is not indicated. In the \textsc{SIS}, the mean is often two with some exceptions, whereas in the \textsc{TIS} several state variables have larger domains with many values (see, for example, $\sfs{Elevators}$, $\sfs{Transport}$, $\sfs{Sokoban}$, $\sfs{Parking}$ and $\sfs{Drivelog}$). 

\begin{table}[!htbp]
\centering
{\tiny
\begin{tabular}{c@{\hspace{1pt}}|c@{\hspace{1pt}}|c@{\hspace{1pt}}|c@{\hspace{1pt}}|c@{\hspace{4pt}}|}\cline{1-5}
 \multicolumn{1}{|l@{\hspace{2pt}}}{Domains} &   
 \multicolumn{1}{|c@{\hspace{2pt}}}{\textsc{\# Inv SIS}} & 
 \multicolumn{1}{|c@{\hspace{2pt}}}{\textsc{\# Inv TIS}} &
 \multicolumn{1}{|c@{\hspace{2pt}}}{\textsc{\# Fix TIS}} & 
 \multicolumn{1}{|c@{\hspace{2pt}}|}{\textsc{RT TIS}} \\
 \hline\hline
  \multicolumn{1}{|l@{\hspace{2pt}}|}{IPC-6} &  &  &  &  \\
  \hline\hline
  \multicolumn{1}{|l@{\hspace{2pt}}|}{CrewPlanning} & 0 & 3 & 0 & 0.29 \\
  \multicolumn{1}{|l@{\hspace{2pt}}|}{Elevators-Num} & 0 & 2 & 1 & 0.02 \\ 
  \multicolumn{1}{|l@{\hspace{2pt}}|}{Elevators-Str} & 0 & 3 & 1 & 0.02 \\ 
  \multicolumn{1}{|l@{\hspace{2pt}}|}{Modeltrain-Num} & 3 & 8 & 2 & 0.23 \\
  \multicolumn{1}{|l@{\hspace{2pt}}|}{Openstacks-Adl} & 2 & 7 & 4 & 0.01 \\  
  \multicolumn{1}{|l@{\hspace{2pt}}|}{Openstacks-Num} & 4 & 10 & 6 & 0.06 \\ 
  \multicolumn{1}{|l@{\hspace{2pt}}|}{Openstacks-Num-Adl} & 2 & 6 & 4 & 0.01 \\ 
  \multicolumn{1}{|l@{\hspace{2pt}}|}{Openstacks-Str} & 4 & 11 & 6 & 0.09 \\ 
  \multicolumn{1}{|l@{\hspace{2pt}}|}{Parcprinter-Str} & 5 & 7 & 2 & 0.59 \\ 
  \multicolumn{1}{|l@{\hspace{2pt}}|}{Pegsol-Str} & 0 & 1 & 1 & 0.002 \\ 
  \multicolumn{1}{|l@{\hspace{2pt}}|}{Sokoban-Str} & 0 & 3 & 1 & 0.01 \\ 
  \multicolumn{1}{|l@{\hspace{2pt}}|}{Transport-Num} & 0 & 3 & 1 & 0.01 \\ 
  \multicolumn{1}{|l@{\hspace{2pt}}|}{Woodworking-Num} & 2 & 7 & 3 & 0.20 \\ 
  \hline\hline
  \multicolumn{1}{|l@{\hspace{2pt}}|}{IPC-7} &  &  &  &  \\ 
  \hline\hline 
  \multicolumn{1}{|l@{\hspace{2pt}}|}{CrewPlanning} & 0 & 3 & 0 & 0.31 \\ 
  \multicolumn{1}{|l@{\hspace{2pt}}|}{Elevators} & 0 & 3 & 1 & 0.02 \\ 
  \multicolumn{1}{|l@{\hspace{2pt}}|}{Floortile} & 0 & 5 & 2 & 0.05 \\ 
  \multicolumn{1}{|l@{\hspace{2pt}}|}{Matchcellar} & 4 & 5 & 2 & 0.003 \\
  \multicolumn{1}{|l@{\hspace{2pt}}|}{Openstacks} & 4 & 11 & 6 & 1.81 \\
  \multicolumn{1}{|l@{\hspace{2pt}}|}{Parcprinter} & 5 & 7 & 2 & 0.65 \\ 
  \multicolumn{1}{|l@{\hspace{2pt}}|}{Parking} & 0 & 3 & 3 & 0.03 \\  
  \multicolumn{1}{|l@{\hspace{2pt}}|}{Pegsol} & 0 & 1 & 1 & 0.002 \\ 
  \multicolumn{1}{|l@{\hspace{2pt}}|}{Sokoban} & 0 & 3 & 1 & 0.01 \\ 
  \multicolumn{1}{|l@{\hspace{2pt}}|}{Storage} & 0 & 3 & 2 & 0.05 \\
  \multicolumn{1}{|l@{\hspace{2pt}}|}{TMS} & 0 & 0 & 0 & 0.02 \\
  \multicolumn{1}{|l@{\hspace{2pt}}|}{TurnAndOpen} & 2 & 5 & 2 & 0.03 \\
  \hline\hline
  \multicolumn{1}{|l@{\hspace{2pt}}|}{IPC-8} &  &  &  &  \\
  \hline\hline  
  \multicolumn{1}{|l@{\hspace{2pt}}|}{Driverlog} & 0 & 2 & 2 & 0.03 \\  
  \multicolumn{1}{|l@{\hspace{2pt}}|}{Floortile} & 0 & 5 & 2 & 0.05 \\ 
  \multicolumn{1}{|l@{\hspace{2pt}}|}{Mapanalyser} & 3 & 7 & 4 & 0.04 \\ 
  \multicolumn{1}{|l@{\hspace{2pt}}|}{MatchCellar} & 4 & 5 & 2 & 0.003 \\
  \multicolumn{1}{|l@{\hspace{2pt}}|}{Parking} & 0 & 3 & 3 & 0.03\\
  \multicolumn{1}{|l@{\hspace{2pt}}|}{RTAM} & 6 & 14 & 8 & 0.20 \\  
  \multicolumn{1}{|l@{\hspace{2pt}}|}{Satellite} & 0 & 2 & 1 & 0.01 \\ 
  \multicolumn{1}{|l@{\hspace{2pt}}|}{Storage} & 0 & 3 & 2 & 0.05 \\ 
  \multicolumn{1}{|l@{\hspace{2pt}}|}{TMS} & 0 & 0 & 0 & 0.02 \\
  \multicolumn{1}{|l@{\hspace{2pt}}|}{TurnAndOpen} & 2 & 5 & 2 & 0.03 \\ 
\hline\hline
\end{tabular}
}
{\tiny
\begin{tabular}{p{1.3cm}@{\hspace{2pt}}|c@{\hspace{2pt}}|c@{\hspace{2pt}}|c@{\hspace{2pt}}|}\cline{2-4}
 \multicolumn{1}{c@{}}{Domains - IPC6} &   
 \multicolumn{1}{|c@{\hspace{2pt}}}{} & 
 \multicolumn{1}{c@{\hspace{2pt}}}{\textsc{\# SV}} &
 \multicolumn{1}{c@{\hspace{2pt}}|}{}\\
 \cline{2-4}
 \multicolumn{1}{c@{}}{} &  
 \multicolumn{1}{|c@{\hspace{2pt}}|}{\textsc{BIS}} & 
 \multicolumn{1}{c@{\hspace{2pt}}|}{\textsc{SIS}} &
 \multicolumn{1}{c@{\hspace{2pt}}|}{\textsc{TIS}} \\
\hline\hline
  \multicolumn{1}{|l@{\hspace{2pt}}|}{Crew Planning - p10} & 112 & 112 & 106 \\
  \multicolumn{1}{|l@{\hspace{2pt}}|}{Crew Planning - p20} & 270 & 270 & 261 \\ 
  \multicolumn{1}{|l@{\hspace{2pt}}|}{Crew Planning - p30} & 510 & 510 & 498 \\
  \multicolumn{1}{|l@{\hspace{2pt}}|}{Elevators-Num - p10} & 193 & 193 & 21 \\
  \multicolumn{1}{|l@{\hspace{2pt}}|}{Elevators-Num - p20} & 578 & 578 & 34 \\ 
  \multicolumn{1}{|l@{\hspace{2pt}}|}{Elevators-Num - p30} & 1216 & 1216 & 49 \\ 
  \multicolumn{1}{|l@{\hspace{2pt}}|}{Elevators-Str - p10} & 203 & 203 & 21 \\
  \multicolumn{1}{|l@{\hspace{2pt}}|}{Elevators-Str - p20} & 592 & 592 & 34 \\ 
  \multicolumn{1}{|l@{\hspace{2pt}}|}{Elevators-Str - p30} & 1240 & 1240 & 49 \\
  \multicolumn{1}{|l@{\hspace{2pt}}|}{Modeltrain-Num - p10} & 397 & 205 & 191 \\
  \multicolumn{1}{|l@{\hspace{2pt}}|}{Modeltrain-Num - p20} & 396 & 204 & 188 \\ 
  \multicolumn{1}{|l@{\hspace{2pt}}|}{Modeltrain-Num - p30} & 910 & 418 & 390 \\
  \multicolumn{1}{|l@{\hspace{2pt}}|}{Openstacks-Num - p10} & 71 & 71 & 29 \\
  \multicolumn{1}{|l@{\hspace{2pt}}|}{Openstacks-Num - p20} & 121 & 121 & 49 \\ 
  \multicolumn{1}{|l@{\hspace{2pt}}|}{Openstacks-Num - p30} & 171 & 171 & 69 \\  
  \multicolumn{1}{|l@{\hspace{2pt}}|}{Parcprinter-Str - p10} & 641 & 641 & 431\\
  \multicolumn{1}{|l@{\hspace{2pt}}|}{Parcprinter-Str - p20} & 1273 & 1273 & 673\\ 
  \multicolumn{1}{|l@{\hspace{2pt}}|}{Parcprinter-Str - p30} & 669 & 669 & 439 \\ 
  \multicolumn{1}{|l@{\hspace{2pt}}|}{Pegsol-Str - p10} & 66 & 66 & 33 \\
  \multicolumn{1}{|l@{\hspace{2pt}}|}{Pegsol-Str - p20} & 66 & 66 & 33 \\ 
  \multicolumn{1}{|l@{\hspace{2pt}}|}{Pegsol-Str - p30} & 66 & 66 & 33 \\ 
  \multicolumn{1}{|l@{\hspace{2pt}}|}{Sokoban-Str - p10} & 490 & 490 & 72\\
  \multicolumn{1}{|l@{\hspace{2pt}}|}{Sokoban-Str - p20} & 127 & 127 & 37 \\ 
  \multicolumn{1}{|l@{\hspace{2pt}}|}{Sokoban-Str - p30} & 1131 & 1131 & 75\\
  \multicolumn{1}{|l@{\hspace{2pt}}|}{Transport-Num - p10} & 1292 & 1292 & 36 \\
  \multicolumn{1}{|l@{\hspace{2pt}}|}{Transport-Num - p20} & 1292 & 1292 & 36 \\ 
  \multicolumn{1}{|l@{\hspace{2pt}}|}{Transport-Num - p30} & 1772 & 1772 & 64\\ 
  \multicolumn{1}{|l@{\hspace{2pt}}|}{Woodworking-Num - p10} & 143 & 143 & 95 \\
  \multicolumn{1}{|l@{\hspace{2pt}}|}{Woodworking-Num - p20} & 239 & 239 & 151 \\ 
  \multicolumn{1}{|l@{\hspace{2pt}}|}{Woodworking-Num - p30} & 251 & 251 & 158 \\ 
\hline\hline
\end{tabular}
}
{\tiny
\begin{tabular}{|c@{\hspace{2pt}}|c@{\hspace{2pt}}|}
 \cline{1-2}
 \multicolumn{1}{|c@{\hspace{2pt}}}{\textsc{M}} &
 \multicolumn{1}{c@{\hspace{2pt}}|}{}\\
 \cline{1-2}
 \multicolumn{1}{|c@{\hspace{2pt}}|}{\textsc{SIS}} &
 \multicolumn{1}{c@{\hspace{2pt}}|}{\textsc{TIS}} \\
\hline\hline
  \multicolumn{1}{|l@{\hspace{2pt}}|}  {2} & {2.05} \\
  \multicolumn{1}{|l@{\hspace{2pt}}|}  {2} & {2.03} \\
  \multicolumn{1}{|l@{\hspace{2pt}}|}  {2} & 2.02 \\
  \multicolumn{1}{|l@{\hspace{2pt}}|}  {2} & 10.19 \\
  \multicolumn{1}{|l@{\hspace{2pt}}|}  {2} & 18 \\
  \multicolumn{1}{|l@{\hspace{2pt}}|}  {2} & 25.81 \\ 
  \multicolumn{1}{|l@{\hspace{2pt}}|}  {2} & 10.67 \\
  \multicolumn{1}{|l@{\hspace{2pt}}|}  {2} & 18.41 \\ 
  \multicolumn{1}{|l@{\hspace{2pt}}|}  {2} & 26.30 \\
  \multicolumn{1}{|l@{\hspace{2pt}}|}  {2.93} & 3.08 \\
  \multicolumn{1}{|l@{\hspace{2pt}}|}  {2.94} & 3.11 \\ 
  \multicolumn{1}{|l@{\hspace{2pt}}|}  {3.17} & 3.33 \\ 
  \multicolumn{1}{|l@{\hspace{2pt}}|}  {2} & 3.45 \\
  \multicolumn{1}{|l@{\hspace{2pt}}|}  {2} & 3.47 \\ 
  \multicolumn{1}{|l@{\hspace{2pt}}|}  {2} & 2.5 \\ 
  \multicolumn{1}{|l@{\hspace{2pt}}|}  {2} & 2.49 \\
  \multicolumn{1}{|l@{\hspace{2pt}}|}  {2} & 2.89 \\ 
  \multicolumn{1}{|l@{\hspace{2pt}}|}  {2} & 2.52 \\ 
  \multicolumn{1}{|l@{\hspace{2pt}}|}  {2} & 3 \\
  \multicolumn{1}{|l@{\hspace{2pt}}|}  {2} & 3 \\ 
  \multicolumn{1}{|l@{\hspace{2pt}}|}  {2} & 3 \\ 
  \multicolumn{1}{|l@{\hspace{2pt}}|}  {2} & 7.8 \\
  \multicolumn{1}{|l@{\hspace{2pt}}|}  {2} & 4.43 \\ 
  \multicolumn{1}{|l@{\hspace{2pt}}|}  {2} & 16.08 \\
  \multicolumn{1}{|l@{\hspace{2pt}}|}  {2} & 36.89 \\
  \multicolumn{1}{|l@{\hspace{2pt}}|}  {2} & 36.89 \\ 
  \multicolumn{1}{|l@{\hspace{2pt}}|}  {2} & 28.69 \\ 
  \multicolumn{1}{|l@{\hspace{2pt}}|}  {2} & 2.55 \\
  \multicolumn{1}{|l@{\hspace{2pt}}|}  {2} & 2.62 \\ 
  \multicolumn{1}{|l@{\hspace{2pt}}|}  {2} & 2.62 \\ 
\hline\hline
\end{tabular}
}
\caption[]{{\bf Left:} Number of invariants (\textsc{\# Inv}), number of invariants obtained by repairing failed templates (\textsc{\# Fix}) and run time (\textsc{RT}) for generating invariants for the temporal domains of the IPCs by using the Temporal Invariant Synthesis (\textsc{TIS}) and the Simple Invariant Synthesis (\textsc{SIS}).

{\bf Right:} Number of state variables (\textsc{\# SV}) and mean (\textsc{M}) of the number of the values in the domain of each state variable (when different from 2) for the temporal domains of the IPC-8. The state variables are obtained by instantiating invariants obtained by applying: (1) Basic Invariant Synthesis (\textsc{BIS}); (2) Simple Invariant Synthesis (\textsc{SIS}); and (3) Temporal Invariant Synthesis (\textsc{TIS}).}
\label{fig:tableIS}
\end{table}

\begin{table*}[!htbp]
\centering
{\tiny
\begin{tabular}{p{1.3cm}@{\hspace{2pt}}|c@{\hspace{2pt}}|c@{\hspace{2pt}}|c@{\hspace{2pt}}|}\cline{2-4}
 \multicolumn{1}{c@{}}{Domains - IPC7} &   
 \multicolumn{1}{|c@{\hspace{2pt}}}{} & 
 \multicolumn{1}{c@{\hspace{2pt}}}{\textsc{\# SV}} &
 \multicolumn{1}{c@{\hspace{2pt}}|}{}\\
 \cline{2-4}
 \multicolumn{1}{c@{}}{} &  
 \multicolumn{1}{|c@{\hspace{2pt}}|}{\textsc{BIS}} & 
 \multicolumn{1}{c@{\hspace{2pt}}|}{\textsc{SIS}} &
 \multicolumn{1}{c@{\hspace{2pt}}|}{\textsc{TIS}} \\
\hline\hline
  \multicolumn{1}{|l@{\hspace{2pt}}|}{Crew Planning - p01} & 74 & 74 & 71 \\
  \multicolumn{1}{|l@{\hspace{2pt}}|}{Crew Planning - p10} & 240 & 240 & 231 \\ 
  \multicolumn{1}{|l@{\hspace{2pt}}|}{Crew Planning - p20} & 305 & 305 & 296 \\ 
  \multicolumn{1}{|l@{\hspace{2pt}}|}{Elevators - p01} & 592 & 592 & 21 \\
  \multicolumn{1}{|l@{\hspace{2pt}}|}{Elevators - p10} & 1240 & 1240 & 49 \\ 
  \multicolumn{1}{|l@{\hspace{2pt}}|}{Elevators - p20} & 3068 & 3068 & 76 \\
  \multicolumn{1}{|l@{\hspace{2pt}}|}{Floortile - p0} & 64 & 64 & 16 \\
  \multicolumn{1}{|l@{\hspace{2pt}}|}{Floortile - p10} & 126 & 126 & 26 \\ 
  \multicolumn{1}{|l@{\hspace{2pt}}|}{Floortile - p19} & 186 & 186 & 36 \\
  \multicolumn{1}{|l@{\hspace{2pt}}|}{Matchcellar - p0} & 13 & 10 & 10 \\
  \multicolumn{1}{|l@{\hspace{2pt}}|}{Matchcellar - p10} & 53 & 40 & 40 \\ 
  \multicolumn{1}{|l@{\hspace{2pt}}|}{Matchcellar - p19} & 89 & 67 & 67 \\ 
  \multicolumn{1}{|l@{\hspace{2pt}}|}{Openstacks - p01} & 142 & 142 & 49 \\
  \multicolumn{1}{|l@{\hspace{2pt}}|}{Openstacks - p10} & 201 & 201 & 69 \\ 
  \multicolumn{1}{|l@{\hspace{2pt}}|}{Openstacks - p20} & 260 & 260 & 89 \\ 
  \multicolumn{1}{|l@{\hspace{2pt}}|}{Parcprinter - p01} & 483 & 483 & 315\\
  \multicolumn{1}{|l@{\hspace{2pt}}|}{Parcprinter - p10} & 1149 & 1149 & 609 \\ 
  \multicolumn{1}{|l@{\hspace{2pt}}|}{Parcprinter - p20} & 801 & 801 & 548 \\ 
  \multicolumn{1}{|l@{\hspace{2pt}}|}{Parking - p0} & 227 & 227 & 40\\
  \multicolumn{1}{|l@{\hspace{2pt}}|}{Parking - p10} & 458 & 458 & 58\\ 
  \multicolumn{1}{|l@{\hspace{2pt}}|}{Parking - p19} & 827 & 827 & 79 \\   
  \multicolumn{1}{|l@{\hspace{2pt}}|}{Pegsol - p01} & 66 & 66 & 33 \\
  \multicolumn{1}{|l@{\hspace{2pt}}|}{Pegsol - p10} & 66 & 66 & 33 \\ 
  \multicolumn{1}{|l@{\hspace{2pt}}|}{Pegsol - p20} & 66 & 66 & 33 \\ 
  \multicolumn{1}{|l@{\hspace{2pt}}|}{Sokoban - p01} & 332 & 332 & 57\\
  \multicolumn{1}{|l@{\hspace{2pt}}|}{Sokoban - p10} & 841 & 841 & 85 \\ 
  \multicolumn{1}{|l@{\hspace{2pt}}|}{Sokoban - p20} & 284 & 284 & 64\\
  \multicolumn{1}{|l@{\hspace{2pt}}|}{Storage - p0} & 210 & 210 & 66 \\
  \multicolumn{1}{|l@{\hspace{2pt}}|}{Storage - p10} & 522 & 522 & 138 \\ 
  \multicolumn{1}{|l@{\hspace{2pt}}|}{Storage - p19} & 854 & 854 & 214\\ 
  \multicolumn{1}{|l@{\hspace{2pt}}|}{TMS - p0} & 5151 & 5151 & 5151 \\
  \multicolumn{1}{|l@{\hspace{2pt}}|}{TMS - p10} & 45451 & 45451 & 45451 \\ 
  \multicolumn{1}{|l@{\hspace{2pt}}|}{TMS - p19} & 115921 & 115921 & 115921 \\
  \multicolumn{1}{|l@{\hspace{2pt}}|}{TurnAndOpen - p0} & 182 & 182 & 161 \\
  \multicolumn{1}{|l@{\hspace{2pt}}|}{TurnAndOpen - p10} & 588 & 588 & 552 \\ 
  \multicolumn{1}{|l@{\hspace{2pt}}|}{TurnAndOpen - p20} & 1126 & 1126 & 1071 \\   
\hline\hline
\end{tabular}
}
{\tiny
\begin{tabular}{|c@{\hspace{2pt}}|c@{\hspace{2pt}}|c@{\hspace{2pt}}|}
 \cline{1-2}
 \multicolumn{1}{|c@{\hspace{2pt}}}{\textsc{M}} &
 \multicolumn{1}{c@{\hspace{2pt}}|}{}\\
 \cline{1-2}
 \multicolumn{1}{|c@{\hspace{2pt}}|}{\textsc{SIS}} &
 \multicolumn{1}{c@{\hspace{2pt}}|}{\textsc{TIS}} \\
\hline\hline
  \multicolumn{1}{|l@{\hspace{2pt}}|} {2} & {2.04} \\
  \multicolumn{1}{|l@{\hspace{2pt}}|} {2} & {2.03} \\
  \multicolumn{1}{|l@{\hspace{2pt}}|} {2} & 2.03 \\
  \multicolumn{1}{|l@{\hspace{2pt}}|} {2} & 18.41 \\
  \multicolumn{1}{|l@{\hspace{2pt}}|} {2} & 26.3 \\
  \multicolumn{1}{|l@{\hspace{2pt}}|} {2} & 41.37 \\ 
  \multicolumn{1}{|l@{\hspace{2pt}}|} {2} & 5 \\
  \multicolumn{1}{|l@{\hspace{2pt}}|} {2} & 5.85 \\ 
  \multicolumn{1}{|l@{\hspace{2pt}}|} {2} & 6.17 \\
  \multicolumn{1}{|l@{\hspace{2pt}}|} {2.93} & 2.3 \\
  \multicolumn{1}{|l@{\hspace{2pt}}|} {2.94} & 2.3 \\ 
  \multicolumn{1}{|l@{\hspace{2pt}}|} {3.17} & 2.3 \\ 
  \multicolumn{1}{|l@{\hspace{2pt}}|} {2} & 3.89 \\
  \multicolumn{1}{|l@{\hspace{2pt}}|} {2} & 3.91 \\ 
  \multicolumn{1}{|l@{\hspace{2pt}}|} {2} & 3.92 \\ 
  \multicolumn{1}{|l@{\hspace{2pt}}|} {2} & 2.53 \\
  \multicolumn{1}{|l@{\hspace{2pt}}|} {2} & 2.89 \\ 
  \multicolumn{1}{|l@{\hspace{2pt}}|} {2} & 2.46 \\
    \multicolumn{1}{|l@{\hspace{2pt}}|} {2} & 6.67 \\
  \multicolumn{1}{|l@{\hspace{2pt}}|} {2} & 8.9 \\ 
  \multicolumn{1}{|l@{\hspace{2pt}}|} {2} & 11.46 \\  
  \multicolumn{1}{|l@{\hspace{2pt}}|} {2} & 3 \\
  \multicolumn{1}{|l@{\hspace{2pt}}|} {2} & 3 \\ 
  \multicolumn{1}{|l@{\hspace{2pt}}|} {2} & 3 \\ 
  \multicolumn{1}{|l@{\hspace{2pt}}|} {2} & 6.82 \\
  \multicolumn{1}{|l@{\hspace{2pt}}|} {2} & 10.89 \\ 
  \multicolumn{1}{|l@{\hspace{2pt}}|} {2} & 5.43 \\
  \multicolumn{1}{|l@{\hspace{2pt}}|} {2} & 4.18 \\
  \multicolumn{1}{|l@{\hspace{2pt}}|} {2} & 4.78 \\ 
  \multicolumn{1}{|l@{\hspace{2pt}}|} {2} & 4.99 \\ 
  \multicolumn{1}{|l@{\hspace{2pt}}|} {2} & 2 \\
  \multicolumn{1}{|l@{\hspace{2pt}}|} {2} & 2 \\ 
  \multicolumn{1}{|l@{\hspace{2pt}}|} {2} & 2 \\ 
  \multicolumn{1}{|l@{\hspace{2pt}}|} {2} & 2.13 \\
  \multicolumn{1}{|l@{\hspace{2pt}}|} {2} & 2.06 \\ 
  \multicolumn{1}{|l@{\hspace{2pt}}|} {2} & 2.46 \\ 
\hline\hline
\end{tabular}
}
{\tiny
\begin{tabular}{p{1.3cm}@{\hspace{2pt}}|c@{\hspace{2pt}}|c@{\hspace{2pt}}|c@{\hspace{2pt}}|}\cline{2-4}
 \multicolumn{1}{c@{}}{Domains - IPC8} &   
 \multicolumn{1}{|c@{\hspace{2pt}}}{} & 
 \multicolumn{1}{c@{\hspace{2pt}}}{\textsc{\# SV}} &
 \multicolumn{1}{c@{\hspace{2pt}}|}{}\\
 \cline{2-4}
 \multicolumn{1}{c@{}}{} &  
 \multicolumn{1}{|c@{\hspace{2pt}}|}{\textsc{BIS}} & 
 \multicolumn{1}{c@{\hspace{2pt}}|}{\textsc{SIS}} &
 \multicolumn{1}{c@{\hspace{2pt}}|}{\textsc{TIS}} \\
\hline\hline
  \multicolumn{1}{|l@{\hspace{2pt}}|}{Driverlog - p01} & 535 & 535 & 25 \\
  \multicolumn{1}{|l@{\hspace{2pt}}|}{Driverlog - p10} & 8512 & 8512 & 79 \\ 
  \multicolumn{1}{|l@{\hspace{2pt}}|}{Driverlog - p15} & 3822 & 3822 & 54 \\ 
  \multicolumn{1}{|l@{\hspace{2pt}}|}{Floortile - p01} & 104 & 104  & 24 \\
  \multicolumn{1}{|l@{\hspace{2pt}}|}{Floortile - p10} & 126 & 126  & 26 \\ 
  \multicolumn{1}{|l@{\hspace{2pt}}|}{Floortile - p20} & 154 & 154 & 34 \\
  \multicolumn{1}{|l@{\hspace{2pt}}|}{Mapanalyser - p01} & 215 & 215 & 174 \\
  \multicolumn{1}{|l@{\hspace{2pt}}|}{Mapanalyser - p10} & 752  &  752 & 670 \\ 
  \multicolumn{1}{|l@{\hspace{2pt}}|}{Mapanalyser - p20} & 854 & 854 & 722 \\ 
  \multicolumn{1}{|l@{\hspace{2pt}}|}{Matchcellar - p01} & 50 & 35 & 35 \\
  \multicolumn{1}{|l@{\hspace{2pt}}|}{Matchcellar - p10} & 80 & 55 & 55 \\ 
  \multicolumn{1}{|l@{\hspace{2pt}}|}{Matchcellar - p20} & 107 & 73 & 73 \\ 
  \multicolumn{1}{|l@{\hspace{2pt}}|}{Parking - p01} & 584 & 584 & 66\\
  \multicolumn{1}{|l@{\hspace{2pt}}|}{Parking - p10} & 428 & 428 & 56\\ 
  \multicolumn{1}{|l@{\hspace{2pt}}|}{Parking - p15} & 584 & 584  & 66 \\
  \multicolumn{1}{|l@{\hspace{2pt}}|}{RTAM - p01} & 1279 & 1279 & 1133\\
  \multicolumn{1}{|l@{\hspace{2pt}}|}{RTAM - p10} & 1498 & 1498 & 1325 \\ 
  \multicolumn{1}{|l@{\hspace{2pt}}|}{RTAM - p20} & 3114 & 3114 & 2817\\
  \multicolumn{1}{|l@{\hspace{2pt}}|}{Satellite - p01} & 335 & 335 & 190 \\
  \multicolumn{1}{|l@{\hspace{2pt}}|}{Satellite - p10} & 1161 & 1161 & 926 \\ 
  \multicolumn{1}{|l@{\hspace{2pt}}|}{Satellite - p20} & 1565 & 1565 & 1166 \\ 
  \multicolumn{1}{|l@{\hspace{2pt}}|}{Storage - p01} & 244 & 244 & 76 \\
  \multicolumn{1}{|l@{\hspace{2pt}}|}{Storage - p10} & 522 & 522 & 138 \\ 
  \multicolumn{1}{|l@{\hspace{2pt}}|}{Storage - p20} &  522 & 522 & 138 \\ 
  \multicolumn{1}{|l@{\hspace{2pt}}|}{TMS - p01} & 20301 & 20301 & 20301 \\
  \multicolumn{1}{|l@{\hspace{2pt}}|}{TMS - p10} & 72771 & 72771 & 72771 \\ 
  \multicolumn{1}{|l@{\hspace{2pt}}|}{TMS - p20} & 169071  & 169071 & 169071 \\
  \multicolumn{1}{|l@{\hspace{2pt}}|}{TurnAndOpen - p0} & 206 & 206  & 185 \\
  \multicolumn{1}{|l@{\hspace{2pt}}|}{TurnAndOpen - p10} & 1166 & 1166 & 1111 \\ 
  \multicolumn{1}{|l@{\hspace{2pt}}|}{TurnAndOpen - p20} & 1676 & 1676 & 1598 \\  
\hline\hline
\end{tabular}
}
{\tiny
\begin{tabular}{|c@{\hspace{2pt}}|c@{\hspace{2pt}}|c@{\hspace{2pt}}|}
 \cline{1-2}
 \multicolumn{1}{|c@{\hspace{2pt}}}{\textsc{M}} &
 \multicolumn{1}{c@{\hspace{2pt}}|}{}\\
 \cline{1-2}
 \multicolumn{1}{|c@{\hspace{2pt}}|}{\textsc{SIS}} &
 \multicolumn{1}{c@{\hspace{2pt}}|}{\textsc{TIS}} \\
\hline\hline
  \multicolumn{1}{|l@{\hspace{2pt}}|} {2} & {22.4} \\
  \multicolumn{1}{|l@{\hspace{2pt}}|} {2} & {108.74} \\
  \multicolumn{1}{|l@{\hspace{2pt}}|} {2} & {71.78} \\
  \multicolumn{1}{|l@{\hspace{2pt}}|} {2} & {5.33} \\
  \multicolumn{1}{|l@{\hspace{2pt}}|} {2} & {5.84} \\
  \multicolumn{1}{|l@{\hspace{2pt}}|} {2} & {5.53} \\ 
  \multicolumn{1}{|l@{\hspace{2pt}}|} {2} & {2.24} \\
  \multicolumn{1}{|l@{\hspace{2pt}}|} {2} & {2.12} \\ 
  \multicolumn{1}{|l@{\hspace{2pt}}|} {2} & {2.18} \\
  \multicolumn{1}{|l@{\hspace{2pt}}|} {2.43} & {2.43} \\
  \multicolumn{1}{|l@{\hspace{2pt}}|} {2.45} & {2.45} \\ 
  \multicolumn{1}{|l@{\hspace{2pt}}|} {2.46} & {2.46} \\ 
  \multicolumn{1}{|l@{\hspace{2pt}}|} {2} & {9.84} \\
  \multicolumn{1}{|l@{\hspace{2pt}}|} {2} & {8.64} \\ 
  \multicolumn{1}{|l@{\hspace{2pt}}|} {2} & {9.85} \\ 
  \multicolumn{1}{|l@{\hspace{2pt}}|} {2} & {2.15} \\
  \multicolumn{1}{|l@{\hspace{2pt}}|} {2} & {2.15} \\ 
  \multicolumn{1}{|l@{\hspace{2pt}}|} {2} & {2.13} \\ 
  \multicolumn{1}{|l@{\hspace{2pt}}|} {2} & {2.76} \\
  \multicolumn{1}{|l@{\hspace{2pt}}|} {2} & {2.25} \\ 
  \multicolumn{1}{|l@{\hspace{2pt}}|} {2} & {2.34} \\ 
  \multicolumn{1}{|l@{\hspace{2pt}}|} {2} & {4.21} \\
  \multicolumn{1}{|l@{\hspace{2pt}}|} {2} & {4.78} \\ 
  \multicolumn{1}{|l@{\hspace{2pt}}|} {2} & {4.78} \\
  \multicolumn{1}{|l@{\hspace{2pt}}|} {2} & {2} \\
  \multicolumn{1}{|l@{\hspace{2pt}}|} {2} & {2} \\ 
  \multicolumn{1}{|l@{\hspace{2pt}}|} {2} & {2} \\ 
  \multicolumn{1}{|l@{\hspace{2pt}}|} {2} & {2.11} \\
  \multicolumn{1}{|l@{\hspace{2pt}}|} {2} & {2.05} \\ 
  \multicolumn{1}{|l@{\hspace{2pt}}|} {2} & {2.04} \\ 
\hline\hline
\end{tabular}
}
\caption[]{Number of state variables (\textsc{\# SV}) and mean (\textsc{M}) of the number of the values in the domain of each state variable (when different from 2) for the temporal domains of the IPC-7 and IPC-8. The state variables are obtained by instantiating invariants obtained by applying: (1) Basic Invariant Synthesis (\textsc{BIS}); (2) Simple Invariant Synthesis (\textsc{SIS}); and (3) Temporal Invariant Synthesis (\textsc{TIS}).}
\label{fig:TableSV-IPC8}
\end{table*}

\subsection{Performance in Temporal Planners}

We have performed a number of additional experiments in order to evaluate the impact of using the state variables generated by the \textsc{TIS} on the performance of those planners that use a variable/value representation. In particular, we focus here on the performance of two planners: Temporal Fast Downward (TFD) \citep{Eyerich-09} and POPF-SV, a version of POPF \citep{coles-10} that makes use of multi-valued state variables\footnote{This version of POPF is not documented, but it has been made available to us by their authors Andrew Coles and Amanda Coles}. 

TFD is a planning system for temporal and numeric problems based on Fast Downward (FD) \citep{helmert-jair-2006}, which is limited to non temporal and non numeric domains. TFD uses a multi-valued variable representation called ``Temporal Numeric SAS$^{+}$'' (TN-SAS$^{+}$), which is a direct extension of the ``Finite Domain Representation'' (FDR) used within FD to handle tasks with time and numeric fluents. TN-SAS$^{+}$ captures all the features of PDDL - Level 3 and represents planning tasks by using:
\begin{inparaenum}[i)]
\item a set of state variables, which are divided into logical and numeric state variables; 
\item a set of axioms, which are used to represent logical dependencies and arithmetic sub-terms; and
\item a set of durative actions, which comprise: a) a duration variable; b) start, persistent and end conditions; and c) start and end effects. 
\end{inparaenum}

TFD translates PDDL2.1 tasks into TN-SAS$^{+}$ tasks first and then performs a heuristic search in the space of time-stamped states by using a context-enhanced additive heuristic \citep{Helmert-08} extended to handle time and numeric fluents. The translation from PDDL2.1 to TN-SAS$^{+}$ works in four steps. First, the PDDL instance is normalised, i.e. types are removed and conditions and effects are simplified. Then, an instance where all the literals are ground is produced through a grounding step and the invariant synthesis is applied to generate invariants (the grounding and the invariant synthesis can be performed in parallel). Starting from the invariants provided by the invariant synthesis and the ground domain, a set of multi-valued state variables is generated. Finally, a set of actions is obtained starting from PDDL actions, which describe how the state variables change over time.

In our experiments, we modified TFD by substituting the original invariant synthesis with the three alternative versions: \textsc{TIS}, \textsc{BIS} and \textsc{SIS}. The first is our technique, which we want to evaluate. The \textsc{BIS} is used as a baseline for our experiments. The \textsc{SIS}, as we explained above, is a simple but general alternative to generate invariants\footnote{We do not offer a direct comparison between the original TFD and TFD integrated with our \textsc{TIS} because on IPC6 domains the two versions have comparable performance, while in the other domains TFD does not perform well. For big instances, it often produces no plans with a time bound of 10 minutes. As already mentioned, this behaviour can be linked to the fact that the TFD invariant synthesis works well for IPC6 domains, but it is not general enough to capture invariants in other domains.}.

Tables \ref{fig:TableTFD-2} display the search time (\textsc{ST}) in seconds for the planner TFD and the domains of the IPC-6, IPC-7 and IPC-8. The three different columns indicate the search time when state variables are obtained by applying the \textsc{BIS}, the \textsc{SIS} and the \textsc{TIS}. For each domain, the planner is run against the following problems: p01, p05, p10, p15, p20, p25 and p30. The dash symbol indicates that a plan is not found in 300 seconds. Problems for which a plan could not be found in 300 seconds by applying all three techniques do not appear in the table. 

The tables show that in several domains having fewer state variables with larger domains is beneficial to the search. In particular, in domains such as $\sfs{Elevators}$, $\sfs{Sokoban}$, $\sfs{Transport}$, $\sfs{Mapanalyser}$, and $\sfs{TurnAndOpen}$, the gain is high. If we analyse Table \ref{fig:TableTFD-2} in combination with Tables \ref{fig:tableIS} and \ref{fig:TableSV-IPC8}, we see that there is a strong correlation between the mean of the cardinality of the domains of the state variables and the impact of the state variables on performance. More specifically, state variables whose value domains have mean cardinality greater or equal to three seem to produce the strongest improvements on the search, whereas state variables whose value domains have mean cardinality around two do not yield significant differences in performance. This is not particularly surprising because the use of variables with only two values resolves in producing the same state space as it is obtained without applying any invariant synthesis. On the other hand, variables with three or more values provide an actual reduction in the number of states. 

We speculate that the number of the state variables and the cardinality of their domains has an impact on the calculation of the heuristic estimates. In particular, let us consider the additive heuristic plus context used within TFD in terms of its corresponding causal graph interpretation \cite{Helmert-08}. The causal graph heuristic gives an estimate of the number of actions needed to reach the goal from a state $s$ in terms of the estimated costs of changing the value of each state variable that appears in the goal from its value in $s$ to its value in the goal. In order to compute this estimate, the heuristic uses two structures: the domain transition graph, which describes the relations between the different values of a state variable, and the causal graph, which describes the dependencies between the different state variables. Both structures are highly influenced by the number of the state variables and the cardinality of their domains. Fewer state variables with larger domains result in a smaller number of domain transition graphs, each of which has a more complex structure. In addition, a smaller number of state variables gives rise to a much more compact and structured causal graph. We believe that, in a number of cases, these different configurations of the two types of graphs produce more effective heuristic estimates with the consequence of improving performance.

\begin{table*}[!htbp]
\centering
{\scriptsize
\begin{tabular}{p{1.3cm}@{\hspace{2pt}}|c@{\hspace{2pt}}|c@{\hspace{2pt}}|c@{\hspace{2pt}}|}\cline{2-4}
 \multicolumn{1}{c@{}}{Domains - IPC6} &   
 \multicolumn{1}{|c@{\hspace{2pt}}}{} & 
 \multicolumn{1}{c@{\hspace{2pt}}}{\textsc{TFD - ST}} &
 \multicolumn{1}{c@{\hspace{2pt}}|}{}\\
 \cline{2-4}
 \multicolumn{1}{c@{}}{} &  
 \multicolumn{1}{|c@{\hspace{2pt}}|}{\textsc{BIS}} & 
 \multicolumn{1}{c@{\hspace{2pt}}|}{\textsc{SIS}} &
 \multicolumn{1}{c@{\hspace{2pt}}|}{\textsc{TIS}} \\
\hline\hline
  \multicolumn{1}{|l@{\hspace{2pt}}|}{Crew Planning - p01} & 0 & 0 & 0 \\ 
  \multicolumn{1}{|l@{\hspace{2pt}}|}{Crew Planning - p05} & 0.02 & 0.02 & 0.02 \\
  \multicolumn{1}{|l@{\hspace{2pt}}|}{Crew Planning - p10} & 0.14 & 0.14 & 0.14 \\
  \multicolumn{1}{|l@{\hspace{2pt}}|}{Crew Planning - p15} & 0.02 & 0.02 & 0.02 \\
  \multicolumn{1}{|l@{\hspace{2pt}}|}{Crew Planning - p20} & 0.54 & 0.54 & 0.53 \\ 
  \multicolumn{1}{|l@{\hspace{2pt}}|}{Crew Planning - p30} & 1.46 & 1.46 & 1.46 \\
  \multicolumn{1}{|l@{\hspace{2pt}}|}{Elevators-Num - p01} & 1.57 & 1.57 & 0.02 \\ 
  \multicolumn{1}{|l@{\hspace{2pt}}|}{Elevators-Num - p05} & 0.32 & 0.32 & 0.05 \\ 
  \multicolumn{1}{|l@{\hspace{2pt}}|}{Elevators-Num - p10} & - & - & 57.56 \\
  \multicolumn{1}{|l@{\hspace{2pt}}|}{Elevators-Num - p15} & - & - & 11.74 \\ 
  \multicolumn{1}{|l@{\hspace{2pt}}|}{Elevators-Num - p25} & - & - & 32.85 \\ 
  \multicolumn{1}{|l@{\hspace{2pt}}|}{Elevators-Str - p01} & 0.34 & 0.33 & 0.03 \\
  \multicolumn{1}{|l@{\hspace{2pt}}|}{Elevators-Str - p05} & 0.27 & 0.28 & 0.08 \\
  \multicolumn{1}{|l@{\hspace{2pt}}|}{Elevators-Str - p10} & - & - & 23.04 \\
  \multicolumn{1}{|l@{\hspace{2pt}}|}{Elevators-Str - p15} & - & - & 23.98 \\
  \multicolumn{1}{|l@{\hspace{2pt}}|}{Openstacks-Adl - p01} & 0 & 0 & 0 \\
  \multicolumn{1}{|l@{\hspace{2pt}}|}{Openstacks-Adl - p05} & 0.01 & 0.01 & 0.01 \\
  \multicolumn{1}{|l@{\hspace{2pt}}|}{Openstacks-Adl - p10} & 0.04 & 0.04 & 0.03 \\
  \multicolumn{1}{|l@{\hspace{2pt}}|}{Openstacks-Adl - p15} & 0.09 & 0.09 & 0.07 \\
  \multicolumn{1}{|l@{\hspace{2pt}}|}{Openstacks-Adl - p20} & 0.2 & 0.2 & 0.14 \\
  \multicolumn{1}{|l@{\hspace{2pt}}|}{Openstacks-Adl - p25} & 0.34 & 0.37 & 0.25 \\ 
  \multicolumn{1}{|l@{\hspace{2pt}}|}{Openstacks-Adl - p30} & 0.62 & 0.74 & 0.46 \\ 
  \multicolumn{1}{|l@{\hspace{2pt}}|}{Openstacks-Num - p01} & 0 & 0 & 0 \\
  \multicolumn{1}{|l@{\hspace{2pt}}|}{Openstacks-Num - p05} & 0 & 0 & 0 \\
  \multicolumn{1}{|l@{\hspace{2pt}}|}{Openstacks-Num - p10} & 0.02 & 0.02 & 0.01 \\
  \multicolumn{1}{|l@{\hspace{2pt}}|}{Openstacks-Num - p15} & 0.03 & 0.03 & 0.03 \\
  \multicolumn{1}{|l@{\hspace{2pt}}|}{Openstacks-Num - p20} & 0.07 & 0.07 & 0.06 \\
  \multicolumn{1}{|l@{\hspace{2pt}}|}{Openstacks-Num - p25} & 0.1 & 0.11 & 0.1 \\ 
  \multicolumn{1}{|l@{\hspace{2pt}}|}{Openstacks-Num - p30} & 0.19 & 0.2 & 0.18 \\ 
  \multicolumn{1}{|l@{\hspace{2pt}}|}{Parcprinter-Str - p01} & 0 & 0 & 0 \\
  \multicolumn{1}{|l@{\hspace{2pt}}|}{Parcprinter-Str - p05} & 1.41 & 1.44 & - \\
  \multicolumn{1}{|l@{\hspace{2pt}}|}{Parcprinter-Str - p30} & 21.51 & 21.7 & - \\
  \multicolumn{1}{|l@{\hspace{2pt}}|}{Pegsol-Str - p01} & 0 & 0 & 0 \\
  \multicolumn{1}{|l@{\hspace{2pt}}|}{Pegsol-Str - p05} & 0 & 0 & 0 \\    
  \multicolumn{1}{|l@{\hspace{2pt}}|}{Pegsol-Str - p10} & 0.51 & 0.52 & 0.41 \\ 
  \multicolumn{1}{|l@{\hspace{2pt}}|}{Pegsol-Str - p15} & 1.27 & 1.3 & 1.13 \\    
  \multicolumn{1}{|l@{\hspace{2pt}}|}{Pegsol-Str - p20} & 0.11 & 0.11 & 0.15 \\
  \multicolumn{1}{|l@{\hspace{2pt}}|}{Pegsol-Str - p25} & 1.42 & 1.45 & 1.11 \\     
  \multicolumn{1}{|l@{\hspace{2pt}}|}{Sokoban-Str - p01} & 1.4 & 1.41 & 0.22 \\
  \multicolumn{1}{|l@{\hspace{2pt}}|}{Sokoban-Str - p05} & 94.55 & 95.1 & 10.73 \\    
  \multicolumn{1}{|l@{\hspace{2pt}}|}{Sokoban-Str - p15} & 31.6 & 32.13 & 11.89 \\    
  \multicolumn{1}{|l@{\hspace{2pt}}|}{Sokoban-Str - p20} & 3.96 & 4.02 & 0.04  \\
  \multicolumn{1}{|l@{\hspace{2pt}}|}{Transport-Num - p01} & 0 & 0 & 0 \\
  \multicolumn{1}{|l@{\hspace{2pt}}|}{Transport-Num - p05} & - & - & 13.64 \\    
  \multicolumn{1}{|l@{\hspace{2pt}}|}{Woodworking-Num - p01} & 0 & 0 & 0 \\
  \multicolumn{1}{|l@{\hspace{2pt}}|}{Woodworking-Num - p05} & 0.01 & 0.01 & 0.2 \\      
  \multicolumn{1}{|l@{\hspace{2pt}}|}{Woodworking-Num - p10} & 0.02 & 0.02 & 0.08 \\
  \multicolumn{1}{|l@{\hspace{2pt}}|}{Woodworking-Num - p15} & 0.03 & 0.03 & 0.01 \\    
  \multicolumn{1}{|l@{\hspace{2pt}}|}{Woodworking-Num - p20} & 0.08 & 0.08 & 0.06 \\
  \multicolumn{1}{|l@{\hspace{2pt}}|}{Woodworking-Num - p25} & 28.86 & 29.47 & 19.8 \\     
  \multicolumn{1}{|l@{\hspace{2pt}}|}{Woodworking-Num - p30} & 28.45 & 29.17 & 0.18 \\  
\hline\hline
\end{tabular}
}
{\scriptsize
\begin{tabular}{p{1.3cm}@{\hspace{2pt}}|c@{\hspace{2pt}}|c@{\hspace{2pt}}|c@{\hspace{2pt}}|}\cline{2-4}
 \multicolumn{1}{c@{}}{Domains - IPC7} &   
 \multicolumn{1}{|c@{\hspace{2pt}}}{} & 
 \multicolumn{1}{c@{\hspace{2pt}}}{\textsc{ST}} &
 \multicolumn{1}{c@{\hspace{2pt}}|}{}\\
 \cline{2-4}
 \multicolumn{1}{c@{}}{} &  
 \multicolumn{1}{|c@{\hspace{2pt}}|}{\textsc{BIS}} & 
 \multicolumn{1}{c@{\hspace{2pt}}|}{\textsc{SIS}} &
 \multicolumn{1}{c@{\hspace{2pt}}|}{\textsc{TIS}} \\
\hline\hline
  \multicolumn{1}{|l@{\hspace{2pt}}|}{Crew Planning - p01} & 0.02 & 0.02 & 0.02 \\
  \multicolumn{1}{|l@{\hspace{2pt}}|}{Crew Planning - p05} & 0.05 & 0.05 & 0.05 \\
  \multicolumn{1}{|l@{\hspace{2pt}}|}{Crew Planning - p10} & 0.52 & 0.52 & 0.52 \\ 
  \multicolumn{1}{|l@{\hspace{2pt}}|}{Crew Planning - p15} & 1.52 & 1.51 & 1.5 \\
  \multicolumn{1}{|l@{\hspace{2pt}}|}{Crew Planning - p20} & 0.71 & 0.7 & 0.7 \\ 
  \multicolumn{1}{|l@{\hspace{2pt}}|}{Floortile - p0} & 0.09 & 0.09 & 0.09 \\
  \multicolumn{1}{|l@{\hspace{2pt}}|}{Matchcellar - p0} & 0.01 & 0.01 & 0.01 \\
  \multicolumn{1}{|l@{\hspace{2pt}}|}{Matchcellar - p05} & 0.93 & 0.87 & 0.87 \\ 
  \multicolumn{1}{|l@{\hspace{2pt}}|}{Matchcellar - p10} & 8.64 & 8.09 & 8.06 \\
  \multicolumn{1}{|l@{\hspace{2pt}}|}{Matchcellar - p15} & 40.81 & 37.86 & 38.68 \\  
  \multicolumn{1}{|l@{\hspace{2pt}}|}{Matchcellar - p19} & 108.75 & 100.68 & 101.8 \\
  \multicolumn{1}{|l@{\hspace{2pt}}|}{Openstacks - p01} & 0.21 & 0.21 & 0.15 \\
  \multicolumn{1}{|l@{\hspace{2pt}}|}{Openstacks - p05} & 0.36 & 0.37 & 0.27 \\ 
  \multicolumn{1}{|l@{\hspace{2pt}}|}{Openstacks - p10} & 0.68 & 0.68 & 0.5 \\ 
  \multicolumn{1}{|l@{\hspace{2pt}}|}{Openstacks - p15} & 1.15 & 1.11 & 0.82 \\ 
  \multicolumn{1}{|l@{\hspace{2pt}}|}{Openstacks - p20} & 1.77 & 1.73 & 1.2 \\ 
  \multicolumn{1}{|l@{\hspace{2pt}}|}{Parking - p0} & 76.79 & 65.12 & 35.09 \\
  \multicolumn{1}{|l@{\hspace{2pt}}|}{Parking - p05} & 148.8 & 129.79 & 78.3 \\    
  \multicolumn{1}{|l@{\hspace{2pt}}|}{Parking - p10} & 72.52 & 63.01 & 87.92 \\
  \multicolumn{1}{|l@{\hspace{2pt}}|}{Pegsol - p01} & 0.09 & 0.08 & 0.08 \\
  \multicolumn{1}{|l@{\hspace{2pt}}|}{Pegsol - p05} & 0.01 & 0.01 & 0 \\  
  \multicolumn{1}{|l@{\hspace{2pt}}|}{Pegsol - p10} & 0.18 & 0.18 & 0.11 \\
  \multicolumn{1}{|l@{\hspace{2pt}}|}{Pegsol - p15} & 0.91 & 0.86 & 0.7 \\   
  \multicolumn{1}{|l@{\hspace{2pt}}|}{TurnAndOpen - p0} & 0.21 & 0.21 & 0.21 \\
  \multicolumn{1}{|l@{\hspace{2pt}}|}{TurnAndOpen - p05} & 6.36 & 6.41 & 5.67 \\  
  \multicolumn{1}{|l@{\hspace{2pt}}|}{TurnAndOpen - p10} & 4.3 & 4.37 & 3.01 \\
  \multicolumn{1}{|l@{\hspace{2pt}}|}{TurnAndOpen - p15} & 5.1 & 5.27 & 13.95 \\      
  \multicolumn{1}{|l@{\hspace{2pt}}|}{TurnAndOpen - p20} & 22.62 & 23.55 & 38.28 \\  
\hline\hline
 \multicolumn{1}{c@{}}{Domains - IPC8} &   
 \multicolumn{1}{|c@{\hspace{2pt}}}{} & 
 \multicolumn{1}{c@{\hspace{2pt}}}{\textsc{ST}} &
 \multicolumn{1}{c@{\hspace{2pt}}|}{}\\
 \cline{2-4}
 \multicolumn{1}{c@{}}{} &  
 \multicolumn{1}{|c@{\hspace{2pt}}|}{\textsc{BIS}} & 
 \multicolumn{1}{c@{\hspace{2pt}}|}{\textsc{SIS}} &
 \multicolumn{1}{c@{\hspace{2pt}}|}{\textsc{TIS}} \\
\hline\hline
  \multicolumn{1}{|l@{\hspace{2pt}}|}{Mapanalyser - p01} & 119.24 & 122.14 & 0.14 \\
  \multicolumn{1}{|l@{\hspace{2pt}}|}{Mapanalyser - p05} & 5.71 & 5.88 & 0.96 \\
  \multicolumn{1}{|l@{\hspace{2pt}}|}{Mapanalyser - p10} & 173.54 & 176.14 & 2.16 \\
  \multicolumn{1}{|l@{\hspace{2pt}}|}{Mapanalyser - p15} & - & - & 3.26 \\
  \multicolumn{1}{|l@{\hspace{2pt}}|}{Mapanalyser - p20} & - & - & 55.19 \\
  \multicolumn{1}{|l@{\hspace{2pt}}|}{Matchcellar - p01} & 3.17 & 3.06 & 3.05 \\
  \multicolumn{1}{|l@{\hspace{2pt}}|}{Matchcellar - p05} & 9.71 & 9.35 & 9.37 \\
  \multicolumn{1}{|l@{\hspace{2pt}}|}{Matchcellar - p10} & 24.49 & 23.65 & 23.64 \\
  \multicolumn{1}{|l@{\hspace{2pt}}|}{Matchcellar - p15} & 54.32 & 52.32 & 52.32 \\ 
  \multicolumn{1}{|l@{\hspace{2pt}}|}{Matchcellar - p20} & 95.16 & 90.57 & 90.64 \\ 
  \multicolumn{1}{|l@{\hspace{2pt}}|}{Parking - p01} & 1.01 & 1.04 & 0.5 \\
  \multicolumn{1}{|l@{\hspace{2pt}}|}{Parking - p05} & 0.17 & 0.18 & 0.09 \\
  \multicolumn{1}{|l@{\hspace{2pt}}|}{Parking - p10} & 0.3 & 0.31 & 0.17 \\
  \multicolumn{1}{|l@{\hspace{2pt}}|}{Parking - p15} & 1.12 & 1.14 & 0.55 \\
  \multicolumn{1}{|l@{\hspace{2pt}}|}{Satellite - p01} & 0.55 & 0.56 & 1.6 \\
  \multicolumn{1}{|l@{\hspace{2pt}}|}{Satellite - p05} & 13.57 & 13.87 & 2.93 \\ 
  \multicolumn{1}{|l@{\hspace{2pt}}|}{Satellite - p15} & 2.59 & 2.67 & 13 \\ 
  \multicolumn{1}{|l@{\hspace{2pt}}|}{TurnAndOpen - p0} & 0.21 & 0.22 & 0.22 \\
  \multicolumn{1}{|l@{\hspace{2pt}}|}{TurnAndOpen - p05} & 6.3 & 6.41 & 5.19 \\  
  \multicolumn{1}{|l@{\hspace{2pt}}|}{TurnAndOpen - p10} & 54.78 & 55.99 & 23.8 \\
  \multicolumn{1}{|l@{\hspace{2pt}}|}{TurnAndOpen - p15} & 65.83 & 67.64 & 55.88 \\ 
  \multicolumn{1}{|l@{\hspace{2pt}}|}{TurnAndOpen - p20} & 136.9 & 142.32 & 81.35 \\  
\hline\hline
\end{tabular}
}
\caption[]{Search time (\textsc{ST}) in seconds for the planner TFD and the domains of the IPC-6 -- IPC-8 (ST does not include the translation from PDDL2.1 to temporal SAS+). Invariants are obtained by applying: (1) a Basic Invariant Synthesis (\textsc{BIS}); (2) a Simple Invariant Synthesis (\textsc{SIS}); and (3) our Temporal Invariant Synthesis (\textsc{TIS}). For each domain, the planner is run against problems p01, p05, p10, p15, p20, p25 and p30. The dash symbol indicates that a plan has not been found in 300 seconds. Problems for which all the techniques do not find a plan in 300 seconds do not appear in the table.}
\label{fig:TableTFD-2}
\end{table*}

POPF-SV is a version of the forwards-chaining temporal planner POPF \cite{coles-10} that is capable of reading a variable/value representation and making use of it to perform some inference in a pre-processing step and also to reduce the size of states during search. In particular, POPF-SV reads a standard PDDL task along with its corresponding TN-SAS$^{+}$ translation and reasons with both representations. The multi-valued state variable representation of the task in not used in the heuristic computation, but it is used for two different purposes. An inference step based on the state variables is performed to support temporal preferences. This step extracts rules that are then used during search (for example, is it possible to have action $a$ within 10 time units of action $b$). The second use of the invariant analysis aims to make the state representation more efficient. Only one proposition from each mutex group needs to be stored within a state since if one is true then the others must necessarily be false. This property results in massive savings in memory. This is particularly beneficial for POPF as memory is generally what causes the planner to fail (rather than time).  

Tables \ref{fig:TablePOPF-2} display the search time (\textsc{ST}) in seconds for the planner POPF-SV and the domains of the IPC-6 and IPC-7 (we do not show the domains of the IPC-8 as the planner has not been maintained since IPC-7 and so does not perform well on those domains). The three different columns indicate the search time when state variables are obtained by applying the \textsc{BIS}, the \textsc{SIS} and the \textsc{TIS}. For each domain, the planner is run against the following problems: p01, p05, p10, p15, p20, p25 and p30. Problems for which a plan could not be found in 300 seconds do not appear in the table. 

The tables show that in several domains having fewer state variables with larger domains is beneficial to the search. In particular, in domains such as $\sfs{Elevators}$, $\sfs{Sokoban}$, $\sfs{Parking}$, $\sfs{Openstacks}$ and $\sfs{TurnAndOpen}$, the gain is significant. If we analyse Table \ref{fig:TablePOPF-2} in combination with Tables \ref{fig:tableIS} and \ref{fig:TableSV-IPC8}, we again see that there is a correlation between the number of state variables and the performance of the planner. When the reduction in the number of variables is significant, the planner works significantly better than with the \textsc{SIS} or the \textsc{BIS}. In addition, we observe a correlation between the mean of the cardinality of the domains of the state variables and the impact of the state variables on performance. 

\begin{table*}[!htbp]
\centering
{\scriptsize
\begin{tabular}{p{1.3cm}@{\hspace{2pt}}|c@{\hspace{2pt}}|c@{\hspace{2pt}}|c@{\hspace{2pt}}|}\cline{2-4}
 \multicolumn{1}{c@{}}{Domains - IPC6} &   
 \multicolumn{1}{|c@{\hspace{2pt}}}{} & 
 \multicolumn{1}{c@{\hspace{2pt}}}{\textsc{ST}} &
 \multicolumn{1}{c@{\hspace{2pt}}|}{}\\
 \cline{2-4}
 \multicolumn{1}{c@{}}{} &  
 \multicolumn{1}{|c@{\hspace{2pt}}|}{\textsc{BIS}} & 
 \multicolumn{1}{c@{\hspace{2pt}}|}{\textsc{SIS}} &
 \multicolumn{1}{c@{\hspace{2pt}}|}{\textsc{TIS}} \\
\hline\hline
  \multicolumn{1}{|l@{\hspace{2pt}}|}{Crew Planning - p01} & 0 & 0 & 0 \\ 
  \multicolumn{1}{|l@{\hspace{2pt}}|}{Crew Planning - p05} & 0 & 0 & 0.02 \\
  \multicolumn{1}{|l@{\hspace{2pt}}|}{Crew Planning - p10} & 0.04 & 0.04 & 0.04 \\
  \multicolumn{1}{|l@{\hspace{2pt}}|}{Crew Planning - p15} & 0.02 & 0.02 & 0.02 \\
  \multicolumn{1}{|l@{\hspace{2pt}}|}{Crew Planning - p20} & 0.26 & 0.30 & 0.26 \\ 
  \multicolumn{1}{|l@{\hspace{2pt}}|}{Crew Planning - p25} & 0.24 & 0.26 & 0.22 \\
  \multicolumn{1}{|l@{\hspace{2pt}}|}{Crew Planning - p30} & 0.84 & 0.82 & 0.80 \\
  \multicolumn{1}{|l@{\hspace{2pt}}|}{Elevators-Num - p01} & 0.20 & 0.18 & 0.18 \\ 
  \multicolumn{1}{|l@{\hspace{2pt}}|}{Elevators-Num - p05} & 0.24 & 0.22 & 0.24 \\ 
  \multicolumn{1}{|l@{\hspace{2pt}}|}{Elevators-Num - p10} & 7.36 & 7.64 & 7.14 \\
  \multicolumn{1}{|l@{\hspace{2pt}}|}{Elevators-Num - p15} & - & 26.30 & 23.28 \\ 
  \multicolumn{1}{|l@{\hspace{2pt}}|}{Elevators-Num - p20} & - & 276.72 & 251.54 \\ 
  \multicolumn{1}{|l@{\hspace{2pt}}|}{Elevators-Num - p25} & - & - & 32.85 \\ 
  \multicolumn{1}{|l@{\hspace{2pt}}|}{Elevators-Str - p01} & 1.16 & 1.22 & 0.94 \\
  \multicolumn{1}{|l@{\hspace{2pt}}|}{Elevators-Str - p05} & 0.62 & 0.70 & 0.48 \\
  \multicolumn{1}{|l@{\hspace{2pt}}|}{Elevators-Str - p10} & - & 49.48 & 30.22 \\
  \multicolumn{1}{|l@{\hspace{2pt}}|}{Elevators-Str - p15} & - & - & - \\
  \multicolumn{1}{|l@{\hspace{2pt}}|}{Elevators-Str - p20} & - & - & 585.21 \\ 
  \multicolumn{1}{|l@{\hspace{2pt}}|}{Elevators-Str - p25} & - & - & 468.16 \\
  \multicolumn{1}{|l@{\hspace{2pt}}|}{Openstacks-Str - p01} & 0 & 0 & 0 \\
  \multicolumn{1}{|l@{\hspace{2pt}}|}{Openstacks-Str - p05} & 0.02 & 0.02 & 0.02 \\
  \multicolumn{1}{|l@{\hspace{2pt}}|}{Openstacks-Str - p10} & 0.08 & 0.08 & 0.06 \\
  \multicolumn{1}{|l@{\hspace{2pt}}|}{Openstacks-Str - p15} & 0.18 & 0.18 & 0.14 \\
  \multicolumn{1}{|l@{\hspace{2pt}}|}{Openstacks-Str - p20} & 0.32 & 0.34 & 0.24 \\
  \multicolumn{1}{|l@{\hspace{2pt}}|}{Openstacks-Str - p25} & 0.56 & 0.58 & 0.44 \\
  \multicolumn{1}{|l@{\hspace{2pt}}|}{Openstacks-Str - p30} & 0.84 & 0.86 & 0.60 \\
  \multicolumn{1}{|l@{\hspace{2pt}}|}{Parcprinter-Str - p01} & 0.02 & 0.02 & 0.02 \\
  \multicolumn{1}{|l@{\hspace{2pt}}|}{Parcprinter-Str - p25} & - & - & 21.36 \\
  \multicolumn{1}{|l@{\hspace{2pt}}|}{Pegsol-Str - p01} & 0 & 0 & 0 \\
  \multicolumn{1}{|l@{\hspace{2pt}}|}{Pegsol-Str - p05} & 0.02 & 0 & 0 \\    
  \multicolumn{1}{|l@{\hspace{2pt}}|}{Pegsol-Str - p10} & 0.10 & 0.10 & 0.10 \\ 
  \multicolumn{1}{|l@{\hspace{2pt}}|}{Pegsol-Str - p15} & 0.20 & 0.20 & 0.18 \\    
  \multicolumn{1}{|l@{\hspace{2pt}}|}{Pegsol-Str - p20} & 0.06 & 0.06 & 0.06 \\
  \multicolumn{1}{|l@{\hspace{2pt}}|}{Pegsol-Str - p25} & 0.28 & 0.28 & 0.24 \\     
  \multicolumn{1}{|l@{\hspace{2pt}}|}{Sokoban-Str - p01} & 1.40 & 1.50 & 1.10 \\
  \multicolumn{1}{|l@{\hspace{2pt}}|}{Sokoban-Str - p05} & - & 58.98 & 30.80 \\    
  \multicolumn{1}{|l@{\hspace{2pt}}|}{Sokoban-Str - p15} & - & 11.08 & 10.60 \\    
  \multicolumn{1}{|l@{\hspace{2pt}}|}{Sokoban-Str - p20} & 2.92 & 3.22 & 2.26  \\
  \multicolumn{1}{|l@{\hspace{2pt}}|}{Transport-Num - p01} & 0 & 0 & 0 \\
  \multicolumn{1}{|l@{\hspace{2pt}}|}{Woodworking-Num - p01} & 0 & 0 & 0 \\
  \multicolumn{1}{|l@{\hspace{2pt}}|}{Woodworking-Num - p05} & 0.04 & 0.04 & 0.04 \\      
  \multicolumn{1}{|l@{\hspace{2pt}}|}{Woodworking-Num - p15} & 0.10 & 0.08 & 0.10 \\    
  \multicolumn{1}{|l@{\hspace{2pt}}|}{Woodworking-Num - p20} & 3.02 & 3.48 & 3.22 \\
  \multicolumn{1}{|l@{\hspace{2pt}}|}{Woodworking-Num - p25} & 0.10 & 0.10 & 0.08 \\     
  \multicolumn{1}{|l@{\hspace{2pt}}|}{Woodworking-Num - p30} & - & 2.09 & 2.02 \\  
\hline\hline
\end{tabular}
}
{\scriptsize
\begin{tabular}{p{1.3cm}@{\hspace{2pt}}|c@{\hspace{2pt}}|c@{\hspace{2pt}}|c@{\hspace{2pt}}|}\cline{2-4}
 \multicolumn{1}{c@{}}{Domains - IPC7} &   
 \multicolumn{1}{|c@{\hspace{2pt}}}{} & 
 \multicolumn{1}{c@{\hspace{2pt}}}{\textsc{ST}} &
 \multicolumn{1}{c@{\hspace{2pt}}|}{}\\
 \cline{2-4}
 \multicolumn{1}{c@{}}{} &  
 \multicolumn{1}{|c@{\hspace{2pt}}|}{\textsc{BIS}} & 
 \multicolumn{1}{c@{\hspace{2pt}}|}{\textsc{SIS}} &
 \multicolumn{1}{c@{\hspace{2pt}}|}{\textsc{TIS}} \\
\hline\hline
  \multicolumn{1}{|l@{\hspace{2pt}}|}{Crew Planning - p01} & 0.02 & 0.02 & 0.02 \\
  \multicolumn{1}{|l@{\hspace{2pt}}|}{Crew Planning - p05} & 0.02 & 0.02 & 0.02 \\
  \multicolumn{1}{|l@{\hspace{2pt}}|}{Crew Planning - p10} & 0.22 & 0.22 & 0.22 \\ 
  \multicolumn{1}{|l@{\hspace{2pt}}|}{Crew Planning - p15} & 0.76 & 0.78 & 0.76 \\
  \multicolumn{1}{|l@{\hspace{2pt}}|}{Crew Planning - p20} & 0.36 & 0.34 & 0.34 \\ 
  \multicolumn{1}{|l@{\hspace{2pt}}|}{Matchcellar - p0} & 0.00 & 0.00 & 0.00 \\
  \multicolumn{1}{|l@{\hspace{2pt}}|}{Matchcellar - p05} & 0.04 & 0.04 & 0.04 \\ 
  \multicolumn{1}{|l@{\hspace{2pt}}|}{Matchcellar - p10} & 0.14 & 0.14 & 0.14 \\
  \multicolumn{1}{|l@{\hspace{2pt}}|}{Matchcellar - p15} & 0.30 & 0.30 & 0.30 \\  
  \multicolumn{1}{|l@{\hspace{2pt}}|}{Matchcellar - p19} & 0.54 & 0.56 & 0.54 \\
  \multicolumn{1}{|l@{\hspace{2pt}}|}{Openstacks - p01} & 0.38 & 0.34 & 0.24 \\
  \multicolumn{1}{|l@{\hspace{2pt}}|}{Openstacks - p05} & 0.68 & 0.60 & 0.42 \\ 
  \multicolumn{1}{|l@{\hspace{2pt}}|}{Openstacks - p10} & 1.05 & 0.86 & 0.62 \\ 
  \multicolumn{1}{|l@{\hspace{2pt}}|}{Openstacks - p15} & 1.64 & 1.40 & 1.00 \\ 
  \multicolumn{1}{|l@{\hspace{2pt}}|}{Openstacks - p20} & 2.58 & 2.24 & 1.62 \\ 
  \multicolumn{1}{|l@{\hspace{2pt}}|}{Parking - p0} & 0.84 & 0.94 & 0.72 \\
  \multicolumn{1}{|l@{\hspace{2pt}}|}{Parking - p05} & 23.80 & 26.12 & 17.26 \\    
  \multicolumn{1}{|l@{\hspace{2pt}}|}{Parking - p10} & 37.46 & 40.86 & 27.34 \\
  \multicolumn{1}{|l@{\hspace{2pt}}|}{Parking - p15} & 68.28 & 70.18 & 55.74 \\     
  \multicolumn{1}{|l@{\hspace{2pt}}|}{Parking - p19} & 335.96 & 346.34 & 243.58 \\    
  \multicolumn{1}{|l@{\hspace{2pt}}|}{Pegsol - p01} & 0.16 & 0.16 & 0.14 \\
  \multicolumn{1}{|l@{\hspace{2pt}}|}{Pegsol - p05} & 0.02 & 0.02 & 0.02 \\  
  \multicolumn{1}{|l@{\hspace{2pt}}|}{Pegsol - p10} & 0.26 & 0.26 & 0.24 \\
  \multicolumn{1}{|l@{\hspace{2pt}}|}{Pegsol - p15} & 1.72 & 1.80 & 1.60 \\   
  \multicolumn{1}{|l@{\hspace{2pt}}|}{Sokoban - p01} & 129.96 & 130.02 & 88.82 \\ 
  \multicolumn{1}{|l@{\hspace{2pt}}|}{TMS - p0} & 48.0 & 51.06 & 50.28 \\ 
  \multicolumn{1}{|l@{\hspace{2pt}}|}{TurnAndOpen - p0} & 1.98 & 2.00 & 2.00 \\
  \multicolumn{1}{|l@{\hspace{2pt}}|}{TurnAndOpen - p05} & 13.92 & 15.34 & 13.24 \\  
\hline\hline
\end{tabular}
}
\caption[]{Search time (\textsc{ST}) in seconds for the planner POPF-SV and the domains of the IPC-6 and IPC-7 (ST does not include the translation from PDDL2.1 to temporal SAS+). Invariants are obtained by applying: (1) a Basic Invariant Synthesis (\textsc{BIS}); (2) a Simple Invariant Synthesis (\textsc{SIS}); and (3) our Temporal Invariant Synthesis (\textsc{TIS}). For each domain, the planner is run against problems p01, p05, p10, p15, p20, p25 and p30. The dash symbol indicates that a plan has not been found in 300 seconds. Problems for which all the techniques do not find a plan in 300 seconds do not appear in the table.}
\label{fig:TablePOPF-2}
\end{table*}


\section{Related Work}
\label{sec:related}
The invariant synthesis presented in this paper builds on early work described in \cite{Bernardini-11-sara}. However, the theory behind the invariant synthesis presented here is significantly more comprehensive and the implementation of the technique, in reflecting this extended theory, is capable of finding more invariants and more complex ones. We work here with instantaneous action schemas instead of durative ones and we resort to them only when necessary. From a theoretical point of view, this makes our presentation cleaner and facilitates the exhibition of sound results. From a practical point of view, this makes our algorithm more efficient. The entire classification of action schemas is different from \cite{Bernardini-11-sara} and, in consequence of this more sophisticated classification, our new approach can handle complex interactions between schemas that our original simplistic technique cannot. In fact, our original approach works well only in simple domains with balanced  schemas, while it fails in more complex domains because the conditions imposed on potentially interfering schemas are too conservative. 

Several other approaches to invariant synthesis are available in the literature. In what follows, we present these approaches more in depth by highlighting differences and similarities with our technique.

\subsection{Fast Downward and Temporal Fast Downward}

In \cite{Helmert-09}, Helmert present a translation from a subset of PDDL2.2 into FDR (Finite Domain Representation), a multi-valued planning task formalism used within the planner Fast Downward \citep{helmert-jair-2006}. In particular, the translation only handles non-temporal and non-numeric PDDL2.2 domains, the so-called ``PDDL Level 1'' (equivalent to STRIPS \citep{FikesNilsson-71} with the extensions known as ADL \citep{Pednault-86}). One of the step of this translation is the identification of mutual exclusion invariants and it is an extension of the technique presented in \cite{edelkamp-helmert-ecp-1999} developed for STRIPS. 

When considering non-temporal domains, the invariant synthesis presented in this paper works similarly to Helmert's one. In particular, both work at the lifted level, while all the other related techniques discussed below work at the ground level. Both techniques start from simple invariant candidates, check them against conditions that ensure invariance by analysing the structure of the action schemas in the domain. When a candidate is rejected, they both try to refine it to create a new stronger candidate, which is then checked from scratch. 

However, in contrast with our technique, Helmert's method considers safe only the weight one instead of both the weights of one and zero. This simplified analysis results in the identification of a smaller set of invariants with respect to our technique. For example, Helmert's invariant synthesis labels as unsafe all the action schemas that add a relevant literal without deleting that literal or another relevant one, even when the preconditions impose that the weight is zero when the action schema is applied. In this way, Helmert's invariant synthesis misses invariants that our technique is able to find.

\cite{CHEN2009} builds on Helmert's invariant synthesis and his multi-valued domain formulation to synthesise \emph{long-distance} mutual exclusions (londex), which capture constraints over actions and facts not only at the same time step but also across multiple steps. The londex has been successfully used in SAT-based planners to improve their performance. In future work, we will explore how the concept of londex can be extended to temporal domains.

Within the context of Temporal Fast Downward (TFD) \citep{Eyerich-09}, a simple extension of Helmert's invariant synthesis is used to deal with temporal and numeric domains of the ICPs. See Section \ref{sec:experiments} for a description of such a technique.

\subsection{Rintanen's Invariant Synthesis}
An algorithm for inferring invariants in propositional STRIPS domains is proposed by \cite{Rintanen-00,Rintanen-08}. It synthesises not only mutual-exclusion invariants, but also other types of invariants. The algorithm works on a ground representation of the domain and, starting from an inductive definition of invariants as formulae that are true in the initial state and are preserved by the application of every action, the algorithm is based on an iterative computation of a fix-point, which is useful for reasoning about all the invariants of a domain at the same time rather than inferring some invariants first and then use them for inferring others. 

Rintanen's algorithm uses a guess, check and repair approach but, unlike our technique, it starts from stronger invariant candidates and then progressively weaken them if they are not preserved by the actions. Thus, the \emph{repair} phase consists in considering a less general invariant instead of a more general one. For example, let us consider the schema $\sigma = x \neq y \rightarrow P(x, y) \vee Q(y, z)$ as a potential invariant (all the invariants considered have this implicative form). One of the weakening operation consists of identifying two variables. In this case, if $z$ is set equal to $x$, the weaker candidate $\sigma = x \neq y \rightarrow P(x, y) \vee Q(y, x)$ is obtained and checked. 

This technique has been successfully used within both Graphplan based planners \citep{BluFur97}, where it helps to identify unreachable subgoals, and SAT-based planners \citep{KautzSelman-99}, where it can be useful to reduce the amount of search needed. However, although its implementation is limited to invariants involving two literals at the most, it incurs a high performance penalty on large instances.  

In \cite{Rintanen-14}, Rintanen extends the original algorithm presented in \cite{Rintanen-00, Rintanen-08} in order to handle temporal domains. As the original algorithm, the temporal one works on ground domains, not using a lifted representation at any stage. The format of the invariants found is $l_1 V(r) \ l_2$, where $l_1$ and $l_2$ are positive or negative ground facts, $r$ is a floating point number, and the formula says that either $l_1$ is true or $l_2$ is true over the interval [0..r] relative to the current time point. If $r=inf$, the formula means that if $l_1$ is false, then $l_2$ will remain true forever. 

Since Rintanen's invariant synthesis exploits the initial conditions and the ground representation of the domain, it usually finds a broader range of invariants than our technique. However, this makes the invariant synthesis suffer high computational cost. Reachability analysis on a ground representation of the planning instances is computationally very expensive and, while our algorithm takes a few seconds to run, Rintanen's synthesis employs tens of minutes to find invariants in several domains (see Table 1 in \cite{Rintanen-14}). 

We do not directly compare our technique against Rintanen's algorithm in Section \ref{sec:experiments} because the two techniques aim to find different types of invariants (our focuses on mutual exclusion invariants, while Rintanen's tackles a broad range of invariant types) and they work on different representations of the problem (lifted versus ground). However, in what follows, we give examples of the output of Rintanen's technique for completeness.

Let us consider the $\sfs{Crewplanning}$ domain (IPC6 and IPC7). For each crew member $c_i$, Rintanen's algorithm finds ground invariants of the type: $$not \ current\_day-c_i-d_j \ V(inf) \ not \ current\_day-c_i-d_k$$ which means that if it is day $d_j$ for the crew member $c_i$, it cannot be day $d_k$ at the same time. All these invariants correspond to the lifted invariant ${current\_day \ 0 \ [1]}$ that is found by our invariant synthesis. For the same domain, however, Rintanen's algorithm finds additional invariants that express temporal relations between atoms. Our technique does not aim to find this type of invariants. For example, Rintanen's method finds temporal invariants of the form: $$done\_sleep-c_i-d_k \ V(255) \ not \ done\_meal c_i-d_{(k+2)}$$ which means that, for the crew member $c_i$, the atom $done\_meal$ in day $k+1$ becomes true 255 time units after the atom $done_sleep$ was true in day $k$. In fact, in day $k$, $done\_sleep$ is made true by the end effects of the action $sleep$. From this time point, in order to make $done\_meal$ true the day after $k+1$, two actions need to be executed: $post\_sleep$, with duration 195, and $have\_meal$ with duration 60, for a total time separation of 255 time units. For the $\sfs{Crewplanning}$ domain, the run time of our algorithm is $0.29$ seconds, while Rintanen's one has a runtime of 1 minute and $23.24$ seconds for hard instances. This is actually one of the best run times, since for problems such as \sfs{Parcprinter}, \sfs{Elevators}, \sfs{Sokoban}, \sfs{Transport-numeric} and other the algorithm has a run time of more than 4 hours. Given these run times, it does not seem plausible to use Rintanen's algorithm as a pre-processing steps to improve search in planning, which is one of the most important cases of use of invariant synthesis algorithms.

\subsection{DISCOPLAN}

DISCOPLAN (\emph{DIScovering State COnstraints for PLANning}) \citep{Gerevini-98} is a technique for generating invariants from PDDL Level 1 tasks. DISCOPLAN discovers not only mutual exclusion invariants, but also other types of invariants: static predicates, simple implicative, (strict) single valuedness and n-valuedness, anti-simmetry, OR and XOR invariants.

Considering mutual exclusion invariants, DISCOPLAN uses a guess, check and repair approach similar to our approach: an hypothetical invariant is generated by analysing simultaneously the preconditions and the effects of each action to see whether an instantiation of a literal is deleted whenever another instantiation of the same literal is added. Then, this candidate is checked against all the other actions and the initial conditions. If the hypothetical invariant is not found to be valid, then all the unsafe actions are collected together and a set of possible refinements are generated. However, whereas our technique tries to refine a candidate as soon as an unsafe action is found, DISCOPLAN tries to address all the causes of unsafety at the same time while generating refinements. This approach leads to more informed choices on how to refine hypothetical invariants and can result in the identification of more invariants. However, it is more expensive from a computational point of view, which is 
why DISCOPLAN is often inefficient on big instances. 

DISCOPLAN can be used not only for finding invariants, but also for inferring action-parameter domains. An action-parameter domain is a set including all the objects that can be used to instantiate the parameters of an action. Such sets of possible tuples of arguments are found by forward propagation of ground atoms from the initial state. This technique is related to the reachability analysis performed by Graphplan \citep{BluFur97}, but does not implement mutual exclusion calculation.

DISCOPLAN is usually used in combination with SAT encodings of planning problems. In particular, a pre-processing step is performed over the domain under consideration in order to find invariants and parameter domains, then the domain as well as the invariants and the parameter domains are translated into SAT. Finally, a SAT-based planner is used to solve the resulting translated domain. SAT-based planners \citep{KautzSelman-99,Huang-10} show significant speed-up when invariants and action-parameter domains are used. 

\subsection{Type Inference Module}
TIM (\emph{Type Inference Module}) \citep{FoxLong-98} uses a different approach for finding invariants in PDDL Level 1 domains. More precisely, TIM is a pre-preprocessing technique for inferring object types on the basis of the actions and the initial state. Data obtained from this computation are then used for inferring invariants. TIM recognises four kinds of invariants (invariants of type 2 correspond to mutual exclusion invariants):
\begin{enumerate}\itemsep=0pt
\item \emph{Identity invariants} (for example, considering the domain $\sfs{Blockworld}$, two objects cannot be at the same place at the same time);
\item \emph{Unique state invariants} (for example, every object must be in at most one place at any time point);
\item \emph{State membership invariants} (for example, every object must be in at least one place at any time point); and
\item \emph{Resource invariants} (for example, in a 3-blocks world, there are 4 surfaces).
\end{enumerate}

The invariants found by TIM have been exploited for improving performance within the planner STAN \citep{fox2011}.

\subsection{Knowledge representation and engineering}

In addition to works that address the creation of invariants directly, there are works in the literature that highlight the importance of multi-valued state variables for debugging domain descriptions and for assisting the domain designer in building correctly encoded domains \citep{FoxLong-98,Bernardini-11-iwpss,Cushing-07}. In particular, \cite{Cushing-07} analyse well-studied IPC temporal and numeric domains and reveal several modelling errors that affect such domains. This analysis lead the authors to suggest better ways of describing temporal domains. They identify as a central feature to do so the direct specification of multi-valued state variables and show how this can help domain experts to write correct models.

Other works in the literature use the creation of invariants and state variables as an intermediate step in the translation from PDDL to other languages. In particular, \cite{Huang-10} introduce SASE, a novel SAT encoding scheme based on the SAS+ formalism \cite{Helmert-09}. The state variables (extracted from invariants) used by SASE play a key role in achieving its efficiency. Since our technique generates a broader set of invariants than related techniques, it gives rise to SAS+ tasks with smaller sets of state variables. We speculate that this would in turn reverberate positively on SAT-based planners that use a SASE encoding. Testing of this hypothesis is part of our future work. 

\section{Conclusions and Future Work}
\label{sec:conclusions}
In this paper, we present a technique for automatically finding lifted mutual exclusion invariants in temporal planning domains expressed in PDDL2.1. Our technique builds on Helmert's invariant synthesis \citep{Helmert-09}, but generalised it and extends it to temporal domains. Synthesising invariants for temporal tasks is much more complex than for tasks with instantaneous actions only because actions can occur simultaneously or concurrently and interfere with each other. For this reason, a simple generalisation of Helmert's approach does not work in temporal settings. In extending the theory to capture the temporal case, we have had to formulate invariance conditions that take into account the entire structure of the actions as well as the possible interactions between them. As a result, we have constructed a technique that is significantly more comprehensive than the related ones. Our technique is presented here formally and proofs are offered that support its soundness.

Since our technique, differently from related approaches, works at the lifted level of the representation, it is very efficient. The experimental results show that its run time is negligible, while it allows us to find a wider set of invariants, which in turn results in synthesising a smaller number of state variables to represent a domain. The experiments also indicate that the temporal planners that use state variables to represent the world usually benefit from dealing with a smaller number of state variables. 

Our approach to finding invariants can be incorporated in any translation from PDDL2.1 to a language based on multi-valued state variables. For example, we have used (a simplified version of) the temporal invariant synthesis described in this paper in our translator from PDDL2.1 to NDDL, which is the domain specification language of the planner EUROPA2 \citep{Bernardini-08-trans}. EUROPA2 has been the core planning technology for several NASA space mission operations. It uses a language based on multi-valued state variables that departs from PDDL2.1 in several ways. The use of our translator from PDDL2.1 to NDDL has facilitated the testing of EUROPA2 against domains of the IPCs originally expressed in PDDL2.1 \citep{Bernardini-07,Bernardini-08}. This has originally motivated our work on temporal invariant synthesis.

In future work, we plan to extend our experimental evaluation by incorporating our invariant synthesis in other planners that use a multi-valued variable representation and that are not currently publicly available. This will allow us to assess more exhaustively the impact that handling fewer state variables with broader domains has on the performance of temporal planners. In addition, we plan to exploit the metric information encoded in planning domains to find a broader range of invariants. Invariants for domains with metric fluents are interesting and challenging. We envisage that there are two kinds of situations to be considered: those in which it can be shown that a linear combination of fluents is invariant (relevant to domains with linear effects on variables) and those in which metric fluents interact with propositional fluents in a more complex structure. For example, one might think of a domain encoding the act of juggling in which the number of balls in the air plus the number in the hands is a constant, but the balls in the hand might be encoded propositionally (for example, by a literal {\tt holding\_left} and so on), while those in the air as a count. Finding the invariant in this case is a challenging problem since it crosses the propositional and metric fluent spaces.

\newpage
\section*{Appendix A: PDDL2.1 Specification of the \sfs{Floortile} Domain}

\lstset{language=Lisp}         
\begin{lstlisting}[frame=single,basicstyle=\footnotesize]

(define (domain floor-tile)
 (:requirements :typing :durative-actions)
 (:types robot tile color - object)

(:predicates 	
  (robot-at ?r - robot ?x - tile)
  (up ?x - tile ?y - tile)
  (down ?x - tile ?y - tile)
  (right ?x - tile ?y - tile)
  (left ?x - tile ?y - tile)		
  (clear ?x - tile)
  (painted ?x - tile ?c - color)
  (robot-has ?r - robot ?c - color)
  (available-color ?c - color)
  (free-color ?r - robot))

(:durative-action change-color
  :parameters (?r - robot ?c - color ?c2 - color)
  :duration (= ?duration 5)
  :condition (and (at start (robot-has ?r ?c))
                  (over all (available-color ?c2)))
  :effect (and (at start (not (robot-has ?r ?c))) 
               (at end (robot-has ?r ?c2))))

(:durative-action paint-up
  :parameters (?r - robot ?y - tile ?x - tile ?c - color)
  :duration (= ?duration 2)
  :condition (and (over all (robot-has ?r ?c))
                  (at start (robot-at ?r ?x))
                  (over all (up ?y ?x))
                  (at start (clear ?y)))
  :effect (and (at start (not (clear ?y)))
               (at end (painted ?y ?c))))

(:durative-action paint-down
  :parameters (?r - robot ?y - tile ?x - tile ?c - color)
  :duration (= ?duration 2)
  :condition (and (over all (robot-has ?r ?c))
  		  (at start (robot-at ?r ?x))
		  (over all (down ?y ?x))
		  (at start (clear ?y)))
  :effect (and (at start (not (clear ?y)))
  	       (at end (painted ?y ?c))))

(:durative-action up 
  :parameters (?r - robot ?x - tile ?y - tile)
  :duration (= ?duration 3)
  :condition (and (at start (robot-at ?r ?x)) 
  		  (over all (up ?y ?x)) 
		  (at start (clear ?y)))
  :effect (and 
  	       (at start (not (robot-at ?r ?x)))
	       (at end (robot-at ?r ?y))
	       (at start (not (clear ?y)))
               (at end (clear ?x))))

(:durative-action down 
  :parameters (?r - robot ?x - tile ?y - tile)
  :duration (= ?duration 1)
  :condition (and (at start (robot-at ?r ?x))
  		  (over all (down ?y ?x)) 
		  (at start (clear ?y)))
  :effect (and (at start (not (robot-at ?r ?x)))
  	       (at end (robot-at ?r ?y))
	       (at start (not (clear ?y)))
               (at end (clear ?x))))

(:durative-action right 
  :parameters (?r - robot ?x - tile ?y - tile)
  :duration (= ?duration 1)
  :condition (and (at start (robot-at ?r ?x))
  		  (over all (right ?y ?x))
		  (at start (clear ?y)))
  :effect (and (at start (not (robot-at ?r ?x)))
  	       (at end (robot-at ?r ?y))
	       (at start (not (clear ?y)))
               (at end (clear ?x))))

(:durative-action left 
  :parameters (?r - robot ?x - tile ?y - tile)
  :duration (= ?duration 1)
  :condition (and (at start (robot-at ?r ?x)) 
  		  (over all (left ?y ?x)) 
		  (at start (clear ?y)))
  :effect (and (at start (not (robot-at ?r ?x)))
  	       (at end (robot-at ?r ?y))
	       (at start (not (clear ?y)))
               (at end (clear ?x))))
)
\end{lstlisting}

\section*{Appendix B: PDDL2.1 Specification of the \sfs{Depot} Domain}
\lstset{language=Lisp}         
\begin{lstlisting}[frame=single,basicstyle=\footnotesize]
(define (domain Depot)
(:requirements :typing :durative-actions)
(:types place locatable - object
	depot distributor - place
        truck hoist surface - locatable
        pallet crate - surface)

(:predicates (at ?x - locatable ?y - place) 
             (on ?x - crate ?y - surface)
             (in ?x - crate ?y - truck)
             (lifting ?x - hoist ?y - crate)
             (available ?x - hoist)
             (clear ?x - surface))
	
(:durative-action Drive
:parameters (?x - truck ?y - place ?z - place) 
:duration (= ?duration 10)
:condition (and (at start (at ?x ?y)))
:effect (and (at start (not (at ?x ?y))) (at end (at ?x ?z))))

(:durative-action Lift
:parameters (?x - hoist ?y - crate ?z - surface ?p - place)
:duration (= ?duration 1)
:condition (and (over all (at ?x ?p)) (at start (available ?x)) 
                (at start (at ?y ?p)) (at start (on ?y ?z)) 
                (at start (clear ?y)))
:effect (and (at start (not (at ?y ?p))) (at start (lifting ?x ?y)) 
             (at start (not (clear ?y)))(at start (not (available ?x))) 
             (at start (clear ?z)) (at start (not (on ?y ?z)))))

(:durative-action Drop 
:parameters (?x - hoist ?y - crate ?z - surface ?p - place)
:duration (= ?duration 1)
:condition (and (over all (at ?x ?p)) (over all (at ?z ?p)) 
		(over all (clear ?z)) (over all (lifting ?x ?y)))
:effect (and (at end (available ?x)) (at end (not (lifting ?x ?y))) 
             (at end (at ?y ?p)) (at end (not (clear ?z))) 
             (at end (clear ?y))(at end (on ?y ?z))))

(:durative-action Load
:parameters (?x - hoist ?y - crate ?z - truck ?p - place)
:duration (= ?duration 3)
:condition (and (over all (at ?x ?p)) (over all (at ?z ?p)) 
		(over all (lifting ?x ?y)))
:effect (and (at end (not (lifting ?x ?y))) (at end (in ?y ?z)) 
             (at end (available ?x))))

(:durative-action Unload 
:parameters (?x - hoist ?y - crate ?z - truck ?p - place)
:duration (= ?duration 4)
:condition (and (over all (at ?x ?p)) (over all (at ?z ?p)) 
                (at start (available ?x)) (at start (in ?y ?z)))
:effect (and (at start (not (in ?y ?z))) (at start (not (available ?x))) 
             (at start (lifting ?x ?y))))

)

\end{lstlisting}

\section*{Appendix C: Proofs}

\begin{proof}[Proof of Proposition \ref{prop: ser}]
The action $a_1$ is applicable in $s_0$ by definition. Assuming that $a_j$ is applicable in $s_{j-1}$ for $j=1, \ldots, k$, we now show that $a_{k+1}$ is applicable in $s_k$. Note that from the definition of transition function $\xi$ for single actions $s_k = (s \setminus \bigcup\limits_{j=1}^{k} Eff^-_{a_j}) \cup \bigcup\limits_{j=1}^{k}Eff^+_{a_j}$. Since $Pre^+_{a_{k+1}} \subseteq s$ and $Pre^-_{a_{k+1}}  \cap s = \emptyset$ by assumption and $a_{k+1}$ is not interfering with $a_1$, $a_2, \ldots, a_k$, we have that $Pre^+_{a_{k+1}} \subseteq s_k$ and $Pre^-_{a_{k+1}}  \cap s_k = \emptyset$. In addition, note that: $s_n = (s \setminus \bigcup\limits_{j=1}^{n} Eff^-_{a_j}) \cup \bigcup\limits_{j=1}^{n}Eff^+_{a_j} = \xi(s, A)$.
\end{proof}


\begin{proof}[Proof of Proposition \ref{prop: equiv safe}]
(ii)$\Rightarrow$(iii) is trivial and (iii)$\Rightarrow$(i) is an immediate consequence of (\ref{eq: state split}) and of the fact that $w(\cT,\gamma, s)=w(\cT,\gamma, s_{\gamma})$ and $w(\cT,\gamma, s')=w(\cT,\gamma, s'_{\gamma})$.

Finally, (i)$\Rightarrow$(ii) follows from the following argument. Given any $s\in \cS_{A_{\gamma}}$ such that $w(\cT,\gamma, s)\leq 1$, consider $s^*:=s_\gamma\cup Pre^+_{A_{\neg\gamma}}$. Since $s^*_{\gamma}=s_{\gamma}\in \cS_{A_{\gamma}}$ and 
$s^*_{\neg\gamma}=Pre^+_{A_{\neg\gamma}}\in \cS_{A_{\neg\gamma}}$, it follows that $s^*\in\cS_A$. If we consider the successor states $s' = \xi(s, A_{\gamma})$ and $s'^{*}= \xi(s^*, A_{\gamma})$, it follows from (\ref{eq: state split}) that 
$$s' _{\gamma}= \xi(s_{\gamma}, A_{\gamma})=\xi(s^*_{\gamma}, A_{\gamma})=s'^{*}_{\gamma}$$
Therefore, 
$$w(\cT,\gamma, s' )=w(\cT,\gamma, s' _{\gamma})=w(\cT,\gamma, s'^{*} _{\gamma})=w(\cT,\gamma, s'^{*} )\leq 1$$
where the last equality follows from the assumption that $A$ is strongly $\gamma$-safe. 
\end{proof}

\begin{proof}[Proof of Theorem \ref{prop:weight class}]

If $A$ is $\gamma$-unreachable and $A$ is applicable in the state $s$, It follows that $Pre^+_{A}\subseteq s$ and thus
$w(\cT, \gamma, s)\geq |Pre^+_{A_{\gamma}})|\geq 2$.
This shows that the condition $w(\cT, \gamma, s)\leq 1$ is never verified and thus $A$ is strongly $\gamma$-safe.

If $A$ is $\gamma$-irrelevant and $A$ is applicable in the state $s$, we have that 
the successor state $s' = \xi(s, a)\subseteq s$. This yields $w(\cT, \gamma, s')\leq w(\cT, \gamma, s)$. This implies that $A$ is strongly $\gamma$-safe.

Suppose $A$ is $\gamma$-heavy and consider the state
$s=Pre^+_{A}$. $A$ is applicable in $s$ and
$w(\cT, \gamma, s)=|Pre^+_{A_{\gamma}}|\leq 1$.
After applying $A$ in $s$,  the successor state $s' = \xi(s, A)$ is such that
$s'\supseteq Eff^+_{A_{\gamma}}$
This yields $w(\cT, \gamma, s')\geq |Eff^+_{\alpha}|\geq 2$ and proves that $A$ is not strongly $\gamma$-safe.
\end{proof}

\begin{proof}[Proof of Theorem \ref{prop:relevant actions}]
We will prove the corresponding property for $A_{\gamma}$ making use of Condition (iii) of Proposition \ref{prop: equiv safe}.

We first analyse the case when $A$ is balanced or unbalanced. 
Let $Pre^+_{A_{\gamma}}=\{q_1\}$ and $Eff^+_{A_{\gamma}}=\{q_2\}$.
Suppose now that $A$ is balanced and fix a state $s \in \gamma(\cT)$ such that $w(\cT, \gamma, s) \leq 1$ and $A_{\gamma}$ is applicable in $s$.  Clearly, $q_1 \subseteq s$ so, necessarily, $s=\{q_1\}$ and $w(\cT, \gamma, s) =1$. Consider the subsequent state $s' = \xi(s, A_{\gamma})$. If $q_1=q_2$, we have that $s'= s$ so that $w(\cT, \gamma, s') = 1$. If instead $q_1\in Eff^-_{A_{\gamma}}$, we have that $s'\subseteq (s\cup \{q_2\})\setminus \{q_1\}=\{q_2\}$ and thus $w(\cT, \gamma, s') =1$.

Suppose that $A$ is unbalanced and consider the state $s=\{q_1\}$. The subsequent state $s' = \xi(s, A_{\gamma})=\{q_1, q_2\}$ so that 
$w(\cT, \gamma, s') =2$.

We now consider the remaining two cases. Let  $Eff^+_{A_{\gamma}}=\{q_2\}$. Suppose now that $A$ is bounded and fix a state $s \in \gamma(\cT)$ such that $w(\cT, \gamma, s) \leq 1$ and $A_{\gamma}$ is applicable in $s$. Since $A$ is $\gamma$-relevant, the subsequent state $s' = \xi(s, A_{\gamma})$ is such that $w(\cT, \gamma, s')\leq  w(\cT, \gamma, s)+1$. The only case we need to consider is thus when $w(\cT, \gamma, s)=1$. Suppose that $s=\{q_1\}$. Since, by assumption $Pre_{A_{\gamma}}\cup Eff_{A_{\gamma}}=\cT(\gamma)$, it follows that $q_1\in Pre_{A_{\gamma}}\cup Eff_{A_{\gamma}}$. Clearly $q_1\not\in Pre^-_{A_{\gamma}}$ (otherwise $A_{\gamma}$ would not be applicable on the state $s$). Therefore, necessarily, either $q_1\in Eff^+_{A_{\gamma}}$ or $q_1\in  Eff^-_{A_{\gamma}}$. In the first case, we have that $q_1=q_2$ and thus $s'=s=\{q_1\}$. In the second case, $s'=\{q_2\}$. In both cases, $w(\cT, \gamma, s') =1$.

Finally, if $A$ is unbounded, we consider any ground atom $q_1\in \gamma(\cT)\setminus (Pre_{A_{\gamma}}\cup Eff_{A_{\gamma}})$ and we put $s=\{q_1\}$. Clearly $A_{\gamma}$ is applicable in $q_1$ since $Pre^+_{A_{\gamma}}=\emptyset$ and $q_1\not\in Pre^-_{A_{\gamma}}$, and $w(\cT, \gamma, s) =1$. Since it also holds that $q_1\not\in Eff^-_{A_{\gamma}}$, we have that the subsequent state $s' = \xi(s, A_{\gamma})=\{q_1, q_2\}$ and $w(\cT, \gamma, s') =2$.
\end{proof}

\begin{proof}[Proof of Proposition \ref{rem: strongly-safe-action}]
Write $A = \{a_1, \ldots, a_n\}$ and let $s$ be state such that $A$ is applicable in $s$. Note that from Proposition \ref{prop: ser}, the actions in $A$ can be serialised and the successor state $s'=\xi(s,A)$ can be recursively obtained as $s_0 = s$, $s_k = \xi(s_{k-1}, a_k)$, $k=2, \ldots, n$ and $s'=s_n$. By the assumption, it follows that $w(\cT, \gamma,s_i) \leq 1$ for every $i$. In particular, $w(\cT, \gamma,s') \leq 1$.
\end{proof}


\begin{proof}[Proof of Proposition \ref{prop: invariance safety}]
Given any instance $\gamma$ and any valid induced simple plan $\pi$ having $trace(\pi)=\{S_i=(t_i,s_i)_{i=0,\dots , \bar k}\}$ with happening sequence ${\bf A}_{\pi}$, we have that the state sequence $(s_0, \dots , s_{\bar k})\in {\bf S}_{{\bf A}_{\pi}}$. Therefore, since $w(\cT,\gamma, s_0)\leq 1$ (recall that $s_0=Init$ and  $w(\cT,\gamma, Init)\leq 1$ for every $\gamma$), the individual $\gamma$-safety of ${\bf A}_{\pi}$ implies that $w(\cT,\gamma, s_j)$ $\leq 1$ for every $j=1,\dots , \bar k$. Since this holds for every $\gamma$ and every valid plan, invariance of $\cT$ follows.
\end{proof}

\begin{proof}[Proof of Proposition \ref{prop: ind safe}]
(i): If $(s^0, s^1,\dots , s^n)\in{\bf S}_{{\bf A}}$, we have that 
$$(s^0, s^1,\dots , s^k)\in{\bf S}_{{\bf A}_1^k},\quad (s^{h-1}, s^1,\dots , s^n)\in{\bf S}_{{\bf A}_h^n}\,.$$ Therefore, if $w(\cT, \gamma, s^0)\leq 1$, from the fact that 
${\bf A}_1^k$ is individually $\gamma$-safe, it follows that $w(\cT, \gamma, s^j)\leq 1$ for every $j=1,\dots , k$. In particular, being $k\geq h-1$, we have that $w(\cT, \gamma, s^{h-1})\leq 1$. From the fact that ${\bf A}_h^n$ is also individually $\gamma$-safe, it now follows that $w(\cT, \gamma, s^j)\leq 1$ for every $j=h,\dots , n$. This implies that $w(\cT, \gamma, s^j)\leq 1$ for every $j=1,\dots , n$ and proves the thesis.

(ii): Suppose $(s^0, s^1,\dots ,s^{k-1}, s^{k+1}, \dots , s^n)\in{\bf S}_{{\bf A}'}$ where $s^{k+1}=\xi(A^k\cup A^{k+1}, s^{k-1})$. Put $s^k=\xi(A^k, s^{k-1})$ and note that, by serialisability (see Proposition \ref{prop: ser}), $s^{k+1}=\xi(A^{k+1}, s^{k})$, and therefore $(s^0, s^1,\dots ,s^{k-1}, s^k, s^{k+1}, \dots , s^n)\in{\bf S}_{{\bf A}}$. This implies that $w(\cT, \gamma, s^j)\leq 1$ for every $j=1,\dots , n$ and proves the thesis.

(iii): If $(s^0, s'^0, s^1,\dots , s'^{n-1}, s^n)\in{\bf S}_{{\bf A}'}$, then, $s'^{k-1}=s^k$ for every $k=1,\dots , n$ and $(s^0, s^1,\dots , s^n)\in{\bf S}_{{\bf A}}$. Individual $\gamma$-safety of $\bf A$ now yields the thesis.
Regarding ${\bf A}''$ thesis follows from the fact that ${\bf A}'$ is individually $\gamma$-safe and previous item (ii).
\end{proof}

\begin{proof}[Proof of Proposition \ref{prop: exec two}]

(i)$\Rightarrow$(ii): Note that if $(s^0, s^1, s^2)\in {\bf S}_{\bf A}$, it follows that $Pre^+_{A^1}\subseteq s^0$. Since $s^1=(s^0\setminus Eff^-_{A^1})\cup Eff^+_{A^1}$ it follows that $\Gamma^+_{A^1}\subseteq s^1$. Analogously, using the fact that $(Pre^-_{A^1})^c\supseteq s^0$, it follows that $(\Gamma^-_{A^1})\supseteq s^1$. Since $A^2$ must be applicable on $s^1$ conditions (ii) immediately follow.

(ii)$\Rightarrow$(i): Consider $s^0=Pre^+_{A^1}\cup (Pre^+_{A^2}\setminus Eff^+_{A^1})$. Straightforward set theoretic computation, using conditions (ii), show that $A^1$ can be applied on $s^0$ and that $A^2$ can be applied on $s^1=\xi(A^1, s^0)$. This proves (i).
\end{proof}

\begin{proof}[Proof of Proposition \ref{prop: reach two}]
(ii)$\Rightarrow$(i): It follows from the proof of (ii)$\Rightarrow$(i) in Proposition \ref{prop: exec two} that there exists $(s^0, s^1, s^2)\in {\bf S}_{\bf A}$ with $s^0=Pre^+_{A^1}\cup (Pre^+_{A^2}\setminus Eff^+_{A^1})$. 
By the assumption made $w(\cT,\gamma, s^0)\leq 1$ and this proves (i).

(i)$\Rightarrow$(ii): it follows from the fact that if $(s^0, s^1, s^2)\in {\bf S}_{\bf A}$, necessarily $Pre^+_{A^1}\cup (Pre^+_{A^2}\setminus Eff^+_{A^1})\subseteq s^0$.
\end{proof}

\begin{proof}[Proof of Proposition \ref{prop: exec reach}]
(i): If $(s^0, s^1,\dots , s^{n-1}, s^n)\in{\bf S}_{{\bf A}}$, we have that 
$(s^0, s^1,\dots , s^{n-1}, s'^n)\in{\bf S}_{{\bf A}'}$ for a suitable state $s'^n$. Result then follows from the definition of executability and $\gamma$-reachability.

(ii): This follows immediately from serialisability (see Proposition \ref{prop: ser}).
\end{proof}

\begin{proof}[Proof of Proposition \ref{prop: heavy presence}] Let $(s^0, \dots , s^n)\in {\bf S}_{\bf A}(\gamma)$ and suppose that $A^j$ is either $\gamma$-heavy or $\gamma$-relevant unbalanced. Then, necessarily, $w(\cT,\gamma, s^{j})\geq 2$.
\end{proof}

\begin{proof}[Proof of Proposition \ref{prop: safe split}]
1.: It follows from (\ref{eq: state split}) that, given any sequence of states $(s^0,\dots ,s^n)\in {\bf S}^{n+1}$, we have that
\begin{equation}\label{eq: sequence split}(s^0,\dots ,s^n)\in {\bf S}_{\bf A} \Leftrightarrow \left\{\begin{array}{rcl}(s^0_{\gamma},\dots ,s^n_{\gamma})&\in& {\bf S}_{{\bf A}_{\gamma}}\\
(s^0_{\neg\gamma},\dots ,s^n_{\neg\gamma})&\in& {\bf S}_{{\bf A}_{\neg\gamma}}\end{array}\right.\end{equation}
This immediately proves the 'only if' implication. On the other hand, if $s'\in {\bf S}_{{\bf A}_{\gamma}}$ and $s''\in {\bf S}_{{\bf A}_{\neg\gamma}}$, we have that $s'_{\gamma}\in {\bf S}_{{\bf A}_{\gamma}}$ and $s''_{\neg\gamma}\in {\bf S}_{{\bf A}_{\neg\gamma}}$ and thus $s=s'_{\gamma}\cup s''_{\neg\gamma}\in  {\bf S}_{\bf A}$ by (\ref{eq: sequence split}).

2. can be proven analogously to 1. and 3. follows by a straightforward extension of the arguments used to prove Proposition \ref{prop: equiv safe}.
Finally, 4. follows from the definition of strong $\gamma$-safety and previous items 1. and 2..
\end{proof}

\begin{proof}[Proof of Theorem \ref{theo: safe sequences}]
Consider the sequences restricted on the instantiation $\gamma(\cT)$ and its complement: ${\bf A}_{\gamma}$, ${\bf A}_{\neg \gamma}$ and, respectively, $\tilde{\bf A}_{\gamma}$, $\tilde{\bf A}_{\neg\gamma}$. 
By virtue of Proposition \ref{prop: safe split}, we have that ${\bf A}_{ \gamma}$ is $\gamma$-safe and to prove the result it is sufficient to show that $\tilde{\bf A}_{\gamma}$  is either non executable or $\gamma$-safe.

Assume that $\tilde{\bf A}_{\gamma}$ is executable and let $(s^0, s^1,s^2,\dots ,  s^{n+1}, s^{n+2})\in{\bf S}_{\tilde{\bf A}_{\gamma}}$ be such that $w(\cT, \gamma, s^0)\leq 1$. Since $(s^0, s^1)\in {\bf S}_{A^1_{\gamma}}$ and $A^1_\gamma$ is strongly safe, it follows that $w(\cT, \gamma, s^1)\leq 1$. Note now that $s^j=s^{j-1}\setminus Eff^-_{B^{j-1}_{\gamma}}$ for $j=2,\dots ,n+1$ and this immediately implies that 
$$w(\cT, \gamma, s^{n+1} )\leq w(\cT, \gamma, s^{n} )\leq \cdots \leq w(\cT, \gamma, s^{1} )\leq1$$
What remains to be shown is that also $w(\cT, \gamma, s^{n+2} )\leq 1$. To this aim, we introduce the following sets:
$$\Omega'=(\cup_{i=1}^n Eff^-_{B_{\gamma}^i})\cap Pre^-_{A_{\gamma}^2},\quad \Omega''=(\cup_{i=1}^n Eff^-_{B_{\gamma}^i})\setminus Pre^-_{A_{\gamma}^2}$$
Note that since ${\bf A}$ is executable, we have that $Pre^-_{A_{\gamma}^2}\cap Eff^+_{A^1}=\emptyset$. Consequently, also
$\Omega'\cap Eff^+_{A^1_{\gamma}}=\emptyset$. Therefore, $(s^0\setminus\Omega', s^1\setminus\Omega')\in {\bf S}_{A^1_{\gamma}}$. On the other hand, we also have that $A^2_{\gamma}$ is applicable on the state $s^{n+1}\cup \Omega ''$. This implies that there exists $\tilde s\in\cS$ such that $(s^{n+1}\cup \Omega '',\tilde s)\in {\bf S}_{A^2_{\gamma}}$. Note that $w(\cT,\gamma, \tilde s)\geq w(\cT,\gamma,  s^{n+2})$
Since $s^{n+1}\cup \Omega ''=(s^1\setminus
\cup_{i=1}^n Eff^-_{B_{\gamma}^i})\cup\Omega ''=s^1\setminus \Omega'$, we deduce that $(s^0\setminus\Omega', s^1\setminus\Omega', \tilde s)\in{\bf S}_{\bf A_{\gamma}}$. Since $w(\cT,\gamma, s^0\setminus \Omega ')\leq w(\cT,\gamma, s^0)\leq 1$, the fact that $\bf A_{\gamma}$ is $\gamma$-safe implies that $w(\cT,\gamma, \tilde s)\leq 1$. This also implies that $w(\cT,\gamma,  s^{n+2})\leq 1$ and the proof is complete.
\end{proof}

\begin{proof}[Proof of Proposition \ref{prop: aux durative 1}]
(i): Suppose $(s^0, s^1, s^2)\in {\bf S}_{(a^{st}, a^{inv})}$. Since $a^{inv}$ only contains preconditions, we have that $s^1=s^2$. Note now that $s^1=\xi(a^{st}, s^0)=(s^0\cup Eff^+_{a^{st}})\setminus Eff^-_{a^{st}}$ must satisfy the conditions $Pre^+_{a^{inv}}\subseteq s^1\subseteq (Pre^-_{a^{inv}})^c$.
This yields $Pre^+_{a^{inv}}\subseteq s^0\cup Eff^+_{a^{st}})$ and thus $Pre^+_{a^{inv}}\setminus Eff^+_{a^{st}} \subseteq s^0$. Similarly, from $s^0\setminus Eff^-_{a^{st}}\subseteq (Pre^-_{a^{inv}})^c$, we obtain that 
$s^0\subseteq (Pre^-_{a^{inv}}\setminus Eff^-_{a^{st}})^c$. This implies that also $a^{st}_*$ is applicable on $s^0$ and $s^1=\xi(a^{st}_*, s^0)$ since $a^{st}$ and $a^{st}_*$ have the same effects. 
If instead $(s^0, s^1)\in {\bf S}_{a^{st}_*}$, we have that $a^{st}$ is applicable on $s^0$ (since the preconditions of $a^{st}$ are also preconditions of $a^{st}_*$) and $s^1=\xi(a^{st}_*, s^0)=\xi(a^{st}, s^0)$.
(ii) is proven similarly to (i). (iii) follows from (i) and (ii) and, finally, (iv), (v), and (vi) follow, respectively, from (i), (ii), and (iii).
\end{proof}

\begin{proof}[Proof of Proposition \ref{prop: aux durative 2}]
Since ${\bf A}_*$ differs from ${\bf A}$ only for having more preconditions, 
it holds that ${\bf S}_{\bf A}\supseteq {\bf S}_{\bf A_*}$. Conversely, suppose $(s^0, \dots ,s^{n})\in {\bf S}_{\bf A}$. Then, $(s^0,s^1, s^1)\in {\bf S}_{(a^{st}, a^{inv})}$. Therefore, by (i) of Proposition \ref{prop: aux durative 1}, we have that $(s^0, s^1)\in {\bf S}_{a^{st}_*}$. Similarly, using (ii) of Proposition \ref{prop: aux durative 1}, we obtain that $(s^{n-1}, s^{n})\in {\bf S}_{a^{end}_*}$. These two facts together with  $(s^1, s^2,\dots , s^{n-1})\in {\bf S}_{(A^2,  \dots, A^{n-1})}$, yield $(s^0, \dots ,s^{n})\in {\bf S}_{\bf A_*}$.
\end{proof}

\begin{proof}[Proof of Proposition \ref{prop: aux simple safe}]
Note that $Da_*$, being $\gamma$-reachable and $a^{st}_*$ strongly $\gamma$-safe, is simply $\gamma$-safe if and only if $Da_{*_{\gamma}}$ is individually $\gamma$-safe. This last fact is equivalent to show that, given any state sequence $(s^0,s^1,s^2)\in {\bf S}_{Da_{*_{\gamma}}}$ such that $s^0\in\gamma(\cT)$ and $w(\cT, \gamma, s^0)\leq 1$, it holds that $w(\cT, \gamma, s^i)\leq 1$ for $i=2$ (since for $i=1$ follows from the strong safety of $a^{st}_*$).
Put
$$\cW_\gamma:=\{s^1\in\gamma(\cT)\,|\, \exists s^0, s^2\in\gamma(\cT), \, w(\cT, \gamma, s^0)\leq 1,\, (s^0, s^1, s^2)\in {\bf S}_{Da_{*\gamma}}\}$$
We need to show that, for every $s^1\in \cW_\gamma$, we have that $w(\cT, \gamma, s^2)\leq 1$, where 
$$s^2=\xi(a^{end}_{*\gamma}, s^1)=s^1\cup Eff^+_{a^{end}_{*\gamma}}\setminus Eff^-_{a^{end}_{*\gamma}}$$
Since $a^{end}_ {*}$ is $\gamma$-relevant unbounded, the condition $w(\cT, \gamma, s^2)\leq 1$ is clearly equivalent to
\begin{equation}\label{eq: safe condition}s^1\subseteq Eff^+_{a^{end}_{*\gamma}}\cup Eff^-_{a^{end}_{*\gamma}}\end{equation}

Since $a^{st}_*$ is $\gamma$-reachable and strongly $\gamma$-safe, it follows from Theorem \ref{prop:weight class} that is either $\gamma$-irrelevant or $\gamma$-relevant.
If $a^{st}_*$ is $\gamma$-irrelevant and $|Pre^+_{a^{st}_{*\gamma}}|=1$, we have that $\cW_\gamma=\{ Pre^+_{a^{st}_{*\gamma}}\setminus Eff^-_{a^{st}_{*\gamma}}\}$. Combining with (\ref{eq: safe condition}), we thus have that in this case $Da_*$ is $\gamma$-safe if and only if 
\begin{equation}\label{eq: safe condition 2}Pre^+_{a^{st}_{*\gamma}}\setminus Eff^+_{a^{st}_{*\gamma}}
\subseteq Eff^+_{a^{end}_{*\gamma}}\cup Eff^-_{a^{end}_{*\gamma}}\end{equation}
This leads to the two possible cases (a) and (b).

Suppose now that $a^{st}_*$ is $\gamma$-irrelevant and $|Pre^+_{a^{st}_{*\gamma}}|=0$. In this case,
$$\cW_\gamma=\{s^1\subseteq \gamma(\cT)\,|\,w(\cT,\gamma, s^1)\leq 1,\; s^1\cap (Pre^-_{a^{st}_{*\gamma}}\cup Eff^-_{a^{st}_{*\gamma}})=\emptyset\}$$
Combining with (\ref{eq: safe condition}), we thus have that in this case $Da_*$ is $\gamma$-safe if and only if  
\begin{equation}\label{eq: safe condition 3} Pre^-_{a^{st}_{*\gamma}}\cup Eff^-_{a^{st}_{*\gamma}}\cup Eff^+_{a^{end}_{*\gamma}}\cup Eff^-_{a^{end}_{*\gamma}}=\gamma(\cT)
\end{equation}
This leads to case (c).

Finally, if $\alpha^{st}_{*\gamma}$ is relevant we have that $\cW_\gamma=\{Eff^+_{a^{st}_{*\gamma}}\}$. Combining again with (\ref{eq: safe condition}), we obtain that in this case $Da_{*}$ is $\gamma$-safe if and only if condition (d) is verified.
\end{proof}

\begin{proof}[Proof of Proposition \ref{prop: start  * safe}] Since $Da$ is $\gamma$-reachable, it follows from Proposition \ref{prop: heavy presence}, that $a^{st}$ must necessarily be $\gamma$-relevant unbounded. In particular, this yields $Pre^+_{a_{\gamma}^{st}}=\emptyset$. Therefore, $Pre^+_{a^{st}_{*\gamma}}=Pre^+_{a_{\gamma}^{inv}}\setminus Eff^+_{a_{\gamma}^{st}}$ cannot have any intersection with $Eff_{a_{\gamma}^{st}}$. This says that $a_{*}^{st}$ cannot be $\gamma$-relevant balanced. Since it can neither be $\gamma$-unreachable (since $Da$ is $\gamma$-reachable), it follows from Corollary \ref{cor: strongly safe} that $a^{st}_*$ must be $\gamma$-relevant bounded. This proves (i).

Suppose now that the sequence $(\{a^{st}\}\cup A^1, a^{inv})$ is executable and let $q\in Eff^+_{A^1_{\gamma}}$. By (i), it follows that $q\in  Eff^+_{a^{st}_{\gamma}}\cup Eff^-_{a^{st}_{\gamma}}\cup Pre^-_{a^{st}_{*\gamma}}$. Note that $q$ cannot either belong to $Eff^-_{a^{st}_{\gamma}}$ or $Pre^-_{a^{st}_{\gamma}}$ since $a^{st}$ and the actions in $A^1$ must be non-interfering. On the other hand, $q$ cannot belong to $Pre^-_{a^{inv}_{\gamma}}$ otherwise the sequence would not be executable. Therefore the only possibility is that $q\in Eff^+_{a^{st}_{\gamma}}$. Therefore we have that $Eff^+_{A^1_{\gamma}}\subseteq Eff^+_{a^{st}_{\gamma}}$. Consider now $\tilde A^1$ the action set obtained from $A^1$ by eliminating all positive effects belonging to $\gamma(\cT)$. Clearly,  $\{a^{st}\}\cup A^1=\{a^{st}\}\cup \tilde A^1$. Consider now the sequence
$(\tilde A^1, a^{st}, a^{inv})$ and note that $\tilde A^1$ is $\gamma$-irrelevant, and $(a^{st}, a^{inv})$ is $\gamma$-individually safe because of (iv) of Proposition \ref{prop: aux durative 1}. Therefore, by Proposition \ref{prop: ind safe}, also $(\tilde A^1, a^{st}, a^{inv})$ is individually $\gamma$-safe, and thus also $(\{a^{st}\}\cup A^1, a^{inv})$.
\end{proof}


\begin{proof}[Proof of Theorem \ref{theo: *safety}]

Fix any valid simple plan $\pi$ 
with happening sequence ${\bf A}_{\pi}=(A_{t_0}, \dots , A_{t_{\bar k}})$ and any instance $\gamma$. We prove that ${\bf A}_{\pi}$ is individually $\gamma$-safe.

We split happenings as follows: $A_{t_i}=A_{t_i}^{st}\cup A_{t_i}^s\cup A_{t_i}^{end}$ where
\begin{itemize} 
\item $A_{t_i}^{st}$ is either empty or consists in the start fragments of durative actions in $\cG\cA^d(\gamma)$;
\item $A_{t_i}^{end}$ is either empty or consists in the ending fragments of durative actions in $\cG\cA^d(\gamma)$;
\item $A_{t_i}^s=A_t\setminus (A_{t_i}^{st}\cup A_{t_i}^{end})$ consists of strongly $\gamma$-safe actions (either instantaneous or possibly the starting and ending of durative ones in $\cG\cA^d\setminus \cG\cA^d(\gamma)$).
\end{itemize}
Note that if $A_{t_i}^{st}\neq \emptyset$, it either consists of all strongly $\gamma$-safe actions and is thus strongly $\gamma$-safe, or there exists a durative action $Da\in \cG\cA^d(\gamma)$ such that $a^{st}$ is not strongly safe and $a^{st}\in A_{t_i}^{st}$. 
Note that $A_{t_{i+1}}$ simply consists of $\{a^{inv}\}$ possibly together with other overall fragments of durative actions. Consequently, since $(A_{t_i}^{st}, A_{t_{i+1}})$ is executable, it is also executable $(A_{t_i}^{st}, a^{inv})$ (see (i) of Proposition \ref{prop: exec reach}).
By hypothesis, $Da$ is $\gamma$-reachable and $a^{st}_*$ is strongly $\gamma$-safe, we can thus apply Proposition \ref{prop: start  * safe} and conclude that $(A_{t_i}^{st}, a^{inv})$ is individually $\gamma$-safe. Using (iii) of Proposition \ref{prop: ind safe}, we obtain that $(A_{t_i}^{st}, A_{t_{i+1}})$ is individually $\gamma$-safe. Therefore, in any case, if $A_{t_i}^{st}\neq \emptyset$, $(A_{t_i}^{st}, A_{t_{i+1}})$ is individually $\gamma$-safe. 

Similarly, if $A_{t_i}^{end}\neq \emptyset$, it either consists of all strongly $\gamma$-safe actions and is thus strongly $\gamma$-safe, or there exists a durative action $Da\in \cG\cA^d(\gamma)$ such that $a^{end}$ is not strongly safe and $a^{end}\in A_{t_i}^{end}$. 
Suppose that it exists another durative action $Da'\in\cG\cA^d(\gamma)$ such that $a'^{end}\in A^{end}_{t_i}$ and $\{a^{end}, a'^{end}\}$ is $\gamma$-heavy. Then, since $\cG\cA^d$ is right relevant isolated and the two pairs $\{a^{inv}, a'^{inv}\}$, $\{a^{end}, a'^{end}\}$ are both non-interfering, the sequence $(\{a^{inv}, a'^{inv}\}, \{a^{end}, a'^{end}\})$ is $\gamma$-unreachable. Since $A_{t_{i-1}}$ only consists of actions with no effects, it then follows from Proposition \ref{prop: exec reach} that also the sequence $(A_{t_{i-1}}, A_{t_i}^{end})$ is $\gamma$-unreachable. The other possibility is that $Eff^+_{A^{end}_{t\gamma}}=Eff^+_{a^{end}_{\gamma}}$. 
Consider in this case $\tilde A^{end}_{t_i}$ to be the action set obtained from $A^{end}_{t_i}\setminus\{a^{end}\}$ by eliminating all positive effects belonging to $\gamma(\cT)$. Clearly,  $A_{t_i}^{end}=\{a^{end}\}\cup \tilde A^{end}_{t_i}$.
Note now that $(a^{inv}, a^{end})$ is individually $\gamma$-safe by (v) of Proposition \ref{prop: aux durative 1}. Considering that $A_{t_{i-1}}\setminus\{a^{inv}\}$ only contains preconditions and $\tilde A^{end}_{t_i}$ is strongly $\gamma$-safe, a repeated application of the different items of Proposition \ref{prop: ind safe} implies that $(A_{t_{i-1}}, A_{t_i}^{end})$ is individually $\gamma$-safe.

Note that, given each happening time $t_i$, there are four possibilities:
\begin{itemize}
\item $A_{t_i}^{st}=\emptyset$, $A_{t_i}^{end}=\emptyset$: in this case $A_{t_i}=A_{t_i}^s$ is strongly $\gamma$-safe by definition;
\item $A_{t_i}^{st}\neq \emptyset$, $A_{t_i}^{end}=\emptyset$: in this case, since $A_{t_i}^s$ and $(A_{t_i}^{st}, A_{t_{i+1}})$ are individually $\gamma$-safe, using (i) and (ii) of Proposition \ref{prop: ind safe}, we obtain that also $(A_{t_i}^s, A_{t_i}^{st}, A_{t_{i+1}})$ and $(A_{t_i}, A_{t_{i+1}})=(A_{t_i}^s\cup A_{t_i}^{st}, A_{t_{i+1}})$ are individually $\gamma$-safe.  
\item $A_{t_i}^{st}= \emptyset$, $A_{t_i}^{end}\neq\emptyset$: arguing analogously to the case above we obtain that $(A_{t_{i-1}}, A_{t_i})$ is individually $\gamma$-safe.
\item $A_{t_i}^{st}\neq\emptyset$, $A_{t_i}^{end}\neq\emptyset$: arguing analogously to the case above we obtain that $(A_{t_{i-1}}, A_{t_i}, A_{t_{i+1}})$ is individually $\gamma$-safe.
\end{itemize}
Using Corollary \ref{cor: ind safe} we obtain that ${\bf A}_{\pi}$ is individually $\gamma$-safe.
\end{proof}

\begin{proof}[Proof of Theorem \ref{theo: non intertwining} ] Fix any valid (possibly induced) simple plan $\pi$ 
with happening sequence ${\bf A}_{\pi}=(A_{t_0}, \dots , A_{t_{\bar k}})$ and any instance $\gamma$. We prove that ${\bf A}_{\pi}$ is individually $\gamma$-safe.

Suppose that we can prove that if $Da\in \cG\cA^d(\gamma)$ appears in $\pi$ on the time window $[t_h, t_k]$ (namely, $a^{st}\in A_{t_h}$ and 
$a^{end}\in A_{t_k}$), the corresponding action sequence ${\bf A}=(A_{t_h},\dots , A_{t_k})$ satisfies the following conditions:
\begin{itemize}
\item[(a)] for every $i\in (h, k)$, $A_{t_i}$ consists exclusively of $\gamma$-irrelevant actions;
\item[(b)] for every $i\in [h, k)$, $A_{t_i}$ does not contain actions in $\cG\cA^{st}(\gamma)$.
\end{itemize}
Note that if (b) holds true for every $Da\in \cG\cA^d(\gamma)$, we also have automatically that, 
\begin{itemize}
\item[(c)] for every $i\in (h, k]$, $A_{t_i}$ does not contain actions in $\cG\cA^{end}(\gamma)$.
\end{itemize}
Assuming this to hold, we now proceed as in the proof of Theorem \ref{theo: *safety} and we split each happening $A_{t_i}$ in the following way. We put $A_{t_i}=A_{t_i}^{st}\cup A_{t_i}^s\cup A_{t_i}^{end}$ where:
\begin{itemize} 
\item $A_{t_i}^{st}$ is either empty or consists in a start fragment in $\cG\cA^{st}(\gamma)$;
\item $A_{t_i}^{end}$ is either empty or consists in an ending fragment in $\cG\cA^{end}(\gamma)$;
\item $A_{t_i}^s=A_t\setminus (A_{t_i}^{st}\cup A_{t_i}^{end})$. 
\end{itemize}
We now consider the new plan $\tilde\pi$ given by
$$\tilde\pi=\{(t,a)\in\pi\,|\, a\in A_t^s\}\cup \{(t-\epsilon,a)\in\pi\,|\, a\in A_t^{end}\}\cup \{(t+\epsilon,a)\in\pi\,|\, a\in A_t^{st}\}$$
where $\epsilon >0$ is chosen in such a way that $\epsilon <t_{i+1}-t_i$ for every $i=0,\dots , \bar k-1$.
\begin{figure}[h]
\centering\includegraphics[width=80mm]{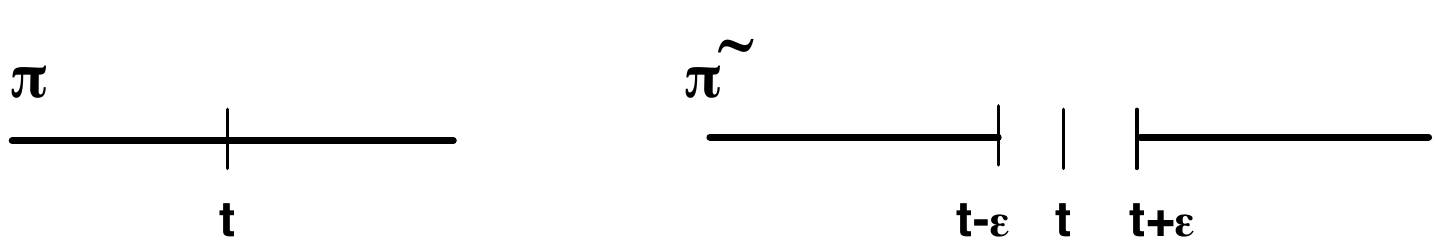}
\label{fig:plan}
\end{figure}

It follows from Proposition \ref{prop: ser} on serializability that plan $\tilde\pi$ is also valid. We denote its happening sequence as ${A}_{\tilde\pi}=(\tilde{A}_{t_0},\dots , \tilde{A}_{t_{\bar k}})$. For the sake of notation simplicity, happening times are denoted as those in $\pi$ even if in general they differ and form a larger set.
Note now that the happening times in $\tilde\pi$ can be split into singletons $ t_i$ such that $\tilde A_{t_i}$ only consists of strongly $\gamma$-safe actions, and intervals $[t_{i+1}, t_{j}]$ such that there exists a durative action $Da\in\cG\cA^d(\gamma)$ happening in that interval. In this case we have that the subsequence ${\bf A}=(\tilde A_{  t_{i+1}}, \dots \tilde A_{ t_{j}})$ is $Da$-admissible.
Put ${\bf A}_*=(\tilde A_{  t_{i}}\cup\{a^{inv}\}, \dots A_{ t_{j}}\cup\{a^{inv}\})$. Note that, since $\bf A$ is executable (as it appears in a valid plan), also $\bf A_*$ is executable by Proposition \ref{prop: aux durative 2}. Since, by assumption (ii), $Da_*$ is $\gamma$-safe, it follows from Theorem \ref{theo: safe sequences}, that  ${\bf A}_*$ is also $\gamma$-safe. Using again Proposition \ref{prop: aux durative 2} we finally obtain that $\bf A$ is individually $\gamma$-safe. 

We have thus proven that each happening time $ t_i$ in the new plan $\tilde\pi$ stays inside an individually $\gamma$-safe sequence (possibly of length $1$). By Corollary \ref{cor: ind safe}  this implies that $A_{\tilde\pi}$ is individually $\gamma$-safe. A repetitive use of (ii) of Proposition \ref{prop: ind safe} now yields that $A_{\pi}$ is also individually $\gamma$-safe.

We are thus left with proving that every durative action $Da\in\cG\cA^d(\gamma)$ happening in $\pi$ satisfies properties (a) and (b) stated above. Suppose this is not true and let $Da$ be the first (as starting time) to happen in $\pi$ (in the time window $[t_h, t_k]$) and to violate either condition (a) or (b). Note that all durative actions in $\cG\cA^d(\gamma)$ happening in $\pi$ and starting strictly before time $t_h$, will necessarily end at a time $t\leq t_h$ by the way $Da$ has been chosen. Moreover, all such durative actions will satisfy properties (a) and (b). We can then proceed as before and consider the splitting $A_{t_i}=A_{t_i}^{st}\cup A_{t_i}^s\cup A_{t_i}^{end}$ for every $i\leq h$ (note that in $t_h$ there could be, in principle, more than one starting actions in $A_{t_h}^{st}$). Consider now the auxiliary plan $\tilde\pi$ constructed exactly like before for $t\leq t_h$ and coinciding with $\pi$ for $t>t_h$. As before, we denote its happening sequence as ${A}_{\tilde\pi}=(\tilde{A}_{t_0},\dots , \tilde{A}_{t_{\bar k}})$ using the same notation for the happening times as in $\pi$ and we assume that $\tilde A_{t_h}=A^{st}_{t_h}$ (this is for simplicity of notation considering that it would be instead $\tilde A_{t_h+\epsilon}=A^{st}_{t_h}$).
Arguing as above, we obtain that $(\tilde A_{ t_0},\dots , \tilde A_{ t_{ h-1}})$ is individually $\gamma$-safe. If we take any $(\tilde s_0,\dots ,\tilde s_{\bar k})\in {\bf S}_{\tilde A_{\pi}}(\gamma)$, we thus have that 
$w(\cT, \gamma, \tilde s_{\tilde h-1})\leq 1$. Consider now ${\bf A}=(\tilde A_{ t_{ h}}, \tilde A_{t_{h}+1}\dots , \tilde A_{ t_{ k}})=(A_{t_h}^{st}, A_{t_{h+1}}, \dots A_{t_k})$
and note that, $(\tilde s_{  h-1}, \dots , \tilde s_{k})\in{\bf S}_{\tilde{\bf A}}(\gamma)$ so that $\tilde\bf A$ is $\gamma$-reachable. It then follows from the relevant non intertwining property ((i) of Definition \ref{def: non intertwining}) that $A^{st}_{t_{ h}}=\{a^{st}\}$. Suppose now that property (b) stated above is not satisfied and let $  l\in (  h,   k)$ be the first index such that $ A_{ t_{ l}}\cap \cG\cA^{st}(\gamma)\neq\emptyset$. By (ii) of Definition \ref{def: non intertwining}, it follows there must exists a durative action $Da'\in\cG\cA^d(\gamma)$ such that $a'^{end} \in  A_{ t_{ l'}}$ for some $ l'\in (h, l]$. Note that such durative action cannot start, in the plan $\pi$ and thus also in the plan $\tilde\pi$, before time $ t_{ h}$ for the way $Da$ was chosen, it cannot either start at time $ t_{ h}$ by previous considerations and neither in the interval $( t_{ h},  t_{l'})$ by the way $ l'$ has been chosen. This proves that property (b) must be satisfied. Note that this also shows that $ A_{ t_{ i}}$ does not contain actions in $\cG\cA^{end}(\gamma)$ for any $ i\in ( h, k)$ (as the corresponding start fragment cannot happen neither before or after time $ t_{ h}$). Finally, $A_{t_i}$ is $\gamma$-irrelevant for every $i\in (h,k)$ because
of (iii) of Definition \ref{def: non intertwining}. Therefore $\bf A$ satisfies properties (a) and (b) contrarily to the assumptions made on $Da$. Proof is now complete.
\end{proof}

\begin{proof}[Proof of Proposition \ref{prop: relevant non inter}]
Consider $Da\in\cG\cA^d(\gamma)$ and a $\gamma$-reachable $Da$-admissible sequence $${\bf A}=(\{a^{st}\}\cup A^1, A^2, \dots , A^{n-1}, \{a^{end}\}\cup A^{n})\,.$$ 

If $a'^{st}\in \cG\cA^{st}(\gamma)\cap A^1$, necessarily, $a^{st}$ and $a'^{st}$ are non-interfering, and, by Proposition \ref{prop: exec reach}, the sequence $(\{a^{st}, a'^{st}\}, \{a^{inv}, a'^{inv}\})$ is $\gamma$-reachable contradicting assumption (i). Therefore $\cG\cA^{st}(\gamma)\cap A^1=\emptyset$. This proves (i) in Definition \ref{def: non intertwining}. 

Suppose now that $A^1=\emptyset$ and suppose that (ii) in Definition \ref{def: non intertwining} does not hold true for $\bf A$. Let $j>1$ be the first index for which (ii) is violated. Let $a'^{st}\in A^j\cap \cG\cA^{st}(\gamma)$. Since $(\{a^{st}\}, A^2, \dots , A^{j-1}, \{a'^{st}\})$ is $\gamma$-reachable and for sure the pair $\{a^{inv}, a'^{st}\}$ is non-interfering, it follows from assumption (ii) that there must exist $0<j'<j$ such that $A^{j'}$ is not $\gamma$-irrelevant. Let $j'$ be the first index for which this happens and let $b\in A^{j'}$ be an action which is not $\gamma$-irrelevant . Note that $b\not\in \cG\cA^{end}(\gamma)$ (otherwise $a'^{st}$ would not violate (ii)). This however contradicts assumption (ii). Therefore this proves (ii) in Definition \ref{def: non intertwining}.

Suppose now that $A^1=\emptyset$ and $A^j\cap (\cG\cA^{st}(\gamma)\cup \cG\cA^{end}(\gamma))=\emptyset$ for every $j=2,\dots ,{n-1}$. If (iii) in Definition \ref{def: non intertwining} does not hold true for $\bf A$, consider $j>1$ to be the first index for which (iii) is violated, namely $A^j$ is not $\gamma$-irrelevant, and let $b\in A^j$ be any action which is not $\gamma$-irrelevant. Since $(\{a^{st}\}, A^2, \dots , A^{j-1}, \{b\})$ is $\gamma$-reachable, it follows from assumption (ii) that there must exist $0<j'<j$ such that $A^{j'}$ is $\gamma$-relevant but this contradicts the choice of $j$. Proof is thus complete.
\end{proof}

\begin{proof}[Proof of Proposition \ref{prop: irr unr}] Assume that, by contradiction, there exists a $\gamma$-reachable sequence
$${\bf A}=(\{a\}, A^2, \dots , A^{n-1}, \{a'\})$$
such that $A^2,\dots ,A^{n-1}$ are $\gamma$-irrelevant set of actions. Consider $(s^0,\dots , s^n)\in {\bf S}_{\bf A}(\gamma)$.

Suppose condition (i) is satisfied. Clearly, $q\in s^1$ and, because of the assumption made, it follows that $q\not\in Eff^-_{A^j}$ for every $j=2,\dots , n-1$. Therefore, $q\in s^{n-1}$. Since $q\in Pre_{a'}^-$, this is a contradiction.

A similar arguments can be used if instead condition (ii) is satisfied.

Finally, assume that condition (iii) is satisfied. Note that, since $A^2,\dots ,A^{n-1}$ are $\gamma$-irrelevant, $Pre^+_{a_{\gamma}}\cup (Pre^+_{a'_{\gamma}}\setminus Eff^+_{a_\gamma})\subseteq s^0$ and this contradicts the fact that $(s^0,\dots , s^n)\in {\bf S}_{\bf A}(\gamma)$.
\end{proof}

\begin{proof}[Proof of Corollary \ref{cor: type a}] It is clear that condition (i) and (ii) of Theorem \ref{theo: non intertwining} are satisfied. In order to check that $\cG\cA^d$ is relevant non intertwining, we 
show that the properties (i) and (ii) of Proposition \ref{prop: relevant non inter} are satisfied. Fix any instance $\gamma$.

Consider $Da^1, Da^2\in\cG\cA^d(\gamma)$. It follows from the fact that $Da^1_*$ and  $Da^2_*$ are both simply $\gamma$-safe of type (a) (see Remark \ref{rem: type a}) that
$$Pre^+_{a^{i\,st}_{\gamma}}=\{q^i\}\subseteq Eff^-_{a^{i\,st}_{\gamma}},\;i=1,2$$
If $a^{1\,st}$ and $a^{2\,st}$ are non-interfering, it follows that $q^1\neq q^2$ and, in this case, $\{a^{1\,st}, a^{2\,st}\}$ is $\gamma$-unreachable. This proves (i).

Consider now $Da\in\cG\cA^d(\gamma)$ and $a'\in \cG\cA\setminus \cG\cA^{end}(\gamma)$ that is $\gamma$-relevant or $a'\in \cG\cA^{st}(\gamma)$. Then, by assumptions (i) and (ii) we have that 
$$Pre^+_{a^{st}_{\gamma}}=\{q\}\subseteq Eff^-_{a^{st}_{\gamma}},\; Pre^+_{a'_{\gamma}}=\{q'\}$$
If $q=q'$, we have that $q\in \Gamma_{a^{st}}^-\cap Pre_{a'}^+$ and, since $q\in\gamma(\cT)$, for sure $q\not\in Eff^+_{a''}$ for any $a''$ which is $\gamma$-irrelevant. This implies that condition (ii) of Definition \ref{def: strongly del inter} is satisfied. If instead $q\neq q'$, we have that the condition (iii) is instead satisfied. In any case this says that the pair $(a^{st},a')$ is strongly $\gamma$-irrelevant unreachable and thus also, because of Proposition \ref{prop: irr unr}, $\gamma$-irrelevant unreachable.
\end{proof}


\begin{proof}[Proof of Lemma \ref{lemma:g-i}]
We use the formalism introduced in Remarks \ref{rem:matching} and \ref{rem:coherent}.
Assume that, for $i=1,2$, $l_i$ matches $\cT$ via the component $c_i$ whose corresponding relation has the form  $r_i(x^i_1,\dots x^i_k, v)$ so that $l_i=r(a_1^i,\dots ,a_k^i, a_{k+1}^i)$ or 
$l_i=\forall v:\; r(a_1^i,\dots ,a_k^i, v)$ for free arguments $a_1^i,\dots ,a_k^i, a_{k+1}^i$. 
We have that
\begin{equation}\label{eq coh1}gr(a^1_j)=\gamma(x^1_j)=\gamma(x^2_l),\;\forall j=1,\dots ,k\end{equation}
where the first equality follows from the assumption of coherence over $l_1$, while the second follows from the definition of an instance. 
Now, if $l_2\sim_{\cT}l_1$, we have that $a_j^1=a_j^2$ for every $j$. It thus follows from (\ref{eq coh1}) that 
\begin{equation}\label{eq coh2}gr(a^2_j)=\gamma(x^2_l),\;\forall j=1,\dots ,k\end{equation}
which says that  $gr$ and $\gamma$ are coherent over $l_2$. On the other hand, if $l_2\not\sim_{\cT}l_1$, it follows that $a_j^1\neq a_j^2$ for some $j$ and, since $gr$ is injective, we also have $gr(a^2_j)\neq gr(a^1_j)=\gamma(x^2_l)$ which says that $gr$ and $\gamma$ are not coherent over $l_2$.
\end{proof}

\begin{proof}[Proof of Corollary \ref{cor: strongly safe final}] Suppose that $a=gr(\alpha)$ for some $gr$ and let $\gamma$ be an instance. Then, $a_{\gamma}=gr(\alpha_L)$ where $L$ is the $\cT$-class on which $gr$ and $\gamma$ are coherent. Result is now a straightforward consequence of Proposition \ref{prop: pure schema} and Corollary \ref{cor: strongly safe}.

\end{proof}


\begin{proof}[Proof of Proposition \ref{prop: unreachable schemas} ]
Note first of all that each of the conditions (i), (ii), (iii) expressed in Definition \ref{schemas non int}, if true for $\cM=\cM_{L^1, L^2}$ is also true for any matching $\cM\supseteq \cM_{L^1, L^2}$: this is evident for properties (i) and (ii) (see Remark \ref{rem monotonic}) while for (iii) follows from the following argument. Condition (iii), for $\cM= \cM_{L^1, L^2}$, holds true if, either, $w(Pre^+_{\alpha^{i\,inv}_{L^i\,}}\cup Pre^+_{\alpha^{1end}_{L^i}})\geq 2$ for $i=1$ or $2$ (and this does not depend on $\cM$), or if there exist two unquantified literals $l^i\in Pre^+_{\alpha^{i\,inv}_{L^i\,}}\cup Pre^+_{\alpha^{1end}_{L^i}}$ for $i=1,2$ such that $l^1\neq_{\cM_{L^1, L^2}} l^2$. This implies that, necessarily, $l^1, l^2$ match $\cT$ through components $c^1=<r^1, a^1, p^1>$ and $c^2=<r^2, a^2, p^2>$ with $r^1\neq r^2$. This yields $Rel[l^1]\neq Rel[l^2]$ and, as a consequence, $l^1\neq_{\cM} l^2$ with respect to any possible matching $\cM$.

Let $\cM$ be the matching such that $gr^1$ and $gr^2$ are $\cM$-adapted. It follows from Proposition \ref{prop: equiv adapted} that $\cM_{L^1, L^2}\subseteq \cM$. Consequently we know that at least one of the conditions
(i), (ii), (iii) expressed in Definition \ref{schemas non int} holds true for such $\cM$. It then follows from (\ref{eq:gr}) that at least one of the following conditions hold
\begin{enumerate}[(ib)] 
\item $Pre^+_{a^{1inv}}\cap Pre^-_{a^{2end}}\neq\emptyset$;
\item $Pre^-_{a^{1inv}}\cap Pre^+_{a^{2end}}\neq\emptyset$;
\item $|Pre^+_{a^{1inv}_{\gamma}}\cup Pre^+_{a^{1end}_{\gamma}}\cup Pre^+_{a^{2inv}_{\gamma}} \cup Pre^+_{a^{2end}_{\gamma}}|\geq 2$.
\end{enumerate}
By virtue of Propositions \ref{prop: exec two} and \ref{prop: reach two} this implies that $(\{a^{1inv}, a^{2inv}\}, \{a^{1end}, a^{2end}\})$ is $\gamma$-unreachable.
\end{proof}

\begin{proof}[Proof of Proposition \ref{prop: relevant right isolated schemas}]
Fix any instance $\gamma$ and consider $Da^1, Da^2\in\cG\cA^d(\gamma)$. Let $D\alpha^i$ and $gr^i$, for $i=1,2$, durative schemas and groundings such that $Da^i=gr(D\alpha^i)$. Let $L^i$ be the $\cT$-class  of literals of each schema $D\alpha^i$ such that $gr^i$ and $\gamma$ are coherent over $L^i$ for $i=1,2$. Therefore, $D\alpha^1$ and $D\alpha^2$ must satisfy one of the conditions (i) to (iii) in the Definition \ref{def: relevant right isolated schema}. Let $\cM$ be the matching respect to which $gr^1$ and $gr^2$ are adapted (in the sense of Remark \ref{rem: ground adapted}). We know from Proposition \ref{prop: equiv adapted} that $\cM\supseteq \cM_{L^1,L2}$. Note now that if condition (i) holds true, it also holds true for such larger $\cM$ (Remark \ref{rem monotonic}) and this yields condition (i)
of Definition \ref{def: relevant right isolated}. Similarly, condition (ii) yields the same condition with this new $\cM$ (Remark \ref{rem monotonic}) from which condition (ii) in Definition \ref{def: relevant right isolated schema} 
follows using Proposition \ref{prop: mutex1}. Finally, if condition (iii) holds true, then condition (iii) in Definition \ref{def: relevant right isolated schema} follows by using Proposition \ref{prop: unreachable schemas}.
Therefore, by Definition \ref{def: relevant right isolated schema} we have that the two durative action schemas $D\alpha^1$ and $D\alpha^2$ must satisfy one of the conditions (i) to (iii) in the definition. From the fact that $gr^1$ and $gr^2$ are $\cM$-adapted, it follows that conditions (i) of Definition \ref{def: relevant right isolated schema} yields condition (i)
of Definition \ref{def: relevant right isolated}. Condition (ii) and (iii) in Definition \ref{def: relevant right isolated} finally follow conditions (ii) and (iii) in Definition \ref{def: relevant right isolated schema} using Propositions \ref{prop: mutex1} and \ref{prop: unreachable schemas}.

\end{proof}

\begin{proof}[Proof of Proposition \ref{prop: relevant right isolated schemas} ]
Let $\cM$ be the matching respect to which the two groundings $gr^1$ and $gr^2$ are $\cM$-adapted. By Proposition \ref{prop: equiv adapted}, we have that $\cM\supseteq \cM_{L^1, L^2}$. Arguing like in the proof of Proposition \ref{prop: unreachable schemas} we obtain that one of the conditions (i)) to (iii) of Definition \ref{def: strongly irr unreach schemas} must hold true for such a matching $\cM$. Suppose (i) holds and put $q=gr^1(l^1)=gr^2(l^2)\in \Gamma_a^{1+}\cap Pre_{a}^{2-}$. Consider now any ground action $a$ which is $\gamma$-irrelevant and let $\alpha$ be an action schema such that $a=gr(\alpha)$ for some grounding $gr$. Let $L$ be the $\cT$-class of literals of $\alpha$ on which $gr$ and $\gamma$ are coherent. It follows that $\alpha_{L}$ is irrelevant. Consider now the matching $\tilde \cM$ between $\alpha^1$ and $\alpha$ respect to which $gr^1$ and $gr$ are $\tilde\cM$-adapted. We have that $\tilde\cM\supseteq \cM_{L^1, L}$. Then, by (i) we have that $l^1\not\in_{\tilde\cM} Eff^+_{\alpha}$ which implies that $q\not\in Eff^-_{a}$. This shows that condition (i) of Definition \ref{def: strongly del inter} is satisfied. Similarly, one can prove that condition (ii) of Definition \ref{def: strongly irr unreach schemas} yields 
condition (ii) of Definition \ref{def: strongly del inter}. Finally the fact that (iii) of Definition \ref{def: strongly irr unreach schemas} yields 
condition (iii) of Definition \ref{def: strongly del inter} follows from a repeated application of relation (\ref{eq:gr}).
\end{proof}

\begin{proof}[Proof of Proposition \ref{prop: relevant non inter lifted}]
We simply have to prove that the assumptions of Proposition \ref{prop: relevant non inter} are satisfied. Note that (i) implies (i) of Proposition \ref{prop: relevant non inter} because of Proposition \ref{prop: relevant left isolated schemas}. We now prove (ii) of Proposition \ref{prop: relevant non inter}.
 
To this aim, fix an instance $\gamma$ and $a^{1st}\in \cG\cA^{st}(\gamma)$ and $a^2\in \cG\cA\setminus \cG\cA^{end}(\gamma)$ that is not $\gamma$-irrelevant. Let $D\alpha^1\in\cA^d$ and $\alpha^2\in\cA$, $gr^1$ and $gr^2$ be action schemas and groundings such that $gr^1(D\alpha^1)=Da^1$ and $a^2=gr^2(\alpha^2)$. Let $L^i$, for $i=1,2$, be $\cT$-classes of literals of $D\alpha^1$ and $\alpha^2$, respectively, such that $gr^i$ and $\gamma$ are coherent over $L^i$.
We have that $\alpha^{1st}_{L^1}\in \cG\cA^{st}(\cT)$, $\alpha^2_{L^2} \not\in \cG\cA^{end}(\cT)$ and is not irrelevant. By assumption (ii) it then follows that $\alpha^{1st}, \alpha^2$ is strongly $(L^1, L^2)$-irrelevant unreachable and thus, by Proposition \ref{prop: relevant right isolated schemas}, $(a^{1st},a^2)$ is strongly $\gamma$-irrelevant unreachable. In the case when instead $a^2\in \cG\cA^{st}(\gamma)$ proof is analogous.
\end{proof}

\section*{Acknowledgements}
We thank Malte Helmert, Gabriele Roger and Jussi Rintanen for making their code for synthesising invariants available, William Cushing for helpful discussions about the configurations of temporal actions and Maria Fox and Derek Long for insightful discussions on the semantics of PDDL2.1. This work has been supported by Royal Holloway University of London, Politecnico di Torino and the NASA Exploration Systems Program.

\section*{References}

\bibliographystyle{elsarticle-harv}
\bibliography{biblio}

\begin{thebibliography}{41}
\expandafter\ifx\csname natexlab\endcsname\relax\def\natexlab#1{#1}\fi
\expandafter\ifx\csname url\endcsname\relax
  \def\url#1{\texttt{#1}}\fi
\expandafter\ifx\csname urlprefix\endcsname\relax\def\urlprefix{URL }\fi

\bibitem[{Bernardini and Smith(2007)}]{Bernardini-07}
Bernardini, S., Smith, D.~E., 2007. Developing domain-independent search
  control for {EUROPA2}. In: Proc. of the Workshop on Heuristics for
  Domain-independent Planning: Progress, Ideas, Limitations, Challanges, 17th
  International Conference on Automated Planning and Scheduling (ICAPS'07).

\bibitem[{Bernardini and Smith(2008{\natexlab{a}})}]{Bernardini-08}
Bernardini, S., Smith, D.~E., 2008{\natexlab{a}}. Automatically generated
  heuristic guidance for {EUROPA2}. In: Proc. of the 9th International
  Symposium on Artificial Intelligence, Robotics, and Automation for Space
  (iSAIRAS'08).

\bibitem[{Bernardini and Smith(2008{\natexlab{b}})}]{Bernardini-08-trans}
Bernardini, S., Smith, D.~E., 2008{\natexlab{b}}. Translating pddl2.2. into a
  constraint-based variable/value language. In: Proc. of the Workshop on
  Knowledge Engineering for Planning and Scheduling, 18th International
  Conference on Automated Planning and Scheduling (ICAPS'08).

\bibitem[{Bernardini and Smith(2011{\natexlab{a}})}]{Bernardini-11-sara}
Bernardini, S., Smith, D.~E., 2011{\natexlab{a}}. Automatic synthesis of
  temporal invariants. In: Proc. of the Ninth Symposium on Abstraction,
  Reformulation and Approximation (SARA-11). Parador de Cardona, Spain.

\bibitem[{Bernardini and Smith(2011{\natexlab{b}})}]{Bernardini-11-iwpss}
Bernardini, S., Smith, D.~E., 2011{\natexlab{b}}. Finding mutual exclusion
  invariants in temporal planning domains. In: Proc. of the Seventh
  International Workshop on Planning and Scheduling for Space (IWPSS-11).
  Darmstadt, Germany.

\bibitem[{Blum and Furst(1997)}]{BluFur97}
Blum, A., Furst, M., 1997. Fast planning through planning graph analysis.
  Artificial Intelligence 90, 281--300.

\bibitem[{Bonet and Geffner(2001)}]{BonetGeffner01}
Bonet, B., Geffner, H., 2001. {Planning as Heuristic Search}. Artificial
  Intelligence 129~(1-2), special issue on Heuristic Search.

\bibitem[{Chen et~al.(2009)Chen, Huang, Xing, and Zhang}]{CHEN2009}
Chen, Y., Huang, R., Xing, Z., Zhang, W., 2009. Long-distance mutual exclusion
  for planning. Artificial Intelligence 173~(2), 365 -- 391.

\bibitem[{Chien et~al.(2000)Chien, Rabideau, Knight, Sherwood, Engelhardt,
  Mutz, Estlin, B.Smith, Fisher, Barret, Stebbins, and Tran}]{Chien-00}
Chien, S., Rabideau, G., Knight, R., Sherwood, R., Engelhardt, B., Mutz, D.,
  Estlin, T., B.Smith, Fisher, F., Barret, T., Stebbins, G., Tran, D., 2000.
  {{ASPEN} - {A}utomated Planning and Scheduling for Space Missions
  Operations}. In: 6th International Conference on Space Operations.

\bibitem[{Coles et~al.(2010)Coles, Coles, Fox, and Long}]{coles-10}
Coles, A.~J., Coles, A.~I., Fox, M., Long, D., 2010. {Forward-Chaining
  Partial-Order Planning}. In: Proceedings of the Twentieth International
  Conference on Automated Planning and Scheduling (ICAPS-10).

\bibitem[{Cushing et~al.(2007)Cushing, Weld, Kambhampati, Mausam, and
  Talamadupula}]{Cushing-07}
Cushing, W., Weld, D., Kambhampati, S., Mausam, Talamadupula, K., 2007.
  {Evaluating Temporal Planning Domains}. In: Proc. of the Seventeenth
  International Conference on Automated Planning and Scheduling (ICAPS-07). pp.
  105--112.

\bibitem[{Do and Kambhampati(2001)}]{Do-2001}
Do, M.~B., Kambhampati, S., 2001. {Planning as constraint satisfaction: Solving
  the planning graph by compiling it into CSP}. Journal of Artificial
  Intelligence Research 132, 151--182.

\bibitem[{Edelkamp and Helmert(1999)}]{edelkamp-helmert-ecp-1999}
Edelkamp, S., Helmert, M., 1999. {Exhibiting Knowledge in Planning Problems to
  Minimize State Encoding Length}. In: Proc. of the Fifth European Conference
  on Planning (ECP'99). pp. 135--147.

\bibitem[{Edelkamp and Helmert(2001)}]{Edelkamp-2001}
Edelkamp, S., Helmert, M., 2001. {The model checking integrated planning system
  (MIPS)}. AI Magazine 22~(3), 67--71.

\bibitem[{Edelkamp and Hoffmann(2004)}]{Edelkamp-04}
Edelkamp, S., Hoffmann, J., 2004. {{PDDL2.2}: The language for the classical
  part of the 4th International Planning Competition}. Tech. Rep. 195,
  Albert-Ludwigs-Universit\"{a}t Freiburg.

\bibitem[{Eyerich et~al.(2009)Eyerich, MattmŸller, and Ršger}]{Eyerich-09}
Eyerich, P., MattmŸller, R., Ršger, G., 2009. {Using the Context-enhanced
  Additive Heuristic for Temporal and Numeric Planning}. In: Proc. of the
  Nineteenth International Conference on Automated Planning and Scheduling
  (ICAPS-09). pp. 49--64.

\bibitem[{Fikes and Nilsson(1971)}]{FikesNilsson-71}
Fikes, R., Nilsson, N., 1971. {STRIPS}: A new approach to the application of
  theorem proving to problem solving. Artificial Intelligence 2(3-4), 189--208.

\bibitem[{Fox and Long(1998)}]{FoxLong-98}
Fox, M., Long, D., 1998. {The automatic inference of state invariants in TIM}.
  Journal of Artificial Intelligence Research 9, 367Ð421.

\bibitem[{Fox and Long(2003)}]{FoxLong-03}
Fox, M., Long, D., 2003. {{PDDL} 2.1: An extension to {PDDL} for expressing
  temporal planning domains}. Journal of Artificial Intelligence Research 20,
  61--124.

\bibitem[{Fox and Long(2011)}]{fox2011}
Fox, M., Long, D., 2011. {Efficient implementation of the plan graph in STAN}.
  jair 10, 87Ð115.

\bibitem[{Frank and J\'{o}nsson(2003)}]{FrankJonsson-03}
Frank, J., J\'{o}nsson, A., 2003. {Constraint Based Attribute and Interval
  Planning}. Journal of Constraints 8~(4), 339--364, special Issue on Planning.

\bibitem[{Fratini et~al.(2008)Fratini, Pecora, and Cesta}]{Cesta-08}
Fratini, S., Pecora, F., Cesta, A., 2008. {Unifying Planning and Scheduling as
  Timelines in a Component-Based Perspective}. Archives of Control Sciences
  18~(2), 5--45.

\bibitem[{Gerevini et~al.(2006)Gerevini, Saetti, and I.}]{Gerevini-06}
Gerevini, A., Saetti, A., I., S., 2006. An approach to temporal planning and
  scheduling in domains with predictable exogenous events. Journal of
  Artificial Intelligence Research 25, 187--231.

\bibitem[{Gerevini and Schubert(1998)}]{Gerevini-98}
Gerevini, A., Schubert, L., 1998. Inferring state constraints for
  domain-independent planning. In: Proc. of the Fifteenth National Conference
  on Artificial Intelligence (AAAI-98). pp. 905--912.

\bibitem[{Gerevini and Schubert(2000)}]{GereviniSchubert-00}
Gerevini, A., Schubert, L., 2000. Discovering state constraints in discoplan:
  Some new results. In: In Proc. of the 17th National Conference on Artificial
  Intelligence (AAAI-2000). pp. 761--767.

\bibitem[{Ghallab and Laruelle(1994)}]{GhallabLaruelle-94}
Ghallab, M., Laruelle, H., 1994. {Representation and Control in {IxTeT}, a
  Temporal Planner}. In: Proc. of the Second International Conference on
  Artificial Intelligence Planning Systems (AIPS-94). {AAAI} Press, pp. 61--67.

\bibitem[{Haslum et~al.(2007)Haslum, Botea, Helmert, Bonet, and
  Koenig}]{Haslum-07}
Haslum, P., Botea, A., Helmert, M., Bonet, B., Koenig, S., 2007.
  {Domain-independent construction of pattern database heuristics for
  cost-optimal planning}. In: Proc. of the Twenty-Second National Conference on
  Artificial Intelligence (AAAI-07).

\bibitem[{Helmert(2006)}]{helmert-jair-2006}
Helmert, M., 2006. The {Fast} {Downward} planning system. Journal of Artificial
  Intelligence Research 26, 191--246.

\bibitem[{Helmert(2009)}]{Helmert-09}
Helmert, M., 2009. {Concise finite-domain representations for PDDL planning
  tasks}. Artificial Intelligence 3~(17), 503--535.

\bibitem[{Helmert and Geffner(2008)}]{Helmert-08}
Helmert, M., Geffner, H., 2008. {Unifying the Causal Graph and Additive
  Heuristics}. In: Proc. of the Eighteenth International Conference on
  Automated Planning and Scheduling (ICAPS-08). pp. 140--147.

\bibitem[{Hoffmann and Nebel(2001)}]{Hoffmann01FF}
Hoffmann, J., Nebel, B., 2001. {The {FF} planning system: fast plan generation
  through heuristic search}. Journal of Artificial Intelligence Research 14,
  253--302.

\bibitem[{Huang et~al.(2010)Huang, Chen, and Zhang}]{Huang-10}
Huang, R., Chen, Y., Zhang, W., 2010. A novel transition based encoding scheme
  for planning as satisfiability. In: Proc. of the Twenty-Forth National
  Conference on Artificial Intelligence (AAAI-10). Vol.~2. {AAAI} Press, pp.
  89--94.

\bibitem[{Kautz and Selman(1999)}]{KautzSelman-99}
Kautz, H., Selman, B., 1999. {Unifying SAT-based and Graph-based Planning}. In:
  Proc. of the Sixteenth International Joint Conference on Artificial
  Intelligence (IJCAI-99). Morgan Kaufmann Publishers Inc., pp. 318--325.

\bibitem[{McDermott(2000)}]{McDermott-00}
McDermott, D., 2000. {T}he 1998 {AI} {P}lanning {S}ystems {C}ompetition. AI
  Magazine 21~(2), 35--55.

\bibitem[{Muscettola(1994)}]{Muscettola-94}
Muscettola, N., 1994. {{HSTS}: Integrating Planning and Scheduling}. In:
  Zweben, M., Fox, M. (Eds.), Intelligent Scheduling. Morgan Kauffmann, pp.
  451--469.

\bibitem[{Pednault(1986)}]{Pednault-86}
Pednault, E., 1986. Toward a mathematical theory of plan synthesis. Ph.D.
  thesis, Stanford University, Department of Electrical Engineering.

\bibitem[{Richter and Westphal(2010)}]{Richter-10}
Richter, S., Westphal, M., Sep. 2010. {The LAMA Planner: Guiding Cost-based
  Anytime Planning with Landmarks}. Journal of Artificial Intelligence Research
  39~(1), 127--177.

\bibitem[{Rintanen(2000)}]{Rintanen-00}
Rintanen, J., 2000. {An Iterative Algorithm for Synthesizing Invariants}. In:
  Proc. of the Seventeenth National Conference on Artificial Intelligence
  (AAAI-00). pp. 806--811.

\bibitem[{Rintanen(2008)}]{Rintanen-08}
Rintanen, J., 2008. {Regression for classical and nondeterministic planning}.
  In: Proc. of the 18th European Conference on Artificial Intelligence
  (ECAI-08).

\bibitem[{Rintanen(2014)}]{Rintanen-14}
Rintanen, J., 2014. {Constraint-Based Algorithm for Computing Temporal
  Invariants}. In: Proc. of the European Conference on Logic in Artificial
  Intelligence,.

\bibitem[{Vidal(2004)}]{vidal-04}
Vidal, V., Jun. 2004. {The {YAHSP} Planning System: Forward Heuristic Search
  with Lookahead Plans Analysis}. In: Proceedings of the 4th International
  Planning Competition (IPC-2004). Whistler, BC, Canada, pp. 59--60.

\end{thebibliography}

\end{document}